\newtheorem{theorem}{Theorem}
\newtheorem{proposition}{Proposition}
\newtheorem{corollary}{Corollary}
\newtheorem{lemma}{Lemma}
\theoremstyle{definition}
\newtheorem{definition}{Definition}
\newtheorem{remark}{Remark}
\newcommand{\conv}{\mathrm{conv}}
\newcommand{\cP}{\mathcal{P}}
\newcommand{\cQ}{\mathcal{Q}}
\newcommand{\KL}{\mathop{D_{\mathrm{KL}}}\limits}
\newcommand{\TV}{\mathop{\mathrm{TV}}\limits}
\DeclareMathOperator*{\argmin}{arg\,min}
\DeclareMathOperator*{\argmax}{arg\,max}
\newcommand{\re}{\mathbb{R}}
\newcommand{\E}{\mathbf{E}}
\newcommand{\linner}{\left\langle}
\newcommand{\rinner}{\right\rangle}
\newcommand{\ies}{\partial_-}
\newcommand{\oes}{\partial_+}
\newcommand{\Reg}{\mathrm{Reg}}
\newcommand{\dist}{\mathrm{dist}}
\newcommand{\stab}{\mathrm{stab}}
\newcommand{\pena}{\mathrm{pena}}
\newcommand{\ind}{\mathbb{I}}
\renewcommand{\epsilon}{\varepsilon}
\newcommand{\rbr}[1]{\left(#1\right)}
\newcommand{\Bigrbr}[1]{\Big(#1\Big)}
\newcommand{\sbr}[1]{\left[#1\right]}
\newcommand{\Biggsbr}[1]{\Bigg[#1\Bigg]}
\newcommand{\cbr}[1]{\left\{#1\right\}}
\newcommand{\abr}[1]{\left|#1\right|}
\newcommand{\whatq}{\widehat{q}}
\newcommand{\order}{\mathcal{O}}
\newcommand{\hatl}{\widehat{\ell}}
\newcommand{\inner}[1]{ \left\langle {#1} \right\rangle }
\newcommand{\calA}{{\mathcal{A}}}
\newcommand{\calE}{{\mathcal{E}}}
\newcommand{\gapmin}{\Delta_{\textsc{min}}}
\newcommand{\gap}{\Delta}
\newcommand{\kevin}[1]{}
\newcommand{\arnab}[1]{}
\title{Adapting to Stochastic and Adversarial Losses in\\Episodic MDPs with Aggregate Bandit Feedback}
\author{
  Shinji Ito \\
  The University of Tokyo and RIKEN \\
  \texttt{shinji@mist.i.u-tokyo.ac.jp} \\
  \AND
  Kevin Jamieson \\
  University of Washington\\
  \texttt{jamieson@cs.washington.edu} \\
  \And
  Haipeng Luo \\
  University of Southern California \\
  \texttt{haipengl@usc.edu} \\
  \And
  Arnab Maiti \\
  University of Washington \\
  \texttt{arnabm2@uw.edu} \\
  \And
  Taira Tsuchiya \\
  The University of Tokyo and RIKEN \\
  \texttt{tsuchiya@mist.i.u-tokyo.ac.jp} \\
}
\begin{document}

\maketitle

\begin{abstract}
We study online learning in finite-horizon episodic Markov decision processes (MDPs) under the challenging \textit{aggregate bandit feedback} model,
where the learner observes only the cumulative loss incurred in each episode,
rather than individual losses at each state-action pair.
While prior work in this setting has focused exclusively on worst-case analysis,
we initiate the study of \textit{best-of-both-worlds} (BOBW) algorithms that achieve low regret in both stochastic and adversarial environments.
We propose the first BOBW algorithms for episodic tabular MDPs with aggregate bandit feedback.
In the case of known transitions,
our algorithms achieve $O(\log T)$ regret in stochastic settings and ${O}(\sqrt{T})$ regret in adversarial ones.
Importantly, we also establish matching lower bounds, showing the optimality of our algorithms in this setting.
We further extend our approach to unknown-transition settings by incorporating confidence-based techniques.
Our results rely on a combination of FTRL over occupancy measures,
self-bounding techniques,
and new loss estimators inspired by recent advances in online shortest path problems.
Along the way,
we also provide the first individual-gap-dependent lower bounds and demonstrate near-optimal BOBW algorithms for shortest path problems with bandit feedback.
\end{abstract}

\section{Introduction}
This paper considers online learning problems for finite-horizon episodic Markov decision processes (MDPs) with \textit{aggregate bandit feedback} \citep{efroni2021reinforcement,cohen2021online}.
In this feedback model,
the learner receives feedback on the \textit{aggregate loss} in each episode,
which is the sum of losses for all state-action pairs in the learner's trajectory of that episode,
rather than individual losses for each state-action pair.
The aggregate bandit feedback model naturally arises in various real-world applications where only trajectory-level outcomes are observable.
For example,
in personalized healthcare,
a sequence of medical treatments is administered,
but only the final patient outcome (e.g., recovery rate) is observed, without attributing effects to individual actions.
Similarly,
in application to the design of educational programs,
students experience a curriculum composed of multiple learning activities, while feedback is typically available only in the form of an overall test score.

In the study of online learning for episodic MDPs,
two different models of the loss (or reward) function are commonly considered:
the \textit{stochastic model} and the \textit{adversarial model}.
In the stochastic setting,
it is typically assumed that the loss function $\ell_t$ at each episode $t$ is independently drawn from an unknown fixed distribution.
In contrast,
the adversarial model makes no such probabilistic assumptions and allows the loss function $\ell_t$ to vary arbitrarily over time.
In various online learning/bandit problems,
including those with individual loss feedback in episodic MDPs,
it is well known that one can achieve instance-dependent $O(\mathrm{polylog}~T)$-regret in the stochastic setting,
and $\tilde{O}(\sqrt{T})$-regret in the adversarial setting,
where $T$ is the number of rounds/episodes and the $\tilde{O}(\cdot)$ notation hides logarithmic factors in parameters.
However,
to the best of our knowledge,
prior work on episodic MDPs with aggregate bandit feedback has focused exclusively on the worst-case analysis
(i.e.,~$O(\sqrt{T})$-bounds at best),
and no algorithm is known to achieve instance-dependent $O(\mathrm{polylog}~T)$-regret 
under the stochastic loss model with aggregate bandit feedback and unknown transitions.

This paper focuses on the design of algorithms that can effectively handle both stochastic and adversarial loss models.
More specifically,
we aim to develop a single algorithm that,
without any prior knowledge about the nature of the environment,
achieves $O(\mathrm{polylog}~T)$-regret in stochastic settings and $\tilde{O}(\sqrt{T})$-regret in adversarial settings.
Such algorithms are referred to as \emph{best-of-both-worlds} (BOBW) algorithms~\citep{bubeck2012best}.
While many prior works design separate algorithms tailored to either the stochastic or adversarial model,
real-world applications often involve uncertainty about the true nature of the environment,
making BOBW algorithms especially valuable in practice.
Although BOBW algorithms have been developed for various settings—including episodic MDPs with individual loss feedback \citep{jin2020simultaneously,jin2021best, jin2023no}—no such algorithm was known for episodic MDPs with aggregate bandit feedback with unknown transition, prior to this work.

\subsection{Contribution}
This paper presents the first BOBW algorithms for episodic tabular MDPs with aggregate bandit feedback and unknown transitions.
More specifically,
we consider layered MDPs with $L$-layers,
and begin by considering the setting where the transition probability matrix $P$ is known,
designing an algorithm that achieves $\tilde{O}(\sqrt{T})$-regret in adversarial environments and $O(\log T)$-regret in stochastic environments.
We then extend this approach to the more realistic and challenging setting where the transition matrix $P$ is unknown,
with the help of the techniques by \citet{jin2021best} for handling unknown transitions.
\renewcommand{\arraystretch}{1.5}
\begin{table}[t]
	\caption{Regret bounds for episodic MDPs with known transitions. Here $\pi^*$ is an optimal policy and $S^*$ is the set of states that can be reached by $\pi^*$. ``TC'' stands for computational time complexity and a checkmark ($\checkmark$) indicates that an efficient implementation is possible. }
	\label{tab:bounds-KT}
	\centering
        \resizebox{\columnwidth}{!}{
	\begin{tabular}{lccc}\toprule
	Algorithm & Stochastic  & Adversarial & TC
	\\\midrule
	\citet{bubeck2012towards}
	&
	$\sqrt{|S|^2 |A| T \log |A|}$
	&
	$\sqrt{|S|^2 |A| T \log |A|}$
	&
	\\
	\citet{lancewicki2025near}
	&
	$\sqrt{|S||A|LT \log \iota}$
	&
	$\sqrt{|S||A|LT \log \iota}$
	&
	$\checkmark$
	\\
	\citet{dann2023blackbox},
	\citet{ito2024adaptive}
	&
	$\frac{|S|^2|A| \log|A| \log T}{\Delta_{\min}}$
	&
	$\sqrt{|S|^2 |A| T \log |A|}$
	\\
	\textbf{This study} (Tsallis entropy)
	&
	$
	\sum_{s \neq s_L, a \neq \pi^*(s)}
		\frac{\log T}{\Delta(s, a)}
		+
		\frac{L|S| \log T}{\Delta_{\min}}
	$
	&
	$ \sqrt{|S||A|L T }$
	&
	$\checkmark$
	\\
	\textbf{This study} (Log-barrier)
	&
	$
	\sum_{s \neq s_L, a \neq \pi^*(s)}
		\frac{\log T}{\Delta(s, a)}
	$
	&
	$ \sqrt{|S||A|L T \log T}$
	&
	$\checkmark$
	\\
	Lower bound
	&
	$
	\sum_{s \in S^*, a: \Delta(s,a)>0}
		\frac{\log T}{\Delta(s, a)}
	$
	&
	$ \sqrt{|S||A|L T }$
	\\
	\bottomrule
	\end{tabular}
        }
\end{table}

\begin{table}[t]
	\caption{Regret bounds for episodic MDPs with unknown transition.
	$\log \iota = O(\log (|S||A|T))$.
	}
	\label{tab:bounds-UT}
	\centering
	\begin{tabular}{lccc}\toprule
	Algorithm &  Stochastic  & Adversarial & TC
	\\\midrule
	\citet{efroni2021reinforcement}
	&
	$\sqrt{|S|^4|A|^3 L T \log \iota }$
	&
	--
	\\
	\citet{cohen2021online}
	&
	$\sqrt{(|S||A|)^{O(1)} T}$
	&
	$\sqrt{(|S||A|)^{O(1)} T}$
	&
	$\checkmark$
	\\
	\citet{lancewicki2025near}
	&
	$\sqrt{|S|^2 |A| L T \log \iota}$
	&
	$\sqrt{|S|^2 |A| L T \log \iota}$
	&
	$\checkmark$
	\\
	\textbf{This study}
	&
	$
	\frac{(|S||A|)^{O(1)}\log^2 \iota}{\Delta_{\min}}
	$
	&
	$ \sqrt{(|A|+L) |S|^2 |A| L T \log^2 \iota }$
	&
	$\checkmark$
	\\
	\bottomrule
	\end{tabular}
\end{table}

Our results are accomplished by combining some algorithmic frameworks including follow-the-regularized-leader (FTRL) over occupancy measures and
self-bounding techniques \citep{wei2018more,zimmert2021tsallis}
with key ideas from the recent study by \citet{maiti2025efficient} on the online shortest path problem with bandit feedback.
More precisely,
our algorithm is inspired by their loss estimation method for the shortest path problem,
which plays a central role in our design.
By adopting their loss estimation approach,
not only can we construct an estimator using only bandit feedback,
but we also find that its second moment is well-controlled 
(see Lemma~\ref{lem:second-moment} and the subsequent discussion).
This property is highly beneficial for designing BOBW algorithms.
Building on this insight,
we propose an efficient and nearly optimal BOBW algorithm for the online shortest path problem with bandit feedback,
which naturally extends to the case of MDPs with known transitions.

However,
extending this estimation idea to the unknown-transition setting requires substantial care.
The new estimator contains negative terms,
and thus,
naively replacing the occupancy measure with its upper confidence bound,
as done in prior work~\citep{jin2020simultaneously,jin2021best},
does not necessarily yield an optimistic estimator.
To address this,
we carefully design the estimator so that it is optimistic in expectation and its second moment remains well-controlled.
This allows us not only to effectively handle aggregate bandit feedback,
but also,
perhaps surprisingly,
to avoid the technically involved loss-shifting technique used in prior analyses~\citep{jin2021best,jin2023no},
thereby simplifying the overall regret analysis.
We refer the reader to Section~\ref{sec:estimator} for a detailed discussion of these estimator constructions.

The regret upper bounds established in this work and in prior studies are summarized in Tables~\ref{tab:bounds-KT},
\ref{tab:bounds-UT}, and \ref{tab:bounds-SP}.
For detailed definitions of the symbols used in the tables,
we refer the reader to Section~\ref{sec:setup}.
``TC'' stands for computational time complexity and a checkmark ($\checkmark$) indicates that an efficient implementation is possible.
The symbol $\iota > 0$ denotes a polynomial factor in other parameters such as $T$, $|S|$, and $|A|$.
In Table~\ref{tab:bounds-SP} for the shortest path problem,
$\cP$ denotes the set of all directed paths,
and it holds that $\log |\mathcal{P}| \lesssim \min\{ |V|, L \log |E| \}$,
where $L$ is the maximum number of edges in a path of the given graph.
Here,
$X \lesssim Y$ means $X = O(Y)$ in this paper.
Similarly,
$X \gtrsim Y$ means $X = \Omega(Y)$.

\begin{table}[t]
	\caption{Comparison of regret bounds for online shortest path problems with bandit feedback.
The quantity $c^*>0$ represents the instance-dependent constant characterizing the asymptotic lower bound for linear bandits~\citep{lattimore2017end},
and it is known that $c^* \lesssim \frac{|E|}{\Delta_{\min}}$
as shown in \citet{lee2021achieving}. Here $E' = \bigcup_{v \in V \cup \{ s \}} \oes v \setminus \{ \pi^*(v) \}$ and $\tilde{E} = \bigcup_{v \in V^* \cup \{ s \}} \oes v \setminus \{ \pi^*(v) \}$, where $\oes v \subset E$ is the set of outgoing edges from $v$, $\pi^*$ is an optimal policy and $V^*$ is the set of nodes reached by $\pi^*$. Please refer to Definition~\ref{def:consistent} for further details.
	}
	\label{tab:bounds-SP}
	\centering
	\begin{tabular}{lccc}\toprule
	Algorithm & Stochastic & Adversarial & TC
	\\\midrule
	\citet{bubeck2012towards}
	& $\sqrt{|E| T \log |\cP|}$ & $\sqrt{|E| T \log |\cP|}$ 
	\\
	\citet{lattimore2017end}
	&   $c^* \log T$  & --
	\\
	\citet{lee2021achieving}
	&
	$c^* \log T \log (|\cP| T)$
	&
	$ \sqrt{|E| T} \log(|\cP| T)$
	\\
	\citet{dann2023blackbox},
	\citet{ito2024adaptive}
	&
	$\frac{|E| \log |\cP| \log T}{\Delta_{\min}}$
	&
	$ \sqrt{|E| T \log |\cP|}$
	\\
	\textbf{This study} (Tsallis entropy)
	&
	$
	\sum_{e \in E'}
		\frac{\log T}{\Delta(e)}
		+
		\frac{|V|^2 \log T}{\Delta_{\min}}
	$
	&
	$ \sqrt{|E|LT}$
	&
	$\checkmark$
	\\
	\textbf{This study} (Log-barrier)
	&
	$
	\sum_{e \in E'}
		\frac{\log T}{\Delta(e)}
	$
	&
	$ \sqrt{|E|LT \log T}$
	&
	$\checkmark$
	\\
	Lower bound
	&
	$
	c^* \log T
	\gtrsim
	\sum_{e \in \tilde{E}} \frac{\log T}{\Delta(e)} 
	$
	&
	$
	\sqrt{|E| T \log |\cP| / \log |E| }
	$
	\\
	\bottomrule
	\end{tabular}
\end{table}

As shown in Tables~\ref{tab:bounds-KT} and \ref{tab:bounds-SP},
we propose computationally efficient BOBW algorithms that achieve nearly tight regret bounds for known-transition MDPs and the online shortest path problem.
We note here that the corresponding lower bounds for stochastic environments are also new contributions of this paper.
The adversarial lower bounds for known-transition MDPs and online shortest paths are due to \citep{lancewicki2025near}
and \citep{maiti2025efficient},
respectively.
For unknown-transition MDPs (Table~\ref{tab:bounds-UT}),
we propose the first BOBW algorithm that achieves an 
$O\left(
	\sum_{s \neq s_L, a \neq \pi^*(s)}
	\frac{L^4|S| \log \iota  + |S||A| \log^2 \iota}{\Delta(s, a)}
	+
	\frac{(L^3 |S|^2)(|S|+|A|) \log \iota + L|S|^2|A| \log^2 \iota }{\Delta_{\min}}
\right)$-regret bound in the stochastic setting and simultaneously achieves an upper bound of 
$\tilde{O}\left(\sqrt{(|A|+L)|S|^2 |A| L T}\right)$ in the adversarial setting.
Moreover,
all of our BOBW algorithms exhibit robustness to corrupted stochastic environments,
achieving regret bounds of the form $O(U + \sqrt{UC})$, where $U$ is the stochastic regret and $C$ is the corruption level.
These results are established using an argument similar to the standard self-bounding technique~\citep{zimmert2021tsallis,jin2021best,jin2023no}.

Due to differences in the problem settings,
several caveats must be taken into account when making comparisons.
First,
while prior work on episodic MDPs assumes that the per-step loss or reward within each episode is $O(1)$,
our setting assumes that the \textit{aggregate} loss or reward over an entire episode is $O(1)$.
To align the scales,
we multiply the regret bounds from prior work by a factor of $1/L$.
As noted in Remark~\ref{rem:scale} in the appendix,
our setting is strictly more general.
In addition,
some prior works~\citep{cohen2021online,lancewicki2025near} consider non-layered settings,
and we reinterpret their bounds in terms of the layered setting by replacing $|S|L$ with $|S|$ where appropriate.
Furthermore,
while we evaluate the expected regret defined in Section~\ref{sec:setup},
some of the prior works \citep{efroni2021reinforcement,lancewicki2025near,lee2021achieving} establish high-probability regret bounds. 

Tables~\ref{tab:bounds-KT} and \ref{tab:bounds-SP} include the bounds achieved by applying algorithms developed for finite-armed linear bandits \citep{bubeck2012towards,lattimore2017end,lee2021achieving,dann2023blackbox,ito2024adaptive},
where the feature space dimension is $|S||A|$ or $|E|$,
and the number of arms is calculated as $|A|^{|S|}$ or $|\mathcal{P}|$,
respectively.
We note that due to the exponential number of arms in this approach,
it is generally unclear whether an efficient implementation is feasible.
In contrast,
the other algorithms listed,
including our proposed methods,
can be implemented efficiently using dynamic programming or convex optimization techniques. 
We include additional related work in Appendix~\ref{sec:additional_related_work}.
\arnab{The last two paragraphs seems like something extra that we can shift to appendix in order to save space.}






\section{Problem setup}
\label{sec:setup}
\subsection{Episodic Markov decision process}
In this paper,
we consider finite-horizon Markov decision processes (MDPs) with finite actions and finite states.
The model is defined by a tuple $({S}, {A}, {P})$,
where ${S}$ is the finite set of states,
${A}$ is the set of actions,
and $P: S \times A \times S \to [0,1]$
is the transition function that defines the probability of moving from one state to another given an action.
We assume that the state space $S$ consists of $(L+1)$ \textit{layers}:
$S$ can be expressed as a disjoint union as $S = \bigcup_{k=0}^{L} S_{k}$,
where 
$S_{0} = \{ s_{0} \}$ (initial state),
$S_{L} = \{ s_{L} \}$ (terminal state),
$S_{k} \neq \emptyset$ for $k \in [L-1]$,
and $S_{k} \cup S_{k'} = \emptyset$ for $k \neq k'$.
Transitions from the $k$-th layer are allowed only to the $(k+1)$-th layer,
i.e.,
for any $k \in \{0, 1, \ldots, L-1 \}$,
$\sum_{s' \in S_{k+1} } P(s' | s, a) = 1$ 
holds for all $s \in S_{k}$ and $a \in A$ and
$P(s'|s, a) = 0$ for all $s \in S_{k}$,
$a \in A$ and $s' \in S \setminus S_{k+1}$. Let $k(s)$ denote the index of the layer to which the state $s$ belongs.
In \textit{known-transition} setting,
we assume that the transition function $P$ is known to the player.
On the other hand,
in \textit{unknown-transition} setting,
the player does not know the transition function $P$
and it is learned through interactions with the environment.

In episodic MDPs,
the player interacts with the environment in a sequence of episodes.
Before each episode $t \in [T]$,
the player selects a policy 
$\pi_t \in \Pi := \{ \pi: (S \setminus \{ s_L \}) \times A \rightarrow [0,1] \mid \sum_{a \in A} \pi(a|s) = 1 \}$
and the environment selects a loss function $\ell_t: S \times A \rightarrow [0, 1]$.
Each episode $t \in [T]$ consists of a sequence of time steps.
The initial state $s_0^t$ is set to $s_0$ for all episodes $t \in [T]$.
At each time step $k \in \{0, 1, \ldots, L-1 \}$,
the player chooses an action $a_k^t \in A$ according to the policy $\pi_t$,
i.e.,
$a_k^t$ follows the distribution $\pi_t(\cdot | s_k^t)$,
and state $s_{k+1}^t$ is sampled from the transition function $P(\cdot|s_k^t, a_k^t)$ given the current state $s_k^t$ and action $a_k^t$.
At the end of the episode,
the player observes the \textit{aggregate loss} feedback $c_t \in [0, 1]$ such that 
$\E \left[ c_t \mid ((s_k^t, a_k^t))_{k \in [L-1]}\right] = \sum_{k=0}^{L-1} \ell_t(s_k^t,a_k^t)$,
which corresponds to the sum of the losses incurred at each time step in the episode.
We note that the player does not observe the individual losses $\ell_t(s_k^t,a_k^t)$ for $k \in [L-1]$.
All the information revealed to the player in each episode is the state-action trajectory 
$((s_k^t, a_k^t))_{k \in [L-1]}$ and the aggregate loss $c_t$.
We assume that the loss function $\ell_t$ is chosen so that $\sum_{k=0}^{L-1} \ell_t(s_k^t,a_k^t) \in [0, 1]$ for any possible trajectory $((s_k^t, a_k^t))_{k \in [L-1]}$.
For notational convenience,
we suppose that $\ell_{t}(s_L, a)$ is set to $0$ for all $a \in A$.

For a transition $P$,
a loss function $\ell: S \times A \to \re$,
and a policy $\pi \in \Pi$,
let $Q^{P, \pi}(s, a; \ell)$ and $V^{P, \pi}(s; \ell)$ express the values of the $V$- and $Q$- functions,
i.e.,
we set $V^{P,\pi}(s_L) = Q^{P,\pi}(s_L, a)= 0$
and recursively define
\begin{align*}
  Q^{P, \pi}(s, a; \ell)
  &
  = 
      \ell(s, a) + \sum_{s' \in S} P(s'|s, a) V^{P, \pi}(s'; \ell),
  \quad
  V^{P, \pi}(s; \ell) 
  &
  = 
  \sum_{a \in A} \pi(a|s) Q^{P,\pi}(s, a; \ell)
\end{align*}
for $s \neq s_L$ and $a \in A$.
The $Q$-function and $V$-function satisfy the equality stated in the following lemma,
serves as a fundamental property that underpins both the justification of our loss estimator and the derivation of the regret upper bound.
\begin{lemma}
  \label{lem:PDL}
  Suppose $\bar{\ell}$ is defined by 
  $\bar{\ell}(s,a) 
  = Q^{\pi}(s,a;\ell) - V^{\pi}(s;\ell)$ 
  for some $\pi \in \Pi$ and
  for all $s \in S$ and $a \in A$.
  We then have
  \begin{align}
    \label{eq:PDL}
    V^{\pi'}(s;\bar{\ell}) = 
    V^{\pi'}(s; \ell) - V^{\pi}(s; \ell),
    \quad
    Q^{\pi'}(s,a;\bar{\ell}) = Q^{\pi'}(s,a; \ell) - V^{\pi}(s; \ell),
  \end{align}
  for any $\pi' \in \Pi$, $s \in S$ and $a \in A$.
\end{lemma}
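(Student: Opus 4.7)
The plan is to prove both identities simultaneously by backward induction on the layer index $k(s)$, exploiting the recursive definitions of $Q^{\pi'}$ and $V^{\pi'}$ together with the definition of $\bar\ell$. The layered structure of the MDP makes this induction clean: for any $s \in S_k$, the recursion for $V^{\pi'}(\cdot;\bar\ell)$ and $Q^{\pi'}(\cdot, \cdot;\bar\ell)$ only references $V^{\pi'}(s';\bar\ell)$ at successor states $s' \in S_{k+1}$, so a single pass from $k = L$ down to $k = 0$ suffices.

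For the base case $k = L$, I would observe that the convention $V^{\pi'}(s_L;\cdot) = 0$ and $Q^{\pi'}(s_L, a;\cdot) = 0$ for every loss makes both claimed equalities trivially hold at $s = s_L$. For the inductive step, assume the $V$-identity holds for every $s' \in S_{k+1}$. Then for any $s \in S_k$ with $k \le L-1$ and any $a \in A$, I would expand
\begin{align*}
Q^{\pi'}(s,a;\bar\ell)
&= \bar\ell(s,a) + \sum_{s' \in S} P(s'|s,a)\, V^{\pi'}(s';\bar\ell) \\
&= \bigl(Q^{\pi}(s,a;\ell) - V^{\pi}(s;\ell)\bigr) + \sum_{s' \in S} P(s'|s,a)\bigl(V^{\pi'}(s';\ell) - V^{\pi}(s';\ell)\bigr),
\end{align*}
using the definition of $\bar\ell$ in the first summand and the induction hypothesis (applied at layer $k+1$) in the second. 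Recognizing $\ell(s,a) + \sum_{s'} P(s'|s,a) V^{\pi}(s';\ell) = Q^{\pi}(s,a;\ell)$, the term $Q^{\pi}(s,a;\ell)$ cancels and I obtain $Q^{\pi'}(s,a;\bar\ell) = Q^{\pi'}(s,a;\ell) - V^{\pi}(s;\ell)$, which is the desired $Q$-identity at layer $k$.

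Propagating to the $V$-identity at layer $k$ is then immediate: since $V^{\pi}(s;\ell)$ does not depend on $a$, I factor it out of the convex combination over actions to get
\begin{align*}
V^{\pi'}(s;\bar\ell)
= \sum_{a \in A} \pi'(a|s)\, Q^{\pi'}(s,a;\bar\ell)
= \sum_{a \in A} \pi'(a|s) Q^{\pi'}(s,a;\ell) - V^{\pi}(s;\ell)
= V^{\pi'}(s;\ell) - V^{\pi}(s;\ell),
\end{align*}
using $\sum_{a} \pi'(a|s) = 1$. This closes the induction and establishes both identities for every $s \in S$ and $a \in A$.

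I do not expect any genuine obstacle here; the statement is essentially a reformulation of the standard performance difference / advantage telescoping identity, and the layered structure removes any concern about infinite horizons or ordering of states. The only bookkeeping point to be careful about is keeping the role of $\pi$ (which defines $\bar\ell$) and $\pi'$ (which is the policy being evaluated) separate, and using $V^{\pi}$ rather than $V^{\pi'}$ as the constant that is pulled out of the expectation.
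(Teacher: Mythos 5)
Your proof is correct and follows essentially the same route as the paper's: backward induction over layers, with the base case at $s_L$, expanding $Q^{\pi'}(\cdot;\bar\ell)$ via the definition of $\bar\ell$ and the induction hypothesis at layer $k+1$, cancelling $Q^{\pi}(s,a;\ell)$ against $\ell(s,a)+\sum_{s'}P(s'|s,a)V^{\pi}(s';\ell)$, and then passing to the $V$-identity by averaging over actions. The only cosmetic difference is that you state the inductive step in terms of $\ell(s,a)+\sum_{s'}P(s'|s,a)V^{\pi}(s';\ell)=Q^{\pi}(s,a;\ell)$ implicitly, while the paper writes the expansion out in full; the logic is identical.
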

In addition,
we define the \textit{occupancy measure} $q^{P, \pi}: S \times A \to [0, 1]$ by
\begin{align}
  q^{P, \pi}(s, a) 
  = 
  \Pr \left[ (s_k, a_k) = (s, a) \mid ((s_k, a_k))_{k = 0}^L \text{ is sampled according to $\pi$ and $P$ } \right]
\end{align}
for $s \in S_k$ and $a \in A$.
We also denote $q^{P, \pi}(s) = \sum_{a \in A} q^{P, \pi}(s, a) = \Pr \left[ s_k = s \right]$
for $s \in S_k$,
for notational convenience.
Let $\cQ^{P} = \{ q^{P, \pi} \mid \pi \in \Pi \}$ be the set of occupation measures induced by the transition $P$.
We note that $\cQ^P$ is a closed convex set.
From the definitions of $V$, $Q$ and $q$,
we have $\E\left[c_t \mid \pi_t, \ell_t \right] = V^{P,\pi_t}(s_0, \ell_t) = \linner \ell_t, q^{P, \pi_t} \rinner := \sum_{k = 0}^{L-1}
	\sum_{s \in S_k}
	\sum_{a \in A}
	\ell_t(s, a) 
	q^{P, \pi_t}(s, a)$.
Using these concepts,
we define the \textit{regret}:
\begin{align}
  \Reg_T(\pi^*) 
  = 
  \E\left[\sum_{t=1}^{T} \left( V^{P, \pi_t}(s_0, \ell_t) - V^{P, \pi^*}(s_0, \ell_t) \right)\right]
  =
  \E\left[\sum_{t=1}^{T}  \linner \ell_t, q^{P, \pi_t} - q^{P, \pi^*} \rinner \right],
\end{align}
where $\E[\cdot]$ denotes the expectation with respect to all the randomness of the environment and the player.
We also denote $ \Reg_T= \max_{\pi^* \in \Pi}  \Reg_T(\pi^*) $.
Hereafter,
when it is clear from the context,
we may omit $P$ and $\pi$ for simplicity.
Additionally, for notational convenience,
we denote
$Q_t^{\pi}(\cdot)
=
Q^{P, \pi}(\cdot; \ell_t) $,
$V_t^{\pi}(\cdot) = V^{P, \pi}(\cdot; \ell_t) $,
$q_t = q^{P, \pi_t}$,
and
$q^* = q^{P, \pi^*}$.

\subsection{Regime of environments}
We consider several different regimes as models for the environment generating the loss function $\ell_t$.
In an \textit{adversarial regime},
$\ell_t$ can be chosen arbitrarily by an adversary,
depending on the history 
$((s_k^\tau, a_k^\tau))_{k\in \{0, \ldots, L-1\} , \tau \in [t-1] }$
so far and the policy $\pi_t$ chosen by the player.
In a \textit{stochastic regime},
the loss function $\ell_t$ is independently and identically drawn from an unknown distribution for each episode $t \in [T]$.
For a stochastic environment,
let $\ell^*: S \times A \to [0,1]$ denote the expected loss function,
which is defined as $\ell^*(s,a) = \E[\ell_t(s,a)]$ for all $s \in S$ and $a \in A$.
We then have
  $
  \Reg_T 
  =
  \E \left[ 
    \sum_{t=1}^T \linner \ell^*, q_t - q^* \rinner
  \right]
  =
  \E \left[ 
    \sum_{t=1}^T \sum_{s \in S \setminus \{ s_L \}} \sum_{a \in A} \Delta(s, a) q_t(s, a)
  \right]
  $,
where $q^* \in \argmin_{q \in \cQ} \linner \ell^*, q \rinner$
and $\Delta:S \times A \to [0, 1]$ is defined as
$
  \Delta(s, a)
  =
  Q^{\pi^*}(s, a; \ell^*) - V^{\pi^*}(s; \ell^*)
  =
  Q^{\pi^*}(s, a; \ell^*) - \min_{a' \in A} Q^{\pi^*}(s, a'; \ell^*)
$
for an optimal policy $\pi^*$ that minimizes $V^\pi(s_0;\ell^*)$.
We note that the optimal policy $\pi^*$ is unique if and only if
the set
$\argmin_{a \in A} \{ Q^{\pi^*}(s, a; \ell^*) \} = \{ a \in A \mid \Delta(s,a) = 0\}$
consists of a single action for all $s \in S$.
Here,
as a generalization of a stochastic regime admitting a unique optimal policy,
we can define the an adversarial regime with self-bounding constraints:
\begin{definition}[self-bounding regime for MDPs]
  Let $\pi^*: S \to A$ be a deterministic policy.
  Suppose that $\Delta: S \times A \rightarrow [0, 1]$ satisfies $\Delta(s, a) > 0$ for all $s \in S \setminus \{ s_L \}$ and
  $a \in A \setminus \{ \pi^* (s) \}$.
  Let $C \ge 0$.
  The environment is in an adversarial regime with a $(\pi^*, \Delta, C)$ self-bounding constraint
  (or, more concisely, a $(\pi^*, \Delta, C)$-self-bounding regime)
  if it holds for any algorithm that
  \begin{align}
    \label{eq:self-bounding-MDP}
    \Reg_T(\pi^*) 
    \ge
    \E\left[
      \sum_{t=1}^T
      \sum_{s \in S \setminus \{ s_L \}}
      \sum_{a \in A \setminus \{ \pi^*(s) \}} \Delta(s, a) q_t(s, a)
    \right]
    -
    C.
  \end{align}
\end{definition}
We also denote $\Delta_{\min}=\min_{s \neq s_L, a \neq \pi^*(s)} \Delta(s,a)$.
As discussed by \citet{zimmert2021tsallis},
this is a general regime that includes stochastic environments with adversarial corruption,
where the parameter $C$ corresponds to the total amount of corruption.
For more details, see, e.g., \citep{zimmert2021tsallis,jin2020simultaneously,jin2021best}.

\subsection{Online shortest path problem}
In online shortest path problem,
the player is given a directed acyclic graph (DAG) $G = (V \cup \{ s, g \}, E)$,
where $s$ and $g$ are the source and target vertices,
respectively,
$V$ is the set of other vertices,
and $E$ is the set of edges.
Denote $m = |E|$ and $n = |V|$.
Let $L$ denote the maximum number of edges in a $s$-$g$ path in $G$.
In each round $t \in [T]$,
the environment chooses a loss function $\ell_t: E \rightarrow [0,1]$
and the player chooses a path $p_t$ from $s$ to $g$.
At the end of the round,
the player can only observe the aggregate loss feedback $c_t \in [0, 1]$
such that $\E\left[c_t \mid \ell_t, p_t \right] = \sum_{e \in p_t} \ell_t(e)$,
while the player does not observe the individual losses $\ell_t(e)$ for $e \in E$.
The definition of regret, the regimes of the environment, and other related concepts are defined in a similar way to in the case of episodic MDPs.
More details on the model are given in the appendix.
We also note that the online shortest path problem can be interpreted as an ``almost'' special case of episodic MDPs with known transitions,
but it is not necessarily an exact special case.
For details,
see Remark~\ref{remark:shortest-path} in the appendix.

\section{Core idea: construction of loss estimator with aggregate feedback}
\label{sec:estimator}
A key aspect of the proposed algorithms lies in how to estimate the loss function in a setting where only the limited aggregate loss feedback is available.
In this paper,
inspired by the approach of \citet{maiti2025efficient} to the online shortest path problem with bandit feedback,
we extend the idea to the MDP setting.
We begin by briefly reviewing their approach.

\subsection{Review of the approach of \citet{maiti2025efficient} for the online shortest path problem}
\label{sec:estimator_SP}
The online shortest path algorithm by \citet{maiti2025efficient}
maintains an $s$-$g$ flow $q_t \in \cQ \subseteq [0,1]^E$ of capacity $1$.
Note that $\cQ$ can be interpreted as a convex hull of the set of all $s$-$g$ paths.
In the following,
we denote 
$q(v) = \sum_{e \in \oes v} q(e)$ for any $q \in [0, 1]^E$ and $v \in V \cup \{ s \}$,
where $\oes v \subset E$ is the set of outgoing edges from $v$.
From $q_t$,
it samples a path $p_t \in \{ 0, 1 \}^E$ in a \textit{Markovian} way,
i.e.,
we choose a path as follows:
(i) We first initialize $p_t \in \{0, 1 \}^E$ 
by $p_t(e) = 0$ for all $e \in E$ and set $v \leftarrow s$.
(ii) While $v \neq g$,
Pick $e \in \oes v$ with probability $\frac{q_t(e)}{q_t(v)}$,
set $p_t(e) \leftarrow 1$,
and transition to $e$'s terminal vertex $e_+$,
i.e.,
set
$v \leftarrow e_+$.
We then have $\E\left[ p_t \mid q_t \right] = q_t$,
i.e.,
each edge $e \in E$ is included in the path $p_t$ with probability $q_t(e)$.

After constructing the path $p_t$ as described above and obtaining the aggregate loss feedback $c_t$ such that $\E\left[ c_t \mid \ell_t, p_t \right] = \linner \ell_t, p_t \rinner$,
the loss estimator $\widehat{L}_t(p)$ for any $s$-$g$ path 
$p = (s = v_0, e_0, v_1, e_1, \ldots, v_{L-1}, e_{L-1}, v_{L} = g)$
is defined as follows:\footnote{
  The construction method by \citet{maiti2025efficient} does not exactly match the one described below,
  as they add a uniform shift and incorporate implicit exploration~\citet{neu2015explore}.
  These adjustments are designed to obtain high-probability regret bounds,
  but they are not essential in this study,
  which focuses on expected regret bounds.
}
\begin{align}
  \label{eq:loss_estimator_SP}
  \widehat{L}_t(p) = 
  \sum_{k=0}^{L-1} 
  \frac{p_t(e_k)}{q_t(e_k)} c_t
  -
  \sum_{k=1}^{L-1} 
  \frac{p_t(v_k)}{q_t(v_k)} c_t
  =
  c_t
  \cdot
  \left(
  \sum_{e \in E} \frac{p(e)p_t(e)}{q_t(e)}
  -
  \sum_{v \in V} \frac{p(v)p_t(v)}{q_t(v)}
  \right).
\end{align}
We then have $\E\left[ \widehat{L}_t(p) \mid q_t, \ell_t \right] = \linner \ell_t, p \rinner$ for any $s$-$g$ path $p$.
In fact,
the conditional expectation given $q_t, \ell_t$ satisfies
\begin{align}
  \label{eq:loss_estimator_SP_e}
  \E \left[
    \frac{p_t(e_k)}{q_t(e_k)} c_t
  \right]
  &
  =
  \E \left[
    \linner
    \ell_t,
    p_t
    \rinner
    \mid
    p_t(e_k) = 1
  \right]
  =
  \bar{L}_t(s \to v_k)
  +
  \ell_t(e_k)
  +
  \bar{L}_t(v_{k+1} \to g),
  \\
  \label{eq:loss_estimator_SP_v}
  \E \left[
    \frac{p_t(v_k)}{q_t(v_k)} c_t
  \right]
  &
  =
  \E \left[
    \linner
    \ell_t,
    p_t
    \rinner
    \mid
    p_t(v_k) = 1
  \right]
  =
  \bar{L}_t(s \to v_k)
  +
  \bar{L}_t(v_{k} \to g),
\end{align}
where $\bar{L}_t (v \to v')$ represents 
the conditional expectation of the cost of the subpath of $p_t$ from $v$ to $v'$,
given that $p_t$ goes through $v$ and $v'$.
By combining \eqref{eq:loss_estimator_SP},
\eqref{eq:loss_estimator_SP_e}
and
\eqref{eq:loss_estimator_SP_v},
we obtain
$
  \E \left[
    \widehat{L}_t(p) \mid q_t, \ell_t
  \right]
  =
  \sum_{k=1}^{L-1} \ell_t(e_k)
  =
  \linner \ell_t, p \rinner.
$
In this paper,
we define the loss estimator $\widehat{\ell}_t \in \re^E$ by
  $
  \widehat{\ell}_t(e) = c_t
  \cdot
  \left(
    \frac{p_t(e)}{q_t(e)}
    -
    \frac{p_t(e_-)}{q_t(e_-)}
  \right),
$
where $e_- \in V \cup \{ s \}$ is the initial vertex of the edge $e \in E$.
As we have $\inner{\widehat{\ell}_t, p} = \widehat{L}_t(p) - c_t$ for any $s$-$g$ path $p$,
we can use $\widehat{\ell}_t$ as an loss estimator in our FTRL framework.

\subsection{Loss estimator for MDPs with known transition}
Let $q_t \in \cQ$ be the occupancy measure for the policy $\pi_t$.
Suppose that the trajectory $( (s_k^t, a_k^t) )_{k=0}^{L-1}$ is generated according to the policy $\pi_t$
and $c_t$ is the observed aggregate loss feedback.
For any $k \in \{0, 1, \ldots, L-1 \}$
and for any $s \in S_k$ and $a \in A$,
denote
$\ind_t(s) = \mathbb{I}[ s_k^t = s ]$ and $\ind_t(s, a) = \mathbb{I}[(s_k^t, a_k^t) = (s, a)]$.
Note that we then have 
$\E\left[ \ind_t(s) \mid \pi_t \right] = q_t(s)$
and
$\E\left[ \ind_t(s, a) \mid \pi_t \right] = q_t(s, a)$.
Inspired by the approach of \citet{maiti2025efficient},
we define the loss estimator as in the following lemma:
\begin{lemma}
  \label{lem:loss_estimator_KT}
  The loss estimator $\widehat{\ell}_t: S\times A \to \re$
  defined as
  $\widehat{\ell}_t(s,a) =
  c_t \cdot
  \left(
    \frac{\ind_t(s, a)}{q_t(s, a)}
    -
    \frac{\ind_t(s)}{q_t(s)}
  \right)
  $
  satisfies
  \begin{align}
    \label{eq:defellbar}
    \E\left[ \widehat{\ell}_t(s, a) \mid \ell_t, \pi_t \right] = Q^{\pi_t}(s, a; \ell_t) - V^{\pi_t}(s; \ell_t) 
    =: \bar{\ell}_t(s, a)
    .
  \end{align}
\end{lemma}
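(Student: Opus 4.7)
The plan is to adapt the shortest-path decomposition recalled in Section~\ref{sec:estimator_SP}, replacing vertices/edges by states and state-action pairs, and using the layered Markov property of $(\pi_t,P)$ in place of the Markovian path-sampling procedure. Fix $\ell_t$ and $\pi_t$ and take all expectations conditional on them. Since $c_t$ enters only through its conditional mean $\sum_{k=0}^{L-1}\ell_t(s_k^t,a_k^t)$ given the trajectory, and both indicators $\ind_t(s),\ind_t(s,a)$ are trajectory-measurable, the tower property lets me replace $c_t$ by this sum inside each of the two importance-weighted terms.

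For the first term, write $k = k(s)$. The event $\{\ind_t(s,a)=1\}$ has probability $q_t(s,a)$, and by the Markov property, conditional on it the prefix $((s_j^t,a_j^t))_{j<k}$ is distributed as a $\pi_t$-rollout ending at $s$, while the suffix $((s_j^t,a_j^t))_{j>k}$ is a fresh $\pi_t$-rollout starting from $P(\cdot\mid s,a)$. Letting $\bar L_t(s_0\to s)$ denote the expected prefix cost under this conditioning, one obtains
\[
\E\!\left[\tfrac{\ind_t(s,a)}{q_t(s,a)}\,c_t\right] \;=\; \bar L_t(s_0\to s)\;+\;\ell_t(s,a)\;+\;\sum_{s'\in S}P(s'\mid s,a)\,V^{\pi_t}(s';\ell_t),
\]
and the last two terms equal $Q^{\pi_t}(s,a;\ell_t)$ by definition. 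The analogous computation for the second term, conditioning on $\{\ind_t(s)=1\}$ (which has probability $q_t(s)$) and using that the suffix from $s$ onward is a fresh $\pi_t$-rollout starting at $s$, gives
\[
\E\!\left[\tfrac{\ind_t(s)}{q_t(s)}\,c_t\right] \;=\; \bar L_t(s_0\to s)\;+\;V^{\pi_t}(s;\ell_t).
\]
Subtracting the two expressions cancels $\bar L_t(s_0\to s)$ and yields $Q^{\pi_t}(s,a;\ell_t)-V^{\pi_t}(s;\ell_t) = \bar\ell_t(s,a)$, which is precisely \eqref{eq:defellbar}.

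The only delicate point is justifying that the prefix-cost term $\bar L_t(s_0\to s)$ is truly identical in the two conditional expectations. This is where the Markov property is essential: conditional on visiting state $s$, the distribution of the prefix depends only on $s$ (and $\pi_t,P$) and not on the action played at $s$ nor on anything subsequent, so further conditioning on $\ind_t(s,a)=1$ leaves the prefix distribution unchanged. Degenerate cases where $q_t(s,a)=0$ or $q_t(s)=0$ are handled by the convention $0/0:=0$: the corresponding indicator is then almost surely zero, so both sides of \eqref{eq:defellbar} vanish and the identity holds trivially.
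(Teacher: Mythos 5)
Your proof is correct and essentially mirrors the paper's. The paper also computes $\E\bigl[c_t\,\ind_t(s,a)/q_t(s,a)\bigr]=\E\bigl[\sum_{i=0}^{L-1}\ell_t(s_i^t,a_i^t)\mid(s_k^t,a_k^t)=(s,a)\bigr]$, splits at layer $k=k(s)$ into a prefix expectation conditioned only on $s_k^t=s$ (which is exactly where the Markov property you explicate is used implicitly) plus $Q^{\pi_t}(s,a;\ell_t)$, does the analogous split for the $\ind_t(s)$ term to get the same prefix plus $V^{\pi_t}(s;\ell_t)$, and subtracts; your tower-property step and the handling of $q_t(s,a)=0$ are minor clarifications of the same argument.
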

From this and Lemma~\ref{lem:PDL},
we have
$
    \Reg_T(\pi^*)
    =
    \E \left[
      \sum_{t=1}^T \left(
        V^{\pi_t} (s_0; \widehat{\ell}_t) - V^{\pi^*} (s_0; \widehat{\ell}_t)
      \right)
    \right]
    =
    \E \left[
      \sum_{t=1}^T 
      \linner
        \widehat{\ell}_t,
        q_t - q^*
      \rinner
    \right]
$.
Thanks to this,
we can use $\widehat{\ell}_t$ instead of $\ell_t$ in our FTRL-based algorithms.
\subsection{Loss estimator for MDPs with unknown transition}
In the case of unknown transitions,
when attempting to construct a loss estimator in the same manner as in Lemma~\ref{lem:loss_estimator_KT},
a key difficulty arises from the fact that the true value of $q_t$ is not available.
To address this issue,
one may follow the approach of \citet{jin2020learning} and compute an upper confidence bound $u_t$ for $q_t$,
using it as a surrogate in the estimator.
However,
naively replacing $q_t$ with $u_t$ in the definition of $\widehat{\ell}_t$ in Lemma~\ref{lem:loss_estimator_KT}
introduces yet another issue.
Specifically,
as $\widehat{\ell}_t$ in Lemma~\ref{lem:loss_estimator_KT} contains a negative term 
(i.e., $- c_t \frac{\ind_t(s)}{q_t(s)}$), 
substituting $q_t$ with its upper bound $u_t$ may lead to an undesirable \textit{positive bias} in the estimator,
which creates an obstacle in the regret analysis.
To derive a valid regret upper bound,
it is essential that the estimator is \textit{optimistic},
i.e.,
its expectation must act as a lower confidence bound on
$\bar{\ell}_t$ in \eqref{eq:defellbar}.
To this end,
we define the following novel loss estimator:
\begin{equation}
    \ell_t^u(s,a) = \frac{c_t \cdot \ind_t(s,a) + \left(1 - \pi_t(a \mid s) - c_t\right) \cdot \ind_t(s)\pi_t(a \mid s)}{u_t(s,a)} - \left(1 - \pi_t(a \mid s)\right).
    \label{eq:loss-estimator-unknown-0}
\end{equation}
We can evaluate the expectation of $\ell_t^u$ 
in a manner similar to the proof of Lemma~\ref{lem:loss_estimator_KT},
as follows:
\begin{equation}
\E \left[ \ell_t^u(s,a) \mid \ell_t, \pi_t, u_t \right]
=
\frac{q_t(s)}{u_t(s)} \left( Q^{\pi_t}(s,a;\ell_t) - V^{\pi_t}(s;\ell_t) + 1 - \pi_t(a|s) \right)
-
(1 - \pi_t(a|s)).
\label{eq:Eellu}
\end{equation}
We here have 
$
    Q^{\pi_t}(s,a; \ell_t) - V^{\pi_t}(s; \ell_t)
=
Q^{\pi_t}(s,a ; \ell_t ) - \sum_{a'\in A} \pi_t(a' \mid s) Q^{\pi_t}(s,a' ; \ell_t)
=
(1 - \pi_t(a|s)) Q^{\pi_t}(s,a ; \ell_t) - \sum_{a' \ne a} \pi_t(a' \mid s) Q^{\pi_t}(s,a' ;\ell_t)
\ge - (1 - \pi_t(a|s))
$
as $Q^{\pi_t}(s,a; \ell_t )\in[0,1]$,
and thus:
$
Q^{\pi_t}(s,a; \ell_t) - V^{\pi_t}(s; \ell_t) + 1 - \pi_t(a|s) \ge 0$.
Hence,
under the condition of $u_t(s) \ge q_t(s)$,
the value of \eqref{eq:Eellu} is a lower bound on 
$
\bar{\ell}_t(s, a)
:=
Q^{\pi_t}(s,a; \ell_t) - V^{\pi_t}(s; \ell_t) 
$,
i.e.,
$
\ell_t^u(s, a)
$
is an optimistic estimator of $\bar{\ell}_t(s, a)$.
In addition,
the gap between them is at most $\frac{u_t(s) - q_t(s)}{u_t(s)}(1 - \pi_t (a|s))$:
\begin{lemma}
  Under the condition of $u_t(s) \ge q_t(a)$,
  we have
  \begin{align}
    \bar{\ell}_t(s,a) 
    -
    \frac{u_t(s) - q_t(s)}{u_t(s)}(1 - \pi_t (a|s))
    \le 
    \E \left[\ell_t^u(s, a) \mid \ell_t, \pi_t, u_t \right] 
    \le
    \bar{\ell}_t(s,a) .
  \end{align}
\end{lemma}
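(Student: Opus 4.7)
My plan is to derive both inequalities as direct algebraic consequences of the identity \eqref{eq:Eellu}. Introduce the shorthand
\[
X \;:=\; Q^{\pi_t}(s,a;\ell_t) - V^{\pi_t}(s;\ell_t) + 1 - \pi_t(a\mid s),
\]
so that \eqref{eq:Eellu} reads $\E[\ell_t^u(s,a)\mid\ell_t,\pi_t,u_t] = \tfrac{q_t(s)}{u_t(s)}\,X - (1-\pi_t(a\mid s))$ while $\bar{\ell}_t(s,a) = X - (1-\pi_t(a\mid s))$. The paragraph preceding the lemma has already verified $X \ge 0$ by expanding $V^{\pi_t}$ and invoking $Q^{\pi_t}(s,a;\ell_t)\in[0,1]$.

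For the upper bound, the hypothesis $u_t(s)\ge q_t(s)$ gives $\tfrac{q_t(s)}{u_t(s)} \in [0,1]$; combined with $X \ge 0$ this yields $\tfrac{q_t(s)}{u_t(s)}X \le X$, and subtracting $(1-\pi_t(a\mid s))$ from both sides immediately produces $\E[\ell_t^u(s,a)\mid\ldots] \le \bar{\ell}_t(s,a)$.

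For the lower bound, the gap rearranges into the clean form
\[
\bar{\ell}_t(s,a) - \E[\ell_t^u(s,a)\mid\ldots] \;=\; \Bigl(1-\tfrac{q_t(s)}{u_t(s)}\Bigr)X \;=\; \tfrac{u_t(s)-q_t(s)}{u_t(s)}\,X,
\]
so it suffices to show $X \le 1-\pi_t(a\mid s)$. I would substitute $V^{\pi_t}(s;\ell_t) = \sum_{a'\in A}\pi_t(a'\mid s)Q^{\pi_t}(s,a';\ell_t)$ and regroup to obtain
\[
X \;=\; (1-\pi_t(a\mid s))\bigl(1 + Q^{\pi_t}(s,a;\ell_t)\bigr) - \sum_{a'\ne a}\pi_t(a'\mid s)\,Q^{\pi_t}(s,a';\ell_t),
\]
then combine an upper bound on $Q^{\pi_t}(s,a;\ell_t)$ with the nonnegativity of the second sum.

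I expect the main obstacle to be precisely this last inequality. A crude application of $Q^{\pi_t}(s,a;\ell_t)\le 1$ together with dropping the negative term yields only $X \le 2(1-\pi_t(a\mid s))$, losing a factor of two. Recovering the exact constant in the statement requires keeping the negative $\sum_{a'\ne a}\pi_t(a'\mid s)Q^{\pi_t}(s,a';\ell_t)$ term active in the bookkeeping---most plausibly by using the budget constraint that the total loss along a trajectory lies in $[0,1]$ to tighten $Q^{\pi_t}$, or by exploiting the identity $V^{\pi_t}(s;\ell_t)\ge \pi_t(a\mid s)\,Q^{\pi_t}(s,a;\ell_t)$ to cancel half of the crude bound. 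Pinning down exactly which form of this refined inequality the authors invoke is where I would concentrate the technical effort; the remaining two bullets are a routine rearrangement.
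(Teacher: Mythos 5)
Your algebraic reduction is exactly right, and the obstacle you flag is not something you are missing---it is a genuine error in the stated constant. You correctly identify that $\bar{\ell}_t(s,a) - \E[\ell_t^u(s,a)\mid \ell_t,\pi_t,u_t] = \bigl(1 - \tfrac{q_t(s)}{u_t(s)}\bigr)X$ with $X := Q^{\pi_t}(s,a;\ell_t) - V^{\pi_t}(s;\ell_t) + 1 - \pi_t(a\mid s)$, so the upper bound follows from $X\ge 0$ and $q_t(s)\le u_t(s)$, and the lower bound reduces to $X \le 1-\pi_t(a\mid s)$. But $X \le 1-\pi_t(a\mid s)$ is equivalent to $Q^{\pi_t}(s,a;\ell_t)\le V^{\pi_t}(s;\ell_t)$, which is simply false for suboptimal actions, and no amount of clever bookkeeping will repair it. The bound $X \le 2(1-\pi_t(a\mid s))$ that you obtain by "crude" means is in fact tight: take $Q^{\pi_t}(s,a;\ell_t)=1$ and $Q^{\pi_t}(s,a';\ell_t)=0$ for every $a'\neq a$; then $V^{\pi_t}(s;\ell_t)=\pi_t(a\mid s)$, so $X = (1-\pi_t(a\mid s)) + (1-\pi_t(a\mid s)) = 2(1-\pi_t(a\mid s))$ with equality. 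Your proposed refinements (the $[0,1]$ budget constraint, or $V^{\pi_t}(s;\ell_t)\ge \pi_t(a\mid s)Q^{\pi_t}(s,a;\ell_t)$) are exactly what already gives you the factor $2$; they cannot give less.

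So the provable statement is
\begin{align*}
\bar{\ell}_t(s,a) - \frac{2\bigl(u_t(s)-q_t(s)\bigr)}{u_t(s)}\bigl(1-\pi_t(a\mid s)\bigr) \;\le\; \E\bigl[\ell_t^u(s,a)\mid\ell_t,\pi_t,u_t\bigr] \;\le\; \bar{\ell}_t(s,a),
\end{align*}
and the displayed constant in the lemma is a typo (as is the hypothesis ``$u_t(s)\ge q_t(a)$,'' which should be $u_t(s)\ge q_t(s)$). Note the paper offers no separate proof of this lemma, and everywhere the bound is actually used---e.g.\ the $\textsc{ErrSub}/\textsc{ErrOpt}$ analysis in the unknown-transition appendix, where one finds $\bar{E}^{\pi^\star}_t(s,a)\le \tfrac{2(u_t(s,a)-q_t(s,a))}{q_t(s,a)} + 4L\,B_{i(t)}(s,a)$---the factor $2$ is explicitly carried. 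Since the downstream regret bounds only depend on this constant up to $O(1)$, nothing breaks; just prove the version with the extra factor $2$ and move on rather than spending effort chasing an unachievable tightening.
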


\subsection{Second moment of loss estimators}
Our proposed algorithm,
like those of \citet{jin2020learning,jin2020simultaneously,jin2021best},
is based on the Follow-the-Regularized-Leader (FTRL) framework over occupancy measures.
In this framework,
the second moment of the loss estimator plays a crucial role.
The second moment of the loss estimator introduced in this section can be bounded as follow:
\begin{lemma}
  \label{lem:second-moment}
  Loss estimators $\widehat{\ell}_t$ in Lemma~\ref{lem:loss_estimator_KT}
  and $\ell_t^u$ in \eqref{eq:loss-estimator-unknown-0} satisfy
  \begin{align*}
    \E \left[
      \widehat{\ell}_t(s,a)^2
      \mid
      \ell_t,
      \pi_t
    \right]
    \le
    \frac{1 - \pi(a|s)}{q_t(s,a)},
    \quad
    \E \left[
      \ell_t^u(s,a)^2
      \mid
      \ell_t,
      \pi_t,
      u_t
    \right]
    \lesssim
    \frac{1 - \pi(a|s)}{u_t(s,a)} 
    \cdot
    \left(
    \frac{q_t(s)}{u_t(s)}
    + 
    1
    \right) .
  \end{align*}
\end{lemma}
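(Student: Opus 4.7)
The plan is to do a direct case analysis on the realized indicators $\ind_t(s)$ and $\ind_t(s,a)$, since each estimator is a simple rational function of these indicators. Throughout I write $\alpha = 1 - \pi_t(a|s)$ and $\beta = \pi_t(a|s)$, and I use the factorizations $q_t(s,a) = q_t(s)\beta$ and (in the unknown-transition case) $u_t(s,a) = u_t(s)\beta$, which follow from the fact that both occupancy measures are induced by the same policy $\pi_t$.

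For $\widehat{\ell}_t$, since $\ind_t(s,a) \le \ind_t(s)$, only three cases occur: (i) $\ind_t(s) = 0$, where the estimator vanishes; (ii) $\ind_t(s)=\ind_t(s,a)=1$, where $\widehat{\ell}_t(s,a) = c_t \cdot \alpha/q_t(s,a)$; (iii) $\ind_t(s)=1$, $\ind_t(s,a)=0$, where $\widehat{\ell}_t(s,a) = -c_t/q_t(s)$. Using $c_t^2 \le 1$ and taking the conditional expectation over the trajectory (weights $q_t(s,a)$ and $q_t(s) - q_t(s,a)$ respectively), we get
\begin{equation*}
\E[\widehat{\ell}_t(s,a)^2 \mid \ell_t, \pi_t]
\le q_t(s,a) \cdot \frac{\alpha^2}{q_t(s,a)^2} + (q_t(s)-q_t(s,a)) \cdot \frac{1}{q_t(s)^2}
= \frac{\alpha^2}{q_t(s,a)} + \frac{\alpha}{q_t(s)},
\end{equation*}
and the identity $q_t(s)=q_t(s,a)/\beta$ collapses this to $\alpha(\alpha+\beta)/q_t(s,a)=\alpha/q_t(s,a)$, which is the claim.

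For $\ell_t^u$, I decompose $\ell_t^u(s,a) = A_t + B$ where $B = -\alpha$ is deterministic and $A_t$ is the random fraction, and apply $(A_t+B)^2 \le 2A_t^2 + 2B^2$. The three cases of $A_t$ are: (i) $A_t=0$; (ii) $A_t = \alpha(c_t+\beta)/u_t(s,a)$; (iii) $A_t = (\alpha-c_t)\beta/u_t(s,a)$. Since $c_t \in [0,1]$, one has $|c_t+\beta|\le 2$ and $|\alpha-c_t|\le 1$, so $A_t^2 \le 4\alpha^2/u_t(s,a)^2$ in case (ii) and $A_t^2 \le \beta^2/u_t(s,a)^2$ in case (iii). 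Taking expectations over $\ind_t$:
\begin{equation*}
\E[A_t^2]
\le \frac{4\alpha^2 q_t(s,a) + \beta^2 (q_t(s)-q_t(s,a))}{u_t(s,a)^2}
= \frac{q_t(s)\alpha\beta(4\alpha+\beta)}{u_t(s,a)^2}
\lesssim \frac{q_t(s)\alpha\beta}{u_t(s,a)^2}.
\end{equation*}
Using $u_t(s,a)=u_t(s)\beta$, this becomes $\lesssim \frac{\alpha}{u_t(s,a)}\cdot\frac{q_t(s)}{u_t(s)}$. Finally, $B^2 = \alpha^2 \le \alpha \le \alpha/u_t(s,a)$ (since $u_t(s,a)\le 1$), which is dominated by the target bound. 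Combining gives the claimed estimate.

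The main technical obstacle is the unknown-transition case: one must carefully track the $\alpha$/$\beta$ powers that arise in the two nontrivial cases so that, after summing and using the factorizations $q_t(s,a)=q_t(s)\beta$ and $u_t(s,a)=u_t(s)\beta$, the $\beta$ factors in the denominator cancel against those in the numerator to leave precisely the form $(\alpha/u_t(s,a))(q_t(s)/u_t(s)+1)$. The deterministic shift $B=-\alpha$ contributes an additive $\alpha^2$ term that one must absorb into the $+1$ of the final factor using $u_t(s,a)\le 1$; without this observation the bound would look worse by a factor of $1/u_t(s,a)$.
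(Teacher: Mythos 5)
Your proof is correct and follows essentially the same approach as the paper. For $\widehat{\ell}_t$ your three-case enumeration is exactly the paper's conditional-expectation computation (Lemma~\ref{lem:ellhatsquare}), and for $\ell_t^u$ the paper's variant regroups the numerator as $c_t\bigl(\ind_t(s,a)-\ind_t(s)\pi_t(a|s)\bigr)+(1-\pi_t(a|s))\ind_t(s)\pi_t(a|s)$ and uses the conditional variance of $\ind_t(s,a)$ given $\ind_t(s)$ rather than enumerating the indicator cases directly, but the two computations are line-for-line equivalent. Both arguments hinge on the factorizations $q_t(s,a)=q_t(s)\pi_t(a|s)$ and $u_t(s,a)=u_t(s)\pi_t(a|s)$, and on absorbing the deterministic $-(1-\pi_t(a|s))$ shift into the $+1$ factor, so the conclusion is reached the same way.
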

When applying the self-bounding technique to derive an $O(\mathrm{polylog}\, T)$ regret bound,
the $(1 - \pi(a|s))$ factor in this lemma plays a crucial role.
In prior work \citep{jin2021best,jin2023no},
the original loss estimator did not exhibit this factor in its second moment,
and hence the analysis relied on a carefully designed \textit{shifting function} to apply a \textit{loss-shifting trick} and extract the desired $(1 - \pi(a|s))$ factor.
However,
this significantly complicated the analysis.
In contrast,
our regret analysis does not require the loss-shifting trick,
as the self-bounding technique can be applied directly.
As a result, we avoid the technically involved analysis necessitated by the loss-shifting trick in previous work.




\section{Algorithm and regret bounds}

\subsection{Warmup: online shortest path problem}
\label{sec:OSP}
As a warm-up,
let us consider the algorithm for the online shortest path problem.
Following the approach of \citet{maiti2025efficient},
we update a point $q_t$ on the $s$-$g$ unit flow polytope $\cQ$
(i.e., the convex hull of all $s$-$g$ paths) using the following FTRL framework:
$
  q_t \in
  \argmin_{q \in \cQ}
  \left\{
    \linner
    \sum_{\tau=1}^{t-1} \widehat{\ell}_\tau,
    q
    \rinner
    +
    \psi_t(q)
  \right\}
$,
where $\widehat{\ell}_\tau$ is given as in Section~\ref{sec:estimator_SP}
and $\psi_t(q)$ is a regularizer function defined as:
\begin{align}
  \psi_t(q)
  &
  =
  -
  \frac{2}{\eta_t}
  \sum_{e \in E}
  \sqrt{q(e)}
  -
  \sum_{e \in E}
  \beta
  \ln q(e)
  \quad
  \mbox{with}
  \quad
  \eta_t = \frac{1}{\sqrt{t}},
  ~
  \beta = \Theta(1),
  \tag{Tsallis entropy}
  \label{eq:defpsi-SP-Tsallis}
  \\
  \psi_t(q)
  &
  =
  -
  \sum_{e \in E}
  \frac{1}{\eta_t(e)}
  \ln q(e)
  \quad
  \mbox{with}
  \quad
  \eta_t(e) 
  = \big(4 + \frac{1}{\ln T} \sum_{\tau = 1}^{t-1} 
  \rho_{\tau}(e)
  \big)^{-\frac{1}
  {2}},
  \tag{log-barrier}
  \label{eq:defpsi-SP-LB}
\end{align}
where we define 
$\rho_t(e) = c_{t}^2 p_t(e) \big( 1 - \frac{q_t(e)}{q_{t}(e_-)} \big)^2 $.
We then have the following regret upped bounds:
\begin{theorem}
  \label{thm:OSP}
  Let $p^* \in \{0 , 1 \}^E$ be an arbitrary $s$-$g$ path
  and $\pi^*: V \cup \{s\} \rightarrow E$ be such that
  $\pi^*(v) \in \oes v$ for all $v \in V \cup \{ s \}$ and $p^*(e) = 1 \Longrightarrow \pi^*(e_-) = e$.
  In the case of \ref{eq:defpsi-SP-Tsallis},
  \begin{align*}
    \Reg_T(p^*)
    \lesssim
    \sum_{t=1}^T\displaystyle
    \frac{1}{\sqrt{t}}
    \E
    \Big[
      \!\sum_{v \in V \cup \{ s \}}\!
      \sum_{e \in \oes v \setminus \{ \pi^*(v) \}}\!
      \sqrt{q_t(e)}
      +\!
      \sum_{v \in V \setminus V^*}\!
      \sqrt{ q_t(\pi^*(v)) }
    \Big]
    +
    \sqrt{mnL}
    +
    m \log T.
  \end{align*}
  If $\psi_t$ is given by \ref{eq:defpsi-SP-LB} regularizer,
  we have
  \begin{align*}
    \Reg_T
    \lesssim
    \E \Big[
      \sum_{v \in V \cup \{ s \}}
      \sum_{e \in \oes v}
      \sqrt{ \textstyle\sum_{t=1}^T\displaystyle c_t^2 p_t(e) ( 1 - \tfrac{q_t(e)}{q_t(v)} )^2 \log (T) }
    \Big] + m \log (T).
  \end{align*}
\end{theorem}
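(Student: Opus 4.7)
The plan is to apply the standard FTRL regret analysis on the $s$--$g$ unit flow polytope $\cQ$ with the unbiased estimator $\widehat{\ell}_t$ from Section~\ref{sec:estimator_SP}. First I would lift the path-level unbiasedness $\E[\widehat{L}_t(p)\mid q_t,\ell_t] = \linner \ell_t, p\rinner$ (equivalently $\E[\linner \widehat{\ell}_t, p\rinner\mid q_t,\ell_t] = \linner \ell_t, p\rinner - \linner\ell_t, q_t\rinner$) to arbitrary $q \in \cQ$ by decomposing $q$ as a convex combination of paths, which rewrites the regret as $\Reg_T(p^*) = \E\bigl[\sum_t\linner \widehat{\ell}_t, q_t - p^*\rinner\bigr]$. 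Because the log-barrier diverges at a pure path, I would compare instead against a perturbation $q^{**} = (1-\gamma)p^* + \gamma u$, where $u\in\cQ$ is an interior reference flow with $u(e)\ge 1/m$ and $\gamma = \Theta(1/T)$, so that the bias $\linner \widehat{\ell}_t, q^{**}-p^*\rinner$ aggregates to $O(1)$ in expectation.

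Next, I would invoke the FTRL inequality for time-varying regularizers, bounding per-round stability by $\tfrac12\|\widehat{\ell}_t\|_{(\nabla^2\psi_t(q_t))^{-1}}^2$. Separability makes $(\nabla^2\psi_t)^{-1}_{ee}$ bounded by $2\eta_t q_t(e)^{3/2}$ for Tsallis and by $\eta_t(e)\,q_t(e)^2$ for log-barrier. The key estimator calculation is a three-case split of $\widehat{\ell}_t(e) = c_t\bigl(p_t(e)/q_t(e) - p_t(e_-)/q_t(e_-)\bigr)$ on the events $\{p_t(e)=1\}$, $\{p_t(e_-)=1,\,p_t(e)=0\}$, $\{p_t(e_-)=0\}$, with respective probabilities $q_t(e)$, $q_t(e_-)-q_t(e)$, $1-q_t(e_-)$. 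The resulting second moment telescopes:
\[
\E[\widehat{\ell}_t(e)^2\mid q_t,\ell_t] \;\le\; \frac{1}{q_t(e)} - \frac{1}{q_t(e_-)}.
\]
Substituting yields per-round stabilities $\eta_t\sum_e \sqrt{q_t(e)}\bigl(1 - q_t(e)/q_t(e_-)\bigr)$ in the Tsallis case, and $\sum_e \eta_t(e)\,\rho_t(e)$ in the log-barrier case (since $q_t(e)^2\widehat{\ell}_t(e)^2 = \rho_t(e)$ exactly when $p_t(e)=1$).

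To arrive at the refined form in the theorem---which excludes $\pi^*(v)$ for $v \in V^*\cup\{s\}$---I would decompose the Tsallis stability vertex by vertex. For such $v$ with $e^* := \pi^*(v)$, write $1 - q_t(e^*)/q_t(v) = \sum_{e \in \oes v\setminus\{e^*\}} q_t(e)/q_t(v)$ and use $q_t(e),q_t(e^*) \le q_t(v)$ to obtain $\sqrt{q_t(e^*)}\,q_t(e)/q_t(v) \le q_t(e)/\sqrt{q_t(v)} \le \sqrt{q_t(e)}$, thereby absorbing the $e^*$-contribution into $\sum_{e\neq e^*}\sqrt{q_t(e)}$; for $v \in V\setminus V^*$ the full outgoing-star sum is retained, which is exactly the extra $\sqrt{q_t(\pi^*(v))}$ term. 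The penalty $\psi_{T+1}(q^{**}) - \min_q\psi_1(q)$ is then handled cleanly: the log-barrier contributes $O(m\log T)$ because $q^{**}(e)\ge 1/(mT)$, while the Tsallis part $-(2/\eta_{T+1})\sum_e \sqrt{q^{**}(e)}$ is bounded via the flow-polytope vertex decomposition $\sum_e \sqrt{q_e} \le \sum_v \sqrt{|\oes v|\,q_v}$ combined with Cauchy--Schwarz (using $\sum_v q_v \le L$), delivering an $O(\sqrt{mnL})$ contribution. For log-barrier, the standard adaptive step-size bound $\sum_t \eta_t(e)\rho_t(e) \lesssim \sqrt{\log T\cdot\sum_{t=1}^T\rho_t(e)}$ combines with $(\log T)/\eta_{T+1}(e)$ to yield the stated $\sqrt{(\log T)\sum_t \rho_t(e)}$ terms plus an $O(m\log T)$ remainder.

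I expect the main obstacle to be the refinement that drops the optimal-path edges from the Tsallis stability and the careful bookkeeping between this step and the penalty: the edges $\pi^*(v)$ for $v \in V^*\cup\{s\}$ are precisely where $q_t$ should concentrate, and the $(1-q_t(e)/q_t(e_-))$ factor inherited from \citet{maiti2025efficient}'s estimator has to cancel their contribution cleanly enough to survive vertex-wise aggregation. The remaining pieces---the three-case second-moment calculation, the $\gamma = \Theta(1/T)$ perturbation, and the adaptive-rate summation for log-barrier---are routine applications of the established FTRL toolkit.
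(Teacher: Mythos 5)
Your second-moment telescope
\[
\E\bigl[\widehat{\ell}_t(e)^2\mid q_t,\ell_t\bigr]\;\le\;\frac{1}{q_t(e)}-\frac{1}{q_t(e_-)}
\]
is correct (the paper computes the same thing, just wrapped with the $q_t(e)^{3/2}$ factor from the stability lemma), and your vertex-wise absorption of the $\pi^*(v)$ edge via $q_t(\pi^*(v))\le q_t(v)$ reproduces the paper's Lemma~\ref{lem:stabTsallis-SP}. The log-barrier penalty of $O(m\log T)$ from $q^{**}(e)\ge 1/(mT)$ and the adaptive learning-rate summation for the log-barrier case are also right.

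The gap is in the Tsallis penalty. You bound it as a single lump $\psi_{T+1}(q^{**})-\min_q\psi_1(q)$, but with a time-varying regularizer $\psi_t = -\tfrac{2}{\eta_t}\sum_e\sqrt{q(e)}-\beta\sum_e\ln q(e)$ with $\eta_t$ decreasing, the FTRL inequality has an additional term $\sum_t\bigl[\psi_t(q_{t+1})-\psi_{t+1}(q_{t+1})\bigr] = 2\sum_t\bigl(\tfrac{1}{\eta_{t+1}}-\tfrac{1}{\eta_t}\bigr)\sum_e\sqrt{q_{t+1}(e)}$ that your accounting drops. This is precisely the dominant, $T$-dependent part of the penalty, and it does not vanish. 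The paper handles it via the per-round decomposition $\pena_t = \psi_t(q^*)-\psi_{t-1}(q^*)-\psi_t(q_t)+\psi_{t-1}(q_t)$ in \eqref{eq:spdecomposition}, which telescopes to $\lesssim\eta_t\sum_e\bigl(\sqrt{q_t(e)}-\sqrt{q^*(e)}\bigr)$; the crucial point is the cancellation against $\sqrt{q^*(e)}\approx 1$ for $e\in E^*$, which is what yields $\pena_t\lesssim\eta_t\sum_{e\notin E^*}\sqrt{q_t(e)}$ and hence the $q_t$-dependent main sum (including, incidentally, the $\sum_{v\in V\setminus V^*}\sqrt{q_t(\pi^*(v))}$ term, which in the paper comes from the penalty rather than from retaining the full outgoing star in the stability). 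Without this per-round differencing, the penalty would collapse to a worst-case $\Theta(\sqrt{mLT})$ — note also that the arithmetic you give yields $\sum_e\sqrt{q(e)}\le\sqrt{mL}$, not $\sqrt{mnL}$, and once multiplied by $1/\eta_{T+1}=\sqrt{T+1}$ you get $\sqrt{mLT}$, so the stated $O(\sqrt{mnL})$ does not follow. Since Theorem~\ref{thm:OSP} requires the full $q_t$-dependent form (and the $\sqrt{mnL}$ additive term is superfluous slack, not where the actual penalty lives), you need to replace the lump-sum by the per-round $\pena_t$ analysis of Lemma~\ref{lem:penaTsallis-SP}.
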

\begin{corollary}\label{cor:bobw-shortest-path}
  We have
  $\Reg_T \lesssim \sqrt{mL(n+T)} + n \log T$
  in the \ref{eq:defpsi-SP-Tsallis} case
  and
  $\Reg_T \lesssim \sqrt{mL(n+T) \log T}$
  in the \ref{eq:defpsi-SP-LB} case.
  Simultaneously,
  under the condition of 
  $
    \Reg_T(p^*)
    \ge
    \E\left[
      \sum_{t=1}^T \sum_{v \in V \cup \{ s \}} \sum_{e \in \oes v \setminus \{ \pi^*(v) \}} \Delta(e) p_t(e) \right] - C 
  $ 
  for some $\Delta \in [0,1]^E$ and $C \ge 0$, 
  we have 
  $\Reg_T(p^*) \lesssim U + \sqrt{UC}$, 
  where 
  $ 
    U 
    = 
    \sum_{v \in V \cup \{ s \}} \sum_{e \in \oes v \setminus \{ \pi^*(v) \}} \frac{\log T}{\Delta(e)} 
    +
    \frac{n^2 \log T}{\Delta_{\min}} 
  $ 
  in the \ref{eq:defpsi-SP-Tsallis} case and
  $ 
    U 
    =
    \sum_{v \in V \cup \{ s \}} \sum_{e \in \oes v \setminus \{ \pi^*(v) \}} \frac{\log T}{\Delta(e)}
  $
  for the \ref{eq:defpsi-SP-LB} case.
\end{corollary}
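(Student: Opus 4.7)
The corollary has two parts: the adversarial bounds, which drop instance dependence, and the self-bounding bounds, which exploit the gap structure under corruption. My plan is to derive the adversarial bounds by a direct Cauchy--Schwarz reduction applied to Theorem~\ref{thm:OSP}, and then to derive the self-bounding bounds via AM--GM on each $\sqrt{q_t(e)}$, combined with a standard self-bounding rearrangement.

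For the adversarial bound in the \ref{eq:defpsi-SP-Tsallis} case, I would apply Cauchy--Schwarz to both inner sums in Theorem~\ref{thm:OSP}, using the fact that $q_t \in \cQ$ is an $s$-$g$ unit flow and hence $\sum_{e \in E} q_t(e) \le L$ (the total flow-mass over all edges equals the expected path length, which is at most $L$). This gives $\sum_{v, e \neq \pi^*(v)} \sqrt{q_t(e)} \le \sqrt{mL}$ and $\sum_{v \notin V^*} \sqrt{q_t(\pi^*(v))} \le \sqrt{nL}$. Summing $1/\sqrt{t}$ over $t$ and combining with the additive $\sqrt{mnL} + m\log T$ then produces $\sqrt{mL(n+T)} + n\log T$ (after absorbing the $m\log T$ term). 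The \ref{eq:defpsi-SP-LB} case is analogous, except Cauchy--Schwarz is applied after the square root: $\sum_e \sqrt{\sum_t c_t^2 p_t(e)(1-q_t(e)/q_t(v))^2 \log T} \le \sqrt{m \log T \sum_e \sum_t c_t^2 p_t(e)}$, and then $\sum_e p_t(e) \le L$ together with $c_t \le 1$ gives $\sum_e \sum_t c_t^2 p_t(e) \le LT$.

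For the self-bounding bounds, I would use $\sqrt{x} \le \lambda x/\delta + \delta/(4\lambda)$ for $\lambda, \delta > 0$ on each square root term. Edges $e$ with $e \neq \pi^*(e_-)$ have $\Delta(e) > 0$, so setting $\delta = 1/\Delta(e)$ converts $\sqrt{q_t(e)}/\sqrt{t}$ into a $\lambda$-scaled multiple of $\Delta(e) q_t(e)$ plus a deterministic $1/(4\lambda \Delta(e) \sqrt{t})$ term. Summing over $t$ absorbs the first piece into $\lambda (\Reg_T(p^*)+C)$ via the self-bounding constraint, while the second piece sums to $\sqrt{T}/\lambda$ times the reciprocal-gap sum, which after a second Cauchy--Schwarz/AM--GM gives $U_0 := \sum_{v, e \neq \pi^*(v)} \log T / \Delta(e)$. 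For the Tsallis-specific term $\sum_{v \notin V^*} \sqrt{q_t(\pi^*(v))}$, the key observation is that $v \notin V^*$ implies every $s$-$v$ path uses at least one suboptimal edge, so $q_t(v) \le \sum_{e \neq \pi^*(e_-)} q_t(e) \le \Delta_{\min}^{-1} \sum_e \Delta(e) q_t(e)$; applying AM--GM again and summing over the at most $n$ such vertices yields the extra $n^2 \log T / \Delta_{\min}$ contribution to $U$. Finally, choosing $\lambda = \Theta(\sqrt{U/(\Reg_T(p^*)+C)})$ in the resulting inequality $\Reg_T(p^*) \lesssim \lambda(\Reg_T(p^*)+C) + U/\lambda$ and solving produces $\Reg_T(p^*) \lesssim U + \sqrt{UC}$. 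The main obstacle I expect is the off-path term $\sum_{v \notin V^*} \sqrt{q_t(\pi^*(v))}$, whose reduction requires the structural observation above and is responsible for the $n^2 \log T / \Delta_{\min}$ factor that is absent in the log-barrier case (which has no such off-path term in Theorem~\ref{thm:OSP}).
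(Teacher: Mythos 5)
Your adversarial bounds are derived exactly as in the paper: apply Cauchy--Schwarz edgewise and vertexwise, using $\sum_{e \in E} q_t(e) \le L$ and $\sum_{v \in V} q_t(v) \le L$, and then sum $1/\sqrt{t}$. The structural reduction of the off-path term, $q_t(v) \le \sum_{v' \in V^* \cup \{s\}} \sum_{e \in \oes v' \setminus \pi^*(v')} q_t(e)$ for $v \notin V^*$, is the content of the paper's Lemma~\ref{lem:qve}; you invoke a looser version (sum over all suboptimal edges rather than only those leaving $V^* \cup \{s\}$), which is still correct and gives the main-body $\Delta_{\min}$ denominator, though the appendix proves the sharper $\Delta^*$ version. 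Note, as a bookkeeping matter, the additive term from the penalty is $m\log T$, not $n\log T$, and cannot be absorbed as you claim (this appears to be a typo inherited from the main-body corollary).

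The self-bounding step, however, has a concrete error. You apply AM--GM with a time-uniform balance parameter $\lambda$ and obtain the deterministic remainder $1/(4\lambda\Delta(e)\sqrt{t})$, which sums to $\sqrt{T}/(\lambda\Delta(e))$. No ``second Cauchy--Schwarz/AM--GM'' can subsequently reduce that $\sqrt{T}$ to $\log T$: once you have a deterministic $\sqrt{T}$ term there is nothing left to self-bound against, and optimizing $\lambda$ in $\Reg_T(p^*) \lesssim \lambda(\Reg_T(p^*)+C) + \sqrt{T}\,U'/\lambda$ leaves a polynomial-in-$T$ regret, not the claimed $U + \sqrt{UC}$. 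The fix is to make the balance parameter time-dependent, $\lambda_t = \lambda\sqrt{t}$, which turns the remainder into $1/(4\lambda\Delta(e)t)$ summing to $\log T/(\lambda\Delta(e))$; equivalently, as the paper does, apply Cauchy--Schwarz over $e$ (pulling out $\sqrt{\sum 1/\Delta(e)}$) and then over $t$ (pulling out $\sqrt{\log T}$) to arrive at $\Reg_T(p^*) \lesssim \sqrt{U\,\E[\sum_t\sum_e\Delta(e)q_t(e)]}$, apply Jensen and the self-bounding constraint, and solve the resulting quadratic $\Reg_T(p^*) \lesssim \sqrt{U(\Reg_T(p^*)+C)}$. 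The two self-bounding templates are mathematically interchangeable, but as you wrote it the AM--GM version loses a $\sqrt{T}/\log T$ factor and the proof does not close.
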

The tightness of the gap-dependent upper bound derived here is discussed in the appendix.

\subsection{MDPs with known transition}
\label{sec:known_transition}
The algorithm design for episodic MDPs with known transitions is almost identical to the case of the shortest path problem.
Specifically,
we apply the FTRL framework over the set of all occupancy measures as the feasible region,
replace each edge $e \in E$ in the regularization functions in \eqref{eq:defpsi-SP-Tsallis} and \eqref{eq:defpsi-SP-LB}
with a state-action pair $(s, a) \in S \subseteq \{s_L\} \times A$,
and redefine $\rho_t$ as
$
\rho_t(s, a) = c_t^2 \ind_t(s, a) \left(1 - \pi_t(a \mid s)\right)^2
$.
With this setup,
we obtain the following regret bound:
\begin{corollary}\label{cor:result-known-mdp}
  We have
  $\Reg_T \lesssim \sqrt{|S||A|LT} + |S||A| \log T$
  in the \ref{eq:defpsi-SP-Tsallis} case
  and
  $\Reg_T \lesssim \sqrt{|S||A|LT \log T}$
  in the \ref{eq:defpsi-SP-LB} case.
  Simultaneously,
  under the condition of \eqref{eq:self-bounding-MDP},
  we have
  $\Reg_T(\pi^*) \lesssim U + \sqrt{UC}$, 
  where 
  $ 
    U 
    = 
    \sum_{s \neq s_{L}} \sum_{a \neq \pi^*(s)} \frac{\log T}{\Delta(s,a)} 
    +
    \frac{L |S| \log T}{\Delta_{\min}} 
  $ 
  in the \ref{eq:defpsi-SP-Tsallis} case and
  $ 
    U 
    =
    \sum_{s \neq s_{L}} \sum_{a \neq \pi^*(s)} \frac{\log T}{\Delta(s,a)} 
  $
  for the \ref{eq:defpsi-SP-LB} case.
\end{corollary}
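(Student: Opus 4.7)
The approach is to adapt Theorem~\ref{thm:OSP} and Corollary~\ref{cor:bobw-shortest-path} from the shortest-path setting to episodic MDPs. The correspondence is: the occupancy polytope $\cQ^{P}$ plays the role of the $s$-$g$ unit flow polytope, state-action pairs $(s,a)\in(S\setminus\{s_L\})\times A$ correspond to edges, states $s\in S$ correspond to vertices, and the unbiased estimator $\widehat{\ell}_t$ from Lemma~\ref{lem:loss_estimator_KT} replaces its OSP counterpart. Since Lemmas~\ref{lem:PDL} and~\ref{lem:loss_estimator_KT} together imply $\Reg_T(\pi^*)=\E[\sum_t\linner\widehat{\ell}_t,q_t-q^*\rinner]$, the FTRL regret decomposition used to prove Theorem~\ref{thm:OSP} transfers verbatim. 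The critical feature is that the second-moment bound of Lemma~\ref{lem:second-moment} carries the same $(1-\pi_t(a|s))$ factor as in the OSP case, so the same regularizer-dependent per-round bounds apply.

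Following the proof of Theorem~\ref{thm:OSP} with these substitutions yields, for the Tsallis regularizer,
\begin{align*}
\Reg_T(\pi^*)\lesssim \sum_{t=1}^T\frac{1}{\sqrt{t}}\,\E\!\left[\sum_{s\ne s_L}\sum_{a\ne\pi^*(s)}\!\sqrt{q_t(s,a)}+\sum_{s\in S\setminus S^*}\!\sqrt{q_t(s,\pi^*(s))}\right]+|S||A|\log T,
\end{align*}
and for the log-barrier, $\Reg_T\lesssim \E[\sum_{s,a}\sqrt{\sum_t c_t^2\ind_t(s,a)(1-\pi_t(a|s))^2\log T}]+|S||A|\log T$. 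The adversarial part of the corollary follows by Cauchy--Schwarz combined with the identity $\sum_{(s,a)}q_t(s,a)=L$ (every trajectory visits exactly $L$ state-action pairs): $\sum_{s,a}\sqrt{q_t(s,a)}\le\sqrt{|S||A|L}$, and then $\sum_{t=1}^T 1/\sqrt{t}=O(\sqrt{T})$ gives the $\sqrt{|S||A|LT}$ rate. The log-barrier case is handled analogously after applying Jensen's inequality and $\E[\ind_t(s,a)(1-\pi_t(a|s))^2]=q_t(s,a)(1-\pi_t(a|s))^2$.

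For the best-of-both-worlds bound, I apply the standard self-bounding trick. For each term with $a\ne\pi^*(s)$, AM-GM gives, for any $\lambda>0$, $\sqrt{q_t(s,a)/t}\le\lambda\Delta(s,a)q_t(s,a)+\frac{1}{4\lambda t\Delta(s,a)}$; summing over $t$ contributes $\log T/(4\lambda\Delta(s,a))$ from the residual, and the aggregated $q_t$ part is bounded by $\lambda(\Reg_T(\pi^*)+C)$ via \eqref{eq:self-bounding-MDP}. Optimizing $\lambda$ yields the desired $U+\sqrt{UC}$ form with the contribution $U\supset\sum_{s\ne s_L,a\ne\pi^*(s)}\log T/\Delta(s,a)$. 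The log-barrier case proceeds identically.

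The main technical obstacle is the second sum $\sum_{s\notin S^*}\sqrt{q_t(s,\pi^*(s))}$ in the Tsallis bound, since $q_t(s,\pi^*(s))$ is not directly controlled by any per-action gap. I resolve this with a flow-leakage argument: any trajectory reaching $s\notin S^*$ must have played some $a\ne\pi^*(s')$ at a predecessor $s'\in S^*$, hence $\sum_{s\in(S\setminus S^*)\cap S_k}q_t(s)\le\sum_{k'<k}\sum_{s'\in S^*\cap S_{k'}}\sum_{a\ne\pi^*(s')}q_t(s',a)$, and summing over layers gives $\sum_{s\notin S^*}q_t(s)\le L\sum_{s'\in S^*,\,a\ne\pi^*(s')}q_t(s',a)$. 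Cauchy--Schwarz over the at most $|S|$ such states, followed by the same AM-GM step with $\Delta_{\min}$ in place of $\Delta(s,a)$ and the self-bounding constraint, then yields the additive $L|S|\log T/\Delta_{\min}$ term in the Tsallis bound. The log-barrier case avoids this extra term because the $(1-\pi_t(a|s))^2$ factor in $\rho_t$ already localizes contributions to strictly suboptimal actions, making an invocation of $\Delta_{\min}$ unnecessary.
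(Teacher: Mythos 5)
Your proposal is correct and follows essentially the same route as the paper: translating the FTRL decomposition \eqref{eq:spdecomposition} to occupancy measures via Lemmas~\ref{lem:PDL} and~\ref{lem:loss_estimator_KT}, bounding stability using the $(1-\pi_t(a|s))$ second-moment factor of Lemma~\ref{lem:second-moment}, bounding the penalty, and then closing with Cauchy--Schwarz / self-bounding. The one place you diverge is the penalty term for Tsallis entropy: the paper's Theorem~\ref{thm:Tsallis-KT} invokes Lemma~6 (with $\alpha=0$, i.e.\ Lemmas~16 and~19) of \citet{jin2020simultaneously} to obtain the $\sqrt{L|S|\sum_{s,a\neq\pi^*(s)}(q_t(s,a)+q^*(s,a))}$ term directly, whereas you re-derive an equivalent bound from scratch by mirroring Lemma~\ref{lem:qve} from the OSP section: you first obtain $\sum_{s\notin S^*}\sqrt{q_t(s,\pi^*(s))}$ from the penalty, then use the leaving-$S^*$ union bound $\sum_{s\notin S^*}q_t(s)\le L\sum_{s'\in S^*}\sum_{a\neq\pi^*(s')}q_t(s',a)$ and Cauchy--Schwarz over $|S|$ states. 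That argument is sound and arrives at the same $\sqrt{L|S|\cdot\sum q_t}$ quantity (note: Theorem~\ref{thm:Tsallis-KT} in the paper writes $L|S||A|$ inside the square root, which appears to be a typo given that the corollary and your derivation both give $L|S|$), so the stochastic contribution $L|S|\log T/\Delta_{\min}$ matches. Your treatment of the log-barrier case and the observation that the $(1-\pi_t(a|s))^2$ weight eliminates the need for the $\Delta_{\min}$ term are also consistent with the paper's Corollary~\ref{cor:log-barrier-SP-appendix} proof, where the optimal-action contribution is absorbed into $\sum_{a\neq\pi^*(s)}\sqrt{\E[\sum_t q_t(s,a)]}$ via the identity $q_t(s)-q_t(s,\pi^*(s))=\sum_{a\neq\pi^*(s)}q_t(s,a)$.
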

The gap-dependent upper bound achieved by the log-barrier regularization in this corollary is tight.
In fact,
the following lower bound holds:
\begin{theorem}
  Consider stochastic environment in which $c_t$ follows a Bernoulli distribution of parameter 
  $\sum_{k=1}^{L-1} \ell^*(s_t^k, a_t^k)$ 
  where we assume that this value is in $[3/8, 5/8]$ for any possible trajectories.
  Define $\Delta:S \times A \to [0, 1]$ by $\Delta(s,a) = Q^{\pi^*}(s, a; \ell^*) - V^{\pi^*}(s; \ell^*)$
  for an optimal policy $\pi^*$.
  Let $S^* $ be the set of all states
  $s \in S \setminus \{ s_L \}$
  such that $q^{\pi^*}(s)>0$ for some optimal policy $\pi^*$.
  Then,
  for any consistent algorithms,
  we have
  $
  \liminf_{T \to \infty} \frac{\Reg_T}{\log T} 
  \gtrsim \sum_{s \in S^*}\sum_{a \in A : \Delta(s,a) > 0} 
  \frac{1}{\Delta(s,a)}
  $.
\end{theorem}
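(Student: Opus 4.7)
The plan is to adapt the classical Lai--Robbins change-of-measure argument to aggregate Bernoulli feedback in episodic MDPs. For each $(s,a)$ with $s \in S^*$ and $\Delta(s,a) > 0$, I construct a perturbed loss $\ell'$ in which $a$ becomes uniquely optimal at $s$, then invoke the information-processing inequality to lower bound $\E_{\ell^*}[N_T(s,a)]$, where $N_T(s,a) := \sum_{t=1}^T \ind[(s,a) \in \tau_t]$. Summing the contributions $\Delta(s,a)\,\E_{\ell^*}[N_T(s,a)]$ over all such $(s,a)$ then yields the claim via the identity $\Reg_T = \sum_{s',a'} \Delta(s',a')\, \E_{\ell^*}[N_T(s',a')]$, which follows from the performance-difference lemma.

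First I would fix an optimal policy $\pi^*$ with $q^{\pi^*}(s) > 0$ (possible since $s \in S^*$) and a small $\epsilon > 0$, and construct $\ell'$ by modifying $\ell^*$ only at actions out of $s$: decrease $\ell^*(s,a)$ by $\Delta(s,a) + \epsilon$, compensating at other actions out of $s$ if needed to keep every trajectory aggregate in $[3/8, 5/8]$ so that $\ell'$ remains a valid instance in the class. The layered DAG structure and Bellman equations give $V^*(s;\ell') = V^*(s;\ell^*) - \epsilon$, so the alternative gaps satisfy $\Delta'(s,a) = 0$ and $\Delta'(s,a') \ge \epsilon$ for $a' \ne a$; since reaching $s$ becomes strictly more attractive under $\ell'$, any optimal policy $\tilde{\pi}^*$ under $\ell'$ satisfies $q^{\tilde{\pi}^*}(s) \ge q^{\pi^*}(s)$. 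By consistency, both $\Reg_T^{\ell^*}$ and $\Reg_T^{\ell'}$ are $o(T^\alpha)$ for every $\alpha > 0$. The regret decomposition applied under $\ell^*$ gives $\Delta(s,a)\,\E_{\ell^*}[N_T(s,a)] \le o(T^\alpha)$, while the same decomposition applied under $\ell'$ with gaps $\Delta'$, combined with a layer-by-layer induction backward from $s$ through its predecessors, yields $\E_{\ell'}[N_T(s,a)] \ge T\, q^{\pi^*}(s) - o(T^\alpha)$. For the event $E = \{N_T(s,a) \ge T\, q^{\pi^*}(s)/2\}$, Markov's inequality then gives $P_{\ell^*}(E) = o(T^{\alpha-1})$ and $P_{\ell'}(E^c) = o(T^{\alpha-1})$.

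Next, the Garivier--Kaufmann--Lattimore (data-processing) inequality yields $\KL(P_{\ell^*}^T, P_{\ell'}^T) \ge \mathrm{kl}(P_{\ell^*}(E), P_{\ell'}(E)) \ge (1-\alpha)(1-o(1))\log T$. On the other hand, since the transitions coincide under $\ell^*$ and $\ell'$ and $c_t$'s Bernoulli parameter $p_{\tau_t} \in [3/8, 5/8]$ is shifted by exactly $\Delta(s,a) + \epsilon$ precisely on trajectories visiting $(s,a)$, the KL chain rule gives $\KL(P_{\ell^*}^T, P_{\ell'}^T) = \E_{\ell^*}\bigl[\sum_t \ind[(s,a)\in\tau_t]\, \mathrm{kl}(p_{\tau_t}, p_{\tau_t}-\Delta(s,a)-\epsilon)\bigr] \le C\,(\Delta(s,a)+\epsilon)^2\, \E_{\ell^*}[N_T(s,a)]$, where $C$ is a universal constant depending only on the range $[3/8, 5/8]$. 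Combining the two bounds, multiplying by $\Delta(s,a)$, summing over $(s,a)$, dividing by $\log T$, and taking $T \to \infty$ followed by $\alpha, \epsilon \to 0$ yields $\liminf \Reg_T/\log T \gtrsim \sum_{s \in S^*,\,a:\,\Delta(s,a)>0} 1/\Delta(s,a)$.

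The hardest step is the bound $\E_{\ell'}[N_T(s,a)] \ge T\, q^{\pi^*}(s)(1 - o(1))$. In classical multi-armed bandits the analogous statement is immediate from consistency since the unique best arm must be pulled all but $o(T^\alpha)$ times, but in MDPs reaching $s$ is not automatic: it requires near-optimal play at every predecessor of $s$ under $\ell'$. I would establish this by inducting on layers, repeatedly invoking the strict positivity of $\Delta'(s',a')$ for $(s',a')$ off $\tilde{\pi}^*$ and the performance-difference decomposition of $\Reg_T^{\ell'}$ to propagate the consistency bound from $s$ backward through predecessor states to $s_0$, so that the algorithm's visit frequency to $s$ (and hence to $(s,a)$, since any deviation to $a' \ne a$ at $s$ contributes at least $\epsilon$ per occurrence to $\Reg_T^{\ell'}$) is at least $q^{\pi^*}(s)$ up to an $o(T^\alpha)$ correction.
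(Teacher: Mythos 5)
Your strategy---change of measure, KL chain rule, and a visit-count lower bound summed over suboptimal $(s,a)$---matches the paper's in spirit, and the KL analysis itself (chain rule, $\mathrm{kl}(p,p-\delta)\lesssim\delta^2$ on $[3/8,5/8]$, limiting $\epsilon\to 0$) is essentially right, but two steps fail as written.

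The Markov bound on $P_{\ell'}(E^c)$ does not hold. For $E = \{N_T(s,a) \ge Tq^{\pi^*}(s)/2\}$, Markov on the non-negative variable $T-N_T(s,a)$ would require $\E_{\ell'}[T-N_T(s,a)]=o(T^\alpha)$; but even with the most optimistic visit lower bound, $\E_{\ell'}[T-N_T(s,a)]$ is at least $T(1-\bar{q})-o(T)$ with $\bar{q}=\max_{q\in\cQ^*}q(s)$, which is $\Theta(T)$ whenever $\bar{q}<1$. This is exactly where MDPs differ from multi-armed bandits: no policy need visit $s$ every episode. The quantity the regret under $\ell'$ actually controls is $\E_{\ell'}\bigl[\sum_t\max\{0,\bar{q}-q_t(s,a)\}\bigr]$, with $q_t$ the fractional occupancy rather than the realized $0/1$ count $N_T$. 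The paper sidesteps events entirely: it lower-bounds $1-\TV$ via Bretagnolle--Huber and uses $|\E_{\ell^*}[X]-\E_{\ell'}[X]|\le\TV$ for the $[0,1]$-valued $X=\frac{1}{\bar{q}T}\sum_t\min\{\bar{q},q_t(s,a)\}$. If you insist on the event form, re-define $E$ in terms of $\sum_t q_t(s,a)$ and apply Markov to $\sum_t\max\{0,\bar{q}-q_t(s,a)\}$; passing back to $N_T$ would additionally require a concentration step.

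The layer-by-layer induction for $\E_{\ell'}[\sum_t q_t(s,a)]\ge T\bar{q}(1-o(1))$ is also not the right tool: under $\ell'$ the optimal policy need not be unique, so $\Delta'(s',a')$ can vanish off any particular $\tilde{\pi}^*$ and the induction stalls. The clean argument is in occupancy space. Take $q^*\in\cQ^*$ with $q^*(s)=\bar{q}$ and redirect its mass at $s$ onto action $a$, producing $q^{**}$ with $q^{**}(s,a)=\bar{q}$ and $q^{**}(s',a')=0$ for every other suboptimal pair; then $\min_q\langle\ell',q\rangle\le\langle\ell',q^{**}\rangle$, and since $\langle\Delta,q\rangle\ge\Delta(s,a)q(s,a)$ one gets $\langle\ell',q\rangle-\langle\ell',q^{**}\rangle\ge\epsilon(\bar{q}-q(s,a))$ for all $q$. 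Together with non-negativity of instantaneous regret this gives the per-round bound $\epsilon\max\{0,\bar{q}-q(s,a)\}$ with no induction. (The paper's Lemma~\ref{lem:Deltaqc}, which extracts a slope $c>0$ also penalizing $q(s)>\bar{q}$, is what lets them shift by $\delta+c/2$ rather than $\delta+\epsilon$; your $\epsilon\to 0$ shift is a legitimate alternative once the occupancy argument replaces the induction.) Finally, drop the compensation at other actions out of $s$: it would break the KL chain rule you rely on (the Bernoulli parameter would then shift on trajectories that avoid $(s,a)$), and it is unnecessary---for $\epsilon$ small the perturbed aggregates stay in $[0,1]$, which is all that consistency over valid instances requires.
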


\subsection{MDPs with unknown transition}
\label{sec:unknown_transition}
Our proposed algorithm for MDPs with unknown transitions adopts an epoch-based approach, similar to prior work~\citep{jin2021best,jin2020learning}. 
In each epoch $i$, it updates both the transition probability estimates and their corresponding confidence intervals. Based on these, we compute an upper confidence bound $u_t$ on the occupancy measure $q_t$, and use it to construct the loss estimator $\ell_t^u$ as defined in~\eqref{eq:loss-estimator-unknown-0}. Note that $u_t(s,a)$ can be efficiently computed using the \textsc{Comp-UOB} procedure proposed by~\citep{jin2020learning}.

We then define the adjusted loss estimator as $\widehat{\ell}_t := \ell_t^u - B_i$, where $B_i$ is a bonus term derived from the confidence width. Unlike prior work, our choice of the loss estimator $\ell_t^u$ allows us to avoid scaling $B_i$ by an additional factor of $L$. The policy for each episode is selected by applying FTRL over the estimated occupancy measure space using $\widehat{\ell}_t$. The regularization function used here matches the Tsallis entropy regularizer from Section~\ref{sec:known_transition}, except that the learning rate $\eta_t$ is reset at the beginning of each epoch, and a small log-barrier term is added to stabilize updates.

A notable improvement over prior work~\citep{jin2021best} is that the second-moment bound established in Lemma~\ref{lem:second-moment} allows us to bypass the loss-shifting technique. As a result, our regret bounds exhibit improved dependence on the horizon $L$. We refer the reader to Algorithm~\ref{alg:unknown-transitions} and the appendix \ref{sec:unknown_transition_appendix} for full details. Our algorithm, constructed in this way, achieves the following upper bounds:

\begin{theorem}\label{thm:main_bandit}
In the bandit feedback setting,  Algorithm \ref{alg:unknown-transitions} with $\delta = \frac{1}{T^3}$ and $\iota=\frac{|S||A|T}{\delta}$ guarantees $\Reg_T(\pi^\star)= \tilde{\mathcal{O}}\rbr{  L|S|\sqrt{|A|T} + |S||A| \sqrt{LT} + L^2|S|^3|A|^2 }$ and simultaneously $\Reg_T(\pi^\star)=\order\rbr{U + \sqrt{UC}+ V }$ under Condition~\eqref{eq:self-bounding-MDP},   
	where $V =  L^2|S|^3|A|^2 \ln^2 \iota$ and $U$ is defined as
	\[
	U =  \sum_{s \neq s_L} \sum_{ a\neq \pi^\star(s)} \sbr{ \frac{L^4|S|\ln \iota +  |S||A|\ln^2 \iota}{\gap(s,a)}} + \sbr{\frac{(L^4|S|^2+L^3|S|^2|A|) \ln \iota +  L|S|^2|A| \ln^2 \iota}{\gapmin}}.
	\]
    \label{prop:bobw_bandit_stoc_prop}
\end{theorem}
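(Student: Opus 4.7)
The plan is to decompose the regret into (i) an FTRL term driven by the bonus-adjusted estimator $\widehat{\ell}_t = \ell_t^u - B_i$, (ii) a bias term measuring the gap between $\E[\ell_t^u \mid \pi_t,\ell_t,u_t]$ and $\bar{\ell}_t$ of \eqref{eq:defellbar}, and (iii) a transition-estimation contribution that the bonus $B_i$ is designed to absorb. Concretely, starting from $\Reg_T(\pi^\star) = \E[\sum_t \langle \ell_t, q_t - q^\star\rangle]$ and invoking Lemma~\ref{lem:PDL} to pass to $\bar{\ell}_t$, I write $\Reg_T(\pi^\star) = \E[\sum_t \langle \widehat{\ell}_t, q_t - q^\star\rangle] + \E[\sum_t \langle \bar{\ell}_t - \widehat{\ell}_t, q_t - q^\star\rangle]$, absorb the low-probability event that the true $P$ escapes the confidence set into an $O(1)$ term (using $\delta = 1/T^3$), and treat the three components separately.

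First I would bound the FTRL term. With the hybrid Tsallis-entropy plus small log-barrier regularizer, the learning rate $\eta_t$ reset per epoch, and play over the confidence-occupancy polytope, the standard stability-plus-penalty analysis produces $\sum_t \eta_t \E[\sum_{s,a} q_t(s,a)\, \widehat{\ell}_t(s,a)^2] + \sum_i \eta_{t_i}^{-1}\sqrt{|S||A|} + \text{log-barrier slack}$. Lemma~\ref{lem:second-moment} bounds the stability summand by $(1-\pi_t(a|s))/u_t(s,a) \cdot (q_t(s)/u_t(s) + 1)$, which crucially carries the $(1-\pi_t(a|s))$ factor that we will exploit later; summing yields a worst-case $\tilde O(L|S|\sqrt{|A|T} + |S||A|\sqrt{LT})$ after folding in the $O(|S||A|\log T)$ epoch overhead. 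For the bias term, the inequality preceding Lemma~\ref{lem:second-moment} gives $\bar{\ell}_t(s,a) - \frac{u_t(s)-q_t(s)}{u_t(s)}(1-\pi_t(a|s)) \le \E[\ell_t^u(s,a)] \le \bar{\ell}_t(s,a)$, so the bias is bounded by $\E[\sum_t \sum_{s,a}\frac{u_t(s)-q_t(s)}{u_t(s)}(1-\pi_t(a|s))(q_t(s,a) + q^\star(s,a))]$. Standard concentration for $u_t - q_t$ (epoch-length doubling and Bernstein-style confidence widths as in \citet{jin2020learning}) lets me dominate this pointwise by the bonus $B_i$, whose total contribution $\E[\sum_t \langle B_i, q_t - q^\star\rangle]$ then adds only an $\tilde O(L|S|\sqrt{|A|T} + L^2|S|^3|A|^2)$ lower-order overhead. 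Combining the three components gives the adversarial bound $\tilde O(L|S|\sqrt{|A|T} + |S||A|\sqrt{LT} + L^2|S|^3|A|^2)$.

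For the best-of-both-worlds guarantee I would apply the self-bounding technique directly to the stability form of the bound. The argument above yields a data-dependent inequality of the shape $\Reg_T(\pi^\star) \lesssim \E[\sum_t\sum_{s,a}\sqrt{q_t(s,a)(1-\pi_t(a|s))/t}] \cdot \sqrt{|S||A|} + V$, where $V = \tilde O(L^2|S|^3|A|^2)$ collects the transition-learning overhead. For $a \ne \pi^\star(s)$ the factor $(1-\pi_t(a|s)) \le 1$ and $q_t(s,a)$ appears directly in \eqref{eq:self-bounding-MDP}; for $a = \pi^\star(s)$ I use the identity $1 - \pi_t(\pi^\star(s)|s) = \sum_{a'\ne\pi^\star(s)}\pi_t(a'|s)$, so $q_t(s,\pi^\star(s))(1-\pi_t(\pi^\star(s)|s)) \le q_t(s)\cdot \sum_{a'\ne\pi^\star(s)}\pi_t(a'|s) = \sum_{a'\ne\pi^\star(s)} q_t(s,a')$, rerouting every occurrence onto suboptimal pairs. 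Lemma~\ref{lem:PDL} together with \eqref{eq:self-bounding-MDP} gives $\E[\sum_t \sum_{s,a\ne\pi^\star(s)} \Delta(s,a) q_t(s,a)] \le \Reg_T(\pi^\star) + C$; applying AM-GM in the form $\sqrt{xy} \le \lambda x/\Delta(s,a) + \Delta(s,a) y/(4\lambda)$ against each surviving $\sqrt{q_t(s,a)/t}$, summing in $t$ to turn $\sum_t 1/\sqrt{t}$ into $\sqrt{T\log T}$, and handling the residual $L|S|$-sized $\Delta_{\min}$-term arising from the log-barrier and the Tsallis-entropy penalty separately, produces $\Reg_T(\pi^\star) \lesssim U + \sqrt{U(\Reg_T(\pi^\star)+C)} + V$, which the standard inequality $x \le a + \sqrt{bx} \Rightarrow x \lesssim a + b$ solves to the claimed $O(U + \sqrt{UC} + V)$.

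The main obstacle is executing the optimistic-bonus analysis while preserving the $(1-\pi_t(a|s))$ factor in both the second-moment bound and the bias correction. The novel estimator $\ell_t^u$ in \eqref{eq:loss-estimator-unknown-0} is optimistic by design, but justifying that $B_i$ can be chosen solely from the transition-confidence widths, without the extra $L$-scaling that appears in \citet{jin2021best}, requires a careful cancellation between the additive $(1 - \pi_t(a|s))$ appearing in $\ell_t^u$'s bias and the same factor naturally carried by $\bar{\ell}_t$. A secondary difficulty is controlling the $q^\star$-weighted bias: one must verify that playing from the confidence occupancy set (so $q^\star$ is feasible w.h.p.) suffices to keep $\langle B_i, q^\star\rangle$ summable to $\tilde O(L^2|S|^3|A|^2)$ rather than blowing up with $T$.
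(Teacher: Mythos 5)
Your high-level three-part plan (FTRL term, estimator bias, bonus contribution) points in the right direction, but the concrete decomposition you write down is not one that the FTRL analysis can directly certify. You set
\[
\Reg_T(\pi^\star) = \E\Bigl[\sum_t \langle \widehat{\ell}_t, q_t - q^\star\rangle\Bigr] + \E\Bigl[\sum_t \langle \bar{\ell}_t - \widehat{\ell}_t, q_t - q^\star\rangle\Bigr]
\]
and treat the first expectation as ``the FTRL term.'' But FTRL in Algorithm~\ref{alg:unknown-transitions} runs over the \emph{estimated} occupancy polytope $\Omega(\bar{P}_i)$ and produces iterates $\widehat{q}_t = q^{\bar P_{i(t)},\pi_t}$; the stability-penalty analysis controls $\sum_t \langle \widehat{\ell}_t, \widehat{q}_t - q^{\bar P_{i(t)},\pi^\star}\rangle$, not $\sum_t \langle \widehat{\ell}_t, q_t - q^\star\rangle$, where $q_t = q^{P,\pi_t}$ and $q^\star = q^{P,\pi^\star}$ are the \emph{true}-transition occupancy measures. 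The gap $\langle \widehat{\ell}_t, q_t - \widehat{q}_t\rangle + \langle \widehat{\ell}_t, q^{\bar P_{i(t)},\pi^\star} - q^\star\rangle$ is a genuine transition-estimation cost that never appears in your sketch. The paper's decomposition into $\textsc{Err}_1 + \textsc{EstReg} + \textsc{Err}_2$ via the pseudo value functions $\bar V^\pi_t, \widetilde V^\pi_t$ is designed exactly to isolate these pieces; $\widetilde{V}^{\pi}_t(s_0) = \langle q^{\bar P_{i(t)},\pi}, \tilde{\ell}_t\rangle$ is what FTRL controls, $\textsc{Err}_2$ is made negligible by the optimism inequality $\widetilde{Q}^\pi_t \le \bar{Q}^\pi_t$ (Lemma~\ref{lem:tQ<=bQ-c.1.1}), and $\textsc{Err}_1$ carries the $q_t$ vs.\ $\widehat q_t$ occupancy mismatch together with the $u_t/q_t$ bias. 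Your two-term split has no slot for the occupancy mismatch, so your claimed adversarial bound does not follow.

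The stochastic part has the same structural problem but in a more costly way. Because FTRL's stability term is expressed in $\widehat{q}_t$ rather than $q_t$, passing to the self-bounding quantity $\sum_{s,a\ne\pi^\star(s)} \Delta(s,a)\, q_t(s,a)$ requires controlling $\widehat q_t - q_t$ throughout. This is precisely what generates the transition-coupled self-bounding terms $\mathbb{Q}_3$ and $\mathbb{Q}_6$ in Definition~\ref{def:self_bounding_terms}, which are responsible for the $L^4|S|$ and $L^3|S|^2|A|$ factors in the stated $U$. Your simple AM-GM against $\sqrt{q_t(s,a)/t}$ cannot produce these; you would obtain a $U$ that is qualitatively smaller than what is actually provable, and the argument would not close, because the hidden occupancy-mismatch contribution must itself be dominated by $\Reg_T + C$ through Lemmas~\ref{lem:self_bounding_term_3} and~\ref{lem:self_bounding_term_6}. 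In short: the missing idea is that the regret must be routed through the estimated-transition value functions so that the occupancy-measure estimation error appears explicitly and can be (a) shown small under the good event for $\textsc{Err}_2$ and (b) self-bounded via the $\mathbb{Q}_3$-type terms for $\textsc{Err}_1$ and $\textsc{OccDiff}$.
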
 
We defer the proof of the above theorem to Appendix \ref{sec:analysis-bobw-unknown}.


\begin{algorithm}
\caption{BOBW algorithm for MDPs with unknown transitions and aggregate bandit feedback}\label{alg:unknown-transitions}
\begin{algorithmic}[1]
\State \textbf{Input:} confidence parameter $\delta \in (0,1)$
\State \textbf{Initialize:} epoch index $i = 1$ and epoch starting time $t_i = 1$; 
\State $\forall(s,a,s')$, set counters $m_1(s,a) = m_1(s,a,s') = m_0(s,a) = m_0(s,a,s') = 0$;
\State empirical transition $\bar{P}_1$ and confidence width $B_1$ based on Eq.~(2);
\For{$t = 1, \dots, T$}
    \State Let $\phi_t$ be the regularizer defined in Eq.~\eqref{eq:reg-unknown-appendix} and compute
    \[
    \widehat{q}_t = \argmin_{q \in \Omega(\bar{P}_i)} \textstyle\sum_{\tau = t_i}^{t-1} \langle q,  \widehat \ell_t \rangle + \phi_t(q),
    \]
    
    where $\widehat \ell_t={\ell}_\tau^u-B_i$ and $B_i(s,a)=\min\{2,\sum_{s'\in S_{k(s)+1}}B_i(s,a,s')\}$.
    \State Compute policy $\pi_t$ from $\widehat{q}_t$ such that $\pi_t(a|s) \propto \widehat{q}_t(s,a)$.
    \State Execute policy $\pi_t$ and obtain trajectory $(s_{t,k}, a_{t,k})$ for $k = 0, \dots, L-1$.
    \State Construct loss estimator ${\ell}_t^u$ as defined in Eq. \eqref{eq:loss-estimator-unknown-0}.
    \State Increment counters: for each $k < L$, 
    \[
        m_i(s_{t,k}, a_{t,k}, s_{t,k+1}) \gets m_i(s_{t,k}, a_{t,k}, s_{t,k+1})+1,\quad m_i(s_{t,k}, a_{t,k}) \gets m_i(s_{t,k}, a_{t,k})+1.
    \]
    \If{$\exists k, \; m_i(s_{t,k}, a_{t,k}) \ge \max\{1, 2m_{i-1}(s_{t,k}, a_{t,k})\}$} \Comment{entering a new epoch}
        \State Increment epoch index $i \gets i + 1$ and set new epoch starting time $t_i = t + 1$.
        \State Initialize new counters:
        \[
        \forall(s,a,s'), m_i(s,a,s') = m_{i-1}(s,a,s'), \quad m_i(s,a) = m_{i-1}(s,a).
        \]
        \State Update empirical transition $\bar{P}_i$ and confidence width $B_i$ based on Eq.~\eqref{eq:emp-transition-appendix} and \eqref{eq:conf-wid-appendix}.
    \EndIf
\EndFor
\end{algorithmic}
\end{algorithm}

\section{Conclusion}
This paper initiated the study of best-of-both-worlds (BOBW) algorithms for finite-horizon episodic MDPs with aggregate bandit feedback.
We proposed efficient algorithms that achieve low regret in both stochastic and adversarial settings,
and established nearly tight upper and lower bounds under both known- and unknown-transition settings.
Our approach is built upon FTRL over occupancy measures,
combined with carefully designed loss estimators that are optimistic in expectation and admit tight second-moment bounds.

Despite these contributions,
many open questions remain.
A central limitation of our approach is its reliance on occupancy measure updates via FTRL,
which---while grounded in convex optimization and thus computationally feasible to some extent---still requires solving a convex problem in each round.
Moreover,
this framework does not easily extend beyond tabular MDPs to more general representations such as linear models or function approximation.

A promising direction to address these limitations is to adopt policy optimization-based methods~\citep{shani2020optimistic,luo2021policy}.
In particular,
a recent paper by \citep{lancewicki2025near}
has shown that near-optimal and efficient adversarial regret bounds can be achieved through policy optimization.
Combining this line of work with the techniques in \citep{dann2023blackbox} may yield BOBW algorithms that are both computationally efficient and more broadly applicable.

Another important direction for future research is to extend the present results beyond the online shortest path problem to other combinatorial optimization settings, or to more challenging MDP formulations such as stochastic shortest path problems~\citep{chen2021minimax}. Addressing these challenges may lead to a more comprehensive understanding of online learning under aggregate feedback.

\section*{Acknowledgments}
Ito is supported by JSPS KAKENHI Grant Number JP25K03184.
Jamieson is funded in part by NSF Award CAREER 2141511 and Microsoft Grant for Customer Experience Innovation.
Luo is funded by NSF award IIS-1943607.
Tsuchiya is supported by JSPS KAKENHI Grant Number JP24K23852.

\bibliographystyle{plainnat}
\bibliography{reference}
\newpage
\appendix

\addcontentsline{toc}{section}{Appendix}
\tableofcontents

\section{Additional related work}\label{sec:additional_related_work}


\paragraph{FTRL and best-of-both-worlds algorithms}

In episodic tabular MDPs with adversarial losses, \citet{jin2020simultaneously} is the first to propose a BOBW algorithm under known transitions.
\citet{jin2021best} extended this to the unknown-transition setting, which is further improved and extend to the setting where the transition can vary over episodes \citep{jin2023no}.
Subsequently, policy optimization  algorithms was shown to achieve BOBW guarantees with improved computational efficiency~\citep{dann23best}.
In spite of these developments, our work is the first to consider BOBW algorithms under aggregate feedback.
We build on the analysis of \citet{jin2021best}. While we may improve our bounds by using the optimistic transition technique from the recent work by \citet{jin2023no}, it remains unclear whether this technique can be effectively combined with our loss estimator, which can be negative.

A key challenge in achieving BOBW is the design and analysis of the regularizer in FTRL. In this work, we consider two types of regularizers (see Section~\ref{sec:OSP}). 
The first one is a hybrid regularizer~\citep{bubeck18sparsity} that combines the Tsallis entropy with a small-coefficient log-barrier. 
The first BOBW algorithm based on the Tsallis entropy was initially explored by \citet{zimmert2021tsallis},
and the hybrid regularizers to ensure the stability of FTRL have been shown to be powerful in obtaining BOBW guarantees for complex online learning problems or for obtaining adaptive bounds \citep{zimmert20optimal,erez2021best,ito2022nearly,ito2024adaptive,tsuchiya23best,tsuchiya23stability,masoudian24best}.
The second regularizer we consider is the log-barrier regularizer with adaptive learning rates, developed in \citet{wei2018more,ito2021parameter}. 
As we show in this paper, although the strong regularization of the log-barrier can introduce an additional $O(\sqrt{\log T})$ multiplicative factor in adversarial settings, it ensures a strong stability of FTRL.

\paragraph{Episodic MDPs with aggregate feedback.} Recently, MDPs with aggregate feedback have received growing attention. For example, in tabular MDPs, aggregate feedback has been studied in both the stochastic and adversarial settings (see \citep{efroni2021reinforcement, cohen2021online, chatterji21theory}), as well as in the context of policy optimization~\citep{lancewicki2025near}. Similar interest has emerged for linear MDPs as well (see \citep{cassel24near}). However, to the best of our knowledge, our work is the first to study best-of-both-worlds guarantees in the setting of aggregate feedback.



\section*{Remarks in comparing results}
\begin{remark}[On the scale of loss]
  \label{rem:scale}
  In existing studies on online learning for MDPs with adversarial losses,
  it's common to assume that $\ell_t(s,a) \in [0,1]$ for all $s$ and $a$.
  Such setting is reduced to our setting by scaling the losses by a factor of $1/L$,
  which therefore can be regarded as a special case of our setting.
  If the loss is given by this reduction, values of losses have the same scale of $O(1/L)$ for all layers.
  In contrast,
  in our setting,
  the losses may have different scales (possibly $>1/L$) for each layer,
  which can be interpreted to be more general.\kevin{This is already stated above, cut there or here.}
\end{remark}

\begin{remark}[On online shortest path problems and episodic MDPs]
  \label{remark:shortest-path}
  The online shortest path problem can be interpreted as an ``almost'' special case of episodic MDPs with known transitions,
  but it is not necessarily an exact special case.
  Intuitively,
  the vertices $v \in V$ in the shortest path problem correspond to the states $s \in S$ in an MDP,
  and selecting one outgoing edge $e \in \oes v$ from a vertex $v$ corresponds to choosing an action $a \in A$ in the MDP.
  The shortest path problem however differs in several aspects:
  the set of vertices $V$ does not necessarily have a hierarchical structure,
  the number of edges in a path from the source to the sink is not necessarily fixed,
  and the set $\oes v$ of outgoing edges available for selection can vary depending on the vertex $v$.
  Therefore,
  the shortest path problem cannot always be directly interpreted as an MDP.
  Consequently,
  regret bounds for MDPs do not immediately translate to results for the shortest path problem.
  We therefore provide a separate discussion on the online shortest path problem.
  However,
  the overall framework for algorithm design and analysis is similar to that for episodic MDPs with known transitions.
\end{remark}

\section{Auxiliary lemmas}
\subsection{Lemmas for FTRL}
\subsubsection{Stability terms for one-dimensional functions}
\begin{lemma}
	\label{lem:stabTsallis}
	Let $\phi: \re_{\ge 0} \rightarrow \re$ be defined as $\phi(x) = - 2\sqrt{x}$
	and $D_{\phi}(y, x)$ be the Bregman divergence associated with $\phi$,
	i.e.,
	\begin{align*}
		D_{\phi}(y, x)
		=
		- 2\sqrt{y} + 2 \sqrt{x}
		+ \frac{1}{\sqrt{x}} (y-x)
		=
		\frac{1}{\sqrt{x}}\left(
			\sqrt{x} - \sqrt{y}
		\right)^2.
	\end{align*}
	Then,
	for any $x \in (0,1)$, $\ell \in \re$ and $\eta > 0$ such that $\eta \sqrt{x} \ell  > -1$,
	we have
	\begin{align*}
		\sup_{y \in [0,1]}
		\left\{
		\ell \cdot (x - y) - \frac{1}{\eta} D_{\phi}(y, x)
		\right\}
		\le
		\frac{\eta x^{3/2} \ell^2}{1 + \eta \ell \sqrt{x}}.
	\end{align*}
\end{lemma}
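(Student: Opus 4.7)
}
My plan is to reduce the one-dimensional supremum to a concave quadratic via a square-root change of variables. Concretely, I would set $u = \sqrt{y}$ and $v = \sqrt{x}$, so that $y = u^2$, $x = v^2$, $u \in [0,1]$, $v \in (0,1)$, and rewrite
\begin{align*}
\ell\cdot(x-y) - \tfrac{1}{\eta} D_\phi(y,x)
&= \ell(v^2-u^2) - \tfrac{1}{\eta v}(v-u)^2 .
\end{align*}
Then I would introduce $w = v-u$, giving $v+u = 2v-w$, so the expression becomes
\begin{align*}
g(w) := \ell\, w(2v-w) - \tfrac{1}{\eta v} w^2 = 2\ell v\, w - \Bigl(\ell + \tfrac{1}{\eta v}\Bigr) w^2 .
\end{align*}

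Next, I would use the hypothesis $\eta\sqrt{x}\,\ell > -1$, i.e.\ $\ell > -\tfrac{1}{\eta v}$, to conclude that the coefficient $\ell + \tfrac{1}{\eta v}$ is strictly positive, so $g$ is a strictly concave quadratic in $w \in \mathbb{R}$. Its unconstrained maximum is attained at $w^\star = \tfrac{\ell v}{\ell + 1/(\eta v)} = \tfrac{\eta\ell v^2}{1+\eta\ell v}$, and plugging in gives
\begin{align*}
g(w^\star) = \tfrac{(\ell v)^2}{\ell + 1/(\eta v)} = \tfrac{\eta \ell^2 v^3}{1+\eta \ell v} = \tfrac{\eta x^{3/2}\ell^2}{1+\eta\ell\sqrt{x}} .
\end{align*}
Since maximizing over $y\in[0,1]$ corresponds to maximizing $g$ over a subset of $\mathbb{R}$, concavity of $g$ implies that the constrained supremum is at most the unconstrained maximum $g(w^\star)$, which is the bound we want.

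There is no real obstacle here; the only things to check carefully are the sign condition that ensures concavity (handled exactly by the hypothesis $\eta\sqrt{x}\ell>-1$) and the algebraic simplification of $g(w^\star)$. The only mild subtlety is that I should not need the unconstrained maximizer $w^\star$ to lie in the feasible range for $w$: because $g$ is concave, the restricted supremum is automatically bounded above by $g(w^\star)$. This makes the proof essentially a two-line calculation once the change of variables is in place.
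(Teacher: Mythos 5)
Your proof is correct and is essentially the same calculation as the paper's: both compute the exact unconstrained maximum of the concave objective in $y$, which turns out to equal the claimed right-hand side, and both observe that the constrained supremum can only be smaller. The paper completes the square ``in place'' in the variables $y,\sqrt{y}$ (dropping a manifestly nonpositive term), while your substitutions $u=\sqrt{y}$ and $w=\sqrt{x}-\sqrt{y}$ expose the concave quadratic $g(w)=2\ell v\,w-(\ell+\tfrac{1}{\eta v})w^2$ directly and make the hypothesis $\eta\sqrt{x}\,\ell>-1$ visibly the concavity condition, which is a cleaner way to present the identical argument.
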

\begin{proof}
	We have
	\begin{align*}
		&
		\ell \cdot (x - y) - \frac{1}{\eta} D_{\phi}(y, x)
		=
		\frac{1}{\eta}
		\left(
		2\sqrt{y} - 2 \sqrt{x}
		+ \left(\frac{1}{\sqrt{x}} + \eta \ell \right) (x-y)
		\right)
		\\
		&
		=
		\frac{1}{\eta}
		\left(
			-
			y
			\left(\frac{1}{\sqrt{x}} + \eta \ell \right)^{-1}
			\left(
				\left(\frac{1}{\sqrt{x}} + \eta \ell -  \frac{1}{\sqrt{y}} \right)^2
				-
				\frac{1}{y}
			\right)
			-
			\sqrt{x}
			+
			\eta \ell x
		\right)
		\\
		&
		=
		\frac{1}{\eta}
		\left(
			-
			y
			\left(\frac{1}{\sqrt{x}} + \eta \ell \right)^{-1}
			\left(\frac{1}{\sqrt{x}} + \eta \ell -  \frac{1}{\sqrt{y}} \right)^2
			+
			\left(\frac{1}{\sqrt{x}} + \eta \ell \right)^{-1}
			-
			\sqrt{x}
			+
			\eta \ell x
		\right)
		\\
		&
		\le
		\frac{1}{\eta}
		\left(
			\left(\frac{1}{\sqrt{x}} + \eta \ell \right)^{-1}
			-
			\sqrt{x}
			+
			\eta \ell x
		\right)
		\\
		&
		=
		\frac{\sqrt{x}}{\eta}
		\left(
			\frac{1}{ 1 + \eta \ell \sqrt{x} }
			-
			1
			+
			\eta \ell \sqrt{x}
		\right)
		=
		\frac{\sqrt{x}}{\eta (1 + \eta \ell \sqrt{x})}
		\left(
			\eta^2 \ell^2 x
		\right)
		=
		\frac{\eta x^{3/2} \ell^2}{1 + \eta \ell \sqrt{x}}.
	\end{align*}
\end{proof}
\begin{lemma}
	\label{lem:stabTsallisLB}
	Let $\eta > 0$ and $\beta > 0$.
	Suppose that $\phi: \re_{>0} \rightarrow \re$ is defined as
	$\phi(x) = -\frac{2}{\eta} \sqrt{x} - \beta \ln(x)$.
	Let $D_{\phi}(y, x)$ be the Bregman divergence associated with $\phi$.
	Then,
	for any $x \in (0, 1)$,
	$\ell \in \re$
	and $\eta$ such that $ x \ell \ge - \frac{\beta}{2}$,
	we have
	\begin{align}
		\sup_{y \in [0,1]}
		\left\{
		\ell \cdot (x - y) - D_{\phi}(y, x)
		\right\}
		\le
		6 {\eta x^{3/2} \ell^2}.
		\label{eq:lemstabTsallisLB}
	\end{align}
\end{lemma}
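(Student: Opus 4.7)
The plan is to exploit the decomposition $\phi = \phi_T + \phi_L$ with $\phi_T(y) = -(2/\eta)\sqrt{y}$ and $\phi_L(y) = -\beta \ln y$, so that $D_\phi(y,x) = D_{\phi_T}(y,x) + D_{\phi_L}(y,x)$, and then apply stability bounds to each piece after a suitable linear-combination split of the linear term $\ell(x-y)$. As a preliminary, I would derive the log-barrier-only stability bound: solving the one-dimensional first-order condition gives the closed form $\sup_{y>0}\{\ell(x-y) - D_{\phi_L}(y,x)\} = \ell x - \beta\ln(1 + \ell x/\beta)$ with interior optimizer $y^* = \beta x/(\beta + \ell x)$, which is positive since the hypothesis $\ell x \ge -\beta/2$ forces $\beta + \ell x > 0$. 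By the elementary inequality $\ln(1+u) \ge u - u^2$ for $u \ge -1/2$ (verified by differentiating $g(u)=\ln(1+u)-u+u^2$ and noting $g(0)=0$), this closed form is bounded above by $\ell^2 x^2/\beta$.

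Case 1 assumes $\eta\ell\sqrt{x} \ge -1/2$. Since $D_{\phi_L} \ge 0$, the quantity in question is at most $\sup_y\{\ell(x-y) - D_{\phi_T}(y,x)\}$, and Lemma~\ref{lem:stabTsallis} (whose regularizer matches $D_{\phi_T}$ after absorbing the $1/\eta$ factor) yields the bound $\eta x^{3/2}\ell^2/(1+\eta\ell\sqrt{x}) \le 2\eta x^{3/2}\ell^2 \le 6\eta x^{3/2}\ell^2$.

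Case 2 is the crux: $\eta\ell\sqrt{x} < -1/2$, so $\ell<0$ and $|\ell| > 1/(2\eta\sqrt{x})$, and Lemma~\ref{lem:stabTsallis} breaks down. I would use the splitting $\ell = \alpha\ell + (1-\alpha)\ell$ with $\alpha = 1/(2\eta|\ell|\sqrt{x}) \in (0,1)$ (well-defined precisely because the Case 2 hypothesis reads $2\eta|\ell|\sqrt{x}>1$). Since $D_\phi = D_{\phi_T}+D_{\phi_L}$, this yields
\[
\sup_y\{\ell(x-y) - D_\phi(y,x)\} \le \sup_y\{\alpha\ell(x-y) - D_{\phi_T}(y,x)\} + \sup_y\{(1-\alpha)\ell(x-y) - D_{\phi_L}(y,x)\}.
\]
The choice of $\alpha$ makes $\eta(\alpha\ell)\sqrt{x} = -1/2$, so Lemma~\ref{lem:stabTsallis} applies to the first sup with denominator $1/2$, giving $2\eta\alpha^2 x^{3/2}\ell^2 = \sqrt{x}/(2\eta)$. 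The hypothesis $|\ell|x \le \beta/2$ guarantees $|(1-\alpha)\ell x/\beta| \le 1/2$, so the auxiliary log-barrier bound gives the second sup $\le (1-\alpha)^2 \ell^2 x^2/\beta \le \ell^2 x^2/\beta \le |\ell|x/2$, where the final step uses $|\ell|x \le \beta/2$ once more.

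To finish, observe that the Case 2 inequality $|\ell| > 1/(2\eta\sqrt{x})$ implies both $|\ell|x/2 \le \eta x^{3/2}\ell^2$ (multiply by $2\eta\sqrt{x}|\ell|$) and $\sqrt{x}/(2\eta) \le 2\eta x^{3/2}\ell^2$ (use $\ell^2 \ge 1/(4\eta^2 x)$), so the sum is at most $3\eta x^{3/2}\ell^2 \le 6\eta x^{3/2}\ell^2$. The main obstacle is conceptual rather than technical: recognizing that when the Tsallis part alone is too weak to absorb a very negative $\ell$, one should use the log-barrier to \emph{pay off} a calibrated portion of $\ell$ via the convex combination $\alpha$, set exactly at the threshold where Lemma~\ref{lem:stabTsallis} becomes tight. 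Once this splitting idea is in place, verifying the log-barrier closed form and matching the constants is routine algebra, and the slack factor of $6$ in the statement leaves ample room.
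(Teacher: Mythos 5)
Your proof is correct, and it takes a genuinely different route from the paper's. The paper handles $\ell<0$ by analyzing the critical point $y^*$ of the combined objective $\ell(x-y)-D_\phi(y,x)$ directly, bounding $y^*\le x\min\{2,(1+\eta\ell\sqrt{x})^{-2}\}$ (the first bound coming from the log-barrier component and the second from the Tsallis component of $\phi'$), discarding $-D_\phi\le 0$, and then case-splitting on $-\eta\ell\sqrt{x}\lessgtr 1/2$. You instead exploit the additivity $D_\phi=D_{\phi_T}+D_{\phi_L}$ at the level of the sup and split the linear term $\ell=\alpha\ell+(1-\alpha)\ell$, which decouples the two regularizers completely: the Tsallis piece absorbs a calibrated fraction $\alpha=1/(2\eta|\ell|\sqrt{x})$ of the loss (set exactly so that Lemma~\ref{lem:stabTsallis}'s precondition sits on the boundary $\eta(\alpha\ell)\sqrt{x}=-1/2$), and the log-barrier piece soaks up the rest via the closed-form bound $\ell'x-\beta\ln(1+\ell'x/\beta)\le \ell'^2x^2/\beta$. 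Your approach is more modular—the two one-dimensional stability estimates are invoked independently and could be reused—whereas the paper's argument, while shorter once the $y^*$ bounds are in hand, entangles the two potentials. Your derivation also yields the slightly sharper constant $3$ (Case~2) versus the paper's $6$; every step you outline checks out, including the preliminary log-barrier identity, the inequality $\ln(1+u)\ge u-u^2$ on $[-1/2,\infty)$, and the final comparisons $\sqrt{x}/(2\eta)\le 2\eta x^{3/2}\ell^2$ and $|\ell|x/2\le\eta x^{3/2}\ell^2$ under $\eta|\ell|\sqrt{x}>1/2$.
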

\begin{proof}
	If $\ell \ge 0$,
	it immediately follows from Lemma~\ref{lem:stabTsallis} that
	the left-hand side of \eqref{eq:lemstabTsallisLB} is bounded by
	$\eta x^{3/2} \ell^2$ from above.
	We next consider the case of $\ell < 0$.
	The derivative of 
	$ \ell \cdot (x - y) - D_{\phi}(y, x)$ in $y$ is given as
	\begin{align*}
		g(y)
		:=
		-
		\ell
		+
		\frac{1}{\eta \sqrt{y}}
		-
		\frac{1}{\eta \sqrt{x}}
		+
		\frac{\beta}{y}
		-
		\frac{\beta}{x}.
	\end{align*}
	This is a monotone decreasing function and hence the maximum of 
	$ \ell \cdot (x - y) - D_{\phi}(y, x)$
	is attained at $y^* \in \re_{> 0}$ such that $g(y^*) = 0$.
	As we have
	\begin{align*}
		g\left(
			\frac{\beta}{\beta + \ell x} x
		\right)
		\le
		- \ell
		+
		\beta
		\cdot
		\frac{\beta + \ell x}{\beta x} 
		-
		\frac{\beta}{x}
		=
		0
	\end{align*}
	and
	\begin{align*}
		g\left(
			\left(
				\frac{1}{\sqrt{x}}
				+
				\eta \ell
			\right)^{-2}
		\right)
		\le
		- \ell
		+
		\frac{1}{\eta}
		\left(
			\frac{1}{\sqrt{x}}
			+
			\eta \ell
		\right)
		-
		\frac{1}{\eta \sqrt{x}}
		=
		0,
	\end{align*}
	we have
	\begin{align*}
		y^*
		\le
		\min \left\{
			\frac{\beta}{\beta + \ell x} x,
			\left(
				\frac{1}{\sqrt{x}}
				+
				\eta \ell
			\right)^{-2}
		\right\}
		&
		\le
		\min \left\{
			2 x,
			\left(
				\frac{1}{\sqrt{x}}
				+
				\eta \ell
			\right)^{-2}
		\right\}
		\\
		&
		=
		x
		\min \left\{
			2,
			\left(
				1
				+
				\eta \ell \sqrt{x}
			\right)^{-2}
		\right\},
	\end{align*}
	where the last inequality follows from the assumption of $\ell x \ge - \frac{\beta}{2}$.
	We hence have
	\begin{align}
		\nonumber
		\sup_{ y \in (0, 1]}
		\left\{
		\ell \cdot (x - y) - D_{\phi}(y, x)
		\right\}
		&
		\le
		- \ell ( y^* - x)
		-
		D_{\phi}(y^*, x)
		\le
		- \ell ( y^* - x)
		\\
		&
		\le
		- \ell
		x
		\cdot
		\min \left\{
			1,
			\left(
				1
				+
				\eta \ell \sqrt{x}
			\right)^{-2}
			-
			1
		\right\}.
		\label{eq:stabTsallisLB0}
	\end{align}
	If $ 0 < - \eta \ell \sqrt{x} \le 1/2$,
	we then have
	$
	\left(
		1
		+
		\eta \ell \sqrt{x}
	\right)^{-2}
	-
	1
	\le
	- 6 \eta \ell \sqrt{x}
	$,
	which implies that
	the value of \eqref{eq:stabTsallisLB0}
	is at most
	$- \ell x \cdot (-6\eta \ell \sqrt{x})
	=
	6
	\eta x^{3/2} \ell^2
	$.
	If $-\eta \ell \sqrt{x} > 1/2$,
	we then have
	the value of \eqref{eq:stabTsallisLB0}
	is at most
	$
	- \ell x
	<
	- \ell x
	\cdot
	(-2 \eta \ell \sqrt{x})
	=
	2 \eta x^{3/2} \ell^2
	$.
	This completes the proof.
\end{proof}

\begin{lemma}
	\label{lem:stabLB}
	Let $\phi: \re_{>0} \rightarrow \re$ be defined as $\phi (x) = - \log (x)$
	and $D_{\phi}(y, x)$ be the Bregman divengence associated with $\phi$,
	i.e.,
	\begin{align*}
		D_{\phi}(y, x)
		=
		- \log y + \log x
		+ \frac{1}{x} (y-x)
		=
		-
		\log \frac{y}{x}
		+
		\frac{y}{x}
		-
		1.
	\end{align*}
	Then,
	for any $x \in (0,1)$, $\ell \in \re$ and $\eta > 0$ such that $\eta x \ell  > -1$,
	we have
	\begin{align*}
		\sup_{y \in [0,1]}
		\left\{
		\ell \cdot (x - y) - \frac{1}{\eta} D_{\phi}(y, x)
		\right\}
		\le
		\frac{1}{\eta}
		\left(
			- \log \left(
				1 + \eta x \ell
			\right)
			+
			\eta x \ell
		\right).
	\end{align*}
	Consequently,
	if $\eta \ell x \ge - \frac{1}{2}$,
	we have
	\begin{align*}
		\sup_{y \in [0,1]}
		\left\{
		\ell \cdot (x - y) - \frac{1}{\eta} D_{\phi}(y, x)
		\right\}
		\le
		\eta x^2 \ell^2
		\left(
			\frac{1}{2}
			+
			\mathbb{I}[\ell < 0]
			\cdot
			\eta x |\ell|
		\right)
		\le
		\eta x^2 \ell^2.
	\end{align*}
\end{lemma}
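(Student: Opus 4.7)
Since $D_\phi(\cdot,x)$ is strictly convex in its first argument, the function $F(y) := \ell(x-y) - \frac{1}{\eta}D_\phi(y,x)$ is strictly concave on $\re_{>0}$. My plan is to solve the unconstrained maximization on $\re_{>0}$, which upper-bounds the supremum over $[0,1]$, and then evaluate in closed form. Setting $F'(y) = -\ell + \frac{1}{\eta y} - \frac{1}{\eta x} = 0$ yields the unique critical point $y^\star = x/(1+\eta x \ell)$, which is well defined and positive thanks to the hypothesis $\eta x \ell > -1$. Substituting $y^\star$ into $F$ and using $y^\star/x = 1/(1+\eta x \ell)$, so that $-\log(y^\star/x) = \log(1+\eta x\ell)$ and $y^\star/x - 1 = -\eta x\ell/(1+\eta x\ell)$, one sees that the linear-in-$y^\star$ contributions cancel and the closed form $F(y^\star) = \frac{1}{\eta}\bigl(\eta x \ell - \log(1 + \eta x\ell)\bigr)$ emerges, which is precisely the first claimed inequality.

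For the quadratic consequence, I would write $u := \eta x \ell$ and reduce the statement to the pointwise bound $-\log(1+u) + u \le u^2\bigl(\tfrac12 + \mathbb{I}[u<0]\,|u|\bigr)$ on $u \ge -1/2$. The case $u \ge 0$ is the standard inequality $\log(1+u) \ge u - u^2/2$, which I would verify by checking that $g(u) := u^2/2 - u + \log(1+u)$ satisfies $g(0) = 0$ and $g'(u) = u^2/(1+u) \ge 0$. For $-1/2 \le u < 0$, I would introduce $h(u) := u^2/2 - u^3 + \log(1+u) - u$ and compute $h'(u) = -u^2(2+3u)/(1+u)$; since $2+3u \ge 1/2 > 0$ and $1+u > 0$ throughout this interval, $h$ is strictly decreasing with $h(0) = 0$, so $h(u) \ge 0$. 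To drop the indicator and obtain the cleaner upper bound $\eta x^2 \ell^2$, I would invoke the hypothesis $\eta x|\ell| \le 1/2$ in the $\ell < 0$ regime, which guarantees $\mathbb{I}[u<0]|u| \le 1/2$.

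I do not anticipate any substantive obstacle: the maximizer is found by one-dimensional unconstrained optimization and the rest is elementary calculus. The only step deserving care is the sign split for $u<0$, where a cubic correction $|u|^3$ must appear because $-\log(1+u)$ grows super-quadratically as $u \to -1$; the cutoff $u \ge -1/2$ is precisely what allows the cubic term to dominate this super-quadratic growth, and it is exactly this asymmetric quadratic bound that the downstream analysis (cf.\ the $(1-\pi_t(a|s))$ factor in Lemma~\ref{lem:second-moment}) will rely on.
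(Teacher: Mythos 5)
Your proposal is correct and follows essentially the same route as the paper's proof: solve the unconstrained one-dimensional maximization to find the critical point $y^\star = x/(1+\eta x\ell)$, substitute to obtain the closed form $\frac{1}{\eta}\bigl(\eta x\ell - \log(1+\eta x\ell)\bigr)$, and then apply the pointwise inequality $u - \log(1+u)\le \frac{u^2}{2}+\mathbb{I}[u<0]|u|^3$ for $u\ge -1/2$. The only difference is that you spell out the elementary calculus behind that last inequality (via the auxiliary functions $g$ and $h$), which the paper simply asserts.
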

\begin{proof}
	We have
	\begin{align*}
		&
		\ell \cdot (x - y) - \frac{1}{\eta} D_{\phi}(y, x)
		=
		\frac{1}{\eta}
		\left(
		\log \frac{y}{x}
		+
		\left(
		\frac{1}{x} 
		+
		\eta \ell
		\right)
		(x-y)
		\right).
	\end{align*}
	For fixed $x$, $\ell$ and $\eta$,
	this value is maximized when $\frac{1}{y} = \frac{1}{x} + \eta \ell$.
	We then have
	\begin{align*}
		\ell \cdot (x - y) - \frac{1}{\eta} D_{\phi}(y, x)
		=
		\frac{1}{\eta}
		\left(
			- \log \frac{x}{y}
			+ \frac{x}{y}
			- 1
		\right)
		=
		\frac{1}{\eta}
		\left(
			- \log \left(
				1 + \eta x \ell
			\right)
			+
			\eta x \ell
		\right).
	\end{align*}
	As we have
	$- \log (1+a) + a \le \frac{1}{2}a^2 + \mathbb{I}[a<0] \cdot |a|^3 $
	for $a \ge -1/2$,
	we have
	\begin{align*}
		\ell \cdot (x - y) - \frac{1}{\eta} D_{\phi}(y, x)
		&
		\le
		\frac{1}{\eta}
		\left(
			- \log \left(
				1 + \eta x \ell
			\right)
			+
			\eta x \ell
		\right)
		\\
		&
		\le
		\frac{1}{\eta}
		\left(
			\frac{1}{2}
			(\eta x \ell)^2
			+
			\mathbb{I}[\ell < 0]
			\cdot
			|\eta x \ell|^3
		\right)
		\\
		&
		=
		\eta x^2
		\left(
			\frac{1}{2}
			\ell^2
			+
			\mathbb{I}[\ell < 0]
			\cdot
			\eta x |\ell|^3
		\right)
		\\
		&
		\le
		\eta x^2 \ell^2.
	\end{align*}
\end{proof}

\begin{proof}[Proof of Lemma \ref{lem:PDL}]
	We can show this by backward induction in layers.
	For $s = s_{L}$,
	\eqref{eq:PDL} is clear as both sides are equal to $0$.
	For $s \in S_{k}$ with $k < L$,
	\begin{align*}
		Q^{\pi'}(s,a;\bar{\ell})
		&
		=
		\bar{\ell}(s,a)
		+
		\sum_{s' \in S_{k+1}} P(s'|s,a) V^{\pi'}(s';\bar{\ell})
		\\
		&
		= Q^{\pi}(s,a;\ell) - V^{\pi}(s;\ell)
		+
		\sum_{s' \in S_{k+1}} P(s'|s,a) 
		\left(
		V^{\pi'}(s'; \ell) - V^{\pi}(s'; \ell)
		\right)
		\\
		&
		= 
		\ell(s,a)
		+
		\sum_{s' \in S_{k+1}} P(s'|s,a) V^{\pi}(s'; \ell)
		- 
		V^{\pi}(s;\ell)
		+
		\sum_{s' \in S_{k+1}} P(s'|s,a) 
		\left(
		V^{\pi'}(s'; \ell) - V^{\pi}(s'; \ell)
		\right)
		\\
		&
		= 
		\ell(s,a)
		- 
		V^{\pi}(s;\ell)
		+
		\sum_{s' \in S_{k+1}} P(s'|s,a) 
		V^{\pi'}(s'; \ell) 
		\\
		&
		=
		Q^{\pi'}(s,a;\ell)
		- 
		V^{\pi}(s;\ell),
	\end{align*}
	where the second equality follows from the induction hypothesis and the definition of $\bar{\ell}$.
	We hence have
	\begin{align*}
		V^{\pi'}(s ; \bar{\ell})
		=
		\sum_{a \in A} \pi'(a|s) Q^{\pi'}(s,a ; \bar{\ell} )
		=
		\sum_{a \in A} \pi'(a|s) 
		\left(
		Q^{\pi'}(s,a;\ell)
		- 
		V^{\pi}(s;\ell)
		\right)
		=
		V^{\pi'}(s;\ell)
		- 
		V^{\pi}(s;\ell).
	\end{align*}
\end{proof}
\section{Online shortest path problem with bandit feedback}
In this section, we analyze our algorithm for online shortest path problem. Specifically, we prove Theorems~\ref{thm:Tsallis-SP} and~\ref{thm:log-barrier-SP-appendix}, which together directly imply Corollary~\ref{cor:bobw-shortest-path} in the main body. In addition, we prove our lower bound result in Theorem~\ref{thm:LBSP}.
\subsection{Notation and problem setup}
\begin{itemize}
	\item $G = (V \cup \{ s, g \}, E )$: a directed acyclic graph.
		\item $s$: Source node.
		\item $g$: Sink node.
		\item $V$: Set of vertices that are neither sources nor sinks.
		\item $E \subseteq (V \cup \{ s \}) \times (V \cup \{ g \})$: set of directed edges.
		\item $e_-, e_+ \in V \cup \{ s, g \}$: initial and terminal vertices of an edge $e \in E$,
		i.e.,
		$e = (e_-, e_+)$.
		\item $\ies v, \oes v \subseteq E$: sets of incoming and outgoing edges of a vertex $v \in V \cup \{ s, g \}$,
		i.e.,
		$\ies v = \{ e \in E \mid e_+ = v \}$,
		$\oes v = \{ e \in E \mid e_- = v \}$.
		\item $n = |V|$.
		\item $m = |E|$.
		\item $\cP \subseteq \{ 0, 1 \}^E$: set of (vector representations of) $s$-$g$ paths.
		\item $\cQ = \conv(\cP) \subseteq [0,1]^E$: set of $s$-$g$ flows of value $1$,
		equivalently convex hull of $\cP$.
		Note $\cQ = \{ q \in [0,1]^E \mid 
		\sum_{e \in \oes s} q(e) = \sum_{e \in \ies g} q(e) = 1 ,
		\sum_{e \in \oes v} q(e) = \sum_{e \in \ies v} q(e) (\forall v \in V) \}$.
		\item $L = \max_{p \in \cP} \{ \| p \|_1 \}  \le |V|+1$: maximum length of $s$-$g$ paths.
		\item Without loss of generality,
		we assume that every vertex $v \in V$ admits a path from $s$ to $g$ passing through $v$.
\end{itemize}
In each round $t \in [T]$,
an environment chooses $\ell_t \in \re_{\ge 0}^E$ and then the player chooses
$p_t \in P$,
after which the player observes a feedback $c_t \in [0, 1]$ such that
$\E [c_t | \ell_t, p_t] = \linner \ell_t, p_t \rinner$.
We here assume that $\ell_t$ satisfies $ \linner \ell_t, p \rinner \le 1 $ for all $p \in P$. 
The performance of the player is evaluated in terms of regret defined as:
\begin{align*}
	\Reg_T ( p^* ) = \E \left[ \sum_{t=1}^T \linner \ell_t, p_t - p^* \rinner \right],
	\quad
	\Reg_T = \max_{p^* \in P} \Reg_T(p^*),
\end{align*}
where the expectation is taken over all randomness arising from the environment and the algorithm.


\subsection{Algorithm}
The algorithm updates $q_t \in \cQ$ by an FTRL approach and picks $p_t \in \cP$ so that
$\E[ p_t | q_t ] = q_t$,
following the technique of \citep{maiti2025efficient}.

In the following,
for $q \in \cQ$ and $v \in V$,
we denote 
\begin{align}
	\label{eq:defqv}
	q(v) 
	= \sum_{e \in \ies v} q(e)
	= \sum_{e \in \oes v} q(e)
\end{align}
for the notational simplicity.

Let $p_v \in P$ be an $s$-$g$ path that passes through $v$,
i.e.,
$p_v(v) = 1$.
Define $q_0 \in Q$ by
\begin{align}
	\label{eq:defq0}
	q_0 = \frac{1}{|V|} \sum_{v \in V} p_v.
\end{align}
We then have $q_0(v) \ge 1/|V| = 1/n$ for any $v \in V$.

Define ${q}_t$ by
\begin{align}
	\label{eq:defFTRL}
	{q}_t \in
	\argmin_{q \in Q}
	\left\{
		\linner
		\sum_{\tau=1}^{t-1} \widehat{\ell}_\tau,
		q
		\rinner
		+
		\psi_t(q)
	\right\},
\end{align}
where $\widehat{\ell}_\tau$ is an estimator for $\ell_\tau$ defined later.
The regularizer $\psi_t$ is given by
\begin{align}
  \psi_t(q)
  &
  =
  -
  \frac{2}{\eta_t}
  \sum_{e \in E}
  \sqrt{q(e)}
  -
  \sum_{e \in E}
  \beta
  \ln q(e)
  \quad
  \mbox{with}
  \quad
  \eta_t = \frac{1}{\sqrt{t}},
  ~
  \beta = 2,
  \quad
  \mbox{or}
  \label{eq:defpsi-SP-Tsallis-app}
  \\
  \psi_t(q)
  &
  =
  -
  \sum_{e \in E}
  \frac{1}{\eta_t(e)}
  \ln q(e)
  \quad
  \mbox{with}
  \quad
  \eta_t(e) 
  = \big(4 + \frac{1}{\ln T} \sum_{\tau = 1}^{t-1} 
  \rho_{\tau}(e)
  \big)^{-\frac{1}
  {2}},
  \label{eq:defpsi-SP-LB-app}
\end{align}
where $\rho_{\tau} \in [0,1]$ will be defined later.

Based on $q_t \in \cQ$,
we pick $p_t \in P$ in the same way as in \citep{maiti2025efficient}:
\begin{itemize}
	\item Initialize $p \in \{ 0, 1 \}^E$ by $p(e) = 0$ for all $e \in E$ and set $v \leftarrow s$.
	\item While $v \neq g$:
	\begin{itemize}
		\item Pick $e \in \oes v$ with probability $q_t(e)/q_t(v)$.
		\item Set $p(e) \leftarrow 1$ and transition to the next node $e_+$,
		i.e.,
		$v \leftarrow e_+$.
	\end{itemize}
\end{itemize}
We then have $\E[p_t | q_t]= q_t$.

After outputting $p_t$,
we get feedback $c_t \in [0,1]$ such that
$\E[c_t | p_t, \ell_t]= \linner \ell_t, p_t \rinner$.
Based on this,
we define $\widehat{\ell}_t \in \re^E$ by
\begin{align}
	\label{eq:defellhat}
	\widehat{\ell}_t(e) = 
	c_t \cdot \left( \frac{p_t(e)}{q_t(e)} - \frac{p_t(e_-)}{q_t{(e_-)}} \right).
\end{align}
Note that the notation of \eqref{eq:defqv} applies to $p \in P \subseteq Q$ as well,
and that $p_t(v) = 1$ if and only if the path passes through the node $v$.
Then,
it holds for any $q \in Q$ that
\begin{align}
	\nonumber
	\linner \widehat{\ell}_t, q \rinner
	&
	=
	c_t \cdot \sum_{e \in E} \left( \frac{p_t(e)}{q_t(e)} - \frac{p_t(e_-)}{q_t{(e_-)}} \right) q(e) 
	\\
	\nonumber
	&
	=
	c_t \cdot \left( \sum_{e \in E} \frac{p_t(e)}{q_t(e)}q(e) - 
	\sum_{v \in V \cup \{ s \}}
	\sum_{e \in \oes v}
	\frac{p_t(e_-)}{q_t{(e_-)}}  q(e) \right) 
	&
	(E = \bigcup_{v \in V \cup \{ s \}} \oes v)
	\\
	\nonumber
	&
	=
	c_t \cdot \left( \sum_{e \in E} \frac{p_t(e)}{q_t(e)}q(e) - 
	\sum_{v \in V \cup \{ s \}}
	\frac{p_t(v)}{q_t{(v)}}  q(v) \right) 
	&
	(e \in \oes v \iff e_- = v, \mbox{\eqref{eq:defqv}} )
	\\
	&
	=
	c_t \cdot \left( \sum_{e \in E} \frac{p_t(e)}{q_t(e)}q(e) - 
	\sum_{v \in V }
	\frac{p_t(v)}{q_t{(v)}}  q(v) 
	-1
	\right) .
	&
	( p_t(s)=q_t(s)=q(s)=1 )
	\label{eq:ellhatev}
\end{align}
We note that,
an alternative definition of $\widehat{\ell}_t$ given as
\begin{align}
	\label{eq:defellhatp}
	\widehat{\ell}'_t(e) = 
	c_t \cdot \left( \frac{p_t(e)}{q_t(e)} - \frac{p_t(e_+)}{q_t{(e_+)}} \right)
\end{align}
also satisfies \eqref{eq:ellhatev} similarly,
and hence we have
\begin{align*}
	\linner
	\widehat{\ell}_t, q
	\rinner
	=
	\linner
	\widehat{\ell}'_t, q
	\rinner
\end{align*}
for any $q \in Q$.
Therefore, using $\widehat{\ell}'_t$ in \eqref{eq:defellhatp} instead of $\widehat{\ell}_t$ in \eqref{eq:defellhat} does not change the behavior of the algorithm.

We now state the following that can be proved in a similar way as in \citep{maiti2025efficient}:
\begin{lemma}
	\label{lem:unbiased}
	For any $q, q' \in Q$,
	we have
	\begin{align*}
		\E \left[ \linner \widehat{\ell}_t, q - q' \rinner | q_t, \ell_t \right]
		=
		\linner \ell_t, q - q' \rinner,
	\end{align*}
	where the expectation is taken w.r.t.~$p_t$.
\end{lemma}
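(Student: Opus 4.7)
The plan is to reduce the identity to individual $s$-$g$ paths $p, p' \in \cP$ and then extend to all $q, q' \in \cQ$ by linearity, exploiting the fact that $\cQ = \conv(\cP)$. The key is to reuse the unbiasedness calculation that was already carried out in the main text for the path-level estimator $\widehat{L}_t$, and to identify the discrepancy between $\inner{\widehat{\ell}_t, q}$ and $\widehat{L}_t(q)$ as exactly the scalar $-c_t$, which cancels when we take the difference $q-q'$.

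Concretely, I would start from the computation \eqref{eq:ellhatev}, which shows
\[
\inner{\widehat{\ell}_t, q} = c_t \cdot \left( \sum_{e \in E} \frac{p_t(e)}{q_t(e)} q(e) - \sum_{v \in V} \frac{p_t(v)}{q_t(v)} q(v) - 1 \right)
\]
for every $q \in \cQ$. Specializing to a path $q = p \in \cP$ and comparing with the definition \eqref{eq:loss_estimator_SP} of $\widehat{L}_t(p)$, this reads $\inner{\widehat{\ell}_t, p} = \widehat{L}_t(p) - c_t$. The text has already combined \eqref{eq:loss_estimator_SP_e} and \eqref{eq:loss_estimator_SP_v} (a telescoping along the edges and internal vertices of $p$) to establish $\E[\widehat{L}_t(p) \mid q_t, \ell_t] = \inner{\ell_t, p}$. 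Meanwhile the Markov sampling property $\E[p_t \mid q_t] = q_t$ gives $\E[c_t \mid q_t, \ell_t] = \E[\inner{\ell_t, p_t} \mid q_t, \ell_t] = \inner{\ell_t, q_t}$. Taking expectations and subtracting then yields
\[
\E[\inner{\widehat{\ell}_t, p} \mid q_t, \ell_t] = \inner{\ell_t, p} - \inner{\ell_t, q_t} \qquad \text{for every } p \in \cP.
\]

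To close the proof, I would invoke $\cQ = \conv(\cP)$ to write any $q \in \cQ$ as a convex combination $q = \sum_{p \in \cP} \lambda_p \, p$. By linearity,
\[
\E[\inner{\widehat{\ell}_t, q} \mid q_t, \ell_t] = \sum_{p \in \cP} \lambda_p \left( \inner{\ell_t, p} - \inner{\ell_t, q_t} \right) = \inner{\ell_t, q} - \inner{\ell_t, q_t},
\]
and the offset $\inner{\ell_t, q_t}$ — which depends only on $q_t$, not on $q$ — cancels when we subtract the identical expression for $q'$, giving the desired $\E[\inner{\widehat{\ell}_t, q-q'} \mid q_t, \ell_t] = \inner{\ell_t, q-q'}$.

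The main obstacle, were one to avoid reusing the unbiasedness of $\widehat{L}_t$, would be the bookkeeping of \eqref{eq:loss_estimator_SP_e} and \eqref{eq:loss_estimator_SP_v} along the vertices and edges of $p$: one must verify that the $\bar{L}_t(s \to v_k)$ and $\bar{L}_t(v_{k+1} \to g)$ subpath expectations telescope away, leaving only $\sum_k \ell_t(e_k) = \inner{\ell_t, p}$. Since this telescoping is already recorded in the excerpt, the remaining work in the present lemma amounts to tracking the extra $-c_t$ term and arguing that it vanishes under the difference $q-q'$, together with the routine linearity extension from paths to their convex hull.
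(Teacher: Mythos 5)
Your proposal is correct and follows essentially the route the paper has in mind (the paper defers the proof to \citet{maiti2025efficient} but all the ingredients appear in Section~\ref{sec:estimator_SP}): you identify $\inner{\widehat{\ell}_t, p} = \widehat{L}_t(p) - c_t$, reuse the telescoping-based unbiasedness $\E[\widehat{L}_t(p) \mid q_t,\ell_t] = \inner{\ell_t, p}$, note $\E[c_t \mid q_t,\ell_t] = \inner{\ell_t, q_t}$, and observe that the $q$-independent offset cancels under the difference. The only minor streamlining available is that since $\inner{\widehat{\ell}_t, q}$ is linear in $q$, the convex-combination step is not strictly needed — linearity of expectation applied directly to the expression in \eqref{eq:ellhatev} already yields $\E[\inner{\widehat{\ell}_t, q}\mid q_t,\ell_t] = \inner{\ell_t, q} - \inner{\ell_t, q_t}$ for every $q\in\cQ$ — but your path-then-convex-hull argument is equally valid.
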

Note that the effect of the path-length does not appear here.
That is, we do not need to assume that the path lengths are the same.

\subsection{Regret analysis}
\begin{definition}[consistent policy]
	\label{def:consistent}
	Define $\Pi = \{ \pi: V \cup \{ s \} \rightarrow E \mid \pi(v) \in \oes v ~(\forall v \in V \cup \{ s \} )  \}$.
	Let $p^* \in P$ be an arbitrary $s$-$g$ path.
	Let $E^* \subseteq E$ and $V^* \subseteq V$ denote the sets of edges and nodes included in $p^*$,
	i.e.,
	$E^* = \{ e \in E \mid p^*(e)=1 \} $ and
	$V^* = \{ v \in V \mid p^*(v)=1 \} $.
	We say $\pi^* \in \Pi$ is \textit{consistent} with $p^* \in P$ if and only if
	$\pi^*(v) \in E^*$ for all $v \in V^* \cup \{ s \}$.
	We denotes $E' = E \setminus \mathrm{Im}(\pi^*)$.
\end{definition}
\begin{definition}[self-bounding regime for online shortest path]
	\label{def:SBR-SP}
	Let $p^* \in P$ be an arbitrary $s$-$g$ path and suppose that $\pi^* \in \Pi$ is consistent with $p^*$.
	Suppose that $\Delta \in [0,1]^E$ satisfies $\Delta(e) > 0$ for all $e \in E' = E \setminus \mathrm{Im}(\pi^*)$.
	The environment is in a $(p^*, \pi^*, \Delta, C)$-self-bounding regime if it holds that
	\begin{align}
		\label{eq:self-bounding}
		\Reg_T(p^*) 
		\ge
		\E\left[
			\sum_{t=1}^T
			\sum_{v \in V \cup \{ s \}}
			\sum_{e \in \oes v \setminus \{ \pi^*(v) \}} \Delta(e) p_t(e)
		\right]
		-C.
	\end{align}
\end{definition}
\begin{remark}
	An example of $\Delta \in [0,1]^E$ can be constructed as follows:
	Assume that $\ell_t$ follows an identical distribution independently for all $t \in [T]$
	and denote $\ell^* = \E[\ell_t]$.
	For each $u, v \in V \cup \{ s, g \}$,
	let $\dist (u, v)$ denote the length of $u$-$v$ shortest path w.r.t.~the weight $\ell^*$.
	(Set $\dist (u,v)= + \infty$ if there is no $u$-$v$ path.)
	For each $e \in E$,
	define $\Delta \in [0,1]^E $ by
	\begin{align}
		\label{eq:defDelta1}
		\Delta(e) = \ell^*(e) + \dist(e_{+}, g) - \dist(e_{-}, g).
	\end{align}
	Then,
	if $p^* \in P$ is a shortest $s$-$g$ path,
	then \eqref{eq:self-bounding} holds.
	In fact,
	for any $s$-$g$ path $p \in P$ expressed as a sequence of $(s=v_0,e_1,v_1,e_2, v_2, \ldots, v_{h-1}, e_{h}, g=v_h)$,
	we have
	\begin{align*}
		\linner \Delta, p \rinner
		=
		\sum_{j=1}^h
		\Delta(e_j)
		=
		\sum_{j=1}^h
		\left(
		\ell^*(e_j) + 
		\dist(v_{j}, g)
		-
		\dist(v_{j-1}, g)
		\right)
		\\
		=
		\sum_{j=1}^h
		\ell^*(e_j) 
		+
		\dist(v_h, g)
		-
		\dist(v_0, g)
		=
		\linner \ell^*, p \rinner
		-
		\linner \ell^*, p^* \rinner,
	\end{align*}
	which implies
	$\Reg_T(p^*) = \E \left[ \sum_{t=1}^T \linner \Delta, p_t \rinner \right]$.
	Suppose $\pi^*$ is chosen so that
	$\pi^* (v) \in \argmin_{e \in \oes v} \{ \Delta(e) \}$.
	Then,
	$\Delta(e)$ for all $e \in E' = E \setminus \mathrm{Im}(\pi^*)$ if and only if
	the shortest $v$-$g$ path is unique for all $v \in V \cup \{ s \}$.
\end{remark}
\begin{remark}
	An issue of the definition of $\Delta$ in \eqref{eq:defDelta1} is the requirement for a strong assumption that the $v$-$g$ shortest path is unique for all $v \in V \cup \{ s \}$ to ensure that $\Delta(e) > 0$ for all $e \in E'$.
	We can relax this assumption by some alternative definitions of $\Delta$.
	When using the following definition, it suffices to assume that the $s$-$g$ shortest path is unique:
	Define $L' = \max_{ p \in \cP } \{ \sum_{e \in E'} p(e) \} \le L$.
	For $e \in E'$ and $k \in [L']$,
	define $\tilde{\cP}(e, k) \subseteq P$ 
	and
	$\tilde{\Delta}(e)$
	by
	\begin{align}
		\nonumber
		\tilde{\cP}(e, k) 
		&
		= \left\{ 
			p \in \cP \mid
			p(e) = 1,
			\sum_{e' \in E'} p(e') = k
		\right\},
		\\
		\tilde{\Delta}(e)
		&
		=
		\min_{k \in [L']} 
		\left\{
		\frac{1}{k}
		\inf_{p \in \tilde{\cP}(e,k)}
			\left\{
				\linner \ell^*, p - p^*  \rinner
			\right\}
		\right\}
		=
		\min_{k \in [L']} 
		\left\{
		\frac{1}{k}
		\inf_{p \in \tilde{P}(e,k)}
			\left\{
				\linner \Delta, p  \rinner
			\right\}
		\right\} .
		\label{eq:defDelta2}
	\end{align}
	We then have $\sum_{e \in E'} \tilde{\Delta}(e) p(e) \le \sum_{e \in E'} \Delta(e) p(e)$ for all $p \in P$,
	which implies that the environment is in $(p^*, \pi^*, \tilde{\Delta})$ as well.
	Further,
	as we have $p^* \notin \tilde{P}(e, k)$ for any $e \in E'$ and $k$,
	we have
	\begin{align*}
		\tilde{\Delta}(e)
		\ge
		\frac{1}{L'}
		\min_{p \in P \setminus \{ p^* \}}
		\left\{
			\linner \ell^*, p - p^* \rinner
		\right\}
		=:
		\frac{1}{L'}
		\Delta_{\min}
	\end{align*}
	for any $e \in E'$.
	Hence,
	this value is positive as long as the $s$-$g$ shortest path is unique.
	We also have
	$
		\tilde{\Delta}(e) \ge
		\min_{e' \in E'} \Delta(e)
	$ for any $e \in E'$.
\end{remark}

Let $q_0 \in \cQ$ be such that $q_0(e) \ge 1/m$.
For any $p^* \in \cQ$ and $\epsilon \in [0,1]$,
set $q^*$ by
\begin{align}
	\label{eq:defqstar-SP}
	q^* = \left(
		1 - \epsilon
	\right)p^* 
	+
	\epsilon
	q_0.
\end{align}
Using Lemma~\ref{lem:unbiased} and standard analysis for FTRL (see, e.g., Exercise 28.12 of \citep{lattimore2020bandit}),
we obtain:
\begin{align}
	\nonumber
	&
	\Reg_T(p^*)
	=
	\E \left[
		\sum_{t=1}^T \linner \ell_t, p_t - p^* \rinner
	\right]
	=
	\E \left[
		\sum_{t=1}^T \linner \widehat{\ell}_t, p_t - p^* \rinner
	\right]
	=
	\E \left[
		\sum_{t=1}^T \linner \widehat{\ell}_t, q_t - p^* \rinner
	\right]
	\\
	\nonumber
	&
	=
	\E \left[
		\sum_{t=1}^T \linner \widehat{\ell}_t, {q}_t - q^* \rinner
	\right]
	+
	\epsilon
	\E \left[
		\sum_{t=1}^T \linner \widehat{\ell}_t, q_0 - p^*  \rinner
	\right]
	\le
	\E \left[
		\sum_{t=1}^T \linner \widehat{\ell}_t, {q}_t - q^* \rinner
	\right]
	+
	\epsilon T
	\\
	&
	\le
	\epsilon
	T 
	+
	\sum_{t=1}^T 
		\E \left[
				\underset{=: \stab_t}{
				\underline{
		\linner \widehat{\ell}_t, {q}_t - {q}_{t+1} \rinner
		-
		D_t({q}_{t+1}, {q}_t )
				}
			}
		+
			\underset{=: \pena_t}{
				\underline{
		\left(
			\psi_t(q^*)
			-
			\psi_{t-1}(q^*)
			-
			\psi_t({q}_{t})
			+
			\psi_{t-1}({q}_{t})
		\right)
				}
			}
	\right],
	\label{eq:spdecomposition}
\end{align}
where $D_t(\cdot, \cdot)$ represents the Bregman divergence associated with $\psi_t$ defined by \eqref{eq:defpsi-SP-Tsallis-app} or \eqref{eq:defpsi-SP-LB-app},
and we set $\psi_t(q) = 0$ for $t=0$ as an exception.
\subsubsection{Analysis for Tsallis-entropy case}
\begin{theorem}[First part of Theorem~\ref{thm:OSP}]
	\label{thm:Tsallis-SP}
	Let $p^* \in P$ be an arbitrary $s$-$g$ path and suppose that $\pi^* \in \Pi$ is consistent with $p^*$.
	Then the proposed algorithm with the Tsallis-entropy regularizer \eqref{eq:defpsi-SP-Tsallis-app} achieves:
	\begin{align}
		\label{eq:thmTsallis1}
		&
		\Reg_T(p^*)
		\lesssim
		\sum_{t=1}^T
		\frac{1}{\sqrt{t}}
		\E
		\left[
		\sum_{v \in V \cup \{ s \}}
		\sum_{e \in \oes v \setminus \{ \pi^*(v) \}}
		\sqrt{q_t(e)}
		+
		\sum_{v \in V \setminus V^*}
		\sqrt{ q_t(\pi^*(v)) }
		\right]
		+
		m\log T
		\\
		&
		\le
		\sum_{t=1}^T
		\frac{1}{\sqrt{t}}
		\E
		\left[
		\sum_{v \in V \cup \{ s \}}
		\sum_{e \in \oes v \setminus \{ \pi^*(v) \}}
		\sqrt{q_t(e)}
		+
		n
		\sqrt{
		\sum_{v \in V^* \cup \{ s \}}
		\sum_{e \in \oes v \setminus \{ \pi^*(v) \}} q_t(e)
		}
		\right]
		+
		m\log T
		\label{eq:thmTsallis2}
	\end{align}
\end{theorem}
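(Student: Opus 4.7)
The plan is to start from the FTRL decomposition~\eqref{eq:spdecomposition} with comparator $q^* = (1-\epsilon)p^* + \epsilon q_0$, where $\epsilon = 1/T$ and $q_0 \in \cQ$ has every coordinate bounded below by $1/\mathrm{poly}(m,n)$; this makes $\epsilon T = O(1)$ and ensures $\ln(1/q^*(e)) = O(\log T)$ uniformly. What remains is to bound the stability sum and the penalty sum separately.

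For stability, I would exploit the separability of $\psi_t$ across edges, upper-bounding $\stab_t$ by the coordinate-wise supremum on $[0,1]^E$ and applying Lemma~\ref{lem:stabTsallisLB} with $\beta = 2$ (the precondition $q_t(e)\widehat{\ell}_t(e) \ge -1$ holds because $\widehat{\ell}_t(e) \ge -c_t/q_t(e_-)$ and $q_t(e) \le q_t(e_-)$). A direct calculation of the conditional second moment from the Markovian sampling scheme (edge $e$ traversed with probability $q_t(e)$, event $\{p_t(e_-)=1,p_t(e)=0\}$ with probability $q_t(e_-)-q_t(e)$) gives $\E[\widehat{\ell}_t(e)^2 \mid q_t,\ell_t] \le (1 - q_t(e)/q_t(e_-))/q_t(e)$, so $\E[\stab_t] \lesssim \eta_t \sum_e \sqrt{q_t(e)}(1 - q_t(e)/q_t(e_-))$. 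I then split the inner sum vertex-by-vertex: at $v \in V^* \cup \{s\}$ the $\pi^*$-edge satisfies $\sqrt{q_t(\pi^*(v))}(1 - q_t(\pi^*(v))/q_t(v)) \le (q_t(v) - q_t(\pi^*(v)))/\sqrt{q_t(v)} \le \sum_{e \in \oes v \setminus \{\pi^*(v)\}} \sqrt{q_t(e)}$, while at $v \in V \setminus V^*$ I keep $\sqrt{q_t(\pi^*(v))}$ intact, and at every vertex the suboptimal edges contribute $\sqrt{q_t(e)}$. This yields exactly the inner bracket of \eqref{eq:thmTsallis1}.

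For the penalty, setting $\psi_0 \equiv 0$ and Abel-summing gives $\sum_t \pena_t = 2\sum_t (\sqrt{t}-\sqrt{t-1})(B_t - A) + \beta \sum_e \ln(q_1(e)/q^*(e))$, where $A = \sum_e \sqrt{q^*(e)}$ and $B_t = \sum_e \sqrt{q_t(e)}$. The log-barrier initial term is $O(m \log T)$ by the choice of $q_0$ and $\epsilon$. Splitting $B_t - A$ over $E^*$ versus $E \setminus E^*$ is where the crucial cancellation occurs: on $E^*$, $\sqrt{q_t(e)} - \sqrt{q^*(e)} \le 1 - \sqrt{1-\epsilon} \le \epsilon$, contributing $O(L\epsilon \sqrt{T}) = O(1)$ overall; on $E \setminus E^*$, $\sqrt{q_t(e)} - \sqrt{q^*(e)} \le \sqrt{q_t(e)}$. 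Combined with $\sqrt{t}-\sqrt{t-1} \le 1/\sqrt{t}$ and the identity $\sum_{e \notin E^*}\sqrt{q_t(e)} = \sum_v \sum_{e \in \oes v \setminus \{\pi^*(v)\}} \sqrt{q_t(e)} + \sum_{v \in V \setminus V^*} \sqrt{q_t(\pi^*(v))}$ (which follows from $E^* = \{\pi^*(v): v \in V^* \cup \{s\}\}$), the penalty contributes exactly the same form as stability, proving \eqref{eq:thmTsallis1}.

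Finally, \eqref{eq:thmTsallis2} follows by Cauchy--Schwarz, $\sum_{v \in V \setminus V^*} \sqrt{q_t(\pi^*(v))} \le \sqrt{|V \setminus V^*|}\sqrt{\sum_{v \notin V^*} q_t(\pi^*(v))}$, together with the deviation bound $\sum_{v \notin V^*} q_t(v) \le L \sum_{v' \in V^* \cup \{s\}} \sum_{e \in \oes{v'} \setminus \{\pi^*(v')\}} q_t(e)$, since any trajectory reaching a vertex outside $V^*$ must first deviate from $p^*$ at a vertex of $V^* \cup \{s\}$ and then visits at most $L$ further vertices; with $|V \setminus V^*| \le n$ and $L \le n+1$ this absorbs into the stated $n$-factor. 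The hard part will be the penalty analysis: a naive bound that drops $A$ would inflate the contribution by $L\sqrt{T}$ and destroy the shape of the bound, so it is essential to exploit $q^*(e) \ge 1-\epsilon$ on $e \in E^*$ in order to restrict the dominant penalty contribution to the same suboptimal-edge sum that controls stability.
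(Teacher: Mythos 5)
Your proof is correct and takes essentially the same route as the paper's: the same FTRL decomposition with the $\epsilon$-mixed comparator, the same coordinate-wise application of Lemma~\ref{lem:stabTsallisLB} followed by the conditional second-moment computation $\E[\widehat{\ell}_t(e)^2 \mid q_t] \le (1-q_t(e)/q_t(e_-))/q_t(e)$, and the same exploitation of $q^*(e)\ge 1-\epsilon$ on $E^*$ to confine the penalty to $E\setminus E^*$ (the paper's Lemma~\ref{lem:penaTsallis-SP}). The only cosmetic differences are that you keep the $\sum_{v\in V\setminus V^*}\sqrt{q_t(\pi^*(v))}$ term inside the stability bound whereas the paper absorbs the $\pi^*$-edge contribution uniformly and lets that term arise from the penalty, and for \eqref{eq:thmTsallis2} you use a deviation bound carrying an extra $L$ factor plus Cauchy--Schwarz, whereas the paper's Lemma~\ref{lem:qve} bounds $q_t(v)$ directly without the $L$ (this washes out since $L\le n+1$); the only slip is the bookkeeping of the $E^*$-penalty term as $O(1)$ rather than $O(L)$, which is harmless since it is absorbed by the $m\log T$ term.
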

\begin{corollary}[First part of Corollary~\ref{cor:bobw-shortest-path}]
	\label{cor:Tsallis-SP}
	In the adversarial regime,
	the proposed algorithm with the Tsallis-entropy regularizer \eqref{eq:defpsi-SP-Tsallis-app} achieves
	$\Reg_T = O( \sqrt{mLT} + m \log T )$.
	Further,
	if the environment is in a $(p^*, \pi^*, \Delta, C)$-self-bounding regime given in Definition~\ref{def:SBR-SP},
	we then have
	$\Reg_T(p^*) \lesssim U + \sqrt{UC} + m \log T$,
  where we define
  \begin{align}
    U 
    &= 
    \sum_{v \in V \cup \{ s \}} \sum_{e \in \oes v \setminus \{ \pi^*(v) \}} \frac{\log T}{\Delta(e)} 
    +
    \frac{n^2 \log T}{\Delta^*} ,
    \\
	\Delta^* &=
	\min \left\{ \Delta(e) \mid \exists v \in V^* \cup \{ s \}, e \in \oes v \setminus \{ \pi^*(v) \} \right\}.
  \end{align}
\end{corollary}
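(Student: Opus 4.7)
The plan is to start from the upper bound of Theorem~\ref{thm:Tsallis-SP} and derive the two conclusions by (i) a direct Cauchy--Schwarz argument for the adversarial bound and (ii) the self-bounding technique combined with AM-GM in the $(p^*,\pi^*,\Delta,C)$-self-bounding regime.

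For the adversarial bound, I would use the form \eqref{eq:thmTsallis1} since $\sum_{v}\sum_{e \in \oes v \setminus \{\pi^*(v)\}} \sqrt{q_t(e)} + \sum_{v \in V \setminus V^*} \sqrt{q_t(\pi^*(v))}$ is upper bounded by $\sum_{e \in E} \sqrt{q_t(e)}$. Applying Cauchy--Schwarz and using the identity $\sum_{e \in E} q_t(e) = \E[\|p_t\|_1] \le L$, I obtain $\sum_{e \in E}\sqrt{q_t(e)} \le \sqrt{mL}$. Summing $\sum_{t=1}^T 1/\sqrt{t} = O(\sqrt T)$ then yields $\Reg_T \lesssim \sqrt{mLT} + m\log T$.

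For the self-bounding regime I would use \eqref{eq:thmTsallis2}. For the first sum, apply Cauchy--Schwarz over $t$ edge-wise to get
\[
\sum_{t=1}^T \frac{\sqrt{q_t(e)}}{\sqrt{t}} \le \sqrt{\log T \cdot \textstyle\sum_t q_t(e)} = \sqrt{\tfrac{\log T}{\Delta(e)}\cdot \Delta(e)\textstyle\sum_t q_t(e)},
\]
and then for any $\gamma > 0$ apply $\sqrt{xy}\le x/(2\gamma)+\gamma y/2$ to split this into a $\frac{1}{\gamma}\cdot\frac{\log T}{\Delta(e)}$ term and a $\gamma \cdot \Delta(e)\sum_t q_t(e)$ term; summing over $e \in E':=\bigcup_v \oes v \setminus \{\pi^*(v)\}$ produces $\frac{1}{\gamma}\sum_{e \in E'}\frac{\log T}{\Delta(e)}$ plus $\gamma\sum_t\sum_{e \in E'}\Delta(e)q_t(e)$. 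For the second sum, let $E^{**}$ index the edges appearing in it; since every $e \in E^{**}$ satisfies $\Delta(e) \ge \Delta^*$, one has $\sum_{e \in E^{**}} q_t(e) \le \frac{1}{\Delta^*}\sum_{e \in E^{**}}\Delta(e)q_t(e) \le \frac{1}{\Delta^*}\sum_{e \in E'}\Delta(e)q_t(e)$. Cauchy--Schwarz in $t$ followed by the same AM-GM split gives a contribution of $\frac{n^2\log T}{\gamma \Delta^*}$ plus $\gamma \sum_t\sum_{e \in E'}\Delta(e)q_t(e)$.

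Finally, by Definition~\ref{def:SBR-SP} and $\E[p_t(e)]=q_t(e)$, the self-bounding condition \eqref{eq:self-bounding} yields $\sum_t \sum_{e \in E'}\Delta(e)q_t(e) \le \Reg_T(p^*) + C$. Combining these inequalities with \eqref{eq:thmTsallis2},
\[
\Reg_T(p^*) \lesssim \frac{U}{\gamma} + \gamma\bigl(\Reg_T(p^*) + C\bigr) + m\log T,
\]
with $U$ as defined in the corollary. Rearranging and optimizing $\gamma = \min\{1/2,\sqrt{U/C}\}$ in the standard way yields $\Reg_T(p^*) \lesssim U + \sqrt{UC} + m\log T$.

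The main obstacle I anticipate is the correct bookkeeping of the second sum: it is essential that $\Delta^*$ be defined as the minimum gap only over edges outgoing from $V^* \cup \{s\}$ (not the global minimum), so that $\sum_{e \in E^{**}} q_t(e)$ can be absorbed into $\sum_{e \in E'} \Delta(e) q_t(e)/\Delta^*$ and, after Cauchy--Schwarz over $t$, turned into the additive $n^2\log T/\Delta^*$ term. Once this identification is in place, the remaining AM-GM and optimization of $\gamma$ are routine.
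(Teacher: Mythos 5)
Your proposal is correct and follows essentially the same route as the paper's proof: start from Theorem~\ref{thm:Tsallis-SP}, use Cauchy--Schwarz to pull out $\log T$ and the gaps, and close via the self-bounding condition. The only differences are cosmetic bookkeeping — the paper applies Cauchy--Schwarz over edges at each $t$ and then over $t$, arriving at $\Reg_T(p^*) \lesssim \sqrt{U(\Reg_T(p^*)+C)} + m\log T$ and "taking square roots of both sides," whereas you apply Cauchy--Schwarz over $t$ per edge and then split with AM-GM using a free parameter $\gamma$; these are interchangeable presentations of the same self-bounding argument.
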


In the following,
we provide a proof of Theorem~\ref{thm:Tsallis-SP}.
\begin{lemma}
	\label{lem:stabTsallis-SP}
	When we use the Tsallis-entropy regularizer \eqref{eq:defpsi-SP-Tsallis-app},
	stability terms are bounded as
	\begin{align}
		\label{eq:stabTsallis-SP}
		\E [ \stab_t ]
		&
		\lesssim
		\E\left[
		\eta_t
		\sum_{v \in V \cup \{ s \}}
		\left(
		\sum_{e \in \oes v}
		\sqrt{q_t(e)}
		-
		\sqrt{q_t(v)}
		\right)
		\right]
		\lesssim
		\E\left[
		\eta_t
		\sum_{v \in V \cup \{ s \}}
		\sum_{e \in \oes v \setminus \{ \pi^*(e) \}}
		\sqrt{q_t(e)}
		\right].
	\end{align}
\end{lemma}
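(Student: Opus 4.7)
The plan is to reduce the stability term to per-edge one-dimensional quantities by separability, apply Lemma~\ref{lem:stabTsallisLB} to each coordinate, compute the conditional second moment of $\widehat{\ell}_t(e)$, and then invoke an AM-GM inequality to rewrite the resulting sum in the form claimed by the lemma.

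First I would exploit the separability of the regularizer: since $\psi_t$ is a sum of per-edge terms, so is its Bregman divergence $D_t$, and the standard FTRL stability argument gives
\begin{align*}
\stab_t \le \sup_{q \in [0,1]^E}\bigl\{\langle \widehat{\ell}_t,\, q_t - q\rangle - D_t(q, q_t)\bigr\} = \sum_{e \in E} \sup_{y \in [0,1]}\bigl\{\widehat{\ell}_t(e)(q_t(e) - y) - D_{\phi_e}(y, q_t(e))\bigr\},
\end{align*}
where the first inequality relaxes the feasible set from $\cQ$ to $[0,1]^E$. To apply Lemma~\ref{lem:stabTsallisLB} on each coordinate I check its precondition $q_t(e)\widehat{\ell}_t(e) \ge -\beta/2 = -1$: the worst case is $p_t(e)=0$, $p_t(e_-)=1$, $c_t=1$, giving $-q_t(e)/q_t(e_-) \ge -1$. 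The lemma then yields $\stab_t \le 6\eta_t \sum_{e} q_t(e)^{3/2}\widehat{\ell}_t(e)^2$.

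Next I would compute the second moment. Setting $v = e_-$ and using $c_t^2 \le 1$ together with $p_t(e)^2 = p_t(e)$, $p_t(v)^2 = p_t(v)$, and $p_t(e) p_t(v) = p_t(e)$ (the path visits $v$ whenever it traverses $e$), a direct expansion gives $\E[\widehat{\ell}_t(e)^2 \mid q_t,\ell_t] \le 1/q_t(e) - 1/q_t(v)$, hence $\E[q_t(e)^{3/2}\widehat{\ell}_t(e)^2 \mid q_t,\ell_t] \le \sqrt{q_t(e)}\bigl(1 - q_t(e)/q_t(v)\bigr)$. Fixing $v$ and writing $x_e = q_t(e)/q_t(v)$ so that $\sum_{e \in \oes v} x_e = 1$, the per-vertex contribution becomes $\sqrt{q_t(v)}\sum_e \sqrt{x_e}(1-x_e) = \sqrt{q_t(v)}\bigl(\sum_e \sqrt{x_e} - \sum_e x_e^{3/2}\bigr)$. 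The crucial AM-GM step is $\sqrt{x} + x^{3/2} \ge 2x$ for $x \in [0,1]$; summing over $e$ yields $\sum_e \sqrt{x_e} + \sum_e x_e^{3/2} \ge 2$, which rearranges to $\sum_e \sqrt{x_e}(1 - x_e) \le 2\bigl(\sum_e \sqrt{x_e} - 1\bigr)$. Undoing the scaling and summing over $v$ establishes the first $\lesssim$.

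For the second inequality I would simply split each sum by separating the edge $\pi^*(v)$: since $q_t(\pi^*(v)) \le q_t(v)$ we have $\sqrt{q_t(\pi^*(v))} - \sqrt{q_t(v)} \le 0$, so $\sum_{e \in \oes v} \sqrt{q_t(e)} - \sqrt{q_t(v)} \le \sum_{e \in \oes v \setminus \{\pi^*(v)\}} \sqrt{q_t(e)}$. The main obstacle is the AM-GM step in the previous paragraph: the naive per-edge bound $\sqrt{q_t(e)}(1 - q_t(e)/q_t(v))$ is in fact pointwise larger than any sensible per-edge version of the target, so a term-by-term comparison fails, and one must sum before applying $\sqrt{x}+x^{3/2}\ge 2x$ to trade an extra factor of $2$ for the correct structural form.
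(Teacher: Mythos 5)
Your proof is correct and follows essentially the same route as the paper: relax the stability supremum to the box $[0,1]^E$, apply Lemma~\ref{lem:stabTsallisLB} coordinatewise after verifying $q_t(e)\widehat{\ell}_t(e)\ge -\beta/2$, bound the conditional second moment to get $\sum_{e\in\oes v}\sqrt{q_t(e)}\bigl(1-q_t(e)/q_t(v)\bigr)$ per vertex, and then convert to $\sum_{e\in\oes v}\sqrt{q_t(e)}-\sqrt{q_t(v)}$. Your direct second-moment computation $\E[\widehat{\ell}_t(e)^2\mid q_t,\ell_t]\le 1/q_t(e)-1/q_t(e_-)$ is in fact a slightly cleaner rendering than the paper's (which conditions on $p_t(v)=1$), and your AM--GM step $\sqrt{x}+x^{3/2}\ge 2x$ is algebraically \emph{identical} to the paper's $(1-x)\le 2(1-\sqrt{x})$ after multiplying by $\sqrt{x}$. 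One small correction to your closing remark: a term-by-term comparison does \emph{not} fail here --- writing $\sqrt{q_t(v)}=\sum_{e\in\oes v}q_t(e)/\sqrt{q_t(v)}$, the paper's inequality gives a valid pointwise bound $\sqrt{q_t(e)}(1-x_e)\le 2\bigl(\sqrt{q_t(e)}-q_t(e)/\sqrt{q_t(v)}\bigr)$ for each $e$, which sums to exactly the desired form without first aggregating over the simplex constraint.
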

\begin{proof}
To bound the stability term,
we can apply Lemma~\ref{lem:stabTsallisLB}
with $\ell = \widehat{\ell}_t(e)$,
$x = {q}_t(e)$,
$\eta = \eta_t$,
and $\beta = 2$.
In fact,
we can verify that $- \widehat{\ell}_t(e) q_t(e) \le \frac{q_t(e)}{q_t(e_-)} \le 1 \le \frac{\beta}{2}$.
Hence,
from Lemma~\ref{lem:stabTsallisLB},
we have
\begin{align*}
	&
	\linner \widehat{\ell}_t, {q}_t - {q}_{t+1} \rinner
	-
	\frac{1}{\eta_t}D({q}_{t+1}, {q}_t )
	\lesssim
	\eta_t
	\sum_{e \in E}
	\left(
		{q}_t(e)
	\right)^{3/2}
	\left(
	\widehat{\ell}_t(e)
	\right)^2
	\\
	&
	\lesssim
	\eta_t
	\sum_{e \in E}
	\left(
		{q}_t(e)
	\right)^{3/2}
	\left(
		\frac{p_t(e)}{q_t(e)}
		-
		\frac{p_t(e_{-})}{q_t(e_{-})}
	\right)^2
	\\
	&
	=
	\eta_t
	\sum_{v \in V \cup \{ s \}}
	\sum_{e \in \oes v}
	\left(
		{q}_t(e)
	\right)^{3/2}
	\left(
		\frac{p_t(e)}{q_t(e)}
		-
		\frac{p_t(e_{-})}{q_t(e_{-})}
	\right)^2
	&
	\mbox{($E = \bigcup_{v \in V \cup \{ s\}} \oes v$)}
	\\
	&
	=
	\eta_t
	\sum_{v \in V \cup \{ s \}}
	\sum_{e \in \oes v}
	\left(
		{q}_t(e)
	\right)^{3/2}
	\left(
		\frac{p_t(e)}{q_t(e)}
		-
		\frac{p_t(v)}{q_t(v)}
	\right)^2
	&
	\mbox{($e \in \oes v \iff e_{-} = v$)}
	\\
	&
	=
	\eta_t
	\sum_{v \in V \cup \{ s \}}
	\frac{p_t(v)}{\sqrt{q_t(v)}}
	\sum_{e \in \oes v}
	\left(
		\frac{{q}_t(e)}{q_t(v)}
	\right)^{3/2}
	\left(
		\frac{p_t(e)q_t(v)}{q_t(e)}
		-
		1
	\right)^2,
	&
\end{align*}
where we used the fact that $p_t(v) = 0$ implies $p_t(e)=0$ for $e \in \oes v$.
Taking the conditional expectation w.r.t.~$p_t$ given $q_t$,
we obtain:
\begin{align*}
	&
	\E \left[
	\frac{p_t(v)}{\sqrt{q_t(v)}}
	\sum_{e \in \oes v}
	\left(
		\frac{{q}_t(e)}{q_t(v)}
	\right)^{3/2}
	\left(
		\frac{p_t(e)q_t(v)}{q_t(e)}
		-
		1
	\right)^2
	\right]
	\\
	&
	=
	\sqrt{q_t(v)}
	\cdot
	\E \left[
	\sum_{e \in \oes v}
	\left(
		\frac{{q}_t(e)}{q_t(v)}
	\right)^{3/2}
	\left(
		\frac{p_t(e)q_t(v)}{q_t(e)}
		-
		1
	\right)^2
	|
	p_t(v)=1
	\right]
	\\
	&
	=
	\sqrt{q_t(v)}
	\sum_{e \in \oes v}
	\left(
		\frac{{q}_t(e)}{q_t(v)}
	\right)^{3/2}
	\left(
		\frac{q_t(e)}{q_t(v)}
		\cdot
		\left(
			\frac{q_t(v)}{q_t(e)}
			-
			1
		\right)^2
		+
		\left(
			1
			-
			\frac{q_t(e)}{q_t(v)}
		\right)
		\cdot
		1
	\right)
	\\
	&
	=
	\sqrt{q_t(v)}
	\sum_{e \in \oes v}
	\left(
		\frac{{q}_t(e)}{q_t(v)}
	\right)^{1/2}
	\left(
		1
		-
		\frac{q_t(e)}{q_t(v)}
	\right)
	=
	\sum_{e \in \oes v}
	\sqrt{q_t(e)}
	\left(
		1
		-
		\frac{q_t(e)}{q_t(v)}
	\right).
\end{align*}
Further,
as we have $(1-x) \le 2(1 - \sqrt{x})$ for $x \in [0, 1]$,
we have
\begin{align*}
	&
	\sum_{e \in \oes v}
	\sqrt{q_t(e)}
	\left(
		1
		-
		\frac{q_t(e)}{q_t(v)}
	\right)
	\le
	2
	\sum_{e \in \oes v}
	\sqrt{q_t(e)}
	\left(
		1
		-
		\sqrt{
		\frac{q_t(e)}{q_t(v)}
		}
	\right)
	=
	2
	\left(
	\sum_{e \in \oes v}
	\sqrt{q_t(e)}
	-
	\sum_{e \in \oes v} \frac{q_t(e)}{\sqrt{q_t(v)}}
	\right)
	\\
	&
	=
	2
	\left(
	\sum_{e \in \oes v}
	\sqrt{q_t(e)}
	-
	\frac{q_t(v)}{\sqrt{q_t(v)}}
	\right)
	=
	2
	\left(
	\sum_{e \in \oes v}
	\sqrt{q_t(e)}
	-
	\sqrt{q_t(v)}
	\right).
\end{align*}
By combining the above inequalities,
\begin{align*}
	\E \left[
		\linner \widehat{\ell}_t, \tilde{q}_t - \tilde{q}_{t+1} \rinner
		-
		\frac{1}{\eta_t}D(\tilde{q}_{t+1}, \tilde{q}_t )
		|
		q_t
	\right]
	\lesssim
	\eta_t
	\sum_{v \in V \cup \{ s \}}
	\left(
	\sum_{e \in \oes v}
	\sqrt{q_t(e)}
	-
	\sqrt{q_t(v)}
	\right).
\end{align*}
The second inequality in \eqref{eq:stabTsallis-SP} follows from
the fact that
$q_t(e) \le q_t(v)$ for any $e \in \oes v$.
\end{proof}

\begin{lemma}
	\label{lem:penaTsallis-SP}
	Let $p^*$ be a path consisting of $V^* \in V$ and $E^* \subseteq E$.
	Suppose that $q^*$ is given by \eqref{eq:defqstar-SP}.
	For $t \ge 2$,
	if $q^*$ is a path consisting of 
	$V^* \in V$ and $E^* \subseteq E$,
	penalty terms are bounded as
	\begin{align}
		\nonumber
		\pena_t
		&
		\lesssim
		\eta_t
		\left(
			\sum_{e \in E} \sqrt{q(e)}
			-
			|E|
			+
			m \epsilon
		\right)
		\le
		\eta_t
		\sum_{e \in E \setminus E^*}
		\sqrt{q_t(e)}
		+
		m \epsilon
		\\
		&
		=
		\eta_t
		\left(
		\sum_{v \in V \cup \{ s \}}
		\sum_{e \in \oes v \setminus \{ \pi^*(v) \}}
		\sqrt{q_t(e)}
		+
		\sum_{v \in V \setminus V^*}
		\sqrt{q_t(\pi^*(v))}
		\right)
		+
		m \epsilon
		\label{eq:penaTsallis-SP}
	\end{align}
	In the case of $t=1$,
	the bound includes an $O(m \log(m/\epsilon))$ term in addition to the above.
\end{lemma}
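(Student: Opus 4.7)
The plan is to expand the penalty term using the explicit form of the Tsallis-entropy regularizer in \eqref{eq:defpsi-SP-Tsallis-app} and reduce the bound to two elementary estimates: the change of the square-root coefficient $1/\eta_t$ and a simple upper/lower bound on $\sum_e \sqrt{q_t(e)}$ versus $\sum_e \sqrt{q^*(e)}$.

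First, I would substitute \eqref{eq:defpsi-SP-Tsallis-app} into the definition of $\pena_t$ and observe that the log-barrier part $-\beta\sum_e \ln q(e)$ does not depend on $t$, so for $t \ge 2$ it cancels between $\psi_t$ and $\psi_{t-1}$, leaving
\begin{align*}
\pena_t
= 2\Bigl(\tfrac{1}{\eta_t} - \tfrac{1}{\eta_{t-1}}\Bigr)\Bigl(\sum_{e \in E} \sqrt{q_t(e)} - \sum_{e \in E} \sqrt{q^*(e)}\Bigr).
\end{align*}
Using $\eta_t = 1/\sqrt{t}$, I would bound $1/\eta_t - 1/\eta_{t-1} = \sqrt{t}-\sqrt{t-1} = 1/(\sqrt{t}+\sqrt{t-1}) \le 1/\sqrt{t} = \eta_t$, so the penalty is $\lesssim \eta_t \bigl(\sum_e \sqrt{q_t(e)} - \sum_e \sqrt{q^*(e)}\bigr)_+$.

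Next, I would exploit the mixed-in form $q^* = (1-\epsilon)p^* + \epsilon q_0$ from \eqref{eq:defqstar-SP}. For $e \in E^*$, $p^*(e) = 1$ forces $q^*(e) \ge 1-\epsilon$, hence $\sqrt{q^*(e)} \ge 1-\epsilon$, so dropping the non-negative contributions from $e \notin E^*$ yields $\sum_e \sqrt{q^*(e)} \ge |E^*|(1-\epsilon) \ge |E^*| - m\epsilon$. Pairing this with the trivial upper bound $\sum_{e \in E^*}\sqrt{q_t(e)} \le |E^*|$ (since $q_t(e) \in [0,1]$) gives
\begin{align*}
\sum_{e \in E}\sqrt{q_t(e)} - \sum_{e \in E}\sqrt{q^*(e)}
\le
\sum_{e \in E\setminus E^*}\sqrt{q_t(e)} + m\epsilon,
\end{align*}
which is the first inequality of \eqref{eq:penaTsallis-SP}. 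The stated equality then follows from the disjoint decomposition
\begin{align*}
E\setminus E^*
= \bigcup_{v \in V \cup \{s\}}\bigl(\oes v \setminus \{\pi^*(v)\}\bigr) \;\sqcup\; \bigcup_{v \in V \setminus V^*}\{\pi^*(v)\},
\end{align*}
which in turn is immediate from Definition~\ref{def:consistent}: for $v \in V^* \cup \{s\}$ the unique edge of $\oes v$ in $E^*$ is $\pi^*(v)$, whereas for $v \in V\setminus V^*$ no edge of $\oes v$ lies in $E^*$, so the $\pi^*(v)$-edge itself must be counted in $E\setminus E^*$.

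Finally, for $t=1$ the convention $\psi_0 \equiv 0$ gives $\pena_1 = \psi_1(q^*) - \psi_1(q_1)$, so the log-barrier contribution no longer cancels and produces the extra term $\beta \sum_e \log(q_1(e)/q^*(e))$. Using $q_1(e) \le 1$ and $q^*(e) \ge \min\{1-\epsilon,\,\epsilon/m\} \gtrsim \epsilon/m$ (by the standing assumption $q_0(e) \ge 1/m$), this contribution is bounded by $O(m\log(m/\epsilon))$, while the square-root part is handled exactly as in the $t \ge 2$ case. The main point requiring care is not any heavy calculation but rather the vertex-wise decomposition of $E\setminus E^*$ and the verification of its disjointness, which is where the structural assumption on the consistent policy $\pi^*$ enters; everything else reduces to one-line inequalities.
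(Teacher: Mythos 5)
Your proof is correct and matches the paper's own argument almost step for step: isolate the square-root part of the Tsallis regularizer for $t \ge 2$, bound $1/\eta_t - 1/\eta_{t-1} \lesssim \eta_t$, use the $\epsilon$-mixing of $q^*$ to replace $\sum_e \sqrt{q^*(e)}$ by $|E^*| - O(m\epsilon)$, and then apply the vertex-wise disjoint decomposition of $E \setminus E^*$; the $t=1$ case handles the log-barrier contribution exactly as the paper does (and also reveals that the ``$-|E|$'' in the displayed statement is a typo for $-|E^*|$, which both your derivation and the paper's own proof produce).
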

\begin{proof}
Suppose that $t \ge 2$.
We note it follows from 
$\eta_t = 1/\sqrt{t}$
that
$\frac{1}{\eta_{t-1}} - \frac{1}{\eta_{t}} = \Theta(\eta_t)$.
We hence have
\begin{align*}
	\pena_t
	&
	=
	2
	\left(
		\frac{1}{\eta_{t}}
		-
		\frac{1}{\eta_{t-1}}
	\right)
	\sum_{e \in E}
	\left(
	\sqrt{q_t(e)}
	-
	\sqrt{q^*(e)}
	\right)
	\lesssim
	\eta_t
	\sum_{e \in E}
	\left(
	\sqrt{q_t(e)}
	-
	\sqrt{q^*(e)}
	\right)
	\\
	&
	\lesssim
	\eta_t
	\sum_{e \in E}
	\left(
	\sqrt{q_t(e)}
	-
	\sqrt{p^*(e)}
	+
	\epsilon
	\right)
	=
	\eta_t
	\left(
	\sum_{e \in E}
	\sqrt{q_t(e)}
	-
	|E^*|
	+
	m
	\epsilon
	\right)
	\\
	&
	\le
	\eta_t
	\sum_{e \in E \setminus E^*}
	\sqrt{q_t(e)}
	+
	m \epsilon.
\end{align*}
The equality in \eqref{eq:penaTsallis-SP} follows from
$E \setminus E^* = (\bigcup_{v \in V\cup \{ s \}} (\oes v \setminus \pi^*(v))  ) \cup (V \setminus V^*)$.
In the case of $t=1$,
the penalty term includes an additional term of
\begin{align*}
	\beta 
	\sum_{e \in E}
	\left(
		\ln(q_t(e))
		-
		\ln (q^*(e))
	\right)
	\lesssim
	\sum_{e \in E}
	\ln \frac{1}{q^*(e)}
	\lesssim
	m \log \left( \frac{m}{\epsilon} \right),
\end{align*}
which completes the proof.
\end{proof}
Lemmas~\ref{lem:stabTsallis-SP} and \ref{lem:penaTsallis-SP} combined with \eqref{eq:spdecomposition} immediately lead to \eqref{eq:thmTsallis1}.
Given this,
the next step to \eqref{eq:thmTsallis2} follows from the following lemma:
\begin{lemma}
	\label{lem:qve}
	For any $v \in V \setminus V^*$ and any $q \in \cQ$,
	we have
	\begin{align}
		\label{eq:qve}
		q(v) \le \sum_{v' \in V^* \cup \{ s \}} \sum_{e \in \oes v' \setminus \pi^*(v')} q(e).
	\end{align}
\end{lemma}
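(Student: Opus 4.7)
}
The natural plan is to invoke flow decomposition and then argue combinatorially about each decomposed path. Since $q \in \cQ$ is a unit $s$-$g$ flow on the DAG $G$, it admits a decomposition $q = \sum_{p \in \cP} \lambda_p \cdot p$ with $\lambda_p \ge 0$ and $\sum_p \lambda_p = 1$. Under this decomposition we have $q(v) = \sum_{p : v \in p} \lambda_p$ and $q(e) = \sum_{p : e \in p} \lambda_p$, so it suffices to exhibit an assignment of each path $p$ with $v \in p$ to a pair $(v', e)$ with $v' \in V^* \cup \{s\}$, $e \in \oes{v'} \setminus \{\pi^*(v')\}$, and $e \in p$.

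The assignment is as follows. Given a path $p$ through $v$, traverse $p$ from $s$ to $v$ and let $v'(p)$ be the last vertex along this traversal that lies in $V^* \cup \{s\}$. This is well-defined because $s \in V^* \cup \{s\}$, while $v \notin V^*$ and $v \neq s$ (since $v \in V$). Let $e(p) \in \oes{v'(p)}$ be the edge that $p$ uses immediately after $v'(p)$. I claim $e(p) \neq \pi^*(v'(p))$: by consistency of $\pi^*$ with $p^*$ (Definition~\ref{def:consistent}), $\pi^*(v'(p))$ is the edge of $p^*$ leaving $v'(p)$, whose head is the next vertex of $p^*$, which lies in $V^* \cup \{g\}$. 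If $e(p) = \pi^*(v'(p))$, then the vertex on $p$ immediately after $v'(p)$ would be in $V^* \cup \{g\}$; but if it were in $V^*$ this would contradict the maximality of $v'(p)$, and if it were $g$ then $p$ would terminate before reaching $v$, contradicting $v \in p$.

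Using this assignment, the desired inequality follows immediately: grouping paths by their image $(v', e)$,
\begin{align*}
q(v)
= \sum_{p : v \in p} \lambda_p
= \sum_{v' \in V^* \cup \{s\}} \sum_{e \in \oes{v'} \setminus \{\pi^*(v')\}} \sum_{\substack{p : v \in p \\ (v'(p), e(p)) = (v', e)}} \lambda_p
\le \sum_{v' \in V^* \cup \{s\}} \sum_{e \in \oes{v'} \setminus \{\pi^*(v')\}} q(e),
\end{align*}
where the last step uses that $(v'(p), e(p)) = (v', e)$ forces $e \in p$, so the inner sum is bounded by $\sum_{p : e \in p} \lambda_p = q(e)$. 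The only step requiring care is verifying $e(p) \neq \pi^*(v'(p))$, i.e., the case analysis based on consistency of $\pi^*$; the rest of the argument is a clean bookkeeping over the flow decomposition.
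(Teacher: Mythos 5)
Your proof is correct, and it takes a genuinely different (though related) route from the paper's. Both arguments begin with the same reduction: since both sides are linear in $q$ and $\cQ = \conv(\cP)$, it suffices to consider paths (your flow decomposition is the same observation in different clothing). From there the paper argues by dichotomy over a single path $q$: either the RHS is a positive integer, hence at least $1 \ge q(v)$; or the RHS is $0$, and a forward induction along $p^*$ shows that a path which never deviates from $\pi^*$ at any vertex of $V^* \cup \{s\}$ must equal $p^*$, and therefore cannot pass through $v \notin V^*$. You instead give a direct charging argument: every path through $v$ necessarily deviates from $\pi^*$ somewhere, and you identify a canonical witness edge $e(p)$ leaving the last $V^* \cup \{s\}$ vertex on $p$ before $v$. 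Grouping paths by witness then yields the inequality. The two proofs are essentially contrapositive views of the same structural fact, but yours is more constructive and yields a slightly stronger pointwise statement (a per-path witness rather than the integrality dichotomy), whereas the paper's is shorter because the $\{0,1\}$-valuedness of paths does the work that your case analysis on $e(p) \neq \pi^*(v'(p))$ does. Your case analysis is complete and correct, including the observation that the successor of $v'(p)$ cannot be $g$ because $v$ must appear later on $p$.
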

\begin{proof}
	As $\cQ$ is a convex hull of $\cP$ and 
	both sides of \eqref{eq:qve} are linear in $q$,
	it suffices to show \eqref{eq:qve} for $q \in \cP $.
	Then,
	for any $q \in \cP$,
	if RHS of \eqref{eq:qve} is positive,
	then it is at least $1$,
	and hence \eqref{eq:qve} holds.
	Therefore,
	it suffices to show that RHS $= 0 \Longrightarrow $ LHS $=0$ for $q \in \cP$.
	Suppose $q$ corresponds to the sequence of $( s=v_0, e_1, v_1, \ldots, e_h, v_h = g )$.
	If RHS $=0$,
	then $q$ consists of $V^* = \{ v^*_1, \ldots, v^*_{h^*} = g \}$ and $E^* = \{ \pi^*(v^*_j ) \}_{j=0,1, \ldots, h^*}$.
	In fact,
	we can show this in induction in $j$,
	under the condition that $\sum_{e \in \oes v' \setminus \{\pi^*(v')\}} q(e) = 0$ for all $v' \in V^* \cup \{ s \}$:
	$q(v^*_j) = 1 \Longrightarrow \sum_{e \in \oes {v^*_j}}  q(e) = 1 
	\Longrightarrow q(\pi^*(v^*_j)) = 1
	\Longrightarrow q(v^*_{j+1}) = 1
	$.
\end{proof}
\begin{proof}[Proof of Theorem~\ref{thm:Tsallis-SP}]
	We choose $\epsilon = \frac{1}{T} \in [0, 1]$ in the definition of $q^*$ in \eqref{eq:defqstar-SP}.
	Then,
	Lemmas~\ref{lem:stabTsallis-SP} and \ref{lem:penaTsallis-SP} combined with \eqref{eq:spdecomposition} lead to \eqref{eq:thmTsallis1}.
	The other inequality \eqref{eq:thmTsallis2} follows from
	\begin{align*}
		\sum_{v \in V \setminus V^*}
		\sqrt{{q}_t(\pi^*(v))}
		&
		\le
		\sum_{v \in V \setminus V^*}
		\sqrt{{q}_t(v)}
		\le
		n
		\sqrt{
			\sum_{v \in V^* \cup \{ s \}}
			\sum_{e \in \oes v \setminus \{ \pi^*(v) \}}
			{q}_t(e)
		},
	\end{align*}
	where we used Lemma~\ref{lem:qve} in the second inequality.
\end{proof}
\begin{proof}[Proof of Corollary~\ref{cor:Tsallis-SP}]
	As we have 
	$\sum_{e \in E} q(e) \le L$
	and
	$\sum_{v \in V} q(v) \le L$
	for any $q \in \cQ$,
	from the Cauchy-Schwarz inequality,
	we have
	\begin{align*}
		\sum_{v \in V \cup \{ s \}}
		\sum_{e \in \oes v \setminus \{ \pi^*(v) \}}
		\sqrt{q_t(e)}
		\le
		\sum_{e \in E} \sqrt{q_t(e)}
		\le
		\sqrt{|E| \sum_{e \in E} q_t(e)}
		\le
		\sqrt{m L}
	\end{align*}
	and
	\begin{align*}
		\sum_{v \in V \setminus V^*}
		\sqrt{q_t(v)}
		\le
		\sum_{v \in V}
		\sqrt{q_t(v)}
		\le
		\sqrt{|V| \sum_{v \in V}{q_t(v)}}
		\le
		\sqrt{n L}.
	\end{align*}
	Combining Theorem~\ref{thm:Tsallis-SP} with this and $\sum_{t=1}^T \frac{1}{\sqrt{t}} \lesssim \sqrt{T}$,
	we obtain $\Reg_T \lesssim \sqrt{mLT} + m\log T$.
	By using the Cauchy-Schwarz inequality,
	we obtain the following:
	\begin{align*}
		&
		\sum_{t=1}^T
		\frac{1}{\sqrt{t}}
		\sum_{v \in V \cup \{ s \}}
		\sum_{e \in \oes v \setminus \{ \pi^*(v) \}}
		\sqrt{q_t(e)}
		\\
		&
		=
		\sum_{t=1}^T
		\frac{1}{\sqrt{t}}
		\sum_{v \in V \cup \{ s \}}
		\sum_{e \in \oes v \setminus \{ \pi^*(v) \}}
		\frac{1}{\sqrt{\Delta(e)}}
		\sqrt{\Delta(e)q_t(e)}
		\\
		&
		\le
		\sum_{t=1}^T
		\frac{1}{\sqrt{t}}
		\sqrt{
		\sum_{v \in V \cup \{ s \}}
		\sum_{e \in \oes v \setminus \{ \pi^*(v) \}}
		\frac{1}{\Delta(e)}
		}
		\sqrt{
		\sum_{v \in V \cup \{ s \}}
		\sum_{e \in \oes v \setminus \{ \pi^*(v) \}}
		\Delta(e)q_t(e)
		}
		\\
		&
		\lesssim
		\sqrt{
		\sum_{v \in V \cup \{ s \}}
		\sum_{e \in \oes v \setminus \{ \pi^*(v) \}}
		\frac{1}{\Delta(e)}
		}
		\sqrt{
			\sum_{t=1}^T
			\sum_{v \in V \cup \{ s \}}
			\sum_{e \in \oes v \setminus \{ \pi^*(v) \}}
			\Delta(e)q_t(e)
			\log T
		}.
	\end{align*}
	Similarly,
	we have have
	\begin{align*}
		&
		\sum_{t=1}^T
		\frac{n}{\sqrt{t}}
		\sqrt{
		\sum_{v \in V \cup \{ s \}}
		\sum_{e \in \oes v \setminus \{ \pi^*(v) \}}
		q_t(e)
		}
		\\
		&
		\lesssim
		n
		\sqrt{\frac{1}{\Delta^*}}
		\sqrt{
			\sum_{t=1}^T
			\Delta^*
			\sum_{v \in V \cup \{ s \}}
			\sum_{e \in \oes v \setminus \{ \pi^*(v) \}}
			q_t(e)
			\log T
		}
		\\
		&
		\le
		n
		\sqrt{\frac{1}{\Delta^*}}
		\sqrt{
			\sum_{t=1}^T
			\sum_{v \in V \cup \{ s \}}
			\sum_{e \in \oes v \setminus \{ \pi^*(v) \}}
			\Delta(e)
			q_t(e)
			\log T
		}.
	\end{align*}
	Hence,
	from Theorem~\ref{thm:Tsallis-SP},
	Jensen's inequality,
	and the assumption of self-bounding regime in Definition~\ref{def:SBR-SP},
	we have
	\begin{align*}
		\Reg_T(p^*)
		&
		\lesssim
		\sqrt{U
		\cdot
		\E\left[
		\sum_{t=1}^T
		\sum_{v \in V \cup \{ s \}}
		\sum_{e \in \oes v \setminus \{ \pi^*(v) \}}
		\Delta(e)
		q_t(e)
		\right]
		}
		+
		m \log T
		\\
		&
		\le
		\sqrt{U
		\cdot
		\left(
			\Reg_T(p^*)
			+
			C
		\right)
		}
		+
		m \log T
		\\
		&
		\lesssim
		\sqrt{U
		\cdot
		\Reg_T(p^*)
		}
		+
		\sqrt{UC}
		+
		m \log T,
	\end{align*}
	which implies
	$\Reg_T(p^*) \lesssim U + \sqrt{UC} + m\log T$.
\end{proof}

\subsubsection{Analysis for log-barrier case}
In the case of the log-barrier regularizer,
we have the following regret bounds:
\begin{theorem}[Second part of Theorem~\ref{thm:OSP}]
	\label{thm:log-barrier-SP-appendix}
	The proposed algorithm with the log-barrier regularizer \eqref{eq:defpsi-SP-LB-app} achieves:
	\begin{align*}
		\Reg_T
		\lesssim
		\E \left[
			\sum_{v \in V \cup \{ s \}}
			\sum_{e \in \oes v}
			\sqrt{
			\sum_{t=1}^T
			c_t^2
			p_t(v)
			\left(
				p_t(e)
				-
				\frac{q_t(e)}{q_t(v)}
			\right)^2
			\log (T)
			}
		\right]
		+
		m \log (T).
	\end{align*}
\end{theorem}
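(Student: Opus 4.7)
The plan is to follow the same FTRL template used in the proof of Theorem~\ref{thm:Tsallis-SP}, but with the log-barrier regularizer \eqref{eq:defpsi-SP-LB-app} replacing the Tsallis entropy. Starting from the decomposition \eqref{eq:spdecomposition} with comparator $q^\ast = (1-\epsilon) p^\ast + \epsilon q_0$ and $\epsilon = 1/T$, the regret is bounded by $\epsilon T + \sum_t \E[\stab_t + \pena_t]$, and it therefore suffices to control the stability and penalty terms separately.

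For the stability term, note that $\psi_t$ is separable across edges, so Lemma~\ref{lem:stabLB} can be applied coordinate-wise with $x = q_t(e)$, $\ell = \widehat{\ell}_t(e)$, and $\eta = \eta_t(e)$. Its side condition $\eta_t(e) q_t(e) \widehat{\ell}_t(e) \ge -\tfrac{1}{2}$ holds because $|q_t(e) \widehat{\ell}_t(e)| = |c_t(p_t(e) - p_t(e_-) q_t(e)/q_t(e_-))| \le 1$ and $\eta_t(e) \le 1/2$ by construction. Using $(p_t(e) - p_t(v)q_t(e)/q_t(v))^2 = p_t(v)(p_t(e) - q_t(e)/q_t(v))^2$ for $e \in \oes v$ (verified by splitting on $p_t(v) \in \{0,1\}$), this yields
\begin{align*}
\stab_t \le \sum_{e \in E} \eta_t(e)\, q_t(e)^2 \widehat{\ell}_t(e)^2 = \sum_{v \in V \cup \{s\}} \sum_{e \in \oes v} \eta_t(e)\, \mu_t(e),
\end{align*}
where $\mu_t(e) := c_t^2 p_t(v)(p_t(e) - q_t(e)/q_t(v))^2$. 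For the penalty, the log-barrier is additive across edges and $1/\eta_t(e)$ is nondecreasing in $t$, so an edge-wise telescoping together with $q^\ast(e) \gtrsim 1/(nT)$ gives
\begin{align*}
\sum_{t=1}^T \pena_t \le \sum_{e \in E} \Bigl(\tfrac{1}{\eta_{T+1}(e)} - \tfrac{1}{\eta_1(e)}\Bigr) \log\tfrac{1}{q^\ast(e)} \lesssim \log(nT) \sum_{e \in E} \tfrac{1}{\eta_T(e)}.
\end{align*}

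The final step is to apply the standard adaptive-step-size potential inequality $\sum_t a_t / \sqrt{A + \sum_{\tau<t} a_\tau} \lesssim \sqrt{A + \sum_t a_t}$ with $a_t = \mu_t(e)/\log T$ and $A = 4$, which yields both $\sum_t \eta_t(e) \mu_t(e) \lesssim \sqrt{\log T \cdot \sum_t \mu_t(e)}$ and $1/\eta_T(e) \lesssim \sqrt{(\sum_t \mu_t(e))/\log T} + 2$. Summing over $e \in E$ then produces the claimed bound $\sum_{v} \sum_{e \in \oes v} \sqrt{\sum_t \mu_t(e) \log T} + m\log T$. The main technical issue is the precise alignment between the per-round quantity $\mu_t(e)$ appearing in $\stab_t$ and the cumulant appearing in the definition of $\eta_t(e)$: the adaptive learning rate must be calibrated so that its potential absorbs the stability with the correct $\sqrt{\log T}$ factor. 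Once this calibration is secured, the remainder is routine FTRL arithmetic combined with Lemma~\ref{lem:stabLB}.
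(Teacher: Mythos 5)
Your proposal is correct and follows essentially the same argument as the paper: you apply Lemma~\ref{lem:stabLB} coordinate-wise (which is exactly Lemma~\ref{lem:stabLB-SP}) to get $\stab_t \lesssim \sum_e \eta_t(e)\rho_t(e)$ with $\rho_t(e) = c_t^2 p_t(e_-)(p_t(e) - q_t(e)/q_t(e_-))^2$, telescope the penalty, and absorb both via the adaptive learning-rate potential. One small nit: with the $q_0$ chosen so that $q_0(e) \ge 1/m$ (the version the paper uses here, not the one from \eqref{eq:defq0}, which only controls $q_0(v)$) and $\epsilon = 1/T$, you get $q^*(e) \gtrsim 1/(mT)$ rather than $1/(nT)$, but this does not affect the $\log T$ factor in the final bound.
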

\begin{corollary}[Second part of Corollary~\ref{cor:bobw-shortest-path}]
	\label{cor:log-barrier-SP-appendix}
	In the adversarial regime,
	the proposed algorithm with the log-barrier regularizer \eqref{eq:defpsi-SP-LB-app} achieves
	\begin{align}
		\label{eq:corLB1}
		\Reg_T
		\lesssim
		\E \left[
			\sqrt{
				mL
				\sum_{t=1}^T
				c_t^2
				\log T
			}
		\right]
		\lesssim
		\sqrt{
			mL
			\log T
			\cdot
			\min_{p^* \in \cP}
			\E\left[
				\sum_{t=1}^T \linner \ell_t, p^* \rinner
			\right]
		}
		\le
		\sqrt{mLT}
		.
	\end{align}
	$\Reg_T(p^*) \lesssim U + \sqrt{UC} + m \log T$,
  where we define
  \begin{align}
    U 
    = 
    \sum_{v \in V \cup \{ s \}} \sum_{e \in \oes v \setminus \{ \pi^*(v) \}} \frac{\log T}{\Delta(e)} .
  \end{align}
\end{corollary}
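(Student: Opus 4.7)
The plan is to start from the per-round decomposition in Theorem~\ref{thm:log-barrier-SP-appendix} and to evaluate the conditional expectation of each summand. Because $p_t$ is sampled in a Markovian fashion from $q_t$, conditioning on $p_t(v)=1$ gives that exactly one edge out of $v$ is traversed, with edge $e \in \oes v$ chosen with probability $q_t(e)/q_t(v)$. A direct computation then yields
\[
\E\bigl[p_t(v)(p_t(e)-q_t(e)/q_t(v))^2 \mid q_t\bigr] = q_t(e)\bigl(1-q_t(e)/q_t(v)\bigr)
\]
for every $e \in \oes v$. Combined with Jensen's inequality (concavity of $\sqrt{\cdot}$) and $c_t \in [0,1]$, this lets the expectation be moved inside the square root throughout.

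For the adversarial small-loss bound I would apply Cauchy--Schwarz over the outer index $(v,e)$ to get $\sum_{v,e}\sqrt{A_{v,e}} \le \sqrt{m \sum_{v,e} A_{v,e}}$. The identity $\sum_{e\in \oes v} q_t(e)/q_t(v) = 1$ together with $y^2 \le y$ for $y \in [0,1]$ gives $\sum_{e\in\oes v}(p_t(e)-q_t(e)/q_t(v))^2 \le 2$ whenever $p_t(v)=1$, and combined with $\sum_v p_t(v) \le L$ the double sum is bounded by $2L\log T \sum_t c_t^2$. Jensen again and $c_t^2 \le c_t$ give $\Reg_T \lesssim \sqrt{mL\log T \cdot \E[\sum_t c_t]} + m\log T$. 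Writing $\E[\sum_t c_t] = L^* + \Reg_T$ with $L^* = \min_{p^*} \E[\sum_t \langle \ell_t, p^* \rangle]$ produces a quadratic inequality in $\Reg_T$ that resolves to $\Reg_T \lesssim \sqrt{mL\log T \cdot L^*} + mL\log T$, and hence the crude $\Reg_T \lesssim \sqrt{mLT\log T}$ from $L^* \le T$.

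For the self-bounding bound I would split $\sum_{e\in\oes v}$ into $e \neq \pi^*(v)$ and $e = \pi^*(v)$. For $e \neq \pi^*(v)$ use the trivial estimate $q_t(e)(1-q_t(e)/q_t(v)) \le q_t(e)$; for $e = \pi^*(v)$ use the key step
\[
q_t(e)\bigl(1 - q_t(e)/q_t(v)\bigr) \le q_t(v) - q_t(e) = \sum_{e'\in\oes v\setminus\{\pi^*(v)\}} q_t(e'),
\]
followed by $\sqrt{\sum_i a_i} \le \sum_i \sqrt{a_i}$. Both contributions then reduce to sums of the form $\sum_v \sum_{e' \in \oes v \setminus \{\pi^*(v)\}} \sqrt{\log T \cdot \E[\sum_t q_t(e')]}$. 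Splitting each weight as $\sqrt{\log T/\Delta(e')} \cdot \sqrt{\Delta(e')\, \E[\sum_t q_t(e')]}$ and applying Cauchy--Schwarz over $(v,e')$ yields $\sqrt{U \cdot \E[\sum_t \sum_{e'\in E'}\Delta(e') q_t(e')]}$. Since $\E[q_t(e')] = \E[p_t(e')]$, the self-bounding condition~\eqref{eq:self-bounding} bounds the second factor by $\Reg_T(p^*) + C$, and AM-GM applied to $\Reg_T(p^*) \lesssim \sqrt{U(\Reg_T(p^*)+C)} + m\log T$ delivers $\Reg_T(p^*) \lesssim U + \sqrt{UC} + m\log T$.

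The hard part will be the $e = \pi^*(v)$ case: when the algorithm concentrates on the optimal path, $q_t(\pi^*(v))$ is close to $q_t(v)$, so the naive bound $q_t(e)$ is of order one and would prevent a $U$-only bound. The crucial fact is that the second moment carries the $(1 - q_t(e)/q_t(v))$ factor, which trades this for $q_t(v) - q_t(\pi^*(v)) = \sum_{e'\neq \pi^*(v)} q_t(e')$, an object directly controlled by the self-bounding constraint. This log-barrier-specific structure is what eliminates the need for any residual term analogous to the $n\sqrt{\cdot}$ factor appearing in the Tsallis analysis via Lemma~\ref{lem:qve}, yielding the clean $U$-only statement.
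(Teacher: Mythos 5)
Your proposal is correct and takes essentially the same route as the paper. Both proofs start from Theorem~\ref{thm:log-barrier-SP-appendix}, pass Jensen's inequality inside the square root to reach the conditional second moment $\E[p_t(v)(p_t(e)-q_t(e)/q_t(v))^2 \mid q_t] = q_t(e)(1-q_t(e)/q_t(v))$, and then split the edge sum at $e=\pi^*(v)$ versus $e\neq\pi^*(v)$, using the key identity $q_t(\pi^*(v))(1-q_t(\pi^*(v))/q_t(v)) \le q_t(v)-q_t(\pi^*(v)) = \sum_{e'\neq\pi^*(v)} q_t(e')$ together with $\sqrt{\sum_i a_i}\le\sum_i\sqrt{a_i}$ before the Cauchy–Schwarz/self-bounding/AM--GM chain; you simply spell out more explicitly the $\sum_{e\in\oes v}(p_t(e)-q_t(e)/q_t(v))^2\le 2$ bound and the quadratic-in-$\Reg_T$ argument for the small-loss form, which the paper leaves as a one-line sketch.
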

To show these,
we use the following lemmas:
\begin{lemma}
	\label{lem:stabLB-SP}
	When we use the log-barrier regularizer \eqref{eq:defpsi-SP-LB-app},
	we have
	\begin{align*}
		\stab_t
		\lesssim
		c_t^2
		\sum_{v \in V \cup \{ s \}}
		\sum_{e \in \oes v}
		\eta_t(e)
		p_t(v)
		\left(
			p_t(e)
			-
			\frac{q_t(e)}{q_t(v)}
		\right)^2.
	\end{align*}
\end{lemma}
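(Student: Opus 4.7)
The plan is to apply the standard one-dimensional stability bound from Lemma~\ref{lem:stabLB} coordinate-wise and then rewrite the resulting sum in the target form.

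First, I would decompose the stability term using the fact that the log-barrier regularizer \eqref{eq:defpsi-SP-LB-app} is separable over edges. Writing $\phi(x)=-\ln x$, the associated Bregman divergence splits as $D_t(q',q)=\sum_{e\in E}\tfrac{1}{\eta_t(e)}D_{\phi}(q'(e),q(e))$. Since the minimizer in \eqref{eq:defFTRL} is interior by the log-barrier (i.e.\ $q_{t+1}(e)>0$), relaxing the feasible set from $\cQ$ to the box $[0,1]^E$ and optimizing each coordinate independently gives the upper bound
\[
\stab_t \;\le\; \sum_{e\in E}\sup_{y\in[0,1]}\cbr{\widehat{\ell}_t(e)\bigl(q_t(e)-y\bigr)-\tfrac{1}{\eta_t(e)}D_{\phi}\bigl(y,q_t(e)\bigr)}.
\]

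Second, I would verify the hypothesis of Lemma~\ref{lem:stabLB}, namely $\eta_t(e)\widehat{\ell}_t(e)q_t(e)\ge-\tfrac12$, for each $e\in E$. From the definition \eqref{eq:defellhat} and the facts $c_t\in[0,1]$, $p_t(e),p_t(e_-)\in\{0,1\}$, and $q_t(e)\le q_t(e_-)$, one sees $\widehat{\ell}_t(e)q_t(e)\in[-1,1]$, while $\eta_t(e)\le \tfrac12$ by its definition (since $\eta_t(e)=(4+\cdots)^{-1/2}$). Applying Lemma~\ref{lem:stabLB} therefore yields
\[
\stab_t \;\lesssim\; \sum_{e\in E}\eta_t(e)\,q_t(e)^2\,\widehat{\ell}_t(e)^2 \;=\; c_t^2\sum_{e\in E}\eta_t(e)\,q_t(e)^2\rbr{\tfrac{p_t(e)}{q_t(e)}-\tfrac{p_t(e_-)}{q_t(e_-)}}^2.
\]

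Third, I would regroup using the partition $E=\bigsqcup_{v\in V\cup\{s\}}\oes v$ (so that $e_-=v$ for $e\in\oes v$) and extract the factor $p_t(v)$. After moving $q_t(e)^2$ inside the square we get $\bigl(p_t(e)-q_t(e)p_t(v)/q_t(v)\bigr)^2$. The key observation is that $p_t(v)\in\{0,1\}$, and if $p_t(v)=0$ then $p_t(e)=0$ for every $e\in\oes v$, so the summand vanishes; if $p_t(v)=1$ the square equals $(p_t(e)-q_t(e)/q_t(v))^2$. In both cases the summand equals $p_t(v)\bigl(p_t(e)-q_t(e)/q_t(v)\bigr)^2$, which gives exactly the claimed bound.

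The only nontrivial step is the first one: the relaxation of the sup from $\cQ$ to $[0,1]^E$ requires the log-barrier to keep the unconstrained coordinate-wise minimizers in $(0,1)$, and the verification of the sign condition in step two. Both are routine once the shape of $\widehat{\ell}_t(e)q_t(e)$ is written out, so I expect no serious obstacle beyond careful bookkeeping.
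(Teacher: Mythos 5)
Your proposal is correct and follows essentially the same route as the paper: verify the one-dimensional sign condition so that Lemma~\ref{lem:stabLB} applies edge-by-edge, then partition $E$ over source vertices and pull out the $p_t(v)$ factor using that $p_t(v)=0$ forces $p_t(e)=0$ for $e\in\oes v$. The only cosmetic difference is that you make explicit the coordinate-wise relaxation of the supremum (which the paper leaves implicit), and you bound $\widehat{\ell}_t(e)q_t(e)$ by $[-1,1]$ whereas the paper uses the slightly sharper $-\widehat{\ell}_t(e)q_t(e)\le q_t(e)/q_t(e_-)$; both give the required $\eta_t(e)q_t(e)\widehat{\ell}_t(e)\ge -1/2$.
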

\begin{proof}
	As we have
	$- \eta_t q_t(e) \widehat{\ell}_t(e) \le \eta_t \frac{q_t(e)}{q_t(e_{-})} \le \eta_t \le 1/2$,
	we can use Lemma~\ref{lem:stabLB} to bound the stability terms:
	\begin{align*}
		\stab_t
		&
		\lesssim
		\sum_{e}
		\eta_t(e)
		(q_t(e))^2 
		(\widehat{\ell}_t(e))^2 
		\\
		&
		=
		c_t^2
		\sum_{v \in V \cup \{ s \}}
		\sum_{e \in \oes v}
		\eta_t(e)
		(q_t(e))^2
		\left(
			\frac{p_t(e)}{q_t(e)}
			-
			\frac{p_t(v)}{q_t(v)}
		\right)^2
		\\
		&
		=
		c_t^2
		\sum_{v \in V \cup \{ s \}}
		\sum_{e \in \oes v}
		\eta_t(e)
		\left(
			p_t(e)
			-
			\frac{q_t(e)p_t(v)}{q_t(v)}
		\right)^2
		\\
		&
		=
		c_t^2
		\sum_{v \in V \cup \{ s \}}
		\sum_{e \in \oes v}
		\eta_t(e)
		p_t(v)
		\left(
			p_t(e)
			-
			\frac{q_t(e)}{q_t(v)}
		\right)^2,
	\end{align*}
	where we used the fact that $q(v) = 0 $ implies $q(e)=0$ for $e \in \oes(v)$ and for any $q \in Q$.
\end{proof}
Define $\rho_t(e)$ by
\begin{align*}
	\rho_t(e) = c_t^2 
	p_t(e_{-})
	\left(
		p_t(e)
		-
		\frac{q_t(e)}{q_t(e_{-})}
	\right)^2.
\end{align*}
Define $q^*$ by \eqref{eq:defqstar-SP} with $\epsilon = m/T$.
Then,
from \eqref{eq:spdecomposition}
and Lemma~\ref{lem:stabLB-SP},
we have
\begin{align*}
	\Reg_T(p^*)
	&
	\le
	m
	+
	\Reg_T(q^*)
	\lesssim
	\E\left[
	\sum_{t=1}^T
		\sum_{e \in E}
		\eta_t(e)
		\rho_t(e)
		+
		\sum_{e \in E}
		\frac{1}{\eta_{T}(e)}
		\log \frac{1}{q^*(e)}
	\right]
	+
	m
	\\
	&
	\le
	\E\left[
		\sum_{e \in E}
		\left(
		\sum_{t=1}^T
		\eta_t(e)
		\rho_t(e)
		+
		\frac{\log(T)}{\eta_{T}(e)}
		\right)
	\right]
	+
	m
	\\
	&
	\lesssim
	\E\left[
		\sum_{e \in E}
		\sqrt{
		\sum_{t=1}^T
		\rho_t(e)
		\log(mT)
		}
	\right]
	+
	m \log(T),
\end{align*}
where the last inequality follows from the setting of $\eta_t(e)$:
\begin{align*}
	\eta_t(e)
	=
	\left(
		4
		+
		\frac{1}{ \log ( T) }
		\sum_{\tau=1}^{t-1}
		\rho_\tau(e)
	\right)^{-1/2}.
\end{align*}
This completes the proof of Theorem~\ref{thm:log-barrier-SP-appendix}.
\begin{proof}[Proof of Corollary~\ref{cor:log-barrier-SP-appendix}]
	We can show \eqref{eq:corLB1}
	by using the Cauchy-Schwarz inequality,
	Jensen's inequality,
	and the fact that $\sum_{e} p_t(e) \le L$.
	The other one can be shown via the following:
	\begin{align*}
		&
		\E \left[
			\sum_{v \in V \cup \{ s \}}
			\sum_{e \in \oes v}
			\sqrt{
			\sum_{t=1}^T
			c_t^2
			p_t(v)
			\left(
				p_t(e)
				-
				\frac{q_t(e)}{q_t(v)}
			\right)^2
			}
		\right]
		\\
		&
		\le
		\sum_{v \in V \cup \{ s \}}
		\sum_{e \in \oes v}
		\sqrt{
		\E\left[
			\sum_{t=1}^T
			p_t(v)
			\left(
				p_t(e)
				-
				\frac{q_t(e)}{q_t(v)}
			\right)^2
		\right]
		}
		\\
		&
		\le
		\sum_{v \in V \cup \{ s \}}
		\sum_{e \in \oes v}
		\sqrt{
		\E\left[
			\sum_{t=1}^T
			q_t(e)
			\left(
				1
				-
				\frac{q_t(e)}{q_t(v)}
			\right)
		\right]
		}
		\\
		&
		\le
		\sum_{v \in V \cup \{ s \}}
		\left(
		\sum_{e \in \oes v \setminus \pi^*(v)}
		\sqrt{
		\E\left[
			\sum_{t=1}^T
			q_t(e)
			\left(
				1
				-
				\frac{q_t(e)}{q_t(v)}
			\right)
		\right]
		}
		+
		\sqrt{
		\sum_{t=1}^T
		\E\left[
			q_t(\pi^*(v))
			\left(
				1
				-
				\frac{q_t(\pi^*(v))}{q_t(v)}
			\right)
		\right]
		}
		\right)
		\\
		&
		\le
		\sum_{v \in V \cup \{ s \}}
		\left(
		\sum_{e \in \oes v \setminus \pi^*(v)}
		\sqrt{
		\E\left[
			\sum_{t=1}^T
			q_t(e)
		\right]
		}
		+
		\sqrt{
		\E\left[
			\sum_{t=1}^T
			\left(
				q_t(v)
				-
				q_t(\pi^*(v))
			\right)
		\right]
		}
		\right)
		\\
		&
		\le
		2
		\sum_{v \in V \cup \{ s \}}
		\sum_{e \in \oes v \setminus \pi^*(v)}
		\sqrt{
		\E\left[
			\sum_{t=1}^T
			q_t(e)
		\right]
		}.
	\end{align*}
	Hence,
	by using the condition of the self-bounding constraint in Definition~\ref{def:SBR-SP},
	we obtain
	\begin{align*}
		\Reg_T(p^*)
		&
		\lesssim
		\sum_{v \in V \cup \{ s \}}
		\sum_{e \in \oes v \setminus \pi^*(v)}
		\sqrt{
		\E\left[
			\sum_{t=1}^T
			q_t(e)
			\log T
		\right]
		}
		+
		m \log T
		\\
		&
		=
		\sum_{v \in V \cup \{ s \}}
		\sum_{e \in \oes v \setminus \pi^*(v)}
		\sqrt{\frac{\log T}{\Delta(e)}}
		\sqrt{
		\E\left[
			\Delta (e)
			\sum_{t=1}^T
			q_t(e)
		\right]
		}
		+
		m \log T
		\\
		&
		\le
		\sqrt{
		\sum_{v \in V \cup \{ s \}}
		\sum_{e \in \oes v \setminus \pi^*(v)}
			\frac{\log T}{\Delta(e)}
			}
		\sqrt{
		\E\left[
		\sum_{v \in V \cup \{ s \}}
		\sum_{e \in \oes v \setminus \pi^*(v)}
			\Delta (e)
			\sum_{t=1}^T
			q_t(e)
		\right]
		}
		+
		m \log T
		\\
		&
		\le
		\sqrt{
		\sum_{v \in V \cup \{ s \}}
		\sum_{e \in \oes v \setminus \pi^*(v)}
			\frac{\log T}{\Delta(e)}
			}
		\sqrt{
			\Reg_T(p^*)
			+
			C
		}
		+
		m \log T
		\\
		&
		\le
		\sqrt{U \Reg_T(p^*)}
		+
		\sqrt{U C}
		+
		m\log T.
	\end{align*}
	This implies that $\Reg_T(p^*) \lesssim U + \sqrt{UC} + m \log T$.
\end{proof}

\subsection{Lower bound for stochastic environments}
As our problem is a special case of the linear bandit problem,
we can apply the lower bound given by \citep[Corollary 2]{lattimore2017end},
which is characterized by the following optimization problem:
\begin{align*}
	\inf_{\alpha \in [0, \infty]^{\cP}} \sum_{ p \in \cP^{-}} \alpha(p) \Delta(p) \quad  \mbox{subject to} \quad
	 \| p \|^{2}_{H(\alpha)^{-1}} \le \frac{\Delta(p)^2}{2} \quad & (\forall p \in P^{-}),
\end{align*}
where $\cP^{-} = \cP \setminus \argmin_{p \in \cP} \linner \ell^*, p \rinner, \Delta(p) = \linner \ell^*, p \rinner - \min_{p^* \in \cP} \linner \ell^*, p^* \rinner $,
and
\begin{align*}
	H(\alpha) = \sum_{p \in \cP} \alpha(p) p p^{\top}.
\end{align*}
However,
deriving an explicit expression for the optimal value of this optimization problem is not straightforward.
In fact, we have not yet identified such an expression.
Instead, in what follows, we present a lower bound by exploiting the specific structure inherent to this problem.

Consider stochastic environments specified by $\ell^*$ such that
$\linner \ell^*, p \rinner \in [3/8, 5/8]$ for all $p \in \cP$.
We suppose that $c_t \in \{0, 1 \}$ follows a Bernoulli distribution of parameter $\linner \ell, p_t \rinner$.
Suppose that $\Delta$ is defined as \eqref{eq:defDelta1}.
We then have
$
\linner \ell^*, p \rinner
-
\min_{p^* \in \cP} \linner \ell^*, p^* \rinner
=
\linner \Delta, p \rinner
$.
Denote
\begin{align}
	\label{eq:defDeltabar}
	\bar{\Delta}(e)
	=
	\min_{ p \in \cP : p(e)=1 }
	\left\{
	\linner
	\Delta ,
	p
	\rinner
	\right\}.
\end{align}
\begin{theorem}
	\label{thm:LBSP}
	Consider an arbitrary consistent algorithm,
	i.e.,
	assume that there exists $\epsilon \in (0, 1)$
	such that $\Reg_T \le M T^{1 - \epsilon}$ holds for any instances,
	where $M > 0$ is a parameter independent of $T$.
	We then have
	\begin{align}
		\label{eq:thmLBSP}
		\liminf_{T \to \infty} \frac{\Reg_T}{\log T}
		\gtrsim
		\epsilon
		\sum_{e \in E: \bar{\Delta}(e) > 0 } 
		\frac{\Delta(e)}{\bar{\Delta}(e)^2}.
	\end{align}
\end{theorem}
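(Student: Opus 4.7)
The plan is to follow the classical change-of-measure strategy for instance-dependent lower bounds, adapted to exploit the shortest-path structure. For each suboptimal edge $e$ with $\bar{\Delta}(e) > 0$ and a small $\delta > 0$, I would construct an alternative instance $\ell'$ that agrees with $\ell^*$ everywhere except on $e$, where $\ell'(e) = \ell^*(e) - (\bar{\Delta}(e) + \delta)$. By the definition of $\bar{\Delta}(e)$, every $s$-$g$ path $p$ through $e$ then satisfies $\linner \ell', p \rinner \le \linner \ell^*, p^* \rinner - \delta$, while every path $p$ avoiding $e$ still satisfies $\linner \ell', p \rinner \ge \linner \ell^*, p^* \rinner$. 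Hence under $\ell'$ any optimal path passes through $e$, and any path avoiding $e$ incurs instance-regret at least $\delta$. The range assumption forces $\bar{\Delta}(e) \le 1/4$, so all modified path-losses remain bounded away from $\{0,1\}$, placing the Bernoulli feedback parameters in a regime where $d_{\mathrm{KL}}(\mathrm{Ber}(p),\mathrm{Ber}(q)) \lesssim (p-q)^2$ with a universal constant.

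Next, I would lower bound $\E_{\ell^*}[N_e(T)]$ with $N_e(T) := \sum_{t=1}^T p_t(e)$. The consistency assumption applied to each instance gives
\begin{align*}
  \bar{\Delta}(e) \, \E_{\ell^*}[N_e(T)] \le \Reg_T^{\ell^*} \le MT^{1-\epsilon},
  \qquad
  \delta \, \E_{\ell'}[T - N_e(T)] \le \Reg_T^{\ell'} \le MT^{1-\epsilon},
\end{align*}
so Markov's inequality yields $\Pr_{\ell^*}(N_e(T) > T/2) \lesssim T^{-\epsilon}/\bar{\Delta}(e)$ and $\Pr_{\ell'}(N_e(T) \le T/2) \lesssim T^{-\epsilon}/\delta$. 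Applying the Bretagnolle--Huber inequality to the event $A = \{N_e(T) > T/2\}$ then forces $\mathrm{KL}(P_{\ell^*}^T \| P_{\ell'}^T) \ge \epsilon \log T - O(1)$. On the other hand, the KL chain rule combined with the feedback model (the Bernoulli parameters of $c_t$ under $\ell^*$ and $\ell'$ differ only on rounds with $p_t(e)=1$) yields $\mathrm{KL}(P_{\ell^*}^T \| P_{\ell'}^T) \le C(\bar{\Delta}(e)+\delta)^2 \, \E_{\ell^*}[N_e(T)]$ via the quadratic Bernoulli-KL bound. Combining both directions and sending $\delta \downarrow 0$ yields $\E_{\ell^*}[N_e(T)] \ge \epsilon \log T / (C \bar{\Delta}(e)^2) \cdot (1 - o(1))$.

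Finally, the regret decomposition $\Reg_T = \sum_{e \in E} \Delta(e) \, \E_{\ell^*}[N_e(T)]$, which follows by telescoping through the potential $\mathrm{dist}(\cdot,g)$ as already observed in the remark accompanying \eqref{eq:defDelta1}, lets me sum the per-edge bound. Edges with $\Delta(e)=0$ contribute zero to both sides; for every edge with $\Delta(e) > 0$ we have $\bar{\Delta}(e) \ge \Delta(e) > 0$, so the sum reduces exactly (up to constants) to $\epsilon \log T \sum_{e : \bar{\Delta}(e) > 0} \Delta(e)/\bar{\Delta}(e)^2$, and taking $\liminf$ gives \eqref{eq:thmLBSP}. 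The main technical obstacle lies in verifying that the modified instance $\ell'$ keeps every aggregated path-loss inside a range in which the quadratic Bernoulli-KL approximation holds with a universal constant; the assumption $\linner \ell^*, p \rinner \in [3/8, 5/8]$ together with $\bar{\Delta}(e) \le 1/4$ handles this with room to spare, but one must be careful to choose $\delta$ small enough so that this remains uniformly valid across all edges under consideration.
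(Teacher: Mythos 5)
Your proposal is correct and takes essentially the same approach as the paper: both proofs lower the loss on the target edge $e$ so that the cheapest path through $e$ becomes strictly optimal in the alternative instance, then combine the consistency assumption with Bretagnolle--Huber and the KL chain rule (paying only on rounds with $p_t(e)=1$) to force $\E[N_e(T)] \gtrsim \epsilon\log T/\bar\Delta(e)^2$, and finally plug into $\Reg_T = \sum_e \Delta(e)\E[N_e(T)]$. The only cosmetic differences are that the paper subtracts a fixed $2\bar\Delta(e)$ rather than $\bar\Delta(e)+\delta$ with $\delta\downarrow0$, and bounds $1-\mathrm{TV}$ directly via the expectation inequality $|\E_P X - \E_Q X|\le\mathrm{TV}$ on $X=\tfrac1T N_e$ instead of going through Markov; also your claim that \emph{every} path through $e$ drops below $\linner\ell^*,p^*\rinner-\delta$ is an overstatement (only the $\bar\Delta$-minimizing one does), but this does not affect the step it supports.
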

\begin{remark}
	If $\Delta$ is given by \eqref{eq:defDelta1},
	we have $\bar{\Delta}(e) = \Delta(e)$ for
	$e \in \oes v$ where $v \in {V}^* := \{ v \in V \mid \dist(s, v) + \dist(v, g) = \dist(s, g) \}$.
	In fact,
	for such an edge $e$,
	we have
	\begin{align*}
		\bar{\Delta}(e)
		=
		\dist(s, e_{-})
		+
		\ell(e)
		+
		\dist(e_{+}, g)
		-
		\dist(s, g)
		=
		\ell^*(e)
		+
		\dist(e_{+}, g)
		-
		\dist(e_{-}, g)
		=
		\Delta(e).
	\end{align*}
	We hence have
	\begin{align*}
		\sum_{e \in E: \bar{\Delta}(e) > 0}
		\frac{\Delta(e)}{\bar{\Delta}(e)^2}
		=
		\sum_{v \in {V} \cup \{ s \}}
		\sum_{e \in \oes v : \bar{\Delta}(e) > 0}
		\frac{\Delta(e)}{\bar{\Delta}(e)^2}
		\ge
		\sum_{v \in {V}^* \cup \{ s \}}
		\sum_{e \in \oes v : \bar{\Delta}(e) > 0}
		\frac{1}{{\Delta}(e)}.
	\end{align*}
\end{remark}
\begin{proof}[Proof of Theorem~\ref{thm:LBSP}]
	Let $p^* \in \argmin_{p \in \cP}\{ \linner \ell^*, p \rinner \}$.
	As we have
	\begin{align*}
		\Reg_T
		=
		\E \left[
			\sum_{t=1}^T
			\linner \ell^*, p_t - p^* \rinner
		\right]
		=
		\E \left[
			\sum_{t=1}^T
			\linner \ell^*, p_t - p^* \rinner
		\right]
		=
		\sum_{e \in E}\Delta(e) 
		\E \left[
			\sum_{t=1}^T
			p_t(e)
		\right],
	\end{align*}
	we have
	\begin{align}
		\label{eq:liminfqSP}
		\liminf_{T \to \infty} \frac{\Reg_T}{\log T}
		=
		\sum_{e \in E}\Delta(e) 
		\liminf_{T \to \infty}
		\frac{1}{\log T}
		\E \left[
			\sum_{t=1}^T
			p_t(e)
		\right].
	\end{align}
	In the following,
	we evaluate the value of
	$
		\liminf_{T \to \infty}
		\frac{1}{\log T}
		\E \left[
			\sum_{t=1}^T
			p_t(e)
		\right]
	$
	for any fixed $\tilde{e} \in E$ such that ${\delta} := \bar{\Delta}(\tilde{e}) > 0$.
	Consider a modified environment given by $\tilde{\ell}$ such that 
	$\tilde{\ell}(\tilde{e}) = \ell^*(\tilde{e}) - 2 \delta$
	and
	$\tilde{\ell}(e) = \ell^*(e) $ for $e \neq \tilde{e}$.
	Then,
	as we have $p^*(\tilde{e}) = 0$,
	it holds for any $p \in \cP$ that
	\begin{align}
		\nonumber
		\linner \tilde{\ell}, p - p^* \rinner 
		&
		=
		\linner {\ell}^*, p - p^* \rinner 
		- 2 \delta (p(\tilde{e}) - p^*(\tilde{e}))
		\\
		\nonumber
		&
		=
		\linner \Delta, p \rinner 
		- 2 \delta (p(\tilde{e}) - p^*(\tilde{e}))
		\\
		&
		\ge \delta p(\tilde{e})
		- 2 \delta (p(\tilde{e}) - p^*(\tilde{e}))
		=
		-
		\delta p(\tilde{e}).
		\label{eq:linnerellqSP}
	\end{align}
	where the inequality follows from the definition of $\delta = \bar{\Delta}(\tilde{e})$ given in \eqref{eq:defDeltabar}.
	Further,
	as there exists $p \in \cP$ such that
	$p(e) = 1$ and $\linner \Delta, p \rinner = \delta$,
	we have
	\begin{align}
		\min_{p \in \cP} \linner \tilde{\ell}, p - p^* \rinner 
		\le \delta - 2 \delta
		=
		- \delta.
		\label{eq:minelltildeSP}
	\end{align}
	Hence,
	from \eqref{eq:linnerellqSP} and \eqref{eq:minelltildeSP},
	the regret for the environment given by $\tilde{\ell}$ satisfies 
	\begin{align}
		\tilde{\Reg}_T 
		&
		=
		\max_{p \in \cP}
		\tilde{\E}
		\left[
			\sum_{t=1}^T
			\linner
			\tilde{\ell},
			p_t
			-
			p
			\rinner
		\right]
		\ge
		\delta \cdot
		\tilde{\E}
		\left[
			\sum_{t=1}^T
			\left(
			1
			-
			p_t(\tilde{e})
			\right)
		\right],
		\label{eq:tildeRegSP}
	\end{align}
	where $\tilde{\E}[ \cdot ]$ represents the expected value when feedback is generated by an environment associated with $\tilde{\ell}$.
	On the other hand,
	the regret for the environment given by ${\ell}^*$ satisfies 
	\begin{align}
		{\Reg}_T 
		=
		\E \left[
			\sum_{t=1}^T
			\linner
			\Delta,
			p_t
			\rinner
		\right]
		\ge
		\delta
		\cdot
		{\E} \left[
			\sum_{t=1}^T p_t(\tilde{e})
		\right]
		.
		\label{eq:RegSP}
	\end{align}
	Let $\TV$ denote the total variation distance between trajectories $((p_t, c_t))_{t=1}^T$ for environments with $\ell^*$ and $\tilde{\ell}$.
	Then,
	as we have
	$
	\frac{1}{T} 
		\sum_{t=1}^T p_t(\tilde{e})
		\in [0, 1]
	$,
	we have
	\begin{align*}
		\left|
		\tilde{\E}
		\left[
		\frac{1}{ T} 
		\sum_{t=1}^T p_t(\tilde{e})
		\right]
		-
		{\E}
		\left[
		\frac{1}{T} 
		\sum_{t=1}^T p_t(\tilde{e})
		\right]
		\right|
		\le
		\TV.
	\end{align*}
	We hence have
	\begin{align*}
		1 - \TV
		&
		\le
		1
		-
		\tilde{\E}
		\left[
		\frac{1}{ T} 
		\sum_{t=1}^T p_t(\tilde{e})
		\right]
		+
		{\E}
		\left[
		\frac{1}{T} 
		\sum_{t=1}^T p_t(\tilde{e})
		\right]
		\\
		&
		=
		\tilde{\E}
		\left[
		\frac{1}{ T} 
		\sum_{t=1}^T \left(
			1 - p_t(\tilde{e})
		\right)
		\right]
		+
		{\E}
		\left[
		\frac{1}{ T} 
		\sum_{t=1}^T p_t(\tilde{e})
		\right]
		\\
		&
		\le \frac{1}{\delta T} 
		\left(
			\tilde{\Reg}_T
			+
			\Reg_T
		\right),
	\end{align*}
	where the last inequality follows from \eqref{eq:tildeRegSP} and \eqref{eq:RegSP}.
	Here,
	from the Bretagnolle-Huber inequality (e.g., [\citet{canonne2022short}, Corollary 4])
	and the chain rule of the KL divergence,
	we have
	\begin{align*}
		1 - \TV
		&
		\ge
		\frac{1}{2}
		\exp\left(
			\sum_{t=1}^T 
			\E \left[ 
			\KL \left( \linner \ell^*, p_t \rinner ||
			\langle \tilde{\ell}, p_t \rangle
			\right)
			\right] 
		\right)
		\\
		&
		=
		\frac{1}{2}
		\exp\left(
			\sum_{t=1}^T 
			\E \left[ 
			p_t(\tilde{e})
			\KL \left( \linner \ell^*, p_t \rinner ||
			\langle {\ell}^*, p_t \rangle - 2\delta
			\right)
			\right] 
		\right)
		\\
		&
		\ge
		\frac{1}{2}
		\exp\left(
			- 5 \delta^2 \cdot
			\E \left[ \sum_{t=1}^T p_t(\tilde{e}) \right] 
		\right).
	\end{align*}
	Combining the above inequalities,
	we obtain
	\begin{align*}
		\E \left[ \sum_{t=1}^T p_t(\tilde{e}) \right] 
		&
		\gtrsim
		\frac{1}{\delta^2}
		\log \frac{1}{2(1 - \TV)}
		\ge
		\frac{1}{\delta^2}
		\log \frac{\delta T }{2 (\Reg_T + \tilde{\Reg}_T)}
		\\
		&
		\ge
		\frac{1}{\delta^2}
		\log \frac{\delta T }{4 M T^{1-\epsilon}}
		=
		\frac{1}{\delta^2}
		\left(
			\epsilon \log T
			+
			\log \left(
				\frac{\delta}{4M}
			\right)
		\right)
	\end{align*}
	Consequently,
	we have
	\begin{align*}
		\liminf_{T \to \infty}
		\frac{1}{\log T}
		\E \left[ \sum_{t=1}^T p_t(\tilde{e}) \right] 
		\gtrsim
		\frac{\epsilon}{\delta^2}
		=
		\frac{\epsilon}{\Delta(\tilde{e})^2}
	\end{align*}
	for any $\tilde{e} \in E$
	such that $\Delta(\tilde{e}) > 0$.
	By combining this with \eqref{eq:liminfqSP},
	we obtain \eqref{eq:thmLBSP}.
\end{proof}

\section{MDPs with known transition}
\label{appendix:known_transition}
In this section, we analyze our algorithm for episodic MDPs with known transitions. Specifically, we prove Theorems~\ref{thm:Tsallis-KT} and~\ref{thm:log-barrier-KT-appendix}, which together directly imply Corollary~\ref{cor:result-known-mdp} in the main body. In addition, we provide our lower bound result in Theorem~\ref{appendix:lower-bound-known}.
\subsection{Algorithm}
The algorithm's construction is almost identical to that of the shortest path case.
The unbiased loss estimator is defined in Lemma~\ref{lem:loss_estimator_KT}.
\begin{proof}[Proof of Lemma~\ref{lem:loss_estimator_KT}]
	For notational simplicity, we omit the conditioning in expectations throughout this proof.
	Fix an arbitrary $s \in S_k$ and $a$.
	We then have
	\begin{align*}
		&
		\E \left[
			c_t \cdot \frac{\ind_t (s,a)}{q_t(s,a)}
		\right]
		=
		\E \left[
			\sum_{i=0}^{L-1}
			\ell(s_i^t, a_i^t)
			\mid
			\ind_t (s, a) = 1
		\right]
		=
		\E \left[
			\sum_{i=0}^{L-1}
			\ell(s_i^t, a_i^t)
			\mid
			(s_k^t, a_k^t) = (s, a)
		\right]
		\\
		&
		=
		\E \left[
			\sum_{i=0}^{k-1}
			\ell(s_i^t, a_i^t)
			\mid
			s_k^t = s
		\right]
		+
		\E \left[
			\sum_{i=k}^{L-1}
			\ell(s_i^t, a_i^t)
			\mid
			s_k^t = s,
			a_k^t = a
		\right]
		\\
		&
		=
		\E \left[
			\sum_{i=0}^{k-1}
			\ell(s_i^t, a_i^t)
			\mid
			s_k^t = s
		\right]
		+
		Q^{\pi_t}(s, a ; \ell_t).
	\end{align*}
	Similarly,
	for any fixed $s \in S_k$,
	we have
	\begin{align*}
		\E \left[
			c_t \cdot \frac{\ind_t (s)}{q_t(s)}
		\right]
		=
		\E \left[
			\sum_{i=0}^{L-1}
			\ell(s_i^t, a_i^t)
			\mid
			s_k^t = s
		\right]
		=
		\E \left[
			\sum_{i=0}^{k-1}
			\ell(s_i^t, a_i^t)
			\mid
			s_k^t = s
		\right]
		+
		V^{\pi_t}(s ; \ell_t).
	\end{align*}
	By combining the above two equalities,
	we obtain
	\begin{align*}
		\E \left[
			\widehat{\ell}_t(s, a)
		\right]
		=
		\E \left[
			c_t \cdot \frac{\ind_t(s,a)}{q_t(s,a)}
		\right]
		-
		\E \left[
			c_t \cdot \frac{\ind_t(s)}{q_t(s)}
		\right]
		=
		Q^{\pi_t}(s, a; \ell_t)
		-
		V^{\pi_t}(s; \ell_t).
	\end{align*}
\end{proof}
In the following,
we denote 
\begin{align*}
	\bar{\ell}_t
	:=
	\E \left[
		\widehat{\ell}_t \mid
		\pi_t, \ell_t
	\right]
	=
	Q^{\pi_t}(s, a; \ell_t)
	-
	V^{\pi_t}(s; \ell_t)
	\in
	[-1,1]^{S \times A}.
\end{align*}
By combining Lemmas~\ref{lem:PDL} and \ref{lem:loss_estimator_KT},
we obtain the following expression of the regret:
\begin{align}
	\nonumber
	\Reg_T(\pi^*)
	&
	=
	\E \left[
		\sum_{t=1}^T
		\left(
		V^{\pi_t}(s_0; \ell_t)
		-
		V^{\pi^*}(s_0; \ell_t)
		\right)
	\right]
	\\
	\nonumber
	&
	=
	\E \left[
		\sum_{t=1}^T
		\left(
		V^{\pi_t}(s_0; \bar{\ell}_t)
		-
		V^{\pi^*}(s_0; \bar{\ell}_t)
		\right)
	\right]
	\\
	\nonumber
	&
	=
	\E \left[
		\sum_{t=1}^T
		\linner
		\bar{\ell}_t,
		q_t - q^*
		\rinner
	\right]
	\\
	&
	=
	\E \left[
		\sum_{t=1}^T
		\linner
		\widehat{\ell}_t,
		q_t - q^*
		\rinner
	\right],
	\label{eq:reg-KT}
\end{align}
where we denote $q^{*} = q^{\pi^*}$.
To upper bound the value of
$
\sum_{t=1}^T
\linner
\widehat{\ell}_t,
q_t - q^*
\rinner
$,
we choose $q_t \in \cQ$ by using the following FTRL approach similarly to the case of online shortest path problem:
\begin{align*}
	q_t \in \argmin_{q \in \cQ}
	\left\{
		\linner
		\sum_{\tau = 1}^{t-1} \widehat{\ell}_{\tau},
		q
		\rinner
		+
		\psi_t(q)
	\right\},
\end{align*}
where 
\begin{align}
  \psi_t(q)
  &
  =
  -
  \frac{2}{\eta_t}
  \sum_{s \neq s_L, a \in A}
  \sqrt{q(s,a)}
  -
  \sum_{s \neq s_L, a \in A}
  \beta
  \ln q(s,a)
  \quad
  \mbox{with}
  \quad
  \eta_t = \frac{1}{\sqrt{t}},
  ~
  \beta = 2,
  \quad
  \mbox{or}
  \label{eq:defpsi-KT-Tsallis-app}
  \\
  \psi_t(q)
  &
  =
  -
  \sum_{s \neq s_L, a \in A}
  \frac{1}{\eta_t(s,a)}
  \ln q(s,a)
  \quad
  \mbox{with}
  \quad
  \eta_t(s,a) 
  = \big(4 + \frac{1}{\ln T} \sum_{\tau = 1}^{t-1} 
  \rho_{\tau}(s,a)
  \big)^{-\frac{1}
  {2}},
  \label{eq:defpsi-KT-LB-app}
\end{align}
where $\rho_{\tau} \in [0,1]$ will be defined later.

\subsection{Regret analysis}
In our regret analysis,
we use the following lemma:
\begin{lemma}[First part of Lemma~\ref{lem:second-moment}]
	\label{lem:ellhatsquare}
	If $\widehat{\ell}_t$ is given by as in Lemma~\ref{lem:loss_estimator_KT},
	the expectation of $\widehat{\ell}(s,a)^2$ taken w.r.t.~the randomness of $p$ satisfies
	\begin{align*}
		\E \left[
			\widehat{\ell}_t(s, a)^2
		\right]
		\le
		\frac{ 1 - \pi_t(a | s) }{q_t(s, a)}.
	\end{align*}
\end{lemma}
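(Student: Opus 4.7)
The plan is to reduce the second-moment computation to a one-dimensional variance calculation over the action taken at state $s$. First I would rewrite the estimator in a more convenient form. Observe that $\ind_t(s,a) = \ind_t(s) \cdot \mathbb{I}[a_{k(s)}^t = a]$ (since visiting $(s,a)$ implies visiting $s$) and $q_t(s,a) = q_t(s)\pi_t(a|s)$. Substituting into the definition in Lemma~\ref{lem:loss_estimator_KT} gives
\begin{align*}
    \widehat{\ell}_t(s,a)
    =
    \frac{c_t \cdot \ind_t(s)}{q_t(s)}
    \left(
        \frac{\mathbb{I}[a_{k(s)}^t = a]}{\pi_t(a|s)} - 1
    \right).
\end{align*}

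Next, since $c_t \in [0,1]$ we have $c_t^2 \le 1$, so taking the conditional expectation given $\ell_t, \pi_t$ and using $\ind_t(s)^2 = \ind_t(s)$,
\begin{align*}
    \E\left[ \widehat{\ell}_t(s,a)^2 \right]
    \le
    \frac{1}{q_t(s)^2}
    \E\left[
        \ind_t(s)
        \left(
            \frac{\mathbb{I}[a_{k(s)}^t = a]}{\pi_t(a|s)} - 1
        \right)^2
    \right].
\end{align*}
Then I would condition on $\ind_t(s) = 1$, which occurs with probability $q_t(s)$. Given this event, $a_{k(s)}^t$ follows the distribution $\pi_t(\cdot | s)$, so the conditional expectation of the squared bracket is
\begin{align*}
    \pi_t(a|s) \cdot
    \left( \frac{1}{\pi_t(a|s)} - 1 \right)^2
    +
    (1 - \pi_t(a|s)) \cdot 1
    =
    \frac{(1-\pi_t(a|s))^2}{\pi_t(a|s)}
    + (1 - \pi_t(a|s))
    =
    \frac{1 - \pi_t(a|s)}{\pi_t(a|s)}.
\end{align*}

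Combining the two displays yields
\begin{align*}
    \E\left[\widehat{\ell}_t(s,a)^2\right]
    \le
    \frac{1}{q_t(s)^2} \cdot q_t(s) \cdot \frac{1 - \pi_t(a|s)}{\pi_t(a|s)}
    =
    \frac{1 - \pi_t(a|s)}{q_t(s)\pi_t(a|s)}
    =
    \frac{1 - \pi_t(a|s)}{q_t(s,a)},
\end{align*}
which is the claimed bound. There is no real obstacle here; the argument is essentially a direct second-moment calculation, and the only subtle point is keeping track of the conditional structure so that the $(1-\pi_t(a|s))$ factor emerges cleanly---this factor is exactly what makes the self-bounding argument work later in the paper.
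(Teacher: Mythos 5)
Your proof is correct and follows essentially the same approach as the paper's: bound $c_t^2 \le 1$, condition on visiting $s$, then compute the variance-type expectation over the action choice $a_{k(s)}^t \sim \pi_t(\cdot\,|\,s)$. Your preliminary rewriting of the estimator as $\frac{c_t\,\ind_t(s)}{q_t(s)}\bigl(\tfrac{\mathbb{I}[a_{k(s)}^t = a]}{\pi_t(a|s)} - 1\bigr)$ is a slightly cleaner presentation of the same case analysis the paper performs directly.
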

\begin{proof}
	For notational simplicity, we omit the subscript $t$ throughout this proof.
	From the definition of $\widehat{\ell}$,
	we have
	\begin{align*}
		\E \left[
			\widehat{\ell}(s, a)^2
		\right]
		&
		\le
		q(s)
		\left(
			\pi(a | s)
			\left(
				\frac{1}{q(s,a)}
				-
				\frac{1}{q(s)}
			\right)^2
			+
			(1 - \pi(a | s))
			\frac{1}{q(s)^2}
		\right)
		\\
		&
		=
		\frac{q(s)}{q(s,a)^2}
		\left(
			\pi(a|s)
			(1 - \pi(a|s))^2
			+
			(1 - \pi(a|s)) \pi(a|s)^2
		\right)
		\\
		&
		=
		\frac{q(s)}{q(s,a)^2}
		\pi(a|s) (1 - \pi(a|s))
		=
		\frac{1 - \pi(a|s)}{q(s,a)}.
	\end{align*}
\end{proof}
We also use $q_0$ and $\tilde{q}^*$,
which are defined as follows in the analysis:
For all $s\in S$ and $a\in A$,
let $q_{s,a} \in \argmax_{q' \in \cQ} q'(s,a)$.
Define $q_0 \in \cQ $ by
$q_0 = \frac{1}{|S| |A|} \sum_{s, a} q_{s, a}$.
For $q^* = q^{\pi^*} \in \cQ$ and $\epsilon \in [0,1]$,
define $\tilde{q}^*$ by
\begin{align}
	\label{eq:defqstar-KT}
	\tilde{q}^*
	=
	(1-\epsilon) q^* +  \epsilon q_0.
\end{align}
Then,
it holds for any $q \in \cQ$,
$s \in S$ and $a \in A$ that
\begin{align}
	\label{eq:boundratio-KT}
	\frac{q(s,a)}{\tilde{q}^*(s,a)}
	\le
	\frac{1}{\epsilon}
	\frac{q(s,a)}{{q}_0(s,a)}
	\le
	\frac{|S||A|}{\epsilon}
	\frac{q(s,a)}{{q}_{s,a}(s,a)}
	\le
	\frac{|S||A|}{\epsilon}.
\end{align}
From \eqref{eq:reg-KT},
by a similar way to \eqref{eq:spdecomposition},
we can show that
\begin{align}
	\Reg_T(\pi^*)
	\le
	\epsilon
	T 
	+
	\sum_{t=1}^T 
		\E \left[
				\underset{=: \stab_t}{
				\underline{
		\linner \widehat{\ell}_t, {q}_t - {q}_{t+1} \rinner
		-
		D_t({q}_{t+1}, {q}_t )
				}
			}
		+
			\underset{=: \pena_t}{
				\underline{
		\left(
			\psi_t(\tilde{q}^*)
			-
			\psi_{t-1}(\tilde{q}^*)
			-
			\psi_t({q}_{t})
			+
			\psi_{t-1}({q}_{t})
		\right)
				}
			}
	\right].
	\label{eq:spdecomposition-KT}
\end{align}

\subsubsection{Analysis for Tsallis-entropy case}
\begin{theorem}
	\label{thm:Tsallis-KT}
	For any deterministic policy $\pi^* \in \Pi$,
	the proposed algorithm with the Tsallis-entropy regularizer \eqref{eq:defpsi-KT-Tsallis-app} achieves:
	\begin{align*}
		&
		\Reg_T(\pi^*)
		\\
		&
		\lesssim
		\sum_{t=1}^T
		\frac{1}{\sqrt{t}}
		\E
		\left[
		\sum_{s \neq s_L}
		\sum_{a \neq \pi^*(s)}
		\sqrt{q_t(s, a)}
		+
		\sqrt{
			L |S||A|
			\sum_{s \neq s_L}
			\sum_{a \neq \pi^*(s)}
			(q_t(s, a) + q^*(s, a))
		}
		\right]
		+
		|S||A|\log T
	\end{align*}
\end{theorem}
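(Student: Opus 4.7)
The plan is to follow the proof of Theorem~\ref{thm:Tsallis-SP} for online shortest path, adapted to the MDP setting. Starting from the FTRL regret decomposition \eqref{eq:spdecomposition-KT} with comparator $\tilde q^{*} = (1-\epsilon) q^{*} + \epsilon q_0$ and $\epsilon = 1/T$, I would bound $\E[\stab_t]$ and $\pena_t$ separately and then combine them.

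For stability, I apply Lemma~\ref{lem:stabTsallisLB} coordinate-wise with $x = q_t(s,a)$, $\eta = \eta_t$, $\beta = 2$. This is justified because $\stab_t$ admits the upper bound $\sup_{q \in [0,1]^{S\times A}}\{\inner{\widehat\ell_t, q_t - q} - D_t(q,q_t)\}$, which separates across coordinates since both $\widehat\ell_t$ and $\psi_t$ are separable; the premise $\widehat\ell_t(s,a) q_t(s,a) \ge -\pi_t(a|s) \ge -1$ is immediate. Combining with the second-moment estimate in Lemma~\ref{lem:ellhatsquare} and using $1 - x \le 2(1-\sqrt x)$ with $x = q_t(s,a)/q_t(s)$ (telescoping over $a$) together with $\sqrt{q_t(s)} \ge \sqrt{q_t(s,\pi^{*}(s))}$ yields $\E[\stab_t] \lesssim \eta_t \sum_{s \neq s_L,\, a \neq \pi^{*}(s)} \sqrt{q_t(s,a)}$.

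For the penalty, mimicking Lemma~\ref{lem:penaTsallis-SP} with $1/\eta_t - 1/\eta_{t-1} = \Theta(\eta_t)$ gives, for $t \ge 2$, $\pena_t \lesssim \eta_t \sum_{s \neq s_L,\, a}(\sqrt{q_t(s,a)} - \sqrt{\tilde q^{*}(s,a)}) + |S||A|\epsilon$, plus an $O(|S||A|\log T)$ one-time contribution from the log-barrier term at $t=1$. Unlike the SP case, stochastic transitions give $q^{*}(s) < 1$ for $s \in S^{*}$ in general, so the cancellation $\sum_{e \in E^{*}}(\sqrt{q_t(e)} - 1) \le 0$ used in the SP proof is unavailable. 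I split $\sum_{s,a}\sqrt{q_t(s,a)}$ into three groups: (i) $(s,\pi^{*}(s))$ with $s \in S^{*}$; (ii) $(s,a)$ with $s \in S^{*}$ and $a \neq \pi^{*}(s)$; (iii) $(s,a)$ with $s \notin S^{*}\cup\{s_L\}$. Group (ii) is already in the desired form. For group (iii), I use an MDP analog of Lemma~\ref{lem:qve}: every trajectory reaching $s \notin S^{*}$ must first deviate from $\pi^{*}$ at some state in $S^{*}$, so summing over layers gives
\begin{align*}
\sum_{s \notin S^{*},\, s \neq s_L} q_t(s) \;\le\; L \sum_{s' \in S^{*},\, a' \neq \pi^{*}(s')} q_t(s', a'),
\end{align*}
after which two applications of Cauchy--Schwarz ($\sum_a \sqrt{q_t(s,a)} \le \sqrt{|A| q_t(s)}$ and then over $s$) produce the $\sqrt{L|S||A|\sum_{s,\, a \neq \pi^{*}(s)} q_t(s,a)}$ term. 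For the group-(i) residual $\sum_{s \in S^{*}}(\sqrt{q_t(s,\pi^{*}(s))} - \sqrt{q^{*}(s)})$, I bound its positive part by $\sqrt{(q_t(s) - q^{*}(s))_+}$ and argue, again by tracking deviations from $\pi^{*}$, that $|q_t(s) - q^{*}(s)|$ on $s \in S^{*}$ is controlled by off-policy mass at ancestor states; this contribution is absorbed into the same $\sqrt{L|S||A|\cdots}$ term via Cauchy--Schwarz.

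Summing over $t$, using $\sum_{t=1}^T 1/\sqrt t \lesssim \sqrt T$ and adding the $O(|S||A|\log T)$ initialization contribution yields the claim. The main obstacle is the group-(i) residual: because $q^{*}(s) < 1$ under stochastic transitions, the telescoping cancellation used in the SP proof breaks, and one must instead bound the deviation of $q_t$ from $q^{*}$ on the optimal-policy support in terms of off-policy action probabilities at ancestor states. This is exactly what forces the second square-root term in the stated bound and distinguishes the MDP analysis from the SP one.
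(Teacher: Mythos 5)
Your proof is correct and follows the same high-level FTRL stability/penalty decomposition as the paper, and your stability argument (apply Lemma~\ref{lem:stabTsallisLB} coordinate-wise, plug in the second moment from Lemma~\ref{lem:ellhatsquare}, then use $1-x\le 2(1-\sqrt{x})$ and drop the $\pi^*(s)$ coordinate) is exactly the proof of Lemma~\ref{lem:stabTsallis-KT}. Where you diverge is the penalty term. The paper does not re-derive it: it simply invokes Lemma~6 of \citet{jin2020simultaneously} with $\alpha=0$, which uses the factorization $\sqrt{q(s,a)}=\sqrt{q(s)}\sqrt{\pi(a\mid s)}$ to split $\phi_H(\tilde q^*)-\phi_H(q_t)$ into an ``action-choice'' term $\sum_{s}\sqrt{q_t(s)}\left(\sum_a\sqrt{\pi_t(a\mid s)}-1\right)\le\sum_{s,a\neq\pi^*(s)}\sqrt{q_t(s,a)}$ and a ``state-visitation'' term $\sum_s\left(\sqrt{q_t(s)}-\sqrt{q^*(s)}\right)$, the latter bounded by an existing occupancy-difference lemma of theirs. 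You instead partition state--action pairs by membership in $S^*$ and by whether $a=\pi^*(s)$, handle the off-support states by the union bound over deviation layers, and handle the on-support residual by $(\sqrt{q_t(s)}-\sqrt{q^*(s)})_+\le\sqrt{(q_t(s)-q^*(s))_+}$ followed by Cauchy--Schwarz. These are genuinely different decompositions of the same quantity; they give the same final bound. Your route is more self-contained (no reliance on cited lemmas from prior work) and makes explicit why the SP telescoping cancellation $\sum_{e\in E^*}(\sqrt{q_t(e)}-1)\le 0$ fails once $q^*(s)$ can lie strictly in $(0,1)$ --- a correct and useful observation. The cost is that you sketch rather than fully prove the occupancy-difference inequality $\sum_s|q_t(s)-q^*(s)|\lesssim L\sum_{s,a\neq\pi^*(s)}q_t(s,a)$ needed in group (i); this does hold (e.g.\ by a layer-by-layer interpolation argument between $\pi_t$ and $\pi^*$), but if you write out your version you should include that step rather than citing it as ``argue by tracking deviations.''
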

In the proof of this theorem,
we can bound the stability term by using the following lemma:
\begin{lemma}
	\label{lem:stabTsallis-KT}
	When we use the Tsallis-entropy regularizer \eqref{eq:defpsi-SP-Tsallis-app},
	stability terms are bounded as
	\begin{align}
		\label{eq:stabTsallis-KT}
		\E [ \stab_t ]
		&
		\lesssim
		\E\left[
		\eta_t
		\sum_{s \neq s_L}
		\left(
		\sum_{a \neq \pi^*(s)}
		\sqrt{q_t(s,a)}
		-
		\sqrt{q_t(s)}
		\right)
		\right]
		\lesssim
		\E\left[
		\eta_t
		\sum_{s \neq s_L}
		\sum_{a \neq \pi^*(s)}
		\sqrt{q_t(s, a)}
		\right].
	\end{align}
\end{lemma}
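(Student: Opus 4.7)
\textbf{Proof plan for Lemma~\ref{lem:stabTsallis-KT}.} The plan is to mimic the proof of Lemma~\ref{lem:stabTsallis-SP} for the shortest-path case, replacing the role of vertices $v \in V\cup\{s\}$ by states $s \in S\setminus\{s_L\}$ and the role of outgoing edges $e \in \oes v$ by actions $a \in A$ at state $s$. First, since the regularizer $\psi_t$ in \eqref{eq:defpsi-KT-Tsallis-app} decomposes as a sum of one-dimensional functions of the form $-\frac{2}{\eta_t}\sqrt{x} - \beta \ln x$ with $\beta=2$, the Bregman divergence decomposes over $(s,a)$ and the stability term satisfies
\[
\stab_t \;\le\; \sum_{s\neq s_L,\,a\in A}\Bigl\{\widehat\ell_t(s,a)\,(q_t(s,a)-q_{t+1}(s,a)) - D_{\phi}(q_{t+1}(s,a),q_t(s,a))\Bigr\},
\]
so I can apply Lemma~\ref{lem:stabTsallisLB} coordinate-wise with $\ell=\widehat\ell_t(s,a)$, $x=q_t(s,a)$, $\eta=\eta_t$, $\beta=2$. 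The required precondition $\ell x \ge -\beta/2$ follows from $-\widehat\ell_t(s,a)\,q_t(s,a) \le c_t \tfrac{\ind_t(s)}{q_t(s)}q_t(s,a) \le \pi_t(a|s) \le 1$, exactly as in the shortest-path argument. This yields
\[
\stab_t \;\lesssim\; \eta_t\, c_t^2 \sum_{s\neq s_L}\sum_{a\in A} q_t(s,a)^{3/2}\Bigl(\tfrac{\ind_t(s,a)}{q_t(s,a)} - \tfrac{\ind_t(s)}{q_t(s)}\Bigr)^2.
\]

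Next I take the conditional expectation over the trajectory given $q_t$ and $\ell_t$. Since $\ind_t(s,a)^2=\ind_t(s,a)$, $\ind_t(s)^2=\ind_t(s)$, and $\ind_t(s,a)\ind_t(s)=\ind_t(s,a)$, a short computation gives
\[
\E\Bigl[\bigl(\tfrac{\ind_t(s,a)}{q_t(s,a)}-\tfrac{\ind_t(s)}{q_t(s)}\bigr)^2 \,\Big|\, q_t\Bigr] \;=\; \frac{1}{q_t(s,a)} - \frac{1}{q_t(s)} \;=\; \frac{1-\pi_t(a|s)}{q_t(s,a)},
\]
so that, after multiplying by $q_t(s,a)^{3/2}$ and using $c_t\le 1$,
\[
\E[\stab_t\mid q_t,\ell_t] \;\lesssim\; \eta_t \sum_{s\neq s_L}\sum_{a\in A} \sqrt{q_t(s,a)}\Bigl(1 - \tfrac{q_t(s,a)}{q_t(s)}\Bigr).
\]

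Now I apply the elementary inequality $(1-x)\le 2(1-\sqrt{x})$ for $x\in[0,1]$ with $x=q_t(s,a)/q_t(s)$ to telescope the inner sum:
\[
\sum_{a\in A}\sqrt{q_t(s,a)}\Bigl(1-\tfrac{q_t(s,a)}{q_t(s)}\Bigr) \;\le\; 2\!\sum_{a\in A}\!\sqrt{q_t(s,a)} - 2\!\sum_{a\in A}\!\frac{q_t(s,a)}{\sqrt{q_t(s)}} \;=\; 2\Bigl(\sum_{a\in A}\sqrt{q_t(s,a)} - \sqrt{q_t(s)}\Bigr),
\]
which gives the first inequality in \eqref{eq:stabTsallis-KT}. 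The second inequality follows by discarding the $a=\pi^*(s)$ term against $\sqrt{q_t(s)}$: since $q_t(s,\pi^*(s))\le q_t(s)$, we have $\sqrt{q_t(s,\pi^*(s))}\le \sqrt{q_t(s)}$, so
\[
\sum_{a\in A}\sqrt{q_t(s,a)} - \sqrt{q_t(s)} \;\le\; \sum_{a\neq \pi^*(s)}\sqrt{q_t(s,a)}.
\]

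The only genuinely delicate step is the algebraic simplification of the conditional second moment in the middle paragraph, which is where the key cancellation $\pi_t(a|s)(1/q_t(s,a)-1/q_t(s))^2 + (1-\pi_t(a|s))/q_t(s)^2 = (1-\pi_t(a|s))/(q_t(s,a)q_t(s))$ produces the factor $1-\pi_t(a|s)$ that ultimately enables the self-bounding argument; the remaining steps are bookkeeping that directly parallels the shortest-path proof.
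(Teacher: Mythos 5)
Your proof is correct and follows essentially the same approach as the paper: the paper itself proves this lemma as a one-line reduction to Lemmas~\ref{lem:stabTsallisLB} and~\ref{lem:ellhatsquare} together with the shortest-path analogue Lemma~\ref{lem:stabTsallis-SP}, and your coordinate-wise application of Lemma~\ref{lem:stabTsallisLB}, second-moment computation, and $(1-x)\le 2(1-\sqrt{x})$ telescoping reproduce exactly that argument. One small observation: the intermediate quantity your calculation actually produces is $\eta_t\sum_{s\neq s_L}\bigl(\sum_{a\in A}\sqrt{q_t(s,a)}-\sqrt{q_t(s)}\bigr)$, matching the structure of Lemma~\ref{lem:stabTsallis-SP} (where the corresponding inner sum runs over all outgoing edges of $v$), whereas the display in \eqref{eq:stabTsallis-KT} as printed restricts the inner sum to $a\neq\pi^*(s)$; this appears to be a typo in the lemma statement, and your step $\sqrt{q_t(s,\pi^*(s))}\le\sqrt{q_t(s)}$ is precisely what passes from the $\sum_{a\in A}$ form to the final right-hand bound of the lemma.
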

This can be shown in a similar way as Lemma~\ref{lem:stabTsallis-SP},
by using Lemmas~\ref{lem:stabTsallisLB} and \ref{lem:ellhatsquare}.
Furthermore,
the penalty term can be bounded in the same manner as done by \citet{jin2020simultaneously},
using their Lemma 6 with $\alpha = 0$.
By combining these results,
Theorem~\ref{thm:Tsallis-KT} can be established in the same way as Theorem~\ref{thm:Tsallis-SP}.

\subsubsection{Analysis for log-barrier case}
In the case of the log-barrier regularizer,
we have the following regret bounds:
\begin{theorem}
	\label{thm:log-barrier-KT-appendix}
	For any deterministic policy $\pi^*$,
	the proposed algorithm with the log-barrier regularizer \eqref{eq:defpsi-KT-LB-app} achieves:
	\begin{align}
		\Reg_T
		\lesssim
		\E \left[
			\sum_{s \neq s_L}
			\sum_{a \neq \pi^*(s)}
			\sqrt{
			\sum_{t=1}^T
			c_t^2
			\ind_t(s)
			\left(
				\ind_t(s,a)
				-
				\pi_t(a|s)
			\right)^2
			\log (T)
			}
		\right]
		+
		|S||A| \log (T).
	\end{align}
\end{theorem}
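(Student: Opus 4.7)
The approach parallels the log-barrier analysis for the online shortest path problem in Theorem~\ref{thm:log-barrier-SP-appendix}, using the known-transition analogs of the loss estimator from Lemma~\ref{lem:loss_estimator_KT} and the FTRL decomposition~\eqref{eq:spdecomposition-KT} over occupancy measures. The plan is to pick the smoothed comparator $\tilde{q}^*$ from~\eqref{eq:defqstar-KT} with $\epsilon = 1/T$, bound the stability and penalty terms separately, combine them via the adaptive coordinate-wise learning rate, and finally restrict the outer sum to $a \neq \pi^*(s)$.

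For the stability term I would apply the single-coordinate log-barrier inequality (Lemma~\ref{lem:stabLB}) independently to each pair $(s,a)$ with $x = q_t(s,a)$, $\ell = \widehat{\ell}_t(s,a)$, and $\eta = \eta_t(s,a)$. Its precondition $\eta_t(s,a) q_t(s,a)\widehat{\ell}_t(s,a) \ge -1/2$ follows from $\eta_t(s,a) \le 1/2$ together with the identity
\begin{equation*}
q_t(s,a)\,\widehat{\ell}_t(s,a) \;=\; c_t\bigl(\ind_t(s,a) - \pi_t(a\mid s)\ind_t(s)\bigr) \;\in\; [-1,1].
\end{equation*}
Squaring this identity yields $q_t(s,a)^2 \widehat{\ell}_t(s,a)^2 = \rho_t(s,a)$, so summing over $(s,a)$ gives $\stab_t \lesssim \sum_{s \neq s_L,\,a}\eta_t(s,a)\rho_t(s,a)$.

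For the penalty term, coordinate-wise telescoping together with the estimate $q_t(s,a)/\tilde{q}^*(s,a) \le |S||A|T$ from~\eqref{eq:boundratio-KT} gives $\sum_{t=1}^T \pena_t \lesssim \log(|S||A|T)\sum_{s \neq s_L,\,a} 1/\eta_T(s,a)$, up to an $O(|S||A|\log T)$ cost from the initial round. Plugging in $\eta_t(s,a)^{-2} = 4 + \frac{1}{\log T}\sum_{\tau<t}\rho_\tau(s,a)$ and applying the standard inequality $\sum_t a_t/\sqrt{C + \sum_{\tau<t}a_\tau} \le 2\sqrt{C + \sum_t a_t}$ coordinate-wise then yields
\begin{equation*}
\sum_{t=1}^T(\stab_t + \pena_t) \;\lesssim\; |S||A|\log T \;+\; \sum_{s \neq s_L,\,a}\sqrt{\log T \cdot \sum_{t=1}^T \rho_t(s,a)}.
\end{equation*}

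The last step is to restrict the outer sum to $a \neq \pi^*(s)$, which is the only place where the MDP argument diverges from the shortest-path one. The plan is to prove the conditional-expectation inequality $\E[\rho_t(s,\pi^*(s)) \mid \pi_t] \le \sum_{a \neq \pi^*(s)} \E[\rho_t(s,a) \mid \pi_t]$, which in turn follows from the one-line algebraic fact $\pi_t(\pi^*(s)\mid s)(1 - \pi_t(\pi^*(s)\mid s)) \le \sum_{a \neq \pi^*(s)}\pi_t(a\mid s)(1-\pi_t(a\mid s))$, and then to move the expectation through the outer square root via Jensen's inequality and $\sqrt{\sum_a b_a} \le \sum_a \sqrt{b_a}$ (for $b_a \ge 0$) so as to absorb the $a = \pi^*(s)$ contribution into the $a \neq \pi^*(s)$ sum. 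I expect this final step to be the main obstacle: the pointwise inequality $\rho_t(s,\pi^*(s)) \le \sum_{a \neq \pi^*(s)}\rho_t(s,a)$ need not hold and only its in-expectation version does, so the interplay between Jensen's inequality and the outer $\sqrt{\cdot}$ has to be handled carefully. The remaining steps are a direct transcription of the shortest-path analysis in Theorem~\ref{thm:log-barrier-SP-appendix}.
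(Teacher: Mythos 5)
Your steps one through three reproduce the paper's own argument exactly: apply the single-coordinate log-barrier stability bound (Lemma~\ref{lem:stabLB}) to each $(s,a)$ with $q_t(s,a)\widehat{\ell}_t(s,a)=c_t(\ind_t(s,a)-\pi_t(a\mid s)\ind_t(s))\in[-1,1]$ and $\eta_t(s,a)\le 1/2$, telescope the coordinate-wise penalty against the smoothed comparator $\tilde{q}^*$ via~\eqref{eq:boundratio-KT}, and sum using the adaptive learning rate. This is what the paper means by ``in the same way as Theorem~\ref{thm:log-barrier-SP-appendix},'' and it yields
\[
\Reg_T \;\lesssim\; \E\Bigl[\sum_{s\neq s_L}\sum_{a\in A}\sqrt{\textstyle\sum_{t=1}^T\rho_t(s,a)\log T}\Bigr] + |S||A|\log T,
\]
that is, a sum over \emph{all} $a\in A$, matching the shortest-path analog whose sum is over all edges.

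The restriction to $a\neq\pi^*(s)$ is where the obstacle lies, and your worry is well-placed --- in fact, it is a genuine gap. The pointwise inequality $\rho_t(s,\pi^*(s))\le\sum_{a\neq\pi^*(s)}\rho_t(s,a)$ is false: whenever the chosen action at $s$ is $a^*$, one has $\rho_t(s,a^*)=c_t^2(1-\pi_t(a^*\mid s))^2=c_t^2\bigl(\sum_{a\neq a^*}\pi_t(a\mid s)\bigr)^2\ge c_t^2\sum_{a\neq a^*}\pi_t(a\mid s)^2=\sum_{a\neq a^*}\rho_t(s,a)$, so taking $a^*=\pi^*(s)$ gives the reverse inequality. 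Only the conditional-expectation version $\E_t[\rho_t(s,\pi^*(s))]\le\sum_{a\neq\pi^*(s)}\E_t[\rho_t(s,a)]$ holds (using $c_t\le 1$ and $\pi(1-\pi)\le\sum_{a'\neq\pi^*(s)}\pi_{a'}(1-\pi_{a'})$ for $\pi=\pi_t(\pi^*(s)\mid s)$). But to use it you must first apply Jensen to push $\E$ inside the square root, and Jensen goes only one way: after that step you end up with $\sum_{s}\sum_{a\neq\pi^*(s)}\sqrt{\E[\sum_t\rho_t(s,a)]\log T}$, with the expectation \emph{inside} the square root. You cannot come back out, because $\E[\sqrt{X}]\le\sqrt{\E[X]}$ points the wrong way. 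So the displayed theorem, with $\E[\cdot]$ \emph{outside} the square root and the sum already restricted to $a\neq\pi^*(s)$, is not reachable by your route.

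Note that the paper does not actually prove the restricted form either: Theorem~\ref{thm:log-barrier-SP-appendix} has no restriction, and the restriction $e\in\oes v\setminus\{\pi^*(v)\}$ appears only inside the proof of Corollary~\ref{cor:log-barrier-SP-appendix}, \emph{after} Jensen has been applied. The consistent reading is that the theorem should be stated with $\sum_{a\in A}$ (which your steps one through three establish), with the $a\neq\pi^*(s)$ restriction deferred to Corollary~\ref{cor:result-known-mdp}, where the chain $\E[\sum_t\rho_t(s,\pi^*(s))]\le\E[\sum_t q_t(s)\pi_t(\pi^*(s)\mid s)(1-\pi_t(\pi^*(s)\mid s))]\le\E[\sum_t(q_t(s)-q_t(s,\pi^*(s)))]=\sum_{a\neq\pi^*(s)}\E[\sum_t q_t(s,a)]$ together with $\sqrt{x+y}\le\sqrt{x}+\sqrt{y}$ absorbs the $\pi^*(s)$ contribution. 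You should carry out your steps one through three to prove the theorem with $a\in A$, and move the restriction into the corollary proof where it belongs, rather than attempting it pointwise.
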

To show these,
we use the following lemmas:
\begin{lemma}
	\label{lem:stabLB-KT}
	When we use the log-barrier regularizer \eqref{eq:defpsi-KT-LB-app},
	we have
	\begin{align*}
		\stab_t
		\lesssim
		c_t^2
		\sum_{s \neq s_L}
		\sum_{a \in A}
		\eta_t(s,a)
		\ind_t(s)
		\left(
			\ind_t(s,a)
			-
			\pi_t(a|s)
		\right)^2.
	\end{align*}
\end{lemma}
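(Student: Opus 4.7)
The plan is to reduce the stability bound to a coordinate-wise application of the one-dimensional log-barrier stability bound in Lemma~\ref{lem:stabLB}. The key observation is that the log-barrier regularizer \eqref{eq:defpsi-KT-LB-app} decomposes over state-action pairs, and so does the associated Bregman divergence:
\begin{equation*}
D_t(q', q) = \sum_{s \neq s_L, \, a \in A} \frac{1}{\eta_t(s,a)} D_\phi(q'(s,a), q(s,a)), \qquad \phi(x) = -\ln x.
\end{equation*}
Since $\cQ \subseteq [0,1]^{S \times A}$, we can upper bound $\stab_t$ by the sum of coordinate-wise suprema over $[0,1]$, i.e.,
\begin{equation*}
\stab_t \le \sum_{s \neq s_L, \, a \in A} \sup_{y \in [0,1]} \Bigl\{ \widehat{\ell}_t(s,a) (q_t(s,a) - y) - \tfrac{1}{\eta_t(s,a)} D_\phi(y, q_t(s,a)) \Bigr\}.
\end{equation*}

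Next I would verify the hypothesis of Lemma~\ref{lem:stabLB}, namely $\eta_t(s,a) q_t(s,a) \widehat{\ell}_t(s,a) \ge -\tfrac{1}{2}$. A direct computation using the definition of $\widehat{\ell}_t$ in Lemma~\ref{lem:loss_estimator_KT} and the identity $q_t(s,a)/q_t(s) = \pi_t(a \mid s)$ gives
\begin{equation*}
q_t(s,a) \widehat{\ell}_t(s,a) = c_t \bigl( \ind_t(s,a) - \pi_t(a \mid s) \ind_t(s) \bigr) \in [-1, 1].
\end{equation*}
Since $\eta_t(s,a) \le 1/2$ by its definition (as $4$ appears under the square root), the condition of Lemma~\ref{lem:stabLB} is satisfied.

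Applying Lemma~\ref{lem:stabLB} coordinate-wise then yields $\stab_t \lesssim \sum_{s,a} \eta_t(s,a) q_t(s,a)^2 \widehat{\ell}_t(s,a)^2$, and substituting the identity above gives
\begin{equation*}
\stab_t \lesssim \sum_{s \neq s_L, \, a \in A} \eta_t(s,a) c_t^2 \bigl( \ind_t(s,a) - \pi_t(a \mid s) \ind_t(s) \bigr)^2.
\end{equation*}
The final cosmetic step is to observe that $\ind_t(s,a) = \ind_t(s,a) \ind_t(s)$ (since $\ind_t(s,a) = 1$ forces $\ind_t(s) = 1$), so $\ind_t(s,a) - \pi_t(a\mid s) \ind_t(s) = \ind_t(s) ( \ind_t(s,a) - \pi_t(a \mid s) )$, and squaring turns the $\ind_t(s)^2$ factor into $\ind_t(s)$. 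This matches the claimed bound exactly.

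The main obstacle is conceptually minor but deserves care: one must ensure the coordinate-wise relaxation from $\cQ$ to the box $[0,1]^{S \times A}$ is legitimate (it is, since the regularizer and divergence both separate), and the negativity check on $\eta_t(s,a) q_t(s,a) \widehat{\ell}_t(s,a)$ must account for the fact that the negative term $-c_t \ind_t(s)/q_t(s)$ in $\widehat{\ell}_t$ can be large in magnitude when $q_t(s)$ is small. The cancellation $q_t(s,a)/q_t(s) = \pi_t(a\mid s) \le 1$ after multiplying by $q_t(s,a)$ is exactly what saves us and allows the log-barrier stability lemma to apply uniformly.
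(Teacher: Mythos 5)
Your proof is correct and follows essentially the same route as the paper's: decompose the Bregman divergence coordinate-wise, verify the hypothesis $\eta_t(s,a)\, q_t(s,a)\, \widehat{\ell}_t(s,a) \ge -\tfrac{1}{2}$ using $q_t(s,a)/q_t(s) = \pi_t(a\mid s)$ and $\eta_t(s,a) \le \tfrac{1}{2}$, apply Lemma~\ref{lem:stabLB} per coordinate, and simplify $q_t(s,a)\widehat{\ell}_t(s,a) = c_t(\ind_t(s,a) - \pi_t(a\mid s)\ind_t(s))$ together with $\ind_t(s)^2 = \ind_t(s)$. Your version is, if anything, slightly more explicit in the negativity check than the paper's (which just bounds $-\widehat{\ell}_t(s,a) \le c_t/q_t(s)$), but the argument is the same.
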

\begin{proof}
We have
\begin{align*}
	\stab_t
	&
	=
	\linner 
	\widehat{\ell}_t, q_t - q_{t+1}
	\rinner
	-
	D_t(q_{t+1}, q_t)
	\\
	&
	=
	\sum_{s,a}
	\left(
		\widehat{\ell}_t(s,a) (q_t(s,a) - q_{t+1}(s,a))
		-
		\frac{1}{\eta_t(s,a)} D_{\phi}(q_{t+1}(s,a), q_t(s,a))
	\right),
\end{align*}
where $D_{\phi}$ is the Bregman divergence associated with $\phi(x) = - \ln(x)$.
As we have
\begin{align*}
	- \eta_t(s,a) q_t(s, a) 
	\widehat{\ell}_t(s, a) 
	\le
	\eta_t(s,a) q_t(s, a)
	\frac{c_t}{q_t(s)} 
	\le
	\eta_t(s,a)
	\le
	\frac{1}{2},
\end{align*}
we can apply Lemma~\ref{lem:stabLB} to obtain the following:
\begin{align*}
	\stab_t
	\le
	\sum_{s,a} \eta_t(s,a) q_t(s,a)^2 \widehat{\ell}_t(s,a)^2
	=
	\sum_{s,a} \eta_t(s,a) c_t^2 \ind_t(s) \left( \ind_t(s, a) - \pi_t(a | s) \right)^2.
\end{align*}
\end{proof}
Define $\rho_t(s,a)$ by
\begin{align*}
	\rho_t(s,a) = c_t^2 
	\ind_t(s)
	\left(
		\ind_t(s, a)
		-
		\frac{q_t(s,a)}{q_t(s)}
	\right)^2.
\end{align*}
Then,
Theorem~\ref{thm:log-barrier-KT-appendix} can be established in the same way as Theorem~\ref{thm:log-barrier-SP-appendix}.

Lastly,
results in
Corollary~\ref{cor:result-known-mdp} follows from Theorems~\ref{thm:Tsallis-KT} and \ref{thm:log-barrier-KT-appendix} by an argument similar to that used for Corollaries~\ref{cor:Tsallis-SP} and \ref{cor:log-barrier-SP-appendix}.

\subsection{Lower bound for stochastic MDPs}
Consider stochastic environment in which $c_t$ follows a Bernoulli distribution of parameter $\linner \ell^*, p_t \rinner$,
where we assume that $\ell^*: S \times A \to [0, 1] $ satisfies
$\linner \ell^*, p \rinner \in [3/8, 5/8]$ for any possible trajectories $p$
and $\ell^*(s_L, a) = 0$ for all $a \in A$.
Define $\Delta:S \times A \to [0, 1]$ by optimal $Q$ function values:
\begin{align*}
	Q^*(s_L, a) 
	&
	= 0
	&
	(a \in A),
	\\
	Q^*(s, a) 
	&
	= \ell^*(s, a) + 
	\sum_{s' \in S} P(s'|s,a) \cdot 
	V^*(s'),
	&
	(s \in S \setminus \{ s_L \}, a \in A),
	\\
	V^*(s) 
	&
	=
	\min_{a' \in A} Q^*(s, a')
	&
	(s \in S),
	\\
	\Delta(s, a) 
	&
	= Q^*(s, a) - \min_{a' \in A} Q^*(s, a')
	= Q^*(s, a) - V^*(s)
	&
	(s \in S , a \in A).
\end{align*}
We then have $\min_{a \in A} \Delta(s,a) = 0$ for all $s \in S$ and $\| \Delta \|_1 \le 1/4$.
Also,
we have
\begin{align*}
	\linner \ell^*, q \rinner
	&
	=
	\sum_{k = 0}^{L-1}
	\sum_{s \in S_k}
	\sum_{a \in A}
	\ell^*(s, a) 
	q(s, a)
	\\
	&
	=
	\sum_{k = 0}^{L-1}
	\sum_{s \in S_k}
	\sum_{a \in A}
	\left(
		Q^*(s,a) 
		- \sum_{s' \in S_{k+1}} P(s'|s,a) \cdot V^*(s')
	\right)
	q(s, a)
	\\
	&
	=
	\sum_{k = 0}^{L-1}
	\left(
		\sum_{s \in S_k}
		\sum_{a \in A}
		Q^*(s,a) 
		q(s,a)
		- 
		\sum_{s' \in S_{k+1}} q(s') \cdot V^*(s')
	\right)
	\\
	&
	=
	\sum_{k = 0}^{L-1}
	\left(
		\sum_{s \in S_k}
		\sum_{a \in A}
		\left(
		Q^*(s,a) 
		-
		V^*(s)
		\right)
		q(s,a)
	\right)
	+
	V^*(s_0)
	-
	V^*(s_L)
	\\
	&
	=
	\linner \Delta, q \rinner + \min_{q^* \in \cQ} \linner \ell^*, q^* \rinner
\end{align*}
as $V^*(s_L) = \ell^*(s_L,a) = 0$ for all $a \in A$.
Denote 
\begin{align*}
\cQ^* 
&
= \argmin_{q \in \cQ} \left\{ \linner \ell^*, q \rinner \right\} 
= \argmin_{q \in \cQ} \left\{ \linner 
\Delta, q
\rinner \right\} 
\\
&
= \{ q \in \cQ \mid \linner \Delta, q \rinner = 0  \}
= \{ q \in \cQ \mid \Delta(s,a) > 0 \Longrightarrow q(s,a)=0  \}.
\end{align*}
We also define
\begin{align*}
	S^*
	=
	\{ s \in S \setminus \{ s_L \} \mid \exists q \in \cQ^*, q(s) > 0 \}.
\end{align*}
Then,
for any consistent algorithms,
we have
\begin{align*}
	\liminf_{T \to \infty} \frac{\Reg_T}{\log T}
	\gtrsim
	\sum_{s \in S^*}\sum_{a \in A : \Delta(s,a) > 0} \frac{1}{\Delta(s,a)}.
\end{align*}
\begin{theorem}\label{appendix:lower-bound-known}
	Consider an arbitrary consistent algorithm,
	i.e.,
	assume that there exists $\epsilon \in (0, 1)$
	such that $\Reg_T \le M T^{1 - \epsilon}$ holds for all instances,
	where $M > 0$ is a parameter independent of $T$.
	Then for any MDP with $\ell^* : S \times A \rightarrow [0,1]$  satisfying
$\linner \ell^*, p \rinner \in [3/8, 5/8]$ for any possible trajectories $p$
and $\ell^*(s_L, a) = 0$ for all $a \in A$, we have
	\begin{align}
		\label{eq:thmLBMDP}
		\liminf_{T \to \infty} \frac{\Reg_T}{\log T}
		\gtrsim
		\epsilon
		\sum_{s \in S^*}\sum_{a \in A : \Delta(s,a) > 0} \frac{1}{\Delta(s,a)}.
	\end{align}
\end{theorem}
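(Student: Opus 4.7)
The plan is to adapt the change-of-measure argument of Theorem~\ref{thm:LBSP} to the MDP setting. After decomposing the regret as
\[
    \Reg_T
    = \sum_{s,a:\, \Delta(s,a)>0} \Delta(s,a)\, N(s,a),
    \qquad
    N(s,a) := \E\Bigl[\sum_{t=1}^T \ind_t(s,a)\Bigr],
\]
it suffices to prove $\liminf_{T \to \infty} N(\tilde s, \tilde a)/\log T \gtrsim \epsilon/\delta^2$ for each fixed pair with $\tilde s \in S^*$ and $\delta := \Delta(\tilde s, \tilde a) > 0$, and then to sum.

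Fix such a pair, let $r := q^{\pi^*}(\tilde s) > 0$ for some optimal $\pi^*$ reaching $\tilde s$, and introduce the perturbed loss $\tilde \ell$ that agrees with $\ell^*$ except $\tilde \ell(\tilde s, \tilde a) = \ell^*(\tilde s, \tilde a) - 2\delta$. The deviation policy $\tilde \pi$ that equals $\pi^*$ but plays $\tilde a$ at $\tilde s$ satisfies $\linner \tilde \ell, q^{\tilde \pi} \rinner = V^*(s_0) - r\delta$, so the perturbed optimum obeys $\tilde V^*(s_0) \le V^*(s_0) - r\delta$. Combined with the trivial inequality $\linner \Delta, q \rinner \ge \delta\, q(\tilde s, \tilde a)$, a short computation gives the MDP analog of the SP key inequality,
\[
    \linner \tilde \ell, q \rinner - \tilde V^*(s_0)
    \;\ge\; \delta\, \bigl(r - q(\tilde s, \tilde a)\bigr)_+
    \quad \text{for all } q \in \cQ,
\]
and hence $\tilde \Reg_T \ge \delta\, \tilde \E\bigl[(rT - \sum_t q_t(\tilde s, \tilde a))_+\bigr]$ by the convexity of $(\cdot)_+$.

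The main obstacle absent in the SP case is that $r \in (0, 1]$ may be strictly less than $1$, so the direct total-variation argument yields only a constant (not $o(1)$) bound on $1 - \TV$. The remedy is to apply Azuma--Hoeffding to the martingale $M_t := \sum_{\tau \le t}(\ind_\tau(\tilde s, \tilde a) - q_\tau(\tilde s, \tilde a))$, which has bounded differences in $[-1, 1]$ and yields $\tilde\E\bigl|N(\tilde s, \tilde a) - \sum_t q_t(\tilde s, \tilde a)\bigr| \le \sqrt{T}$; coupled with the $1$-Lipschitz property of $(\cdot)_+$, this upgrades the regret bound to $\tilde\E[(rT - N(\tilde s, \tilde a))_+] \le \tilde \Reg_T/\delta + \sqrt{T}$. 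Markov's inequality on the nonnegative random variable $(rT - N(\tilde s, \tilde a))_+$ then bounds the probability of the event $\{N(\tilde s, \tilde a) < rT/2\}$ under $\tilde \ell$ by $O(T^{-\min\{\epsilon, 1/2\}})$. Symmetrically, $\Reg_T \ge \delta\, N(\tilde s, \tilde a)$ together with Markov gives probability $O(T^{-\epsilon})$ for the event $\{N(\tilde s, \tilde a) \ge rT/2\}$ under $\ell^*$.

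The concluding step mirrors the SP proof. By the chain rule for KL divergence and the estimate $\KL(\mathrm{Ber}(p) \| \mathrm{Ber}(p - 2\delta)) \lesssim \delta^2$ for $p \in [3/8, 5/8]$, the trajectory KL divergence between the two environments is at most a constant times $\delta^2 N(\tilde s, \tilde a)$, since the Bernoulli parameters of $c_t$ differ only on rounds visiting $(\tilde s, \tilde a)$. Applying the Bretagnolle--Huber inequality to the event $A = \{N(\tilde s, \tilde a) \ge rT/2\}$ with the two tail estimates above then forces $\KL \gtrsim \min\{\epsilon, 1/2\}\, \log T$, which rearranges to $N(\tilde s, \tilde a) \gtrsim \epsilon \log T/\delta^2$. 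Summing over all $(\tilde s, \tilde a)$ with $\tilde s \in S^*$ and $\Delta(\tilde s, \tilde a) > 0$ completes the proof.
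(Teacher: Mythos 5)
Your proof is correct and reaches the same conclusion, but it takes a genuinely different route from the paper's, and one of your claims about the difficulty is mistaken.

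\textbf{What you do differently, and what it buys.} The paper's proof hinges on an auxiliary lemma (Lemma~\ref{lem:Deltaqc}): there exists $c \in (0,\delta]$ with $\linner \Delta, q\rinner \ge c\max\{0, q(\tilde s)-\bar q\} + \sum_a \Delta(\tilde s, a)q(\tilde s, a)$, which is then used to obtain the two-sided bound $\linner \tilde\ell, q\rinner - \tilde V^*(s_0) \ge \tfrac{c}{2}\abr{q(\tilde s,\tilde a)-\bar q}$ after a perturbation of size $\delta + c/2$. You sidestep this entirely: using only the trivial $\linner\Delta,q\rinner \ge \delta q(\tilde s,\tilde a)$, the evaluation at the deviation policy $\tilde\pi$ to get $\tilde V^*(s_0)\le V^*(s_0)-r\delta$, and the tautological $\linner\tilde\ell,q\rinner - \tilde V^*(s_0)\ge 0$, you obtain the one-sided bound $\linner\tilde\ell,q\rinner-\tilde V^*(s_0)\ge \delta(r-q(\tilde s,\tilde a))_+$ directly. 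Since the paper ultimately uses only the one-sided half of its two-sided bound anyway, your argument is a genuine streamlining that avoids proving and invoking Lemma~\ref{lem:Deltaqc}. This is worth keeping.

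\textbf{Two caveats, one spurious detour.} First, your stated motivation for the Azuma--Hoeffding + Markov detour is not right: the ``direct total-variation argument'' does not break down when $r<1$. The paper applies the TV bound to the normalized random variable $\frac{1}{\bar q T}\sum_t \min\{\bar q, q_t(\tilde s,\tilde a)\}\in[0,1]$, which is measurable with respect to the interaction history and handles $r<1$ cleanly. You can use $Z = \frac{1}{rT}\min\{rT, \sum_t q_t(\tilde s,\tilde a)\}$, note $\E[Z]\le \Reg_T/(\delta r T)$ and $\tilde\E[1-Z]\le \tilde\Reg_T/(\delta r T)$, then $1-\TV\le\E[Z]+\tilde\E[1-Z]$, and you are back to the paper's cleaner route with no need for the martingale concentration and the resulting loss of the $\min\{\epsilon,1/2\}$ factor. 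Second, your perturbation of size $2\delta$ is slightly too aggressive: with $\linner\ell^*,p\rinner\ge 3/8$ and $\delta$ up to $1/4$, the perturbed Bernoulli parameter $\linner\tilde\ell,p\rinner$ can drop below $0$ when $\delta > 3/16$, which invalidates the bound $\KL(\mathrm{Ber}(p)\|\mathrm{Ber}(p-2\delta))\lesssim\delta^2$. Any $\rho\in(\delta, 3\delta/2]$ keeps the same structure of your one-sided argument (with a harmless constant loss) while matching the paper's margin; you should choose such a $\rho$ explicitly.
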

\begin{proof}
	Let $q^* \in \cQ^*$.
	As we have
	\begin{align*}
		\Reg_T
		=
		\E \left[
			\sum_{t=1}^T
			\linner \ell^*, p_t - q^* \rinner
		\right]
		=
		\E \left[
			\sum_{t=1}^T
			\linner \ell^*, q_t - q^* \rinner
		\right]
		=
		\sum_{s, a}\Delta(s,a) 
		\E \left[
			\sum_{t=1}^T
			q_t(s,a)
		\right],
	\end{align*}
	we have
	\begin{align}
		\label{eq:liminfq}
		\liminf_{T \to \infty} \frac{\Reg_T}{\log T}
		=
		\sum_{s, a}\Delta(s,a) 
		\liminf_{T \to \infty}
		\frac{1}{\log T}
		\E \left[
			\sum_{t=1}^T
			q_t(s,a)
		\right].
	\end{align}
	In the following,
	we evaluate the value of
	$
		\liminf_{T \to \infty}
		\frac{1}{\log T}
		\E \left[
			\sum_{t=1}^T
			q_t(s,a)
		\right]
	$
	for any fixed $\tilde{s} \in S^*$ and $\tilde{a} \in A$ such that ${\delta} := \Delta(\tilde{s}, \tilde{a}) > 0$.
	Let $q^* \in \argmax_{q \in \cQ^*} \left\{ q(\tilde{s}) \right\}$ and denote $\bar{q} = \max_{q \in \cQ^*} \{ q(\tilde{s}) \} = q^*(\tilde{s}) > 0$.
	Then,
	from Lemma~\ref{lem:Deltaqc} below,
	there exists $c \in (0, \delta]$ such that
	\begin{align*}
		\linner \Delta, q \rinner \ge c \max\{ 0, q(\tilde{s}) - \bar{q} \}
		+
		{\delta} q(\tilde{s}, \tilde{a}).
	\end{align*}
	Consider a modified environment given by $\tilde{\ell}$ such that 
	$\tilde{\ell}(\tilde{s}, \tilde{a}) = \ell^*(\tilde{s}, \tilde{a}) - \delta - c/2$
	and
	$\tilde{\ell}(s, a) = \ell^*(s,a) $ for $(s, a) \neq (\tilde{s}, \tilde{a})$.
	Then,
	as we have $q^*(\tilde{s}, \tilde{a}) = 0$,
	it holds for any $q \in \cQ$ that
	\begin{align}
		\linner \tilde{\ell}, q - q^* \rinner 
		&
		=
		\linner {\ell}^*, q - q^* \rinner 
		- \left(\delta + \frac{c}{2} \right) (q(\tilde{s}, \tilde{a}) - q^*(\tilde{s}, \tilde{a}))
		\nonumber
		\\
		&
		=
		\linner \Delta, q \rinner 
		- \left(\delta + \frac{c}{2} \right) q(\tilde{s}, \tilde{a}) 
		\ge
		c \max \{0, q(\tilde{s}) - \bar{q} \}
		- \frac{c}{2} q(\tilde{s}, \tilde{a})
		\nonumber
		\\
		&
		\ge
		c \max \{0, q(\tilde{s},\tilde{a}) - \bar{q} \}
		- \frac{c}{2} q(\tilde{s}, \tilde{a})
		=
		\frac{c}{2} \left( | q(\tilde{s},\tilde{a}) - \bar{q} | - \bar{q} \right)
		\label{eq:linnerellq}
	\end{align}
	Further,
	as there exists $q \in \cQ$ such that
	$\linner \tilde{\ell}, q - q^* \rinner = - \frac{c \bar{q}}{2}$,
	we have
	\begin{align}
		\min_{q \in \cQ} \linner \tilde{\ell}, q \rinner 
		= \linner \tilde{\ell}, q^* \rinner - \frac{c \bar{q}}{2}.
		\label{eq:minelltilde}
	\end{align}
	In fact,
	such an occupancy measure $q$ can be constructed from a corresponding policy $\tilde{\pi}:S \to A$,
	by modifying a policy $\pi^*: S \to A$ corresponding to $q^* \in \cQ^*$ such that $q^*(s) = \bar{q}$.
	We set $\tilde{\pi}(s) = \pi^*(s)$ for all $s \neq \tilde{s}$ and set $\tilde{\pi}(\tilde{s}) = \tilde{a}$.
	An occupancy measure $q$ corresponding to $\tilde{\pi}$ satisfies
	$q(\tilde{s}, \tilde{a}) = \bar{q}$ and
	$\linner \ell^*, q \rinner = \bar{q} \delta$,
	which implies that
	$\linner \tilde{\ell}, q \rinner = \frac{c\bar{q}}{2} $.
	Hence,
	the regret for the environment given by $\tilde{\ell}$ satisfies 
	\begin{align}
		\tilde{\Reg}_T 
		&
		=
		\max_{q \in \cQ}
		\tilde{\E}
		\left[
			\sum_{t=1}^T
			\linner
			\tilde{\ell},
			q_t
			-
			q
			\rinner
		\right]
		&
		\nonumber
		\\
		&
		=
		\tilde{\E}
		\left[
			\sum_{t=1}^T
			\left(
			\linner
			\tilde{\ell},
			q_t
			-
			q^*
			\rinner
			+
			\frac{c \bar{q}}{2}
			\right)
		\right]
		&
		\mbox{(from \eqref{eq:minelltilde})}
		\nonumber
		\\
		&
		\ge
		\frac{c}{2}
		\tilde{\E}
		\left[
			\sum_{t=1}^T
			\left|
			q_t(\tilde{s}, \tilde{a})
			-
			\bar{q}
			\right|
		\right]
		&
		\mbox{(from \eqref{eq:linnerellq})}
		\nonumber
		\\
		&
		\ge
		\frac{c}{2}
		\tilde{\E}
		\left[
			\sum_{t=1}^T
			\max\left\{
				0,
				\bar{q}
				-
				q_t(\tilde{s}, \tilde{a})
			\right\}
		\right],
		\label{eq:tildeRegTLBMDP}
	\end{align}
	where $\tilde{\E}[ \cdot ]$ represents the expected value when feedback is generated by an environment associated with $\tilde{\ell}$.
	On the other hand,
	the regret for the environment given by ${\ell}^*$ satisfies 
	\begin{align}
		{\Reg}_T \ge
		\delta
		\cdot
		{\E} \left[
			\sum_{t=1}^T q_t(\tilde{s}, \tilde{a})
		\right]
		\ge
		\delta
		\cdot
		{\E} \left[
			\sum_{t=1}^T \min\left\{ \bar{q}, q_t(\tilde{s}, \tilde{a}) \right\}
		\right]
		.
		\label{eq:RegTLBMDP}
	\end{align}
	Let $\TV$ denote the total variation distance between trajectories $((q_t, p_t, c_t))_{t=1}^T$ for environments with $\ell^*$ and $\tilde{\ell}$.
	Then,
	as we have
	$
	\frac{1}{\bar{q} T} 
		\sum_{t=1}^T \min\left\{ \bar{q}, q_t(\tilde{s}, \tilde{a}) \right\}
		\in [0, 1]
	$,
	we have
	\begin{align*}
		\left|
		\tilde{\E}
		\left[
		\frac{1}{\bar{q} T} 
		\sum_{t=1}^T \min\left\{ \bar{q}, q_t(\tilde{s}, \tilde{a}) \right\}
		\right]
		-
		{\E}
		\left[
		\frac{1}{\bar{q} T} 
		\sum_{t=1}^T \min\left\{ \bar{q}, q_t(\tilde{s}, \tilde{a}) \right\}
		\right]
		\right|
		\le
		\TV.
	\end{align*}
	We hence have
	\begin{align*}
		1 - \TV
		&
		\le
		1
		-
		\tilde{\E}
		\left[
		\frac{1}{\bar{q} T} 
		\sum_{t=1}^T \min\left\{ \bar{q}, q_t(\tilde{s}, \tilde{a}) \right\}
		\right]
		+
		{\E}
		\left[
		\frac{1}{\bar{q} T} 
		\sum_{t=1}^T \min\left\{ \bar{q}, q_t(\tilde{s}, \tilde{a}) \right\}
		\right]
		\\
		&
		=
		\tilde{\E}
		\left[
		\frac{1}{\bar{q} T} 
		\sum_{t=1}^T \max\left\{ 0, \bar{q} - q_t(\tilde{s}, \tilde{a}) \right\}
		\right]
		+
		{\E}
		\left[
		\frac{1}{\bar{q} T} 
		\sum_{t=1}^T \min\left\{ \bar{q}, q_t(\tilde{s}, \tilde{a}) \right\}
		\right]
		\\
		&
		\le \frac{1}{\bar{q}T} 
		\left(
			\frac{2}{c} \tilde{\Reg}_T
			+
			\frac{1}{\delta} \Reg_T
		\right),
	\end{align*}
	where the last inequality follows from \eqref{eq:tildeRegTLBMDP} and \eqref{eq:RegTLBMDP}.
	Here,
	from the Bretagnolle-Huber inequality (e.g., [\citet{canonne2022short}, Corollary 4])
	and the chain rule of the KL divergence,
	we have
	\begin{align*}
		1 - \TV
		&
		\ge
		\frac{1}{2}
		\exp\left(
			\sum_{t=1}^T 
			\E \left[ 
			\KL \left( \linner \ell^*, p_t \rinner ||
			\langle \tilde{\ell}, p_t \rangle
			\right)
			\right] 
		\right)
		\\
		&
		=
		\frac{1}{2}
		\exp\left(
			\sum_{t=1}^T 
			\E \left[ 
			q_t(\tilde{s}, \tilde{a})
			\KL \left( \linner \ell^*, p_t \rinner ||
			\langle {\ell}^*, p_t \rangle - \delta - \frac{c}{2}
			\right)
			\right] 
		\right)
		\\
		&
		\ge
		\frac{1}{2}
		\exp\left(
			- 5 \left( \delta + \frac{c}{2} \right)^2 \cdot
			\E \left[ \sum_{t=1}^T q_t(\tilde{s}, \tilde{a}) \right] 
		\right).
	\end{align*}
	Combining the above inequalities and $c \in (0, \delta ]$,
	we obtain
	\begin{align*}
		\E \left[ \sum_{t=1}^T q_t(\tilde{s}, \tilde{a}) \right] 
		&
		\gtrsim
		\frac{1}{\delta^2}
		\log \frac{1}{2(1 - \TV)}
		\ge
		\frac{1}{\delta^2}
		\log \frac{\bar{q} {c} T }{4 (\Reg_T + \tilde{\Reg}_T)}
		\\
		&
		\ge
		\frac{1}{\delta^2}
		\log \frac{\bar{q} {c} T }{8 M T^{1-\epsilon}}
		=
		\frac{1}{\delta^2}
		\left(
			\epsilon \log T
			+
			\log \left(
				\frac{\bar{q}c}{8M}
			\right)
		\right)
	\end{align*}
	Consequently,
	we have
	\begin{align*}
		\liminf_{T \to \infty}
		\frac{1}{\log T}
		\E \left[ \sum_{t=1}^T q_t(\tilde{s}, \tilde{a}) \right] 
		\gtrsim
		\frac{\epsilon}{\delta^2}
		=
		\frac{\epsilon}{\Delta(\tilde{s},\tilde{a})^2}
	\end{align*}
	for any $\tilde{s} \in S^*$ and $\tilde{a} \in A$ 
	such that $\Delta(\tilde{s}, \tilde{a}) > 0$.
	By combining this with \eqref{eq:liminfq},
	we obtain \eqref{eq:thmLBMDP}.
\end{proof}

\begin{lemma}
	\label{lem:Deltaqc}
	Suppose that $s \in S^*$ and denote $\bar{q} = \max_{q \in \cQ^*} \left\{ q(s) \right\}$.
	Then,
	there exists $c > 0$ such that
	the following holds for all $q \in \cQ$:
	\begin{align*}
		\linner \Delta, q \rinner \ge
		c \max\{ 0, q(s) - \bar{q} \} 
		+
		\sum_{a \in A} \Delta(s, a) q(s, a).
	\end{align*}
\end{lemma}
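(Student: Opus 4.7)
The plan is to decompose $\langle \Delta, q \rangle = R(q) + \sum_{a \in A} \Delta(s,a)\, q(s,a)$, where $R(q) := \sum_{s' \ne s,\, a \in A} \Delta(s', a)\, q(s', a) \ge 0$, so it suffices to exhibit $c > 0$ such that $R(q) \ge c\,\max\{0,\, q(s) - \bar q\}$ for every $q \in \cQ$. The inequality is trivial whenever $q(s) \le \bar q$ since then the right-hand side is zero and $R(q) \ge 0$, so the task reduces to the regime $q(s) > \bar q$.

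My approach is to study the value function $g(r) := \min\{R(q) : q \in \cQ,\, q(s) = r\}$ on the interval of achievable values of $q(s)$. Because $g$ is the optimal value of a linear program with a single parametric scalar constraint, it is convex and piecewise linear in $r$. Any $q^\star \in \cQ^*$ with $q^\star(s) = \bar q$ yields $g(\bar q) = 0$. The key claim will be that $g(r) > 0$ for every $r > \bar q$; once established, the convex piecewise-linear structure of $g$ immediately yields that its right derivative $c$ at $\bar q$ is strictly positive and that $g(r) \ge c\,(r - \bar q)$ for all $r \ge \bar q$, which is exactly what is needed.

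To prove the key claim I argue by contradiction: suppose $\hat q \in \cQ$ satisfies $\hat q(s) = r > \bar q$ and $R(\hat q) = 0$; I will construct $\tilde q \in \cQ^*$ with $\tilde q(s) = r$, contradicting the definition of $\bar q$. Let $\hat\pi$ be a policy inducing $\hat q$. Since $\Delta \ge 0$, the identity $R(\hat q) = 0$ forces $\hat\pi(a \mid s') = 0$ whenever $s' \ne s$, $\hat q(s') > 0$, and $\Delta(s', a) > 0$. Define $\tilde\pi$ to agree with $\hat\pi$ at every state $s' \ne s$ with $\hat q(s') > 0$, to play an optimal action at $s$ (one exists because $s \in S^*$), and to play an optimal action at every other state. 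In the layered MDP, the distribution of trajectories through layer $k(s)$ depends only on the policy at strictly earlier layers, so $\tilde q(s) = \hat q(s) = r$ and $\Delta(s', a)\,\tilde q(s', a) = \Delta(s', a)\,\hat q(s', a) = 0$ for every $s' \ne s$ with $k(s') \le k(s)$; for $s'$ in later layers, $\tilde\pi$ plays only optimal actions, so $\Delta(s', a)\,\tilde q(s', a) = 0$, and the same holds at $s$ itself. Summing gives $\langle \Delta, \tilde q \rangle = 0$, i.e.\ $\tilde q \in \cQ^*$ with $\tilde q(s) = r > \bar q$, the desired contradiction.

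The main obstacle is the policy-modification step, where I need the modified policy to produce the same value of $q(s)$ as the original; this is where the layered MDP structure enters essentially, since actions chosen at $s$ or at later layers cannot affect the marginal probability of reaching $s$, while keeping $\tilde\pi$ identical to $\hat\pi$ on visited earlier-layer states preserves the visitation distribution up to layer $k(s)$. The remaining ingredients---non-negativity of $R$, convexity of an LP value function in the right-hand side, and the elementary fact that a convex piecewise-linear function vanishing at a boundary point but strictly positive immediately beyond it admits a strictly positive one-sided derivative---are standard.
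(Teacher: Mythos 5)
Your proof is correct and essentially mirrors the paper's: both reduce to a one-variable convex piecewise-linear value function in $q(s)$, exploit the layered structure via a policy-modification argument, and extract $c$ from the slope of that function just to the right of $\bar q$. The only noticeable difference is in bookkeeping: you single out the state $s$ by writing $\langle\Delta,q\rangle = R(q) + \sum_a\Delta(s,a)q(s,a)$ and use the modification to show $g(r) := \min\{R(q):q(s)=r\}$ is strictly positive for $r>\bar q$, whereas the paper splits $\Delta$ by layer into $\Delta_{<k(s)}$ and $\Delta_{\ge k(s)}$, uses the modification to prove $f(x)=\inf_{q(s)=x}\langle\Delta,q\rangle=\inf_{q(s)=x}\langle\Delta_{<k(s)},q\rangle$, and obtains positivity of $f$ for $x>\bar q$ directly from the definition of $\cQ^*$ --- two cosmetic variants of the same idea.
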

\begin{proof}
	Suppose that $s \in S^* \cap S_k$.
	Decompose $\Delta$ as $\Delta = \Delta_{<k} + \Delta_{\ge k}$,
	where
	\begin{align*}
		(\Delta_{<k}(s, a), \Delta_{\ge k}(s, a))
		=
		\left\{
			\begin{array}{ll}
				(\Delta(s, a), 0) & 
				\text{if } s \in \bigcup_{k' < k} S_{k'} \\
				(0, \Delta(s, a)) & 
				\text{if } s \in \bigcup_{k' \ge  k} S_{k'} \\
			\end{array}
		\right. .
	\end{align*}
	We define $q_{<k}$ and $q_{\ge k}$ in the same way.
	Define 
	$f(x) = \inf_{q \in \cQ : q(s) = x} \linner \Delta, q \rinner$.
	We then have
	\begin{align*}
	f(x) 
	= \inf_{q \in \cQ : q(s) = x} \linner \Delta, q \rinner
	= \inf_{q \in \cQ : q(s) = x} \linner \Delta_{<k} + \Delta_{\ge k}, q \rinner
	= \inf_{q \in \cQ : q(s) = x} \linner \Delta_{<k}, q \rinner.
	\end{align*}
	The last equality follows from the fact that,
	for any $q \in \cQ$ 
	(corresponding to $\pi \in \Pi$)
	such that $q(s) = x$,
	there exists $q' \in \cQ$ such that
	$q(s) = x$,
	$\linner \Delta_{<k},  q' \rinner = \linner \Delta_{< k}, q \rinner$,
	and
	$\linner \Delta_{\ge k},  q' \rinner = 0$.
	Such an occupancy measure $q'$ can be constructed by a policy $\pi' \in \Pi$ given as
	$\pi'(s,a) = \pi(s,a)$ for $s \in \bigcup_{k' < k} S_{k'}$ and
	$\pi'(s,a) = \pi^*(s,a)$ for $s \in \bigcup_{k' \ge k} S_{k'}$.

	Define 
	$\underline{x} = \min_{q \in \cQ} \{ q(s) \}$
	and
	$\bar{x} = \max_{q \in \cQ} \{ q(s) \}$.
	We note that
	$\underline{x} \le \bar{q} \le \bar{x}$ and
	$f(x) < +\infty$ if and only if $\underline{x} \le x \le \bar{x}$.
	As $\cQ$ is a polytope,
	$f(x)$ is a piecewise linear function in $x$,
	i.e.,
	there exists a finite sequence of real numbers 
	$x_0=\underline{x} < x_1 < x_2 < \cdots < x_n = \bar{x} \in \mathbb{R}$
	such that $f(x)$ is an affine function in each interval $[x_i, x_{i+1}]$.
	From the definition of $\cQ^*$ and $\bar{q}$,
	we have $f(x) > 0$ for any $x > \bar{q}$.
	Indeed,
	if $q(s) > \bar{q}$ then $q \notin \cQ^*$,
	which means that $\linner \Delta, q \rinner > 0$.
	Hence,
	$c \in \mathbb{R}$
	defined as
	\begin{align*}
		c = \inf \left\{ \frac{f(x_i)}{x_i - \bar{q}} \mid i \in [n], x_i > \bar{q} \right\}
	\end{align*}
	is positive.
	(When $\bar{q} = \bar{x}$,
	i.e.,
	when there is no $x_i > \bar{q}$,
	we let $c$ be an arbitrary positive number.)
	Further,
	as $f(x) \ge 0$ for all $x$ and $f(x)$ is affine in each interval $[x_i, x_{i+1}]$,
	we have
	$f(x) \ge c \max\{0, x - \bar{q} \}$ for all $x \le \bar{x}$.
	From this,
	we have
	\begin{align*}
		\linner \Delta, q \rinner
		=
		\linner \Delta_{<k}, q \rinner
		+
		\linner \Delta_{\ge k}, q \rinner
		\ge
		f(q(s))
		+
		\linner \Delta_{\ge k}, q \rinner
		\ge
		c \max \{0, q(s) - \bar{q} \}
		+
		\sum_{a \in A} \Delta(s,a) q(s,a).
	\end{align*}
\end{proof}

\section{Algorithm for MDPs with unknown transition}
\label{sec:unknown_transition_appendix}

In this section, we present the details for our best-of-both-worlds algorithm for MDPs with unknown transitions. Similar to \cite{jin2021best}, our algorithm proceeds in epochs. In each epoch, we execute FTRL using our novel loss estimator and the current empirical estimates of the transitions. At the end of the epoch, we update these empirical estimates. We refer the reader to Algorithm \ref{alg:unknown-transitions} for full details. In this section, we use $\E_{t}[\cdot]$ to denote the conditional expectation $\E[\cdot|\mathcal{F}_{t-1}]$, where $\mathcal{F}_{t-1}$ is the past filtration.

\subsection{Confidence set of the true transition}
In this section, we use the same confidence sets used by prior works \citet{jin2020learning,jin2021best}.

For each epoch $i$, we define the empirical transition $\bar P_i$ as:
\begin{equation}\label{eq:emp-transition-appendix}
    \bar P_i(s'|s,a)=\frac{m_i(s,a,s')}{m_i(s,a)},\; \forall(s,a,s')\in S_k\times A\times S_{k+1}, k=0,\ldots,L-1.
\end{equation}

For each epoch $i$, we define the confidence width $B_i$ as follows:
\begin{equation}\label{eq:conf-wid-appendix}
    B_i(s,a,s')=\min\left\{2\sqrt{\frac{\bar P_i(s'|s,a)\ln\left(\frac{T|S||A|}{\delta}\right)}{m_i(s,a)}+\frac{14\ln\left(\frac{T|S||A|}{\delta}\right)}{m_i(s,a)}},1\right\},
\end{equation}
where $\delta$ is some confidence parameter.

For each epoch $i$, the confidence set $\mathcal{P}_i$ of the true transition is defined as follows:
\begin{equation}\label{eq:conf-set-appendix}
    \mathcal{P}_i=\left\{\widehat P: \left|\widehat P(s'|s,a)-\widehat P_i(s'|s,a)\right|\leq B_i(s,a,s'),\forall(s,a,s')\in S_k\times A\times S_{k+1}, k<L\right\}.
\end{equation}
As shown in Lemma 2 of \cite{jin2020learning}, the true transition $P$ lies in the confidence $\mathcal{P}_i$ for all epoch $i$ with probability at least $1-4\delta$.

\subsection{Loss estimator and Regularizer}
We begin by presenting our novel loss estimator:
\begin{equation}
    \ell_t^u(s,a) = \frac{c_t \cdot \ind_t(s,a) + \left(1 - \pi_t(a \mid s) - c_t\right) \cdot \ind_t(s)\pi_t(a \mid s)}{u_t(s,a)} - \left(1 - \pi_t(a \mid s)\right), \label{eq:loss-estimator-unknown}
\end{equation}
where $u_t(s,a)$ denotes the upper occupancy measure of $(s,a)$ under policy $\pi_t$, and is defined as
\begin{equation}
    u_t(s,a) = \max_{\widehat{P} \in \mathcal{P}_{i(t)}} q^{\widehat{P}, \pi_t}(s,a), \label{eq:UOB}
\end{equation}
where $i(t)$ denotes the epoch to which round $t$ belongs. Note that $u_t(s,a)$ can be efficiently computed using the \textsc{Comp-UOB} procedure proposed in~\citep{jin2020learning}.

To build intuition for why this estimator enables best-of-both-worlds guarantees, we now consider the corresponding loss estimator in the setting with known transitions.
\begin{equation}\label{eq:loss-estimator-known}
    \ell_t^q(s,a)=\frac{c_t \cdot \ind_t(s,a)+(1-\pi_t(a|s)-c_t)\cdot \ind_t(s)\pi_t(a|s)}{q_t(s,a)}-(1-\pi_t(a|s))
\end{equation}
In the unknown transition case under semi-bandit feedback, \cite{jin2021best} considered the loss estimator $\frac{\ell_t(s,a)\cdot \ind_t(s,a)}{u_t(s,a)}$, which ensures that $\ell_t(s,a) - \E_t\left[\frac{\ell_t(s,a)\cdot \ind_t(s,a)}{u_t(s,a)}\right] \geq 0$ whenever $u_t(s,a) \geq q_t(s,a)$. This inequality plays a key role in establishing their best-of-both-worlds result.

In our setting, the role of $\ell_t(s,a)$ is played by the pseudo-loss $\bar{\ell}_t(s,a) := \E_t[\ell_t^q(s,a)]$. When $u_t(s,a) \geq q_t(s,a)$, we show an analogous inequality: $\bar{\ell}_t(s,a) - \E_t[\ell_t^u(s,a)] \geq 0$, which is similarly crucial for our analysis.

We begin by analyzing the pseudo-loss $\bar{\ell}_t(s,a)$ as follows:
\begin{align*}
    \bar{\ell}_t(s,a)
	&
	:=
	\E_t \left[ 
    {\ell}^q_t(s,a)
	\right]
	\\
	&
	=
	\E_t \left[ 
	\frac{c_t \cdot \ind_t(s,a) - c_t\cdot \ind_t(s)\pi_t(a|s)}{q_t(s,a)} 
	\right]
	+
	\E_t \left[ 
	\frac{(1 - \pi_t(a|s)) \ind_t(s)\pi_t(a|s)}{q_t(s,a)} 
	\right]
	- 
	(1 - \pi_t(a|s))
	\\
	&
	=
	Q^{\pi_t}(s,a)
	-
	V^{\pi_t}(s)
	+
	1 - \pi_t(a|s)
	- 
	(1 - \pi_t(a|s))
	\\
	&
	=
	Q^{\pi_t}(s,a)
	-
	V^{\pi_t}(s).
\end{align*}

By Lemma \ref{lem:PDL}, we have $\langle \bar{\ell}_t, q_t - q^* \rangle = \langle \ell_t, q_t - q^* \rangle$, implying that the pseudo-loss $\bar{\ell}_t$ can indeed play the role of $\ell_t$ in our setting.

Next, we compute the expectation of $\ell_t^u(s,a)$ as follows:
$$
\E_t \left[ \ell_t^u(s,a) \right]
=
\frac{q_t(s)}{u_t(s)} \left( Q^{\pi_t}(s,a) - V^{\pi_t}(s) + 1 - \pi_t(a|s) \right)
-
(1 - \pi_t(a|s)).
$$

To analyze this expression, observe that:
\begin{align*}
    Q^{\pi_t}(s,a) - V^{\pi_t}(s)
&=
Q^{\pi_t}(s,a) - \sum_{a'\in A} \pi_t(a' \mid s) Q^{\pi_t}(s,a')\\
&=
(1 - \pi_t(a|s)) Q^{\pi_t}(s,a) - \sum_{a' \ne a} \pi_t(a' \mid s) Q^{\pi_t}(s,a')\\
&\ge - (1 - \pi_t(a|s)), \tag{as $Q^{\pi_t}(s,a)\in[0,1]$}
\end{align*}
and thus:
$$
Q^{\pi_t}(s,a) - V^{\pi_t}(s) + 1 - \pi_t(a|s) \ge 0.
$$

Now, under the condition that $u_t(s) \ge q_t(s)$, we have $\frac{q_t(s)}{u_t(s)} \le 1$, and therefore:
$$
\begin{aligned}
\E_t \left[ \ell_t^u(s,a) \right]
&=
\frac{q_t(s)}{u_t(s)} \left( Q^{\pi_t}(s,a) - V^{\pi_t}(s) + 1 - \pi_t(a|s) \right)
- (1 - \pi_t(a|s)) \\
&\le
\left( Q^{\pi_t}(s,a) - V^{\pi_t}(s) + 1 - \pi_t(a|s) \right)
- (1 - \pi_t(a|s)) \\
&=
Q^{\pi_t}(s,a) - V^{\pi_t}(s) \\
&= \bar{\ell}_t(s,a).
\end{aligned}
$$
That is, $\ell_t^u(s,a)$ is an optimistic estimator of the pseudo-loss $\bar{\ell}_t(s,a)$, which plays a crucial role in our regret analysis.

In addition to bounding the expectation, we also analyze the second moment of the loss estimator. This analysis is facilitated by the careful introduction of the term $(1 - \pi_t(a \mid s))$ into our loss estimator, which yields the following bound:
$$
\begin{aligned}
	\E_t \left[ \ell_t^u(s,a)^2 \right]
	&\lesssim
	\E_t \left[ 
		\frac{1}{u_t(s,a)^2}
		\left(
			c_t^2 \cdot (\ind_t(s,a) - \ind_t(s)\pi_t(a \mid s))^2
			+
			(1 - \pi_t(a \mid s))^2 \cdot \ind_t(s) \pi_t(a \mid s)^2
		\right)
	\right]\\
	&\quad+
	(1 - \pi_t(a \mid s))^2 \\
	&\lesssim
	(1 - \pi_t(a \mid s))
	\left(
		\frac{q_t(s)}{u_t(s)} \cdot \frac{1}{u_t(s,a)}
		+
		1 - \pi_t(a \mid s)
	\right).
\end{aligned}
$$

Using this bound, we obtain a control on the stability term in the regret analysis:
$$
\E_t \left[ 
	\widehat{q}_t(s,a)^{3/2} \cdot \ell_t^u(s,a)^2
\right]
\lesssim
\widehat{q}_t(s,a)^{1/2} (1 - \pi_t(a \mid s)).
$$

This upper bound plays a crucial role in enabling a regret analysis based on self-bounding terms. In particular, it allows us to bypass the loss-shifting technique employed in \cite{jin2021best}, while still controlling the stability term.

To leverage this upper bound, we use the following regularizer in epoch $i$:
\begin{equation}\label{eq:reg-unknown-appendix}
\phi_t(q) = -\frac{1}{\eta_t} \sum_{s \neq s_L} \sum_{a \in A} \sqrt{q(s,a)}
- \beta \sum_{s \neq s_L} \sum_{a \in A} \ln q(s,a),
\end{equation}
where $\eta_t = \frac{1}{\sqrt{t - t_i + 1}}$ and $\beta = 1024L$. The log-barrier term is included to stabilize the updates, following the approach of \cite{jin2020simultaneously}.




\subsection{Main Result}
\begin{theorem}\label{thm:main_bandit-appendix}
In the bandit feedback setting,  Algorithm \ref{alg:unknown-transitions} with $\delta = \frac{1}{T^3}$ and $\iota=\frac{|S||A|T}{\delta}$ guarantees $\Reg_T(\pi^\star)= \tilde{\mathcal{O}}\rbr{  L|S|\sqrt{|A|T} + |S||A| \sqrt{LT} + L^2|S|^3|A|^2 }$ and simultaneously $\Reg_T(\pi^\star)=\order\rbr{U + \sqrt{UC}+ V }$ under Condition~\eqref{eq:self-bounding-MDP},   
	where $V =  L^2|S|^3|A|^2 \ln^2 \iota$ and $U$ is defined as
	\[
	U =  \sum_{s \neq s_L} \sum_{ a\neq \pi^\star(s)} \sbr{ \frac{L^4|S|\ln \iota +  |S||A|\ln^2 \iota}{\gap(s,a)}} + \sbr{\frac{(L^4|S|^2+L^3|S|^2|A|) \ln \iota +  L|S|^2|A| \ln^2 \iota}{\gapmin}}.
	\]
	\label{prop:bobw_bandit_stoc_prop-appendix}

\end{theorem}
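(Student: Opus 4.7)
The plan is to follow the FTRL-over-occupancy-measures paradigm of \citet{jin2021best}, but with the novel estimator $\ell_t^u$ and bonus $B_i$ replacing their estimator and loss-shifting trick. First I would condition on the good event $\mathcal E$ that $P \in \mathcal P_i$ for every epoch $i$, which holds with probability $1 - O(\delta) = 1 - O(1/T^3)$; the $O(1/T^2)$ failure contribution is absorbed into lower-order terms. On $\mathcal E$, I would use the decomposition
\begin{align*}
\Reg_T(\pi^*) = \E\!\left[\sum_{t=1}^T \!\langle q_t - q^*,\bar\ell_t\rangle\right]
  = \underbrace{\E\!\left[\sum_{t=1}^T \!\langle q_t - q^*,\bar\ell_t - \E_t[\ell_t^u - B_i]\rangle\right]}_{\text{Bias}}
  + \underbrace{\E\!\left[\sum_{t=1}^T \!\langle \widehat q_t - q^*,\widehat\ell_t\rangle\right]}_{\text{FTRL}}
  + \underbrace{\E\!\left[\sum_{t=1}^T \!\langle q_t - \widehat q_t,\widehat\ell_t\rangle\right]}_{\text{Polytope}},
\end{align*}
where the martingale difference $\E[\langle q_t - q^*, \widehat\ell_t - \E_t[\widehat\ell_t]\rangle]$ vanishes in expectation. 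The Bias term is controlled by combining the optimism inequality $\bar\ell_t(s,a) - \E_t[\ell_t^u(s,a)] \le \frac{u_t(s)-q_t(s)}{u_t(s)}(1 - \pi_t(a|s))$ established in the main text with the choice of $B_i$, so that $\E_t[\ell_t^u - B_i] \le \bar\ell_t$ and the remaining residual is dominated by $\E[\sum_t \langle q^*, B_i\rangle]$, bounded by the standard transition-estimation lemmas from \citet{jin2020learning}. The Polytope term is handled by the same confidence-width machinery.

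For the FTRL term I would apply Lemma~\ref{lem:stabTsallisLB} coordinate-wise to the Tsallis+log-barrier regularizer $\phi_t$, giving stability
\begin{align*}
\E[\mathrm{stab}_t] \lesssim \E\!\left[\eta_t \sum_{s\neq s_L,a} \widehat q_t(s,a)^{3/2}\,\ell_t^u(s,a)^2\right].
\end{align*}
The crucial step is invoking the second-moment bound of Lemma~\ref{lem:second-moment} together with the Bernoulli relation $\widehat q_t(s,a) = \pi_t(a|s) \widehat q_t(s)$ to conclude $\E_t[\widehat q_t(s,a)^{3/2}\ell_t^u(s,a)^2] \lesssim \widehat q_t(s,a)^{1/2}(1-\pi_t(a|s))(q_t(s)/u_t(s)+1)$. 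The penalty term is bounded by $\sum_{s,a}\sqrt{\widehat q_t(s,a)}/\eta_t + O(\beta \ln(T|S||A|))$, and analogous to Lemma~\ref{lem:penaTsallis-SP} the part over $a=\pi^*(s)$ telescopes against the stability sum, leaving a bound of the form $\sum_t \eta_t \sum_{s,a\neq \pi^*(s)} \sqrt{\widehat q_t(s,a)}$. For the adversarial guarantee I would simply apply Cauchy–Schwarz with $\sum_{s,a}\widehat q_t(s,a)\le L$, giving the $L|S|\sqrt{|A|T}$ and $|S||A|\sqrt{LT}$ terms, while the $L^2|S|^3|A|^2$ term comes from the log-barrier initialization cost and from summing the bonus-difference terms across epochs (bounded by the standard $|S||A|\log T$ epoch count times the confidence-width sum).

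For the BOBW guarantee under Condition~\eqref{eq:self-bounding-MDP}, I would apply the self-bounding technique in the style of \citet{zimmert2021tsallis,jin2021best}: the $(1-\pi_t(a|s))$ factor inside the stability bound is essential because it pairs $\sqrt{\widehat q_t(s,a)}$ only with suboptimal actions, letting me use AM–GM,
\begin{align*}
\sum_t \frac{1}{\sqrt{t-t_i+1}} \sqrt{\widehat q_t(s,a)}
\le \frac{1}{2\lambda}\sum_t \Delta(s,a) \widehat q_t(s,a) + \frac{\lambda \ln T}{2\Delta(s,a)},
\end{align*}
and then invoke $\sum_t \sum_{s,a\neq \pi^*(s)} \Delta(s,a)\widehat q_t(s,a) \le \Reg_T(\pi^*) + C + O(\text{transition error})$ to obtain an inequality of the form $\Reg_T(\pi^*) \lesssim U + \sqrt{U\,(\Reg_T(\pi^*)+C)} + V$, which rearranges to the claimed $U + \sqrt{UC} + V$. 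The transition-error contribution produces the $L^4|S|/\gap(s,a)$ and $L^4|S|^2/\gapmin$ terms in $U$, exactly as in the analysis of \citet{jin2021best}, while the Tsallis stability contributes the $|S||A|\ln^2\iota/\gap(s,a)$ term and $V$ absorbs the log-barrier and epoch-count overhead.

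The main obstacle, and the place where the novel estimator pays off, will be simultaneously (i) verifying pointwise optimism of $\E_t[\ell_t^u] - B_i$ with a bonus whose contribution scales as $L$ rather than $L^2$ (this is where avoiding loss-shifting matters, since prior analyses needed an extra factor of $L$ to absorb the shift), and (ii) handling the extra $q_t(s)/u_t(s)$ factor in the second moment — I expect to show $\E[\sum_t(1 - q_t(s)/u_t(s))(1-\pi_t(a|s))\widehat q_t(s,a)]$ is polylogarithmic by summing confidence widths over the standard $O(|S||A|\log T)$ epoch counter, so that this factor contributes only to the $V$ and lower-order terms and does not inflate the leading $U$ dependence.
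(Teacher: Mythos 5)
Your proposal follows the same high-level strategy as the paper — good event $\mathcal{A}$, optimistic estimator $\ell_t^u - B_i$, Tsallis + log-barrier FTRL over $\Omega(\bar P_i)$, the $(1-\pi_t(a|s))$ factor in the second moment replacing loss-shifting, and the self-bounding argument — but the three-way decomposition you write down has a concrete gap. You split the regret into $\mathrm{Bias}+\mathrm{FTRL}+\mathrm{Polytope}$, where your FTRL piece is $\E[\sum_t\langle \widehat q_t - q^*,\widehat\ell_t\rangle]$ with $q^* = q^{P,\pi^*}$. That quantity is not what the FTRL lemma controls: the iterate $\widehat q_t$ is optimized over $\Omega(\bar P_{i(t)})$, so FTRL only bounds the regret against comparators in $\Omega(\bar P_{i(t)})$, i.e.\ against $q^{\bar P_{i(t)},\pi^*}$. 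Your $\mathrm{Polytope}$ term handles the learner-side discrepancy $\langle q_t - \widehat q_t,\widehat\ell_t\rangle$ but there is no term covering the comparator-side discrepancy $\langle q^{\bar P_{i(t)},\pi^*} - q^*,\widehat\ell_t\rangle$. You would need to split $\mathrm{FTRL}$ into $\E[\sum_t\langle \widehat q_t - q^{\bar P_{i(t)},\pi^*},\widehat\ell_t\rangle]$ (genuine FTRL regret) plus this comparator-transfer term. The paper avoids the issue entirely by decomposing through the estimated value function $\widetilde V$, so that its $\textsc{EstReg}$ term is already $\sum_t\langle q^{\bar P_{i(t)},\pi_t} - q^{\bar P_{i(t)},\pi},\widehat\ell_t\rangle$ with both measures in the right polytope, and the comparator-transfer residual is packaged into $\textsc{Err}_2$, where the crucial point is that $\widetilde V_t^{\pi}(s_0)\le \bar V_t^{\pi}(s_0)$ on $\mathcal A$ (Lemma~\ref{lem:tQ<=bQ-c.1.1}) so $\textsc{Err}_2$ is nonpositive up to the failure term. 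Your decomposition, once corrected, would have to establish the analogous one-sided control on the comparator term, which is precisely what the optimism of $\widetilde Q$ buys; as written this is unaddressed.

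A smaller point: the $q_t(s)/u_t(s)+1$ factor in the second-moment bound does not need to be shown polylogarithmic. On $\mathcal A$ we have $u_t(s)\ge q_t(s)$, so $q_t(s)/u_t(s)+1\le 2$ deterministically, and the extra factor is just a constant. The quantity that genuinely needs a confidence-width/epoch-counter argument is the \emph{bias} residual $\frac{u_t(s)-q_t(s)}{u_t(s)}(1-\pi_t(a|s))$ weighted by $q_t(s,a)$, not the second moment; you appear to have transplanted the concern from one place to the other. The rest of the outline — stability control via $\widehat q_t(s,a)^{3/2}\ell_t^u(s,a)^2\lesssim \widehat q_t(s,a)^{1/2}(1-\pi_t(a|s))$, telescoping the $a=\pi^*(s)$ penalty contribution, and AM-GM plus the self-bounding condition — matches Lemma~\ref{lem:bobw_bandit_tsallis_reg} and the $\mathbb Q_1,\ldots,\mathbb Q_6$ bookkeeping, though the detailed accounting that produces the $L^4$ powers (via $\mathbb Q_3(L^2\ln\iota)$ in $\textsc{OccDiff}$ and the $L^2|S|$ scaling in $\textsc{ErrSub}/\textsc{ErrOpt}$) would have to be reproduced to reach the stated constants.
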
 

We defer the proof of the above theorem to Appendix \ref{sec:analysis-bobw-unknown}.

\section{Analysis of BOBW with unknown transitions}\label{sec:analysis-bobw-unknown}
For any time-step $t$, let $i(t)$ denote the epoch the time-step $t$ is part of. Let $\E_{t}[\cdot]:=\E[\cdot|\mathcal{F}_{t-1}]$ be the conditional expectation, where $\mathcal{F}_{t-1}$ is the past filtration. Recall the definition of $\ell_t^u$ and $\ell_t^q$ from Eq. \eqref{eq:loss-estimator-unknown} and Eq. \eqref{eq:loss-estimator-known} respectively. Also recall that $\widehat{\ell}_t=\ell_t^u-B_{i(t)}$. Let $\bar \ell_t$ be a pseudo-loss such that $\bar\ell_t(s,a):=\E_t[\ell_t^q(s,a)]=Q^{\pi_t}(s,a)-V^{\pi_t}(s)$. 


Now we define the conditional expectation of $\widehat{\ell}_t$ as follows:
\begin{equation}\label{eq:cond-exp-loss}
    \tilde{\ell}_t(s,a):=\E_t[\widehat{\ell}_t(s,a)]=\frac{q_t(s)}{u_t(s)}
	\left(
		Q^{\pi_t}(s,a)
		-
		V^{\pi_t}(s)
		+
		1 - \pi_t(a|s)
	\right)
	-
	(1 - \pi_t(a|s))-B_{i(t)}(s,a).
\end{equation}

\begin{definition}
    For any policy $\pi$, the estimated state-action and state value functions associated with $\bar{P}_{i(t)}$ and loss function $\tilde{\ell}_t$ are defined as:
    \begin{align*}
    \widetilde{Q}^{\pi}_t(s,a) &= \tilde{\ell}_t(s,a) + \sum_{s' \in \mathcal{S}_{K(s)+1}} \bar{P}_{i(t)}(s'|s,a) \widetilde{V}^{\pi}_t(s'), \quad \forall (s,a) \in (S \setminus \{s_L\}) \times A, \\
    \widetilde{V}^{\pi}_t(s) &= \sum_{a \in A} \pi(a|s) \widetilde{Q}^{\pi}_t(s,a), \quad \forall s \in S, \\
    \widetilde{Q}^{\pi}_t(s_L, a) &= 0, \quad \forall a \in A.
    \end{align*}

    On the other hand, the true state-action and value functions are defined as:
    \begin{align*}
    Q^{\pi}_t(s,a) &= \ell_t(s,a) + \sum_{s' \in \mathcal{S}_{K(s)+1}} P(s'|s,a)V^{\pi}_t(s'), \quad \forall (s,a) \in (S \setminus \{s_L\}) \times A, \\
    V^{\pi}_t(s) &= \sum_{a \in A} \pi(a|s)Q^{\pi}_t(s,a), \quad \forall s \in S, \\
    Q^{\pi}_t(s_L, a) &= 0, \quad \forall a \in A.
\end{align*}
where $P$ denotes the true transition function.

Moreover, we define pseudo state-action and value functions as follows:
\begin{align*}
    \bar{Q}^{\pi}_t(s,a) &= \bar\ell_t(s,a) + \sum_{s' \in \mathcal{S}_{K(s)+1}} P(s'|s,a)\bar{V}^{\pi}_t(s'), \quad \forall (s,a) \in (S \setminus \{s_L\}) \times A, \\
    \bar{V}^{\pi}_t(s) &= \sum_{a \in A} \pi(a|s)\bar{Q}^{\pi}_t(s,a), \quad \forall s \in S, \\
    \bar{Q}^{\pi}_t(s_L, a) &= 0, \quad \forall a \in A.
\end{align*}

\end{definition}

Let $\mathcal{A}$ be the event that $P\in \mathcal{P}_i$ for all epochs $i\geq 1$. Moreover, we also define $\mathcal{A}_i$ to be the event $P \in \mathcal{P}_i$.Note that the value of $\mathbbm{1}\{\mathcal{A}_i\}$ gets determined based on observations prior to epoch $i$ only. Let $\iota = \frac{T|\mathcal{S}||\mathcal{A}|}{\delta}$ and let $\delta=\frac{1}{T^3} \in (0,1)$.

 We decompose the regret against policy $\pi$ as follows:
\begin{align}
    \mathrm{Reg}(\pi) &= \E \left[ \sum_{t=1}^T V_t^{\pi_t}(s_0) - V_t^{\pi}(s_0) \right] \notag \\
    &= \E \left[ \sum_{t=1}^T \bar{V}_t^{\pi_t}(s_0) - \bar{V}_t^{\pi}(s_0) \right] \tag{due to Lemma \ref{lem:PDL}} \\
    &= \E\Biggsbr{\underbrace{\sum_{t=1}^{T} \bar{V}^{\pi_t}_t(s_0) - \widetilde{V}^{\pi_t}_t(s_0)}_{\textsc{Err}_1 }}  + \E\Biggsbr{\underbrace{\sum_{t=1}^{T} \widetilde{V}^{\pi_t}_t(s_0) - \widetilde{V}^{\pi}_t(s_0) }_{\textsc{EstReg} }}  + \E\Biggsbr{\underbrace{\sum_{t=1}^{T}\widetilde{V}^{\pi}_t(s_0) - \bar{V}^{\pi}_t(s_0) }_{\textsc{Err}_2}}. \label{st:regret-decomp-49}
\end{align}

\noindent Note that, the second term (restated below) is controlled by the FTRL process.
\begin{align}
    \E[\textsc{EstReg}] = \E \left[ \sum_{t=1}^T \left\langle q^{\bar{P}_{i(t)},\pi_t} - q^{\bar{P}_{i(t)},\pi}, \tilde{\ell}_t \right\rangle \right]
    = \E \left[ \sum_{t=1}^T \left\langle q^{\bar{P}_{i(t)},\pi_t} - q^{\bar{P}_{i(t)},\pi}, \widehat{\ell}_t \right\rangle \right].
\end{align}
 
\subsection{Auxiliary lemmas}
We often use the following lemma to handle the small-probability event $\calA^c$ while taking the expectation.
\begin{lemma}[\cite{jin2021best}]\label{lem:exp_high_prob_bound_cond}  Suppose that a random variable $X$ satisfies the following conditions:
	\begin{itemize}
		\item Conditioning on event $\calE$, $ X < Y$ where $Y>0$ is another random variable;
		\item $ X < C $ holds where $C$ is another random variable which ensures $\E\sbr{C|\calE^c } \leq D$ for some fixed $D\in \mathbb{R}_{+}$. 
	\end{itemize}
	Then, we have
	\begin{align*}
	\E\sbr{ X } \leq D \cdot \Pr\sbr{ \calE^c  } + \E\sbr{Y}.   
	\end{align*}
\end{lemma}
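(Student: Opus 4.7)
The plan is to prove the bound by splitting the expectation of $X$ along the event $\calE$ and its complement: $\E[X] = \E[X \cdot \ind\{\calE\}] + \E[X \cdot \ind\{\calE^c\}]$, and then to control each of the two pieces by exactly one of the two hypotheses. This matches the structure of the statement, where the right-hand side has one term of the form $\E[Y]$ (matched to $\calE$) and one of the form $D \cdot \Pr[\calE^c]$ (matched to $\calE^c$).

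For the first piece, I would use the hypothesis that on $\calE$ we have $X < Y$, which gives $X \cdot \ind\{\calE\} \le Y \cdot \ind\{\calE\}$. Since $Y > 0$, it further follows that $Y \cdot \ind\{\calE\} \le Y$, and taking expectation yields $\E[X \cdot \ind\{\calE\}] \le \E[Y]$. The positivity assumption on $Y$ is precisely what allows us to drop the indicator and replace $\E[Y \cdot \ind\{\calE\}]$ by the unconditional $\E[Y]$ that appears on the right-hand side of the claim.

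For the second piece, I would use the unconditional bound $X < C$, which in particular gives $X \cdot \ind\{\calE^c\} \le C \cdot \ind\{\calE^c\}$. Taking expectation and using the tower property together with the hypothesis $\E[C \mid \calE^c] \le D$ gives $\E[X \cdot \ind\{\calE^c\}] \le \E[C \cdot \ind\{\calE^c\}] = \E[C \mid \calE^c] \cdot \Pr[\calE^c] \le D \cdot \Pr[\calE^c]$. Adding the two bounds yields exactly $\E[X] \le \E[Y] + D \cdot \Pr[\calE^c]$.

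There is essentially no serious obstacle here; the lemma is a short conditional-expectation decomposition. The only subtle point worth flagging is that the positivity of $Y$ is genuinely needed in the first step, since otherwise $\E[Y \cdot \ind\{\calE\}]$ need not be bounded above by $\E[Y]$, and the bookkeeping between the two conditional pieces would break.
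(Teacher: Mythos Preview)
Your proposal is correct and is the standard decomposition argument for this kind of statement. The paper does not provide its own proof of this lemma; it is stated with a citation to \cite{jin2021best} and used as a black box, so there is nothing to compare against beyond noting that your argument is exactly the natural one.
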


We next restate the performance difference lemma.
\begin{lemma}[Performance difference lemma]\label{lem:perf-diff}
Suppose $\bar{\ell}$ is defined by $\bar{\ell}(s, a) = Q^{\pi}(s, a; \ell) - V^{\pi}(s; \ell)$ for some $\pi \in \Pi$ and for all $s \in S$ and $a \in A$. We then have
\begin{align}
    V^{\pi'}(s; \bar{\ell}) &= V^{\pi'}(s; \ell) - V^{\pi}(s; \ell), \quad
    Q^{\pi'}(s, a; \bar{\ell}) = Q^{\pi'}(s, a; \ell) - V^{\pi}(s; \ell),
\end{align}
for any $\pi' \in \Pi$, $s \in S$ and $a \in A$.
\end{lemma}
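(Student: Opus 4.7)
My plan is to prove the identity by backward induction on the layer index $k$ of the state $s$, exploiting the linearity of the Bellman recursion. The base case is $s = s_L$, where by convention $V^{\pi'}(s_L;\bar\ell) = V^{\pi'}(s_L;\ell) = V^{\pi}(s_L;\ell) = 0$ and similarly all $Q$-values vanish for any action, so both identities hold trivially. It suffices to establish the $Q$-identity in the inductive step, since the $V$-identity then follows by averaging over $\pi'(\cdot|s)$: $V^{\pi'}(s;\bar\ell) = \sum_a \pi'(a|s) Q^{\pi'}(s,a;\bar\ell) = \sum_a \pi'(a|s)(Q^{\pi'}(s,a;\ell) - V^\pi(s;\ell)) = V^{\pi'}(s;\ell) - V^\pi(s;\ell)$, using $\sum_a \pi'(a|s) = 1$ to pull the $V^\pi(s;\ell)$ term out of the sum.

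For the inductive step at a state $s \in S_k$ with $k < L$, I would expand the $Q$-function of $\pi'$ under $\bar\ell$ via the Bellman recursion $Q^{\pi'}(s,a;\bar\ell) = \bar\ell(s,a) + \sum_{s' \in S_{k+1}} P(s'|s,a) V^{\pi'}(s';\bar\ell)$. Next I would substitute the definition $\bar\ell(s,a) = Q^\pi(s,a;\ell) - V^\pi(s;\ell)$, and apply the induction hypothesis $V^{\pi'}(s';\bar\ell) = V^{\pi'}(s';\ell) - V^\pi(s';\ell)$ for each $s' \in S_{k+1}$. Expanding $Q^\pi(s,a;\ell) = \ell(s,a) + \sum_{s'} P(s'|s,a) V^\pi(s';\ell)$, the two copies of $\sum_{s'} P(s'|s,a) V^\pi(s';\ell)$ cancel with opposite signs, leaving $\ell(s,a) + \sum_{s'} P(s'|s,a) V^{\pi'}(s';\ell) - V^\pi(s;\ell) = Q^{\pi'}(s,a;\ell) - V^\pi(s;\ell)$, as desired.

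This proof is essentially a routine verification that the affine shift $V^\pi(s;\ell)$ of $Q$-values (which depends only on $s$, not on $a$ or the policy being evaluated) propagates consistently through the Bellman operator, made possible by the layered structure which ensures the induction is finite and well-founded. There is no real obstacle: the only thing to be careful about is ordering the substitutions so that the $V^\pi(s';\ell)$ terms telescope cleanly, and ensuring the layered MDP assumption is used to restrict the sum over $s'$ to $S_{k+1}$ so that the induction hypothesis applies.
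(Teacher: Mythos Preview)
Your proposal is correct and essentially identical to the paper's own proof of this lemma (stated as Lemma~\ref{lem:PDL}): both proceed by backward induction on the layer index, establish the $Q$-identity by expanding the Bellman recursion, substituting the definition of $\bar\ell$ and the induction hypothesis on $V^{\pi'}(s';\bar\ell)$, and telescoping the $V^\pi(s';\ell)$ terms, then obtain the $V$-identity by averaging over $\pi'(\cdot|s)$.
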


We immediately get following corollary.
\begin{corollary}\label{cor:barV<=1}
    $\forall (s,a)\in S\times A$, we have $-1\leq \bar{V}^\pi_t(s)\leq 1$ and $-1\leq \bar{Q}^\pi_t(s,a)\leq 1$.
\end{corollary}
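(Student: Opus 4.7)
The plan is to derive the bounds as an immediate consequence of the performance difference lemma (Lemma~\ref{lem:perf-diff}) applied to the pseudo-loss $\bar{\ell}_t$. Recall that $\bar{\ell}_t(s,a) = Q^{\pi_t}(s,a;\ell_t) - V^{\pi_t}(s;\ell_t)$, which exactly matches the hypothesis of Lemma~\ref{lem:perf-diff} with the reference policy being $\pi_t$ and the underlying loss being $\ell_t$. Since $\bar{V}^{\pi}_t$ and $\bar{Q}^{\pi}_t$ are defined using the true transition $P$ and the pseudo-loss $\bar{\ell}_t$, applying the lemma gives
\begin{align*}
\bar{V}^{\pi}_t(s) &= V^{\pi}_t(s) - V^{\pi_t}_t(s), \quad \bar{Q}^{\pi}_t(s,a) = Q^{\pi}_t(s,a) - V^{\pi_t}_t(s),
\end{align*}
where $V^{\pi}_t$ and $Q^{\pi}_t$ are the true value and state-action value functions under $P$ and $\ell_t$.

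Next, I would invoke the standing assumption from Section~\ref{sec:setup} that $\ell_t$ is chosen so that the aggregate loss $\sum_{k=0}^{L-1}\ell_t(s_k,a_k) \in [0,1]$ for any possible trajectory, which is equivalent to saying $V^{\pi'}_t(s) \in [0,1]$ and $Q^{\pi'}_t(s,a) \in [0,1]$ for all policies $\pi'$, states $s$, and actions $a$. In particular both $V^{\pi}_t(s), V^{\pi_t}_t(s) \in [0,1]$ and $Q^{\pi}_t(s,a) \in [0,1]$, so each difference above lies in $[-1,1]$, which is precisely the claim.

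Since the proof is essentially a two-line consequence of already-stated results, I do not expect any obstacles; the only care needed is to verify the hypothesis of Lemma~\ref{lem:perf-diff} matches the construction of $\bar{\ell}_t$ (which it does by definition) and that $Q^{\pi_t}_t, V^{\pi_t}_t \in [0,1]$ follows from the aggregate-loss normalization rather than from a per-step loss bound.
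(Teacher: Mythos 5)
Your proposal is correct and takes the same route the paper intends: the corollary is stated immediately after restating the performance difference lemma (Lemma~\ref{lem:perf-diff}) with the remark "We immediately get the following corollary," which is precisely your two-line argument of applying the lemma with $\pi=\pi_t$, $\ell=\ell_t$ and then invoking $V^{\pi'}_t,\,Q^{\pi'}_t\in[0,1]$ from the aggregate-loss normalization. The only slight overstatement is calling the aggregate-loss assumption "equivalent to" $V^{\pi'}_t(s),Q^{\pi'}_t(s,a)\in[0,1]$ — it is an implication (using non-negativity of $\ell_t$ and reachability in the layered MDP) — but this does not affect the proof.
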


We next state the following lemma.
\begin{lemma}\label{lem:induction-probability}
    If event $\mathcal{A}$ holds, then $\sum_{s' \in S_{k(s)+1}} \rbr{ \bar{P}_{i(t)}(s'|s,a)  - P(s'|s,a) } \bar{V}_t^{\pi}(s') - B_{i(t)}(s,a)\leq 0$.
\end{lemma}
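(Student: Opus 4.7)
\textbf{Proof proposal for Lemma~\ref{lem:induction-probability}.} The plan is to show that, under event $\mathcal{A}$, the signed quantity $\sum_{s' \in S_{k(s)+1}} (\bar{P}_{i(t)}(s'|s,a) - P(s'|s,a)) \bar{V}_t^{\pi}(s')$ is upper-bounded simultaneously by $\sum_{s' \in S_{k(s)+1}} B_{i(t)}(s,a,s')$ and by the constant $2$. Since $B_{i(t)}(s,a) = \min\{2,\, \sum_{s' \in S_{k(s)+1}} B_{i(t)}(s,a,s')\}$ by definition, taking the minimum of the two bounds yields exactly the claim.

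First I would invoke Corollary~\ref{cor:barV<=1}, which gives $|\bar{V}_t^{\pi}(s')| \le 1$ for every $s'$. Then, under event $\mathcal{A}$, the definition of the confidence set $\mathcal{P}_{i(t)}$ in \eqref{eq:conf-set-appendix} guarantees $|\bar{P}_{i(t)}(s'|s,a) - P(s'|s,a)| \le B_{i(t)}(s,a,s')$ for all $s'\in S_{k(s)+1}$. Combining these via the triangle inequality,
\begin{align*}
\sum_{s' \in S_{k(s)+1}} \bigl( \bar{P}_{i(t)}(s'|s,a) - P(s'|s,a) \bigr) \bar{V}_t^{\pi}(s')
&\le \sum_{s' \in S_{k(s)+1}} \bigl| \bar{P}_{i(t)}(s'|s,a) - P(s'|s,a) \bigr| \cdot \bigl| \bar{V}_t^{\pi}(s') \bigr| \\
&\le \sum_{s' \in S_{k(s)+1}} B_{i(t)}(s,a,s').
\end{align*}
Second, independently of $\mathcal{A}$, since $\bar{P}_{i(t)}(\cdot|s,a)$ and $P(\cdot|s,a)$ are probability distributions and $\|\bar{V}_t^{\pi}\|_\infty \le 1$, the same left-hand side is bounded by $\sum_{s'} (\bar{P}_{i(t)}(s'|s,a) + P(s'|s,a)) = 2$.

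Taking the minimum of the two upper bounds and recalling the definition of $B_{i(t)}(s,a)$ from Algorithm~\ref{alg:unknown-transitions} finishes the proof. I do not anticipate a real obstacle here: everything reduces to applying the confidence-set condition on a per-next-state basis together with the a priori bound on $\bar{V}_t^{\pi}$ given by Corollary~\ref{cor:barV<=1}; the only thing to be careful about is that $\bar{V}_t^{\pi}$ may be negative (because the pseudo-loss $\bar{\ell}_t$ can be negative), which is exactly why working with absolute values in the triangle-inequality step—rather than any one-sided optimistic argument—is the cleanest route.
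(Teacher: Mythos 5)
Your proposal is correct and is essentially the same argument as the paper's, just reorganized. The paper proves the claim by a case split on whether $B_{i(t)}(s,a) = 2$ or $B_{i(t)}(s,a) = \sum_{s'} B_{i(t)}(s,a,s')$, establishing in each case that the corresponding bound on $\sum_{s'} \bigl( \bar{P}_{i(t)}(s'|s,a) - P(s'|s,a) \bigr) \bar{V}_t^{\pi}(s')$ suffices; you instead derive both upper bounds at once (one from the confidence-set condition plus $|\bar{V}_t^{\pi}| \le 1$, the other from $\bar{P}, P$ being probability distributions plus $|\bar{V}_t^{\pi}| \le 1$) and then observe that the left-hand side is therefore at most the minimum, which is $B_{i(t)}(s,a)$. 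The two presentations are logically equivalent and rest on the same two facts: Corollary~\ref{cor:barV<=1} and the definition of $\mathcal{A}$.
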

\begin{proof}
    When $B_{i(t)}(s,a) = 2$, we have 
	\begin{align*}
    &\sum_{s' \in S_{k(s)+1}} \rbr{ \bar{P}_{i(t)}(s'|s,a)  - P(s'|s,a) }\bar{V}_t^{\pi}(s') - B_{i(t)}(s,a)\\
     &\leq \sum_{s' \in S_{k(s)+1}} \bar{P}_{i(t)}(s'|s,a) \cdot 1+\sum_{s' \in S_{k(s)+1}} P_{i(t)}(s'|s,a) \cdot 1 - 2 = 0 ,
	\end{align*}
	where the inequality follows from the fact $-1\leq \bar{V}_t^{\pi}(s') \leq 1 $. 
	
	On the other hand, when $\sum_{s' \in S_{k(s)+1}}B_{i(t)}(s,a,s') = B_{i(t)}(s,a)$, we have 
	\begin{align*}
	& \sum_{s' \in S_{k(s)+1}} \rbr{ \bar{P}_{i(t)}(s'|s,a)  - P(s'|s,a) } \bar{V}_t^{\pi}(s') - B_{i(t)}(s,a)  \\
	& \leq \sum_{s' \in S_{k(s)+1}}  B_{i(t)}(s,a,s') \cdot 1 - B_{i(t)}(s,a) = 0,
	\end{align*}
	where the second line follows from the definition of event $\calA$. 
	 
\end{proof}
We next state the following proposition
\begin{proposition}\label{prop:pseudo-loss}
    For all $(s,a)\in S\times A$, we have $0\leq Q^{\pi_t}(s,a)
	-
	V^{\pi_t}(s)
	+
	1 - \pi_t(a|s)
	\leq
	2$.
\end{proposition}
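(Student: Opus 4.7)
The plan is to verify both bounds by exploiting that $Q^{\pi_t}(s,a), V^{\pi_t}(s) \in [0,1]$, which follows from the assumption that the aggregate loss lies in $[0,1]$ for every trajectory together with the convention $\ell_t(s_L, a) = 0$. These bounds on $Q$ and $V$ are essentially immediate from the backward recursion defining them.

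For the upper bound, I would simply bound each summand: $Q^{\pi_t}(s,a) \leq 1$, $-V^{\pi_t}(s) \leq 0$, and $1 - \pi_t(a|s) \leq 1$, giving $Q^{\pi_t}(s,a) - V^{\pi_t}(s) + 1 - \pi_t(a|s) \leq 2$.

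For the lower bound, I would follow the same rewriting already used in Section~\ref{sec:estimator} when verifying optimism of $\ell_t^u$. Writing out $V^{\pi_t}(s) = \sum_{a' \in A} \pi_t(a'|s) Q^{\pi_t}(s,a')$ and isolating the $a' = a$ term gives
\begin{align*}
Q^{\pi_t}(s,a) - V^{\pi_t}(s)
&= (1 - \pi_t(a|s)) Q^{\pi_t}(s,a) - \sum_{a' \neq a} \pi_t(a'|s) Q^{\pi_t}(s,a') \\
&\geq 0 - \sum_{a' \neq a} \pi_t(a'|s) \cdot 1 = -(1 - \pi_t(a|s)),
\end{align*}
where the inequality uses $Q^{\pi_t}(s,a) \geq 0$ and $Q^{\pi_t}(s,a') \leq 1$. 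Adding $1 - \pi_t(a|s)$ to both sides yields the claimed nonnegativity.

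There is no real obstacle here: both inequalities are elementary consequences of the range of $Q^{\pi_t}$, and the key algebraic step (the lower bound) is exactly the one already spelled out in the main text when motivating the optimistic estimator $\ell_t^u$. The proposition is really a self-contained restatement of that observation, packaged as a lemma for reuse in the subsequent regret analysis.
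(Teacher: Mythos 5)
Your proof is correct and follows essentially the same approach as the paper: the lower bound comes from the same rewrite of $Q^{\pi_t}(s,a) - V^{\pi_t}(s)$ isolating the $a'=a$ term and using $Q^{\pi_t}\in[0,1]$, and the upper bound comes from the same trivial termwise bounds. The paper's proof is a near-verbatim match of what you wrote.
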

\begin{proof}
     As we have
\begin{align}
	Q^{\pi_t}(s,a)
	-
	V^{\pi_t}(s)
	&
	=
	Q^{\pi_t}(s,a)
	-
	\sum_{a'} \pi_t(a'|s) Q^{\pi_t}(s,a')
	\\
	&
	=
	(1 - \pi_t(a|s))Q^{\pi_t}(s,a)
	-
	\sum_{a' \neq a} \pi_t(a'|s) Q^{\pi_t}(s,a')
	\ge
	- (1 - \pi_t(a|s)),
\end{align}
we get
$
	Q^{\pi_t}(s,a)
	-
	V^{\pi_t}(s)
	+
	1 - \pi_t(a|s)
	\ge
	0
$.

As $0\leq Q^{\pi_t}(s,a)\leq 1$, we also have $Q^{\pi_t}(s,a)
	-
	V^{\pi_t}(s)
	+
	1 - \pi_t(a|s)
	\leq Q^{\pi_t}(s,a)+1\leq
	2$.
\end{proof}
We next state the following proposition.
\begin{proposition}\label{prop:tilde<=bar-ell}
    If event $\mathcal{A}$ holds, $\tilde{\ell}_t(s,a)\leq \bar{\ell}_t(s,a)-B_{i(t)}(s,a)$ for all $(s,a)\in S\times A$.
\end{proposition}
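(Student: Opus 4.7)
The plan is to unpack the definitions of $\tilde{\ell}_t(s,a)$ and $\bar{\ell}_t(s,a)$, cancel the common $-B_{i(t)}(s,a)$ term, and reduce the claim to the scalar inequality $\tfrac{q_t(s)}{u_t(s)} X \leq X$ where $X := Q^{\pi_t}(s,a) - V^{\pi_t}(s) + 1 - \pi_t(a|s)$. From~\eqref{eq:cond-exp-loss} and the definition of $\bar{\ell}_t$, the claimed inequality $\tilde{\ell}_t(s,a) \le \bar{\ell}_t(s,a) - B_{i(t)}(s,a)$ is equivalent to
\[
\tfrac{q_t(s)}{u_t(s)} X - (1 - \pi_t(a|s)) \;\le\; X - (1 - \pi_t(a|s)),
\]
i.e., $\tfrac{q_t(s)}{u_t(s)} X \le X$.

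First I would invoke Proposition~\ref{prop:pseudo-loss} to get $X \ge 0$. Next, I would argue that under event $\mathcal{A}$ we have $u_t(s) \ge q_t(s)$, so $\tfrac{q_t(s)}{u_t(s)} \le 1$. This follows because on $\mathcal{A}$ the true transition $P$ lies in the confidence set $\mathcal{P}_{i(t)}$, so $q_t(s) = q^{P,\pi_t}(s) = \sum_{a} q^{P,\pi_t}(s,a) \le \sum_a \max_{\widehat P \in \mathcal{P}_{i(t)}} q^{\widehat P, \pi_t}(s,a) = \sum_a u_t(s,a) = u_t(s)$, by the definition of $u_t$ in~\eqref{eq:UOB}. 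Combining $X \ge 0$ with $\tfrac{q_t(s)}{u_t(s)} \le 1$ immediately yields the desired scalar inequality, which gives the claim.

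There is no serious obstacle here: both ingredients (nonnegativity of $X$ and the upper-confidence property $u_t(s) \ge q_t(s)$) have already been established in the excerpt. The only mild subtlety is to verify that the layered-sum inequality $u_t(s) \ge q_t(s)$ follows from the state-action version $u_t(s,a) \ge q_t(s,a)$; this is immediate since both $u_t(s)$ and $q_t(s)$ are defined by summing the corresponding state-action quantities over $a\in A$, and the pointwise bound $u_t(s,a) \ge q^{P,\pi_t}(s,a)$ holds under $\mathcal{A}$ by definition of the maximum in~\eqref{eq:UOB}.
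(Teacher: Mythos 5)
Your proposal is correct and follows the same route as the paper's proof: both cancel the common $-B_{i(t)}(s,a)$ and $-(1-\pi_t(a|s))$ terms, invoke Proposition~\ref{prop:pseudo-loss} for nonnegativity of $Q^{\pi_t}(s,a)-V^{\pi_t}(s)+1-\pi_t(a|s)$, and then use $\tfrac{q_t(s)}{u_t(s)}\le 1$ under event $\mathcal{A}$. The only difference is that you spell out why $u_t(s)\ge q_t(s)$ on $\mathcal{A}$ (via $P\in\mathcal{P}_{i(t)}$ and the definition of $u_t$ in~\eqref{eq:UOB}), a step the paper leaves implicit.
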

\begin{proof}
    The following holds by Proposition \ref{prop:pseudo-loss}, given that the event $\mathcal{A}$ occurs:
    \begin{align*}
        \tilde{\ell}_t(s,a)&=\frac{q_t(s)}{u_t(s)}
	\left(
		Q^{\pi_t}(s,a)
		-
		V^{\pi_t}(s)
		+
		1 - \pi_t(a|s)
	\right)
	-
	(1 - \pi_t(a|s))-B_{i(t)}(s,a)\\
        &\leq \left(
		Q^{\pi_t}(s,a)
		-
		V^{\pi_t}(s)
		+
		1 - \pi_t(a|s)
	\right)
	-
	(1 - \pi_t(a|s))-B_{i(t)}(s,a)\\
    &=\bar{\ell}_t(s,a)-B_{i(t)}(s,a).
    \end{align*}
\end{proof}

We next state the following proposition.
\begin{proposition}\label{prop:-3<=tilde-ell<=1}
    If event $\mathcal{A}$ holds, $-3\leq \tilde\ell_t(s,a)\leq 1$ for all $(s,a)\in S\times A$.
\end{proposition}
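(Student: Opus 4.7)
The plan is to establish the upper and lower bounds separately, using the two intermediate propositions already proved in the excerpt together with a few elementary observations about the sizes of the various factors appearing in the formula
\[
\tilde{\ell}_t(s,a) = \frac{q_t(s)}{u_t(s)}\bigl(Q^{\pi_t}(s,a)-V^{\pi_t}(s)+1-\pi_t(a|s)\bigr) - (1-\pi_t(a|s)) - B_{i(t)}(s,a).
\]

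For the upper bound $\tilde{\ell}_t(s,a)\le 1$, I would simply invoke Proposition~\ref{prop:tilde<=bar-ell}, which gives $\tilde{\ell}_t(s,a)\le \bar{\ell}_t(s,a) - B_{i(t)}(s,a)$ under event $\mathcal{A}$. Since $\bar{\ell}_t(s,a) = Q^{\pi_t}(s,a) - V^{\pi_t}(s)$ with $Q^{\pi_t}(s,a)\in[0,1]$ and $V^{\pi_t}(s)\ge 0$, we have $\bar{\ell}_t(s,a)\le 1$, and $B_{i(t)}(s,a)\ge 0$ by construction, so the inequality follows.

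For the lower bound $\tilde{\ell}_t(s,a)\ge -3$, I would argue that under event $\mathcal{A}$ the definition of the upper occupancy measure gives $u_t(s)\ge q_t(s)\ge 0$, so $q_t(s)/u_t(s)\in[0,1]$; combined with Proposition~\ref{prop:pseudo-loss}, which ensures $Q^{\pi_t}(s,a)-V^{\pi_t}(s)+1-\pi_t(a|s)\ge 0$, the first term of $\tilde{\ell}_t(s,a)$ is non-negative. The remaining two terms contribute at least $-(1-\pi_t(a|s)) - B_{i(t)}(s,a) \ge -1 - 2 = -3$, where the bound on $B_{i(t)}(s,a)$ follows from its definition $B_{i(t)}(s,a)=\min\{2,\sum_{s'}B_{i(t)}(s,a,s')\}\le 2$ given in Algorithm~\ref{alg:unknown-transitions}.

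The proof is essentially a direct check; there is no real obstacle, only the need to carefully identify which bounds to apply to which factor. The only subtle point is making sure that the optimistic bound $q_t(s)/u_t(s)\le 1$ is invoked (so the first term cannot blow up positively or contribute negatively), which in turn requires event $\mathcal{A}$ so that the true transition $P$ lies in the confidence set $\mathcal{P}_{i(t)}$ and hence $u_t(s)=\max_{\widehat{P}\in\mathcal{P}_{i(t)}}q^{\widehat{P},\pi_t}(s)\ge q^{P,\pi_t}(s)=q_t(s)$.
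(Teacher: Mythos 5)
Your proof is correct and follows essentially the same route as the paper: the upper bound is obtained from Proposition~\ref{prop:tilde<=bar-ell} combined with $\bar{\ell}_t(s,a)\le Q^{\pi_t}(s,a)\le 1$, and the lower bound by dropping the nonnegative first term (via Proposition~\ref{prop:pseudo-loss}) and bounding $-(1-\pi_t(a|s))-B_{i(t)}(s,a)\ge -3$. The only cosmetic difference is that you explicitly flag the need for $u_t(s)\ge q_t(s)$, which the paper leaves implicit inside Proposition~\ref{prop:tilde<=bar-ell}.
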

\begin{proof}
    Due to Proposition \ref{prop:tilde<=bar-ell} and the total loss of any trajectory is between $0$ and $1$, we have $\tilde \ell_t(s,a)\leq \bar{\ell}_t(s,a)-B_{i(t)}(s,a)\leq  Q^{\pi_t}(s,a)\leq 1$. On the otherhand, due to Proposition \ref{prop:pseudo-loss}:
    \begin{align*}
        \tilde{\ell}_t(s,a)&=\frac{q_t(s)}{u_t(s)}
	\left(
		Q^{\pi_t}(s,a)
		-
		V^{\pi_t}(s)
		+
		1 - \pi_t(a|s)
	\right)
	-
	(1 - \pi_t(a|s))-B_{i(t)}(s,a)\\
        &\geq 
	-(1 - \pi_t(a|s))-B_{i(t)}(s,a)\\
    &\geq -3.
    \end{align*}
\end{proof}

We next state the following lemma.
\begin{lemma}\label{lem:tQ<=bQ-c.1.1}
    If event $\mathcal{A}$ holds, the following holds:
    \[\tilde Q_t^\pi(s,a)\leq \bar{Q}_t^\pi,\forall(s,a)\in S\times A, t\in [T].\]
    Specifically, we have:
    \[ \left\langle q^{\bar P_{i(t),\pi}},\tilde \ell_t\right\rangle= \tilde V_t^\pi(s_0)\leq \bar V_t^\pi(s_0)=\left\langle q^{P,\pi},\bar\ell_t\right\rangle.\]
\end{lemma}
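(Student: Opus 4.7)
The plan is to establish the pointwise inequality $\tilde{Q}_t^\pi(s,a) \le \bar{Q}_t^\pi(s,a)$ by backward induction on the layer index $k$, starting from $k = L$ and working down to $k = 0$; the $V$-function inequality then follows by taking a convex combination over actions with weights $\pi(\cdot|s)$, and the concluding display is obtained by evaluating at $s = s_0$ and unrolling the Bellman recursion on both sides.

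The base case $s = s_L$ is immediate, since $\tilde{Q}_t^\pi(s_L, a) = \bar{Q}_t^\pi(s_L, a) = 0$ and hence $\tilde{V}_t^\pi(s_L) = \bar{V}_t^\pi(s_L) = 0$. For the inductive step at a state $s \in S_k$ with $k < L$, assuming $\tilde{V}_t^\pi(s') \le \bar{V}_t^\pi(s')$ for every $s' \in S_{k+1}$, I would chain three estimates on the Bellman expansion
\[
\tilde{Q}_t^\pi(s,a) = \tilde{\ell}_t(s,a) + \sum_{s' \in S_{k+1}} \bar{P}_{i(t)}(s'|s,a)\, \tilde{V}_t^\pi(s'):
\]
first apply Proposition \ref{prop:tilde<=bar-ell} to upper-bound $\tilde{\ell}_t(s,a)$ by $\bar{\ell}_t(s,a) - B_{i(t)}(s,a)$; then apply the induction hypothesis to replace $\tilde{V}_t^\pi(s')$ by $\bar{V}_t^\pi(s')$ inside the sum; finally split the transition as $\bar{P}_{i(t)} = P + (\bar{P}_{i(t)} - P)$ and invoke Lemma \ref{lem:induction-probability} to cancel $\sum_{s'} (\bar{P}_{i(t)} - P)(s'|s,a)\, \bar{V}_t^\pi(s')$ against the leftover $-B_{i(t)}(s,a)$. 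What remains is precisely $\bar{\ell}_t(s,a) + \sum_{s'} P(s'|s,a)\, \bar{V}_t^\pi(s') = \bar{Q}_t^\pi(s,a)$, which closes the induction. The $V$-level statement at layer $k$ then follows by multiplying by $\pi(a|s)$ and summing over $a$, propagating the hypothesis one layer down.

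For the specific display, observe that $\tilde{V}_t^\pi(s_0) = \langle q^{\bar{P}_{i(t)}, \pi}, \tilde{\ell}_t\rangle$ by the standard unrolling of the Bellman equation for the regularizer-side transition $\bar{P}_{i(t)}$, and analogously $\bar{V}_t^\pi(s_0) = \langle q^{P, \pi}, \bar{\ell}_t\rangle$ for the true transition $P$; applying the pointwise inequality at $s = s_0$ gives the result.

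The proof is essentially bookkeeping, but the one place that requires care is verifying the hypotheses of the two supporting facts applied in the inductive step: Proposition \ref{prop:tilde<=bar-ell} needs event $\mathcal{A}$ (given by assumption), and Lemma \ref{lem:induction-probability} needs the uniform bound $|\bar{V}_t^\pi(s')| \le 1$, which is precisely Corollary \ref{cor:barV<=1}. Because both hypotheses hold uniformly in $(s,a,t)$ under event $\mathcal{A}$, the induction goes through without any additional wrinkle, and there is no genuine obstacle beyond correctly orienting the inequalities and bookkeeping the $\bar{P}_{i(t)}$ versus $P$ distinction.
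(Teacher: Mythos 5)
Your proof proposal matches the paper's own argument essentially line for line: backward induction on the layer, using Proposition~\ref{prop:tilde<=bar-ell} for the loss term, the induction hypothesis for the next-layer value, and Lemma~\ref{lem:induction-probability} (which indeed relies on Corollary~\ref{cor:barV<=1}) to absorb the transition-mismatch against $-B_{i(t)}(s,a)$. The concluding display via the occupancy-measure identities is also how the paper gets it, so there is nothing to add.
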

\begin{proof}
We prove this result via a backward induction from layer $L$ to layer $0$. 
	
	\textbf{Base case:} for $s_L$, $\widetilde{Q}_t^{\pi}(s,a) = \bar{Q}_t^\pi(s,a) = 0$ holds always. 
	
	\textbf{Induction step:} Assume that $\widetilde{Q}_t^{\pi}(s,a)\leq Q_t^\pi(s,a)$ holds for all states $s$ with $k(s) > h$. Then, for any state $s$ with $k(s) = h$, we have 
	\begin{align*} 
	\widetilde{Q}_t^{\pi}(s,a) & = \bar{\ell}_t(s,a)+ \sum_{s' \in S_{k(s)+1}} \bar{P}_{i(t)}(s'|s,a) \widetilde{V}_t^{\pi}(s') - B_{i(t)}(s,a)&& (\text{due to Proposition \ref{prop:tilde<=bar-ell}}   ) \\
	& \leq \bar{\ell}_t(s,a) + \sum_{s' \in S_{k(s)+1}} \bar{P}_{i(t)}(s'|s,a) \bar{V}_t^{\pi}(s') - B_{i(t)}(s,a) && (\text{induction  hypothesis}) \\
	& \leq \bar{\ell}_t(s,a) + \sum_{s' \in S_{k(s)+1}} P(s'|s,a) \bar{V}_t^{\pi}(s') \\
	& \quad + \sum_{s' \in S_{k(s)+1}} \rbr{ \bar{P}_{i(t)}(s'|s,a)  - P(s'|s,a) } \bar{V}_t^{\pi}(s') - B_{i(t)}(s,a) \\
	& \leq \bar{\ell}_t(s,a) + \sum_{s' \in S_{k(s)+1}} P(s'|s,a) \bar{V}_t^{\pi}(s') \tag{due to Lemma \ref{lem:induction-probability}} \\
	& \leq  \bar{\ell}_t(s,a) + \sum_{s' \in S_{k(s)+1}} P(s'|s,a) \bar{V}^{\pi}(s') 
	= \bar{Q}_t^\pi(s,a).
	\end{align*}
	This concludes the induction.    
\end{proof}
The following lemma follows directly from Lemma C.1.2 in \cite{jin2021best}.
\begin{lemma}[\cite{jin2020simultaneously}]\label{lem:ut-lb-c.1.2}
    Algorithm \ref{alg:unknown-transitions} ensures $u_t(s)\geq \frac{1}{|S|t}$ for all $t$ and $s$.
\end{lemma}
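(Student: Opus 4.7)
The plan is to mirror the proof of Lemma~4 of \citet{jin2020simultaneously}, whose statement this lemma simply imports, while adapting the constants to the confidence set used by Algorithm~\ref{alg:unknown-transitions}. The strategy is to exhibit, for each state $s$, one admissible transition $\widehat{P} \in \mathcal{P}_{i(t)}$ whose induced occupancy at $s$ is at least $1/(|S|t)$, and then use $u_t(s) \ge q^{\widehat{P},\pi_t}(s)$.

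First I would proceed by induction on the layer index $k = k(s)$. For $k = 0$ the bound is trivial since $u_t(s_0) = q^{\widehat{P},\pi_t}(s_0) = 1$ for every $\widehat{P}$. For the inductive step, fix a state $s \in S_k$ with $k \ge 1$; by induction hypothesis there is some $\widehat{P}' \in \mathcal{P}_{i(t)}$ and a parent state $\tilde{s} \in S_{k-1}$ together with an action $\tilde{a}$ such that $q^{\widehat{P}',\pi_t}(\tilde{s},\tilde{a}) \ge 1/(|S|(t-1))$ (or any comparable lower bound on a predecessor). It then suffices to show that we can reshape $\widehat{P}'$ into a transition $\widehat{P}$, still lying in $\mathcal{P}_{i(t)}$, whose conditional transition $\widehat{P}(s \mid \tilde{s},\tilde{a})$ is at least of order $1/t$; multiplying and crudely chaining across layers yields the claimed $1/(|S|t)$ lower bound after worst-case absorption of the $L$ factor into the constant (as \citet{jin2020simultaneously} do, using the crude $|S| \ge L$ bound together with the layer-level telescoping).

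The central mechanical step is the following transition-level claim: for every $(s,a,s')$ triple, there exists $\widehat{P}(\cdot\mid s,a) \in \mathcal{P}_{i(t)}$ with $\widehat{P}(s' \mid s,a) \ge 1/t$. This I would derive directly from the definition of $\mathcal{P}_{i(t)}$ in \eqref{eq:conf-set-appendix}. Either the empirical estimate already satisfies $\bar{P}_{i(t)}(s' \mid s,a) \ge 1/t$, in which case we simply take $\widehat{P} = \bar{P}_{i(t)}$; or $\bar{P}_{i(t)}(s' \mid s,a) < 1/t$, in which case the Bernstein-type width satisfies
\begin{equation*}
B_{i(t)}(s,a,s') \;\ge\; \frac{14 \ln(T|S||A|/\delta)}{m_{i(t)}(s,a)} \;\ge\; \frac{1}{t},
\end{equation*}
because the epoch-doubling schedule guarantees $m_{i(t)}(s,a) \le t$, so that one may set $\widehat{P}(s' \mid s,a) = \bar{P}_{i(t)}(s' \mid s,a) + B_{i(t)}(s,a,s')$ and readjust the remaining coordinates (subtracting a compensating mass on some other successor state so that the row still sums to $1$ and stays within its confidence interval). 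This adjustment is legitimate because the row has at least two entries and the deficits introduced are smaller than the available widths.

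The main obstacle I anticipate is making the row-renormalization step rigorous while keeping the whole perturbed transition inside $\mathcal{P}_{i(t)}$ for \emph{all} coordinates simultaneously: one must verify that the mass subtracted from some $s'' \neq s'$ does not violate the confidence interval on $(s,a,s'')$. The usual remedy, which I would adopt, is to spread the deficit across multiple coordinates proportionally to their empirical masses, or to argue that at least one coordinate has enough slack (since the widths sum to at least $2$ when $B_{i(t)}(s,a) = 2$, and otherwise all widths are individually small enough that the Bernstein correction dominates). Once this inductive construction is in place, putting the pieces together across layers gives the desired $u_t(s) \ge 1/(|S|t)$ bound.
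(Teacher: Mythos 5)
Your approach cannot recover the claimed bound, and the gap is structural, not a matter of tightening constants. The quantity $u_t(s) = \max_{\widehat{P}\in\mathcal{P}_{i(t)}} q^{\widehat{P},\pi_t}(s)$ is a reach probability under the \emph{fixed} policy $\pi_t$, so any path-based lower bound you construct multiplies, for each layer traversed on the way to $s$, a transition factor $\widehat{P}(v_{j+1}\mid v_j,a_j)$ \emph{and} a policy factor $\pi_t(a_j\mid v_j)$. Even granting the transition-level claim, the chain over $k(s)$ layers gives at best something like $t^{-k(s)}\prod_j \pi_t(a_j\mid v_j)$, which is exponentially small in the depth; there is no "absorption of the $L$ factor via $|S|\ge L$" because the deterioration compounds multiplicatively (indeed your own induction hypothesis $q^{\widehat{P}',\pi_t}(\tilde s,\tilde a)\ge 1/(|S|(t-1))$, followed by a further $1/t$ transition factor, already lands at $1/(|S|t^2)$, not $1/(|S|t)$). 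Moreover, the policy factors are never controlled: the statement is simply false for an arbitrary $\pi_t$ (take $\pi_t$ assigning zero mass to every action leading into $s$, so $q^{\widehat{P},\pi_t}(s)=0$ for all $\widehat{P}$), so any valid proof must use a property of the $\pi_t$ produced by the algorithm, and your argument never touches the FTRL step or the regularizer.

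The route the cited Lemma C.1.2 of \citet{jin2021best} actually takes is the opposite reduction. Since the widths in~\eqref{eq:conf-wid-appendix} are nonnegative, the empirical transition $\bar{P}_{i(t)}$ itself lies in $\mathcal{P}_{i(t)}$, so $u_t(s)\ge q^{\bar{P}_{i(t)},\pi_t}(s)=\widehat{q}_t(s)$, where the equality holds because $\pi_t$ is, by construction, exactly the policy extracted from $\widehat{q}_t$ under $\bar P_{i(t)}$. The claim then reduces to a lower bound on the FTRL iterate $\widehat{q}_t(s)$, and this is where the regularizer~\eqref{eq:reg-unknown-appendix} (Tsallis plus log-barrier, with learning rate scaling $1/\eta_t\lesssim\sqrt{t}$) enters via the first-order optimality condition: it keeps the minimizer a distance of order $1/t$ from the boundary of $\Omega(\bar P_{i(t)})$ relative to the best-case occupancy of each state, which is at least $1/|S|$. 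This is the one idea your proposal is missing. (A smaller slip: you write $B_{i(t)}(s,a,s')\ge 14\ln\iota/m_{i(t)}(s,a)$, but you dropped the outer $2\sqrt{\cdot}$ in~\eqref{eq:conf-wid-appendix}; the usable bound is $B_{i(t)}(s,a,s')\ge\min\{2\sqrt{14\ln\iota/m_{i(t)}(s,a)},1\}$, which still exceeds $1/t$ whenever $m_{i(t)}(s,a)\le t$, so only the intermediate step is wrong, not the conclusion.)
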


\begin{lemma}\label{lem:hatl-upper-c.1.3}
    Algorithm \ref{alg:unknown-transitions} ensures the following:
    \[\left|\widehat\ell_t(s,a)\right|\leq 3+\frac{\ind_t(s,a)+\ind_t(s)\pi_t(a|s)}{q_t(s,a)}\cdot |S|t.\]
    We also have:
    \[\E\left[\frac{\ind_t(s,a)+\ind_t(s)\pi_t(a|s)}{q_t(s,a)}\middle|\mathcal{A}_{i(t)}\right]=\E\left[\frac{\ind_t(s,a)+\ind_t(s)\pi_t(a|s)}{q_t(s,a)}\middle|\mathcal{A}_{i(t)}^c\right]=2.\]
\end{lemma}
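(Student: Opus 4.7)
\textbf{Proof plan for Lemma \ref{lem:hatl-upper-c.1.3}.}

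The plan is to handle the two claims independently. For the pointwise bound on $|\widehat\ell_t(s,a)|$, I will work directly from the definitions. By construction $\widehat\ell_t = \ell_t^u - B_{i(t)}$, and the width satisfies $B_{i(t)}(s,a)\le 2$ by definition. For $\ell_t^u$ in \eqref{eq:loss-estimator-unknown}, I apply the triangle inequality using the crude bounds $c_t\in[0,1]$, $1-\pi_t(a|s)\in[0,1]$, and consequently $|1-\pi_t(a|s)-c_t|\le 1$, to obtain
\[
  |\ell_t^u(s,a)|\le \frac{\ind_t(s,a)+\ind_t(s)\pi_t(a|s)}{u_t(s,a)}+(1-\pi_t(a|s))\le \frac{\ind_t(s,a)+\ind_t(s)\pi_t(a|s)}{u_t(s,a)}+1.
\]
Combining with $B_{i(t)}(s,a)\le 2$ yields $|\widehat\ell_t(s,a)|\le 3+\frac{\ind_t(s,a)+\ind_t(s)\pi_t(a|s)}{u_t(s,a)}$.

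The next step is to exchange $u_t(s,a)$ for $q_t(s,a)$ at the price of a $|S|t$ factor. Since both $u_t$ and $q_t$ are occupancy measures under the same policy $\pi_t$ (just under different transitions), they satisfy $u_t(s,a)=\pi_t(a|s)u_t(s)$ and $q_t(s,a)=\pi_t(a|s)q_t(s)$, so $\frac{q_t(s,a)}{u_t(s,a)}=\frac{q_t(s)}{u_t(s)}$. Using $q_t(s)\le 1$ and Lemma \ref{lem:ut-lb-c.1.2} (giving $u_t(s)\ge \frac{1}{|S|t}$) we get $\frac{q_t(s)}{u_t(s)}\le |S|t$, hence $\frac{1}{u_t(s,a)}\le \frac{|S|t}{q_t(s,a)}$, which finishes the first claim.

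For the second claim, the key observation is that the event $\mathcal{A}_{i(t)}$ is determined by the history strictly prior to the beginning of epoch $i(t)$, hence is $\mathcal{F}_{t-1}$-measurable; and $\pi_t$, $q_t$ are also $\mathcal{F}_{t-1}$-measurable. So I compute the conditional expectation given $\mathcal{F}_{t-1}$ first: using $\E_t[\ind_t(s,a)]=q_t(s,a)$ and $\E_t[\ind_t(s)]=q_t(s)=q_t(s,a)/\pi_t(a|s)$ (note $\pi_t(a|s),q_t(s,a)>0$ because the log-barrier in \eqref{eq:reg-unknown-appendix} keeps the FTRL iterate in the interior), I get
\[
  \E_t\!\left[\tfrac{\ind_t(s,a)+\ind_t(s)\pi_t(a|s)}{q_t(s,a)}\right]=\tfrac{q_t(s,a)}{q_t(s,a)}+\tfrac{q_t(s)\pi_t(a|s)}{q_t(s,a)}=1+1=2.
\]
Applying the tower property and conditioning on $\mathcal{A}_{i(t)}$ (resp.~$\mathcal{A}_{i(t)}^c$), which only reweights a $\mathcal{F}_{t-1}$-measurable event, preserves this value.

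No step here is a genuine obstacle; the first part is bookkeeping with crude bounds combined with Lemma \ref{lem:ut-lb-c.1.2}, and the second is an unbiasedness computation enabled by the $\mathcal{F}_{t-1}$-measurability of $\mathcal{A}_{i(t)}$. The only point that deserves care is confirming that dividing by $q_t(s,a)$ and $\pi_t(a|s)$ is legitimate, which follows from the log-barrier component of the regularizer ensuring interior iterates.
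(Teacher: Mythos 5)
Your proof is correct and follows essentially the same route as the paper: the pointwise bound comes from the triangle inequality on $\ell_t^u - B_{i(t)}$ together with $u_t(s,a)=\pi_t(a\mid s)\,u_t(s)\ge \pi_t(a\mid s)/(|S|t)\ge q_t(s,a)/(|S|t)$ (Lemma~\ref{lem:ut-lb-c.1.2}), and the second claim is the tower rule applied to $\E_t[\ind_t(s,a)+\ind_t(s)\pi_t(a\mid s)]=2\,q_t(s,a)$, which is legitimate precisely because $\mathcal{A}_{i(t)}$ is $\mathcal{F}_{t-1}$-measurable. Your added remark that the log-barrier in the regularizer keeps $q_t(s,a)>0$ is a correct (and slightly more explicit) justification of the division, but the argument otherwise matches the paper's.
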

\begin{proof}
    Due to Lemma \ref{lem:ut-lb-c.1.2}, we have the following:
    \begin{align*}
        \left|\widehat\ell_t(s,a)\right|&\leq 3+ \frac{\ind_t(s,a)+\ind_t(s)\pi_t(a|s)}{u_t(s)\cdot \pi_t(a|s)}\\
        &\leq 3+ \frac{\ind_t(s,a)+\ind_t(s)\pi_t(a|s)}{q_t(s)\cdot \pi_t(a|s)}\cdot |S|t\\
        &= 3+\frac{\ind_t(s,a)+\ind_t(s)\pi_t(a|s)}{q_t(s,a)}\cdot |S|t.
    \end{align*}

    Next, we have:
    \[\E\left[\frac{\ind_t(s,a)+\ind_t(s)\pi_t(a|s)}{q_t(s,a)}\middle|\mathcal{A}_{i(t)}\right]=\E\left[\E_t\left[\frac{\ind_t(s,a)+\ind_t(s)\pi_t(a|s)}{q_t(s,a)}\right]\middle|\mathcal{A}_{i(t)}\right]=\E\left[2\middle|\mathcal{A}_{i(t)}\right]=2\]

    Similarly, we can show that $\E\left[\frac{\ind_t(s,a)+\ind_t(s)\pi_t(a|s)}{q_t(s,a)}\middle|\mathcal{A}_{i(t)}^c\right]=2$.
\end{proof}

\begin{lemma}\label{lem:til-l-up-c.1.4}
    Algorithm \ref{alg:unknown-transitions} ensures the following:
    \[\left|\tilde \ell_t(s,a)\right|\leq 6|S|t,\; \forall(s,a)\in S\times A, t\in [T].\]
\end{lemma}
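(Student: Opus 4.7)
}
The plan is to bound $\tilde\ell_t(s,a)$ term-by-term in its defining expression \eqref{eq:cond-exp-loss}, separately controlling the positive contribution (which dominates and gives the $|S|t$ scaling) and the bounded negative contributions. Crucially, I do not need to condition on the event $\calA$: the bounds I will use hold deterministically.

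First, I will recall from \eqref{eq:cond-exp-loss} that
\[
\tilde\ell_t(s,a) = \frac{q_t(s)}{u_t(s)}\Bigl(Q^{\pi_t}(s,a) - V^{\pi_t}(s) + 1 - \pi_t(a|s)\Bigr) - (1 - \pi_t(a|s)) - B_{i(t)}(s,a).
\]
To upper bound $\tilde\ell_t(s,a)$, I will invoke Proposition~\ref{prop:pseudo-loss} to get $0 \le Q^{\pi_t}(s,a) - V^{\pi_t}(s) + 1 - \pi_t(a|s) \le 2$, and Lemma~\ref{lem:ut-lb-c.1.2} (together with $q_t(s) \le 1$) to get $q_t(s)/u_t(s) \le |S|t$. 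Since the last two terms $-(1-\pi_t(a|s))$ and $-B_{i(t)}(s,a)$ are nonpositive, I conclude
\[
\tilde\ell_t(s,a) \le \frac{q_t(s)}{u_t(s)} \cdot 2 \le 2|S|t.
\]

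For the lower bound, the same Proposition~\ref{prop:pseudo-loss} ensures that the first term is nonnegative, while $-(1-\pi_t(a|s)) \ge -1$ and $B_{i(t)}(s,a) \le 2$ by construction, giving $\tilde\ell_t(s,a) \ge -3$. Combining the two estimates yields $|\tilde\ell_t(s,a)| \le \max\{2|S|t, 3\} \le 6|S|t$ (since $|S|t \ge 1$), which is the desired bound with room to spare. There is no genuine obstacle here; the only subtle point is remembering that the claim is stated unconditionally, so I should emphasize that Lemma~\ref{lem:ut-lb-c.1.2}, Proposition~\ref{prop:pseudo-loss}, and the definition of $B_{i(t)}$ all hold pathwise, not merely on $\calA$.
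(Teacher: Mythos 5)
Your proof is correct and follows essentially the same route as the paper: both invoke Proposition~\ref{prop:pseudo-loss} and Lemma~\ref{lem:ut-lb-c.1.2} to control the dominant term $\frac{q_t(s)}{u_t(s)}\bigl(Q^{\pi_t}(s,a)-V^{\pi_t}(s)+1-\pi_t(a|s)\bigr)$ and absorb the bounded $-(1-\pi_t(a|s))-B_{i(t)}(s,a)$ piece. The only cosmetic difference is that you exploit the sign structure to get $\max\{2|S|t,3\}$ where the paper applies the triangle inequality to get $2|S|t+3$; both are dominated by $6|S|t$, and your pathwise observation matches the paper's (implicit) unconditional argument.
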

\begin{proof}
    Due to Eq. \eqref{eq:cond-exp-loss}, we have the following:
   \begin{align*}
       \left|\tilde \ell_t(s,a)\right|&=\left|\frac{q_t(s)}{u_t(s)}
	\left(
		Q^{\pi_t}(s,a)
		-
		V^{\pi_t}(s)
		+
		1 - \pi_t(a|s)
	\right)
	-
	(1 - \pi_t(a|s))-B_{i(t)}(s,a)\right|\\
       &\leq \frac{2q_t(s)}{u_t(s)}+3\tag{due to Proposition \ref{prop:pseudo-loss}}\\
       &\leq 2|S|t+3 \tag{due to Lemma \ref{lem:ut-lb-c.1.2}}\\
       &\leq 6|S|t.
   \end{align*}
\end{proof}
We immediately get the following corollary.
\begin{corollary}\label{cor:til-Q-c.1.5}
     Algorithm \ref{alg:unknown-transitions} ensures the following:
    \[\left|\tilde Q_t^\pi(s,a)\right|\leq 6L|S|t,\; \forall(s,a)\in S\times A, t\in [T].\]
\end{corollary}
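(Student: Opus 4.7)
\textbf{Proof plan for Corollary~\ref{cor:til-Q-c.1.5}.} The plan is to establish the bound by a straightforward backward induction on the layer index $k$, using the recursive definition of $\tilde Q_t^\pi$ together with the per-step bound on $\tilde\ell_t$ from Lemma~\ref{lem:til-l-up-c.1.4}.

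More precisely, I will prove the strengthened claim that for every $\pi\in\Pi$, every $t\in[T]$, every layer $k\in\{0,1,\ldots,L\}$, every $s\in S_k$ and every $a\in A$,
\[
|\tilde Q_t^\pi(s,a)| \le 6(L-k)|S|t,
\]
from which the corollary follows since $L-k\le L$. The base case $k=L$ is immediate because $\tilde Q_t^\pi(s_L,a)=0$ by definition. For the inductive step, assume the claim holds for all states in layers $k+1,\ldots,L$. For $s\in S_k$ with $k<L$, we have
\[
\tilde V_t^\pi(s') = \sum_{a'\in A}\pi(a'|s')\tilde Q_t^\pi(s',a'),
\]
so by the induction hypothesis $|\tilde V_t^\pi(s')|\le 6(L-k-1)|S|t$ for all $s'\in S_{k+1}$. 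Applying the triangle inequality to
\[
\tilde Q_t^\pi(s,a) = \tilde\ell_t(s,a) + \sum_{s'\in S_{k+1}}\bar P_{i(t)}(s'|s,a)\,\tilde V_t^\pi(s'),
\]
and using Lemma~\ref{lem:til-l-up-c.1.4} together with $\sum_{s'\in S_{k+1}}\bar P_{i(t)}(s'|s,a)=1$, we obtain
\[
|\tilde Q_t^\pi(s,a)| \le 6|S|t + 6(L-k-1)|S|t = 6(L-k)|S|t,
\]
which closes the induction.

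There is essentially no obstacle here: the only quantitative input is the one-step bound from Lemma~\ref{lem:til-l-up-c.1.4}, and the layered structure of the MDP guarantees that after at most $L$ recursive applications we terminate at $s_L$, where the value vanishes. No event-conditioning (on $\mathcal A$) is needed because Lemma~\ref{lem:til-l-up-c.1.4} is a worst-case bound that holds deterministically. Thus the argument is a clean backward induction of length at most $L$, yielding the stated bound $6L|S|t$.
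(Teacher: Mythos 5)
Your proof is correct and is exactly the argument the paper intends when it states that the corollary follows ``immediately'' from Lemma~\ref{lem:til-l-up-c.1.4}: a backward induction on the layer index using the recursion for $\tilde Q_t^\pi$, the fact that $\bar P_{i(t)}(\cdot\mid s,a)$ is a probability distribution, and the unconditional per-step bound $|\tilde\ell_t(s,a)|\le 6|S|t$. Your observation that no conditioning on $\mathcal A$ is needed is also accurate, since Lemma~\ref{lem:til-l-up-c.1.4} rests only on the deterministic lower bound $u_t(s)\ge \frac{1}{|S|t}$ from Lemma~\ref{lem:ut-lb-c.1.2}.
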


Let $\phi_H(q)=-\sum_{s\neq s_L}\sum_{a\in A}\sqrt{q(s,a)}$ and $\phi_L(q)=-\beta\sum_{s\neq s_L}\sum_{a\in A}\ln q(s,a)$. Recall that $\phi_t(q)=\phi_H(q)+\phi_L(q)$ and $\beta=1024L$. Now we prove the following proposition.
\begin{proposition}\label{prop:hessian}
    If event $\mathcal{A}$ holds, then $||\widehat\ell_t||_{(\nabla^2\phi_t(\widehat q_t))^{-1}}\leq \frac{1}{8}$.
\end{proposition}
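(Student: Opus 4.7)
The plan is to exploit the diagonal structure of the Hessian and the crucial inequality $u_t(s,a) \ge \widehat{q}_t(s,a)$ that follows from the definition of $u_t$ and the fact that $\bar{P}_{i(t)} \in \mathcal{P}_{i(t)}$ under event $\mathcal{A}$.

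First I would compute the Hessian. Since $\phi_t$ is separable in the coordinates $(s,a)$, the Hessian $\nabla^2 \phi_t(\widehat{q}_t)$ is a diagonal matrix with entries
\[
\bigl(\nabla^2 \phi_t(\widehat q_t)\bigr)_{(s,a),(s,a)} = \frac{1}{4\eta_t}\widehat q_t(s,a)^{-3/2} + \beta\, \widehat q_t(s,a)^{-2}.
\]
Dropping the first (nonnegative) term yields the inverse-Hessian bound $\bigl(\nabla^2 \phi_t(\widehat q_t)\bigr)^{-1}_{(s,a),(s,a)} \le \widehat q_t(s,a)^2/\beta$, and therefore
\[
\bigl\|\widehat \ell_t\bigr\|^2_{(\nabla^2 \phi_t(\widehat q_t))^{-1}} \;\le\; \frac{1}{\beta}\sum_{s\neq s_L,\, a}\widehat\ell_t(s,a)^2\,\widehat q_t(s,a)^2.
\]

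Next I would bound $\widehat\ell_t(s,a)^2 \widehat q_t(s,a)^2$ pointwise. From the definition of $\widehat \ell_t = \ell_t^u - B_{i(t)}$, together with $|c_t|\le 1$, $|1-\pi_t(a\mid s)|\le 1$, and $B_{i(t)}(s,a)\le 2$, the triangle inequality gives
\[
|\widehat\ell_t(s,a)| \;\le\; 3 + \frac{\ind_t(s,a) + \ind_t(s)\pi_t(a\mid s)}{u_t(s,a)}.
\]
The key observation is that, under $\mathcal{A}$, $\bar P_{i(t)}\in\mathcal{P}_{i(t)}$, so by the definition of the upper occupancy bound in \eqref{eq:UOB} we have $u_t(s,a)\ge q^{\bar P_{i(t)},\pi_t}(s,a) = \widehat q_t(s,a)$. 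Squaring and applying $(x+y)^2\le 2x^2+2y^2$ then yields
\[
\widehat \ell_t(s,a)^2 \widehat q_t(s,a)^2 \;\le\; 18\,\widehat q_t(s,a)^2 + 2\bigl(\ind_t(s,a) + \ind_t(s)\pi_t(a\mid s)\bigr)^2,
\]
where I used $\widehat q_t(s,a)^2/u_t(s,a)^2\le 1$ on the second term.

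Finally I would sum over $(s,a)$ using the layered structure. For each layer $k$, $\sum_{s\in S_k}\widehat q_t(s)=1$, hence $\sum_{s\neq s_L,a}\widehat q_t(s,a) = L$, which bounds $\sum_{s,a}\widehat q_t(s,a)^2 \le L$. For the indicator sum, each trajectory visits exactly one $(s,a)$ per layer, so $\sum_{s,a}\ind_t(s,a)^2 = L$ and $\sum_{s,a}\ind_t(s)^2\pi_t(a\mid s)^2 \le \sum_s \ind_t(s)\le L$; combining with another factor of $2$ from $(x+y)^2\le 2x^2+2y^2$ gives $\sum_{s,a}(\ind_t(s,a)+\ind_t(s)\pi_t(a\mid s))^2\le 4L$. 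Assembling:
\[
\bigl\|\widehat\ell_t\bigr\|^2_{(\nabla^2\phi_t(\widehat q_t))^{-1}} \;\le\; \frac{18L + 8L}{\beta} \;=\; \frac{26}{1024} \;<\; \frac{1}{64},
\]
which after taking square roots gives the desired $\le 1/8$. The main design choice here is that the log-barrier coefficient $\beta=1024L$ was chosen precisely to absorb the unavoidable factor of $L$ arising from summing occupancy and trajectory-indicator terms across the $L$ layers; the subtle point in the argument is invoking $u_t\ge \widehat q_t$ (rather than $u_t\ge q_t$) under $\mathcal{A}$, since the FTRL iterate is computed with respect to $\bar P_{i(t)}$ rather than the unknown true transition.
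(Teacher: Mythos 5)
Your proof follows the same route as the paper: drop the Tsallis part of the Hessian, keep only the log-barrier part so that $(\nabla^2\phi_t(\widehat q_t))^{-1}_{(s,a),(s,a)}\le \widehat q_t(s,a)^2/\beta$, bound $\widehat\ell_t(s,a)^2$ via a split of the form $(a+b)^2\le 2a^2+2b^2$, invoke $u_t(s,a)\ge\widehat q_t(s,a)$ (which holds since $\bar P_{i(t)}\in\mathcal P_{i(t)}$), and sum across layers using $\sum_{s\ne s_L,a}\widehat q_t(s,a)=L$ and $\sum_{s\ne s_L,a}\ind_t(s,a)=\sum_{s\ne s_L}\ind_t(s)=L$. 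All of those intermediate ingredients are correct.

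The problem is the last line. You arrive at $\|\widehat\ell_t\|^2_{(\nabla^2\phi_t(\widehat q_t))^{-1}}\le (18L+8L)/\beta = 26/1024$ and then assert $26/1024<1/64$. That inequality is false: $1/64=16/1024<26/1024$. Taking square roots, your bound gives $\|\widehat\ell_t\|\le \sqrt{26}/32\approx 0.159$, which exceeds $1/8=0.125$, so the proof as written does not establish the claimed constant. To close the gap with this chain of inequalities you would either need $\beta\gtrsim 1664L$ or a tighter handling of the constant $3$ in the pointwise bound $|\widehat\ell_t(s,a)|\le 3+(\ind_t(s,a)+\ind_t(s)\pi_t(a|s))/u_t(s,a)$.

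For what it is worth, the paper's own proof has a sign slip at the corresponding step. Since $\widehat\ell_t=\ell_t^u-B_{i(t)}$, the constant part of $\widehat\ell_t(s,a)$ is $-(1-\pi_t(a|s))-B_{i(t)}(s,a)\in[-3,0]$, whereas the paper's tag reads $-(1-\pi_t(a|s))+B_i(s,a)\in[-1,2]$; fixing the sign turns the $8$ in the paper's chain into $18$ and yields the same $26/1024$ that you computed. So the numerical shortfall is not peculiar to your bookkeeping, but your explicit claim $26/1024<1/64$ remains arithmetically wrong, and as written the argument does not deliver the stated $\le 1/8$.
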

\begin{proof}\allowdisplaybreaks
    We have the following:
    \begin{align*}
        ||\widehat\ell_t||_{(\nabla^2\phi_t(\widehat q_t))^{-1}}^2&\leq ||\widehat\ell_t||_{(\nabla^2\phi_L(\widehat q_t))^{-1}}^2\tag{as $\nabla^2\phi_L(\widehat q_t)\preceq \nabla^2\phi_t(\widehat q_t)$}\\
        &\leq\frac{1}{\beta}\cdot\sum_{s\neq s_L}\sum_{a\in A}\left(\frac{2(\ind_t(s,a)+\ind_t(s)\pi(s,a))^2}{u_t(s,a)^2}+2(-(1-\pi_t(a|s))+B_i(s,a))^2\right)\cdot \widehat q_t(s,a)^2\tag{as $(a+b)^2\leq 2a^2+2b^2$}\\
        &\leq\frac{1}{\beta}\cdot\sum_{s\neq s_L}\sum_{a\in A}\left(\frac{4(\ind_t(s,a)+\ind_t(s)\pi(s,a))}{u_t(s,a)^2}+8\right)\cdot \widehat q_t(s,a)^2\tag{as $\ind_t(s),\ind_t(s,a),\pi(s,a)\in[0,1]$ and $-(1-\pi_t(a|s))+B_i(s,a)\in [-1,2]$}\\
        &\leq \frac{1}{\beta}\cdot\sum_{s\neq s_L}\sum_{a\in A}\left(4(\ind_t(s,a)+\ind_t(s)\pi(s,a))+8\widehat q_t(s,a)\right)\tag{as $u_t(s,a)\leq \widehat q_t(s,a)$}\\
        &= \frac{1}{\beta}\cdot\sum_{s\neq s_L}\left(8\ind_t(s)+8 \widehat q_t(s)\right)\\
        &= \frac{1}{\beta}\cdot (16L)\\
        &=\frac{1}{64}.
    \end{align*}
\end{proof}

We now state the following lemma, which follows from arguments identical to those in Lemma C.1.8 of \cite{jin2021best}, and provides an upper bound for $\E\sbr{\sum\limits_{t=1}^{T}\sum\limits_{s \neq s_L} \sum\limits_{a \in A} \whatq_t(s,a) \cdot B_{i(t)}(s,a)^2}$.
\begin{lemma} \label{lem:bobw_tasallis_bound_extra} Algorithm \ref{alg:unknown-transitions} ensures the following: 
	\begin{equation}
	\E\sbr{ \sum_{t=1}^{T} \sum_{s \neq s_L} \sum_{a \in A} \whatq_t(s,a) \cdot  B_{i(t)}(s,a)^2   } = \order\rbr{ L^2|S|^3|A|^2 \ln^2 \iota  + |S||A| T \cdot \delta }.
	\end{equation}
\end{lemma}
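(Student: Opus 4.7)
The plan is to first control the bonus $B_{i(t)}(s,a)^2$ pointwise in terms of the inverse count $1/\max\{m_{i(t)}(s,a),1\}$, and then bound the weighted sum $\sum_{t,s,a}\widehat q_t(s,a)/\max\{m_{i(t)}(s,a),1\}$ using the epoch doubling structure together with Bernstein-type concentration relating $m_i$ to the true expected visitation. The low-probability event $\mathcal{A}^c$ will be handled separately via Lemma~\ref{lem:exp_high_prob_bound_cond}, which contributes the residual $|S||A|T\cdot \delta$ term since $\widehat q_t(s,a)B_{i(t)}(s,a)^2$ is deterministically bounded by a constant.

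The first step is a Cauchy--Schwarz bound $B_{i(t)}(s,a)^2\le |S_{k(s)+1}|\sum_{s'\in S_{k(s)+1}}B_{i(t)}(s,a,s')^2$, combined with $(a+b)^2\le 2a^2+2b^2$ applied to the definition~\eqref{eq:conf-wid-appendix} together with $\sum_{s'}\bar P_{i(t)}(s'|s,a)=1$, giving the pointwise estimate
\[
B_{i(t)}(s,a)^2 \;\lesssim\; \frac{|S|^2\ln\iota}{\max\{m_{i(t)}(s,a),1\}}.
\]
The second step replaces the FTRL iterate $\widehat q_t(s,a)$ by the true occupancy $q^{P,\pi_t}(s,a)$ via the standard transition-perturbation bound from \cite{jin2020learning}, where the perturbation error is itself controlled by the bonus widths; the error propagates through at most $L$ layers and $|A|$ actions, which (after being squared when multiplied by $B_{i(t)}(s,a)^2$) contributes the extra $L^2|A|$ factor in the final bound. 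The third step uses Bernstein's inequality on the martingale difference $\ind_\tau(s,a)-q^{P,\pi_\tau}(s,a)$ to show that, with probability $1-\order(\delta)$, $\max\{m_{i(t)}(s,a),1\}\gtrsim \tfrac{1}{4}\sum_{\tau<t_{i(t)}}q^{P,\pi_\tau}(s,a)-\order(\ln\iota)$. Finally, the epoch doubling property $m_i\le 2m_{i-1}$ ensures that within each epoch the denominator is essentially constant, so a standard telescoping/harmonic argument gives $\sum_{t}q^{P,\pi_t}(s,a)/\max\{m_{i(t)}(s,a),1\}\lesssim \ln T$; summing over $(s,a)$ and combining the factors yields the claimed $\order(L^2|S|^3|A|^2\ln^2\iota)$ bound.

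The main obstacle is that $\widehat q_t$ lives in the empirical polytope $\Omega(\bar P_{i(t)})$ rather than in $\Omega(P)$, so we cannot directly identify it with a true occupancy measure; the gap between them is controlled by exactly the confidence widths $B_i$ that we are trying to bound, creating a circular dependency. Resolving this loop requires substituting the pointwise estimate $B_i^2\lesssim |S|^2\ln\iota/m_i$ back into the occupancy-difference bound and verifying that the resulting cross terms are of strictly lower order, so they can be absorbed into $V=L^2|S|^3|A|^2\ln^2\iota$. The overall structure follows Lemma C.1.8 of \cite{jin2021best} closely; the only modification is that our bonus $B_i$ is not inflated by an extra factor of $L$ (since our estimator $\ell_t^u$ already carries the correct scale), so the counting of $L$ factors throughout is slightly more favorable than in their analysis.
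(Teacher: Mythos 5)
Your proposal matches the approach the paper intends: the paper's own ``proof'' is a bare citation to Lemma~C.1.8 of \citet{jin2021best}, and your three-step sketch (pointwise bound $B_{i}(s,a)^2 \lesssim |S|^2\ln\iota/\max\{m_i(s,a),1\}$ via Cauchy--Schwarz on the per-$s'$ widths; replacing $\widehat{q}_t$ by the true occupancy; then a Bernstein concentration on the visit counts combined with the epoch-doubling telescoping argument, plus Lemma~\ref{lem:exp_high_prob_bound_cond} to collect the $|S||A|T\delta$ term) is exactly the structure of that reference.

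Two small things are worth flagging. First, the ``circular dependency'' you describe in the middle step is not really circular: after splitting $\widehat{q}_t B_{i(t)}^2 = q_t B_{i(t)}^2 + (\widehat{q}_t - q_t)B_{i(t)}^2$, the cross term is dispatched by the crude bound $B_{i(t)}(s,a)^2 \le 4$ (valid since $B_{i(t)}(s,a)\le 2$ by construction), which reduces the cross term to $4\sum_{t,s,a}|\widehat{q}_t(s,a) - q_t(s,a)|$, a quantity controlled by the standard occupancy-difference estimate and already $O(L^2|S|^3|A|^2\ln^2\iota)$ in expectation. There is no need to ``substitute $B_i^2$ back into the occupancy-difference bound'' --- that extra iteration is what creates the appearance of a loop. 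Second, your bookkeeping for where the $L^2|A|$ enters is hand-wavy as stated (``the error propagates through $L$ layers and $|A|$ actions, after being squared...''); the honest account is that the dominant first term $\sum_{t,s,a} q_t(s,a)B_{i(t)}(s,a)^2$ gives only $O(|S|^3|A|\ln^2\iota)$ by your own estimates, and the stated $L^2|A|$ excess comes from the occupancy-difference cross term (as above) or from simply quoting \citet{jin2021best}'s bound, whose bonus is inflated by an extra $L$; you yourself observe the $L$ factors are more favorable here, so the lemma's displayed bound is likely not tight. Neither issue invalidates the argument, since the lemma only asserts an upper bound.
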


Finally, we state the following lemma on the learning rates and the number of epochs.
\begin{lemma}[\cite{jin2021best}] \label{lem:bobw_lr_properties} According to the design of the learning rate $\eta_t = \frac{1}{\sqrt{t-t_{i(t)}+1}}$, the following inequalities hold:
	\begin{equation}
	\sum_{t=1}^{T} \eta_t^2  \leq \order\rbr{ |S||A|\log^2 T}, \label{eq:bobw_bandit_lr_1}
	\end{equation}
	\begin{equation}
	\sum_{t=1}^{T} \eta_t  \leq \order\rbr{ \sqrt{ |S||A|T \log T }}.  \label{eq:bobw_bandit_lr_2}
	\end{equation}
    \begin{equation}
        \frac{1}{\eta_t} - \frac{1}{\eta_{t-1}} \leq \eta_t\; \quad\forall t\geq 2
    \end{equation}

Moreover, Algorithm \ref{alg:unknown-transitions} ensures that $N\leq 4|S||A|(\log T+1)$ where $N$ is the number of epochs.
\end{lemma}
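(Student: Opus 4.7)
The plan is to prove each of the four statements separately, with the first two reduced to a counting argument over epochs combined with the standard within-epoch estimates for $\sum 1/k$ and $\sum 1/\sqrt{k}$, the third being a one-line computation, and the fourth being a doubling/potential argument.

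First I would establish the epoch-count bound $N \leq 4|S||A|(\log T + 1)$, which is the most combinatorial of the four. By the new-epoch trigger in Algorithm~\ref{alg:unknown-transitions}, a new epoch begins precisely when some counter $m_i(s,a)$ reaches at least $\max\{1, 2 m_{i-1}(s,a)\}$; in particular the triggering counter (strictly) at least doubles compared to its previous epoch value. For each fixed $(s,a)$, since $m_i(s,a)$ is a nonnegative integer bounded by $T$, it can be the trigger for at most $\lfloor \log_2 T \rfloor + 1$ epoch transitions. Summing over the $|S||A|$ counters and accounting for the initial epoch yields $N \leq |S||A|(\log_2 T + 1) + 1$, and changing from $\log_2$ to $\log$ absorbs the extra slack into the factor $4$.

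Next, for the telescoping inequality $1/\eta_t - 1/\eta_{t-1} \leq \eta_t$, I would split on whether $t$ starts a new epoch. If $t$ does start a new epoch, then $1/\eta_t = 1 \leq 1/\eta_{t-1}$, so the left-hand side is non-positive while $\eta_t > 0$. Otherwise $i(t) = i(t-1)$ and I write $1/\eta_t = \sqrt{t - t_{i(t)} + 1}$, $1/\eta_{t-1} = \sqrt{t - t_{i(t)}}$, so
\[
\frac{1}{\eta_t} - \frac{1}{\eta_{t-1}}
= \sqrt{t - t_{i(t)} + 1} - \sqrt{t - t_{i(t)}}
= \frac{1}{\sqrt{t - t_{i(t)} + 1} + \sqrt{t - t_{i(t)}}}
\leq \frac{1}{\sqrt{t - t_{i(t)} + 1}} = \eta_t.
\]

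Finally, I would bound the two learning-rate sums by decomposing along epochs. Let $T_j$ be the length of epoch $j$, so $\sum_{j=1}^N T_j = T$ and within epoch $j$ the indices $k := t - t_j + 1$ run over $1, \dots, T_j$. For \eqref{eq:bobw_bandit_lr_1}, within each epoch $\sum_{k=1}^{T_j} \eta_{t_j+k-1}^2 = \sum_{k=1}^{T_j} 1/k \leq 1 + \log T$, and summing over $j$ gives $\sum_t \eta_t^2 \leq N(1 + \log T) = O(|S||A|\log^2 T)$ using the epoch count. For \eqref{eq:bobw_bandit_lr_2}, within each epoch $\sum_{k=1}^{T_j} 1/\sqrt{k} \leq 2\sqrt{T_j}$, so by Cauchy--Schwarz
\[
\sum_{t=1}^T \eta_t \leq 2 \sum_{j=1}^N \sqrt{T_j} \leq 2\sqrt{N \cdot \textstyle\sum_j T_j} = 2\sqrt{NT} = O\!\left(\sqrt{|S||A|T \log T}\right).
\]

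The main potential obstacle is getting the epoch-count bound tight enough, since the multiplicative factor $4$ in the stated bound depends on how carefully one handles the $\log_2$ vs $\log$ conversion and the initial epoch; everything else is routine calculus on arithmetic progressions and a single application of Cauchy--Schwarz.
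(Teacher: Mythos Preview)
Your proof is correct. The paper does not give its own proof of this lemma; it simply attributes the result to \citet{jin2021best}. Your argument is the standard one: the doubling-trigger criterion in Algorithm~\ref{alg:unknown-transitions} bounds the number of epoch transitions per state-action pair by $\lfloor \log_2 T \rfloor + 1$ via a geometric growth argument on the counters; the telescoping learning-rate inequality is the elementary $\sqrt{k+1}-\sqrt{k}\le 1/\sqrt{k+1}$ identity (with the epoch-reset case handled separately); and the two learning-rate sums follow by decomposing into epochs, applying the standard harmonic and root-sum bounds within each epoch, and combining with the epoch count (using Cauchy--Schwarz for \eqref{eq:bobw_bandit_lr_2}). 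There is nothing further to compare.
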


\subsection{Technical lemmas to analyze \textsc{EstReg}}
We defined the estimated regret in each epoch $i$ as follows:
\[\textsc{EstReg}_i(\pi)=\E\left[\sum_{t=t_i}^{t_{i+1}-1}\left\langle q^{\bar P_i,\pi_t}-q^{\bar P_i,\pi},\widehat\ell_t\right\rangle\right]=\E\left[\sum_{t=t_i}^{t_{i+1}-1}\left\langle \widehat q_t-q^{\bar P_i,\pi},\widehat\ell_t\right\rangle\right].\]

\begin{lemma}\label{lem:bobw_bandit_tsallis_reg}
 With $\beta = 1024L$, for any epoch $i$, Algorithm \ref{alg:unknown-transitions} ensures 
	\begin{equation}
	\begin{aligned}
	\textsc{EstReg}_i(\pi) 
	& \leq \order\rbr{ \E\sbr{ \sqrt{L|S||A|} \cdot \sum_{t=t_i}^{t_{i+1}-1} \eta_t } +  L|S||A|\log T + \delta \cdot \E\sbr{ L|S|T\rbr{ t_{i+1} - t_i   } }  }
	\end{aligned}
	\label{eq:bobw_bandit_tsallis_reg_worstcase}
	\end{equation}
	for any policy $\pi$, and simultaneously
	\begin{equation}
	\begin{aligned}
\E\sbr{ \textsc{EstReg}_i(\pi) } &  \leq \order\left(\E\sbr{\sqrt{L|S|}  \sum_{t=t_i}^{t_{i+1}-1} \eta_t \cdot \sqrt{\sum_{s  \neq s_L}\sum_{a\neq \pi(s)} \whatq_t(s,a)  }}\right)\\
	&\quad +  \order\left(\E\sbr{\sum_{t=t_i}^{t_{i+1}-1}  \eta_t \cdot \sum_{s \neq s_L}\sum_{a\neq \pi(s)}\sqrt{\whatq_t(s,a)}}\right)\\
        & \quad + \order\left(\sum_{t=t_i}^{t_{i+1}-1} \eta_t \cdot \sum_{s \neq s_L} \sum_{a\in A} \whatq_t(s,a)^{\nicefrac{3}{2}} \cdot B_{i(t)}(s,a)^2\right)\\
	& \quad + \order\rbr{L|S||A|\log T+ \delta \cdot \E\sbr{ L|S|T\rbr{ t_{i+1} - t_i   } }  }\\
	\end{aligned}
	\label{eq:bobw_bandit_tsallis_reg_adaptive}
	\end{equation}
	for any deterministic policy $\pi: S\rightarrow A$.
\end{lemma}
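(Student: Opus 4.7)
}

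The plan is a standard FTRL regret decomposition tailored to the hybrid Tsallis-plus-log-barrier regularizer $\phi_t = \phi_H + \phi_L$, combined with the optimistic loss estimator $\widehat\ell_t=\ell_t^u-B_{i(t)}$ and a careful split across the good event $\calA$ (that the true transition lies in every confidence set) and its complement. By standard FTRL analysis over the occupancy polytope $\Omega(\bar P_i)$, one bounds
\[
\textsc{EstReg}_i(\pi) \le \sum_{t=t_i}^{t_{i+1}-1}\bigl(\stab_t + \pena_t\bigr) + \phi_{t_i}(q^{\bar P_i,\pi}) - \phi_{t_i}(\widehat q_{t_i}),
\]
with $\stab_t = \linner\widehat\ell_t,\widehat q_t-\widehat q_{t+1}\rinner - D_{\phi_t}(\widehat q_{t+1},\widehat q_t)$ and $\pena_t$ the usual time-varying regularizer increment. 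The initial penalty is dominated by the log-barrier, contributing $O(\beta|S||A|\log T)=O(L|S||A|\log T)$; the Tsallis penalty at round $t$ satisfies $\pena_t \lesssim \eta_t\sum_{s,a}\sqrt{\widehat q_t(s,a)}$ via $\tfrac1{\eta_t}-\tfrac1{\eta_{t-1}}\le \eta_t$ from Lemma~\ref{lem:bobw_lr_properties}.

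First I would handle the bad event $\calA^c$. Using Lemma~\ref{lem:ut-lb-c.1.2}, Lemma~\ref{lem:hatl-upper-c.1.3}, Lemma~\ref{lem:til-l-up-c.1.4} and Corollary~\ref{cor:til-Q-c.1.5}, each single-round contribution to $\textsc{EstReg}_i$ is crudely bounded by $O(L|S|t)$, so invoking Lemma~\ref{lem:exp_high_prob_bound_cond} with $\Pr[\calA^c]\le 4\delta$ produces the $\delta\cdot L|S|T(t_{i+1}-t_i)$ remainder term. On the good event $\calA$ we have $u_t(s,a)\ge\widehat q_t(s,a)$ for all $(s,a)$ because $\bar P_i\in\mathcal P_i$, and Proposition~\ref{prop:hessian} yields $\|\widehat\ell_t\|_{(\nabla^2\phi_t(\widehat q_t))^{-1}}\le\tfrac18$, so Lemma~\ref{lem:stabTsallisLB} applies coordinate-wise and gives $\stab_t\lesssim \eta_t\sum_{s,a}\widehat q_t(s,a)^{3/2}\widehat\ell_t(s,a)^2$.

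Next I would take conditional expectations and invoke the second-moment bound from Lemma~\ref{lem:second-moment} together with $(\ell_t^u-B_i)^2\le 2(\ell_t^u)^2+2B_i^2$ and $q_t(s)/u_t(s)\le 1$ to obtain
\[
\E_t\bigl[\widehat q_t(s,a)^{3/2}\widehat\ell_t(s,a)^2\bigr] \lesssim \sqrt{\widehat q_t(s,a)}\,(1-\pi_t(a\mid s)) + \widehat q_t(s,a)^{3/2}B_{i(t)}(s,a)^2,
\]
where the first term uses $\widehat q_t(s,a)^{3/2}/u_t(s,a)\le\sqrt{\widehat q_t(s,a)}$. Summing over $(s,a)$ gives the $B_i^2$ contribution appearing in~\eqref{eq:bobw_bandit_tsallis_reg_adaptive}. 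For the worst-case bound~\eqref{eq:bobw_bandit_tsallis_reg_worstcase}, I would drop the factor $(1-\pi_t(a\mid s))\le 1$ and apply Cauchy--Schwarz: $\sum_{s,a}\sqrt{\widehat q_t(s,a)}\le \sqrt{|S||A|}\sqrt{\sum_{s,a}\widehat q_t(s,a)}=\sqrt{L|S||A|}$. Combined with the penalty and $\calA^c$ terms this yields~\eqref{eq:bobw_bandit_tsallis_reg_worstcase} after absorbing $\sum_t\eta_t\cdot\widehat q_t(s,a)^{3/2}B_i^2$ into the same scale via $\widehat q_t\le 1$ and Lemma~\ref{lem:bobw_tasallis_bound_extra}.

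For the adaptive bound against a deterministic $\pi:S\to A$, the key is to split $\sum_{s,a}\sqrt{\widehat q_t(s,a)}(1-\pi_t(a\mid s))$ into $a\ne\pi(s)$ and $a=\pi(s)$. The off-policy part uses $(1-\pi_t(a\mid s))\le 1$ to give $\sum_{s,a\ne\pi(s)}\sqrt{\widehat q_t(s,a)}$. For $a=\pi(s)$ I would use the identity $\widehat q_t(s,\pi(s))(1-\pi_t(\pi(s)\mid s))^2 = \widehat q_t(s)\pi_t(\pi(s)\mid s)(1-\pi_t(\pi(s)\mid s))^2 \le \widehat q_t(s)(1-\pi_t(\pi(s)\mid s))=\sum_{a\ne\pi(s)}\widehat q_t(s,a)$, and then apply Cauchy--Schwarz layer by layer: within each layer $k$, $\sum_{s\in S_k}\sqrt{\sum_{a\ne\pi(s)}\widehat q_t(s,a)}\le\sqrt{|S_k|}\sqrt{\sum_{s\in S_k,a\ne\pi(s)}\widehat q_t(s,a)}$, and a second Cauchy--Schwarz across the $L$ layers produces the $\sqrt{L|S|}$ factor in~\eqref{eq:bobw_bandit_tsallis_reg_adaptive}. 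The main obstacle I anticipate is bookkeeping this two-stage Cauchy--Schwarz cleanly while also carrying the $B_i^2$ cross-terms, and ensuring that the range condition for Lemma~\ref{lem:stabTsallisLB} (which requires $\eta_t\widehat\ell_t(s,a)\widehat q_t(s,a)\ge-\beta/2$) is met uniformly under $\calA$; this is where the choice $\beta=1024L$ and Proposition~\ref{prop:hessian} are essential.
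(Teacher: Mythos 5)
Your worst-case bound~\eqref{eq:bobw_bandit_tsallis_reg_worstcase} and the stability-term analysis (the Hessian bound via Proposition~\ref{prop:hessian}, the second-moment bound $\E_t[\widehat q_t(s,a)^{3/2}(\ell_t^u(s,a))^2]\lesssim\sqrt{\widehat q_t(s,a)}(1-\pi_t(a\mid s))$, and the bad-event handling via Lemma~\ref{lem:exp_high_prob_bound_cond}) are all correct and essentially match the paper. One small slip: $u_t(s,a)\ge\widehat q_t(s,a)$ holds unconditionally, because $\bar P_i\in\mathcal P_i$ always by construction of the confidence set, not because of $\calA$; this does not affect the argument.

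There is a genuine gap in your treatment of the penalty term for the adaptive bound~\eqref{eq:bobw_bandit_tsallis_reg_adaptive}. You bound $\pena_t \lesssim \eta_t\sum_{s,a}\sqrt{\widehat q_t(s,a)}$, which is fine for the worst-case Cauchy--Schwarz step but is too loose here: this expression contains $\sum_s\sqrt{\widehat q_t(s,\pi(s))}$, which is of order $\sqrt{L|S|}$ per round whenever $\pi_t$ is close to $\pi$, and summing over $T$ rounds gives $\Omega(\sqrt{L|S|T})$ even in the stochastic regime, destroying the self-bounding structure. The sharper decomposition the paper uses (following \citep[Lemmas~16 and~19]{jin2020simultaneously}) exploits the fact that $\pi$ is deterministic to write
\[
\phi_H(q^{\bar P_i,\pi}) - \phi_H(\widehat q_t)
\le \sum_{s\neq s_L}\sum_{a\neq\pi(s)}\sqrt{\widehat q_t(s,a)}
 + \sqrt{L|S|\textstyle\sum_{s\neq s_L}\sum_{a\neq\pi(s)}\widehat q_t(s,a)},
\]
and it is this penalty bound, not the stability term, that produces the $\sqrt{L|S|\cdot(\,\cdot\,)}$ factor in~\eqref{eq:bobw_bandit_tsallis_reg_adaptive}. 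Your two-stage Cauchy--Schwarz applied to the $a=\pi(s)$ stability contribution does yield a term dominated by the target, but it is not needed: the paper's Eq.~\eqref{eq3:stability-unknown} shows the $a=\pi(s)$ stability part is absorbed directly into $\sum_{s,a\neq\pi(s)}\sqrt{\widehat q_t(s,a)}$ via $\sqrt{\widehat q_t(s)}\sum_{a\neq\pi(s)}\pi_t(a\mid s)\le\sum_{a\neq\pi(s)}\sqrt{\widehat q_t(s,a)}$. In short, the stability side of your argument is essentially right, but without the refined penalty decomposition the adaptive bound does not follow, and nothing in your sketch addresses this.
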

\begin{proof}
    If the event $\mathcal{A}_i$ does not hold, we have the following:
    \begin{align*}
	 \sum_{t=t_i}^{t_{i+1}-1} \inner{ \whatq_t - q, \hatl_t  } &\leq \sum_{t=t_i}^{t_{i+1}-1} \sum_{s \neq s_L} \sum_{a\in A} \rbr{\whatq_t(s,a) + q(s,a)} \cdot  \abr{\hatl_t(s,a)}  \\
	& \leq \sum_{t=t_i}^{t_{i+1}-1} \sum_{s \neq s_L} \sum_{a\in A} \rbr{ \whatq_t(s,a) + q(s,a) }\cdot \rbr{ 3 +  \frac{\ind_t(s,a)+\ind_t(s)\pi_t(a|s)}{u_t(s,a) }\cdot |S|t } \tag{due to Lemma \ref{lem:hatl-upper-c.1.3}} \\
	&\leq |S|T \cdot \sum_{t=t_i}^{t_{i+1}-1} \sum_{s \neq s_L} \sum_{a\in A} \rbr{ \whatq_t(s,a) + q(s,a) } \cdot \frac{\ind_t(s,a)+\ind_t(s)\pi_t(a|s)}{q_t(s,a) }\\
    & \quad +6L \cdot \rbr{ t_{i+1} - t_i   }.
    \end{align*} 

    Due to the second part of Lemma \ref{lem:hatl-upper-c.1.3}, we have the following:
    \begin{align*}
        &\E\sbr{ \sum_{t=t_i}^{t_{i+1}-1}\inner{ \whatq_t - q, \hatl_t  } \middle|\mathcal{A}_i^c}\\
        &\leq \E\sbr{  ( 6L + 4L|S|T) \cdot \rbr{ t_{i+1} - t_i   }   \middle| \mathcal{A}_i^c} \\
         &\leq \order\Bigrbr{  \E\sbr{   L|S|T \cdot \rbr{ t_{i+1} - t_i   } \middle| \mathcal{A}_i^c}    }.
    \end{align*}

Recall that $\phi_H(q)=-\sum_{s\neq s_L}\sum_{a\in A}\sqrt{q(s,a)}$. Also recall that $\widehat Q_t(s,a)=\widehat{\ell}_t(s,a)+\E_{s'\sim \bar P(\cdot|s,a)}[\widehat V_{t}(s')]$ and $\widehat V_t(s)=\E_{a\sim \pi_t(\cdot|s)}[\widehat Q_t(s,a)]$ (with $\widehat V_L(s_L)=0$). Due to Proposition \ref{prop:hessian}, we get the following by using the same argument as \citep[Lemma~5]{jin2020simultaneously}:
\begin{align}
    &\sum_{t=t_i}^{t_{i+1}-1} \inner{ \whatq_t - q, \hatl_t  }\nonumber\\
    &= \order\rbr{L|S||A|\log T} + 
	 \sum_{t=t_i+1}^{t_{i+1}-1}\rbr{ \frac{1}{\eta_t} - \frac{1}{\eta_{t-1}} } \rbr{ \phi_{H}(q) - \phi_H(\whatq_t)  } \nonumber \\
	&\quad + 8 \sum_{t=t_i}^{t_{i+1}-1} \eta_t \sum_{s \neq s_L} \sum_{a\in A} \whatq_t(s,a)^{\nicefrac{3}{2}} \hatl_t(s,a)^2.  \label{eq:reg_under_A}
\end{align}

Now, we condition on the event $\mathcal{A}_i$. Recall the definition of $\ell_t^u$ from Eq. \eqref{eq:loss-estimator-unknown}. Now we have the following:
\begin{align}
    \widehat{q}_t(s,a)^{3/2}\ell_t^u(s,a)^2&\leq \frac{4\widehat{q}_t(s,a)^{3/2}}{u_t(s,a)^2}\left(c_t^2 \cdot(\ind_t(s,a) - \ind_t(s)\pi_t(a|s))^2+(1 - \pi_t(a|s))^2 \ind_t(s)\pi_t(a|s)^2\right)\nonumber\\
    &\quad +2\widehat{q}_t(s,a)^{3/2}(1 - \pi_t(a|s))^2 \tag{as $(a+b)^2\leq 2a^2+2b^2$}\nonumber\\
    &\leq \frac{4\widehat{q}_t(s,a)^{1/2}}{q_t(s,a)}\left((\ind_t(s,a) - \ind_t(s)\pi_t(a|s))^2+(1 - \pi_t(a|s)) \ind_t(s)\pi_t(a|s)\right)\nonumber\\
    &\quad +2\widehat{q}_t(s,a)^{3/2}(1 - \pi_t(a|s))^2, \label{eq1:stability-unknown}
\end{align}
where we get the last inequality as as $u_t(s)\leq q_t(s)$ and $u_t(s,a)\leq \widehat{q}_t(s,a)$.

As $\E_{t}[(\ind_t(s,a) - \ind_t(s)\pi_t(a|s))^2]=q(s)(1-\pi_t(a|s))\pi_t^2(a|s)+q(s)\pi_t(a|s)(1-\pi_t(a|s))^2$ and $\E_{t}[\ind_t(s)]=q_t(s)$, we have the following:
\begin{align}
    &\E_t\left[\frac{4\widehat{q}_t(s,a)^{1/2}}{q_t(s,a)}\left((\ind_t(s,a) - \ind_t(s)\pi_t(a|s))^2+(1 - \pi_t(a|s)) \ind_t(s)\pi_t(a|s)\right)+2\widehat{q}_t(s,a)^{3/2}(1 - \pi_t(a|s))^2\right]\nonumber\\
    &\leq\frac{12\widehat{q}_t(s,a)^{2/2}\cdot q_t(s)\cdot(1-\pi_t(a|s))\cdot \pi_t(a|s)}{q_t(s,a)}+2\widehat{q}_t(s,a)^{3/2}(1 - \pi_t(a|s))^2\nonumber\\
    &\leq 14\widehat{q}_t(s,a)^{1/2}\cdot(1-\pi_t(a|s)).\label{eq2:stability-unknown}
\end{align}
We now proceed to prove Eq. \eqref{eq:bobw_bandit_tsallis_reg_worstcase} and Eq. \eqref{eq:bobw_bandit_tsallis_reg_adaptive}.

\paragraph{Proving Eq. \eqref{eq:bobw_bandit_tsallis_reg_worstcase}}

In this case, we consider the second term inside the minimum in Eq. \eqref{eq:reg_under_A}, and derive a straightforward upper bound to $\phi_{H}(q) - \phi_H(\whatq_t)$ using the Cauchy-Schwarz inequality, yielding $\phi_{H}(q) - \phi_H(\whatq_t) \leq \sum_{s \neq s_L}\sum_{a \in A} \sqrt{\whatq_t(s,a)} \leq \sqrt{L|S||A|}$. This gives
	\begin{align*}
	&\sum_{t=t_i}^{t_{i+1}-1} \inner{ \whatq_t - q, \hatl_t  } \\
	& \leq  \order\rbr{L|S||A|\log T}+ \sqrt{L|S||A|} \cdot \sum_{t=t_i}^{t_{i+1}-1} \eta_t + 8 \sum_{t=t_i}^{t_{i+1}-1} \eta_t \cdot \sum_{s \neq s_L} \sum_{a\in A} \whatq_t(s,a)^{\nicefrac{3}{2}}\hatl_t(s,a)^2  \tag{$\frac{1}{\eta_t} - \frac{1}{\eta_{t-1}} \leq \eta_t$ since $\frac{1}{\eta_t} = \sqrt{t - t_i +1}$}\\
	& \leq \order\rbr{L|S||A|\log T} + \sqrt{L|S||A|} \cdot \sum_{t=t_i}^{t_{i+1}-1} \eta_t +16 \sum_{t=t_i}^{t_{i+1}-1} \eta_t \cdot \sum_{s \neq s_L} \sum_{a\in A} \whatq_t(s,a)^{\nicefrac{3}{2}}\ell_t^u(s,a)^2 \\
	& \quad + 16 \sum_{t=t_i}^{t_{i+1}-1} \eta_t \cdot \sum_{s \neq s_L} \sum_{a\in A} \whatq_t(s,a)^{\nicefrac{3}{2}} \cdot B_{i(t)}(s,a)^2   \\
	& \leq \order\rbr{L|S||A|\log T + \sqrt{L|S||A|} \cdot \sum_{t=t_i}^{t_{i+1}-1} \eta_t +\sum_{t=t_i}^{t_{i+1}-1} \eta_t \cdot \sum_{s \neq s_L} \sum_{a\in A} \whatq_t(s,a)^{\nicefrac{3}{2}}\ell_t^u(s,a)^2}.\tag{as $B_{i(t)}(s,a)^2\leq 4$}
	\end{align*}
	
	Therefore, by Lemma \ref{lem:exp_high_prob_bound_cond}, Eq. \eqref{eq1:stability-unknown}, Eq. \eqref{eq2:stability-unknown}, and tower rule, we have for any policy $\pi$ that, 
	\begin{align*}
	\E\sbr{ \textsc{EstReg}_i(\pi) } & \leq  \mathcal{O}\left(\E\sbr{\sqrt{L|S||A|} \cdot \sum_{t=t_i}^{t_{i+1}-1} \eta_t + \sum_{t=t_i}^{t_{i+1}-1} \eta_t \cdot \sum_{s \neq s_L} \sum_{a\in A} \widehat{q}_t(s,a)^{1/2}\cdot(1-\pi_t(a|s))}\right) \\
	& \quad + \order\rbr{L|S||A|\log T + \delta \cdot \E\sbr{ L|S|T\rbr{ t_{i+1} - t_i   } }  } \\
	& \leq  \order\rbr{ \E\sbr{ \sqrt{L|S||A|} \cdot \sum_{t=t_i}^{t_{i+1}-1} \eta_t } +  L|S||A|\log T + \delta \cdot \E\sbr{ L|S|T\rbr{ t_{i+1} - t_i   } }  },
	\end{align*}
	where the second step follows from $\sum_{s \neq s_L} \sum_{a \in A} \sqrt{\whatq_t(s,a)} \leq \sqrt{L|S||A|}$. This completes the proof of Eq. \eqref{eq:bobw_bandit_tsallis_reg_worstcase}.

\paragraph{Proving Eq.\eqref{eq:bobw_bandit_tsallis_reg_adaptive}}
	
	In this case, note that since $\pi$ is a deterministic policy, it follows that
	\begin{align*}
	  \phi_{H}(q) - \phi_H(\whatq_t)  
	&= \sum_{s\neq s_L} \sqrt{\whatq_t(s)}\rbr{\sum_{a\in A}\sqrt{\pi_t(a|s)} - 1} + \sum_{s\neq s_L}\rbr{\sqrt{\whatq_t(s)} - \sqrt{q(s)}}.
	\end{align*}
	By applying \citep[Lemma 16]{jin2020simultaneously} with $\alpha = 0$ to upper bound the first term, and using \citep[Lemma 19]{jin2020simultaneously} to upper bound the second term, we arrive at
	\begin{align*}
	\phi_{H}(q) - \phi_H(\whatq_t)  
	=
	\sum_{s \neq s_L}\sum_{a\neq \pi(s)}\sqrt{\whatq_t(s,a)}
	+ \sqrt{L|S| \sum_{s  \neq s_L}\sum_{a\neq \pi(s)} \whatq_t(s,a)  }.
	\end{align*}
	
	Therefore, considering the Eq. \eqref{eq:reg_under_A} and using the inequalities $\frac{1}{\eta_t} - \frac{1}{\eta_{t-1}} \leq \eta_t$ from Lemma \ref{lem:bobw_lr_properties} and $(a+b)^2\leq 2a^2+2b^2$, we have
	\begin{equation}
	\begin{aligned}
	\sum_{t=t_i}^{t_{i+1}-1} \inner{ \whatq_t - q, \hatl_t  }  & \leq \sqrt{L|S|}  \sum_{t=t_i}^{t_{i+1}-1} \eta_t \cdot \sqrt{\sum_{s  \neq s_L}\sum_{a\neq \pi(s)} \whatq_t(s,a)  } \\
	&\quad +  \sum_{t=t_i}^{t_{i+1}-1}  \eta_t \cdot \sum_{s \neq s_L}\sum_{a\neq \pi(s)}\sqrt{\whatq_t(s,a)}\\
	& \quad  + 16 \sum_{t=t_i}^{t_{i+1}-1} \eta_t \cdot \sum_{s \neq s_L} \sum_{a\in A} \whatq_t(s,a)^{\nicefrac{3}{2}}\ell_t^u(s,a)^2 \\
        & \quad + 16 \sum_{t=t_i}^{t_{i+1}-1} \eta_t \cdot \sum_{s \neq s_L} \sum_{a\in A} \whatq_t(s,a)^{\nicefrac{3}{2}} \cdot B_{i(t)}(s,a)^2\\
	& \quad + \order\rbr{L|S||A|\log T}. 
	\end{aligned}
	\label{eq:tsallis_adaptive_regret_decomp}
	\end{equation}

Next, observe that for a deterministic policy $\pi$, we have:
\begin{align}
\sum_{s \neq s_L}\sum_{a \in A} \sqrt{ \whatq_t(s,a) }\rbr{ 1 - \pi_t(a|s) } &\leq \sum_{s \neq s_L}\sum_{a \neq \pi(s)} \sqrt{ \whatq_t(s,a) }
+ \sum_{s \neq s_L} \sqrt{ \whatq_t(s) }\rbr{ 1 - \pi_t(\pi(s)|s) } \nonumber\\
&=\sum_{s \neq s_L}\sum_{a \neq \pi(s)} \sqrt{ \whatq_t(s,a) }
+ \sum_{s \neq s_L} \sqrt{ \whatq_t(s) }\rbr{ \sum_{a\neq \pi(s)}\pi_t(a|s) } \nonumber\\
    &\leq 2\sum_{s \neq s_L}\sum_{a \neq \pi(s)} \sqrt{ \whatq_t(s,a) },    \label{eq3:stability-unknown}
\end{align}

Therefore, by Eq. \eqref{eq:tsallis_adaptive_regret_decomp}, Lemma \ref{lem:exp_high_prob_bound_cond}, Eq. \eqref{eq1:stability-unknown}, Eq. \eqref{eq2:stability-unknown}, Eq. \eqref{eq3:stability-unknown} and tower rule, we have for any deterministic policy $\pi:S\rightarrow A$ that, 
	\begin{align*}\allowdisplaybreaks
	\E\sbr{ \textsc{EstReg}_i(\pi) } &  \leq \order\left(\E\sbr{\sqrt{L|S|}  \sum_{t=t_i}^{t_{i+1}-1} \eta_t \cdot \sqrt{\sum_{s  \neq s_L}\sum_{a\neq \pi(s)} \whatq_t(s,a)  }}\right)\\
	&\quad +  \order\left(\E\sbr{\sum_{t=t_i}^{t_{i+1}-1}  \eta_t \cdot \sum_{s \neq s_L}\sum_{a\neq \pi(s)}\sqrt{\whatq_t(s,a)}}\right)\\
        & \quad + \order\left(\sum_{t=t_i}^{t_{i+1}-1} \eta_t \cdot \sum_{s \neq s_L} \sum_{a\in A} \whatq_t(s,a)^{\nicefrac{3}{2}} \cdot B_{i(t)}(s,a)^2\right)\\
	& \quad + \order\rbr{L|S||A|\log T+ \delta \cdot \E\sbr{ L|S|T\rbr{ t_{i+1} - t_i   } }  } \\
	\end{align*}

\end{proof}

\subsection{Adversarial regret guarantee}
Recall from Eq. \eqref{st:regret-decomp-49} that the regret decomposes as:
\begin{align*}
    \mathrm{Reg}(\pi) = \E\Biggsbr{\underbrace{\sum_{t=1}^{T} \bar{V}^{\pi_t}_t(s_0) - \widetilde{V}^{\pi_t}_t(s_0)}_{\textsc{Err}_1 }}  + \E\Biggsbr{\underbrace{\sum_{t=1}^{T} \widetilde{V}^{\pi_t}_t(s_0) - \widetilde{V}^{\pi}_t(s_0) }_{\textsc{EstReg} }}  + \E\Biggsbr{\underbrace{\sum_{t=1}^{T}\widetilde{V}^{\pi}_t(s_0) - \bar{V}^{\pi}_t(s_0) }_{\textsc{Err}_2}}.
\end{align*}
We now analyse each term separately. 

First, we have the following:
\begin{align*}
\textsc{Err}_1 & = \sum_{t=1}^{T} \inner{q_t, \bar\ell_t } - \inner{ \whatq_t,  \tilde \ell_t} \\
& = \sum_{t=1}^{T}\sum_{s \neq s_L} \sum_{a \in A}  \frac{(Q^{\pi_t}(s,a)-V^{\pi_t}(s)+1 - \pi_t(a|s))\whatq_t(s,a)}{u_t(s,a)} \cdot \rbr{u_t(s,a) - q_t(s,a)}\\
&\quad+ \sum_{t=1}^{T} \inner{q_t - \whatq_t, 
\bar\ell_t } +   \sum_{t=1}^{T} \inner{\whatq_t, B_{i(t)} }, 
\end{align*}
where the second equality follows from Eq. \eqref{eq:cond-exp-loss} and $\bar{\ell}_t(s,a)=Q^{\pi_t}(s,a)-V^{\pi_t}(s)$.

Due to the above equality, proposition \ref{prop:pseudo-loss}, Lemma \ref{lem:exp_high_prob_bound_cond}, the fact that $\whatq_t(s,a)\leq u_t(s,a)$ under event $\mathcal{A}$ and analysis of $\textsc{Err}_1$ from Appendix C.2 \cite{jin2021best}, we get the following:
\begin{equation*}
    \E[\textsc{Err}_1]=\tilde{\mathcal{O}}\rbr{ L|S|\sqrt{|A|T } +  L^2|S|^3|A|^2 }
\end{equation*}

Next, we have the following due to Lemma \ref{lem:tQ<=bQ-c.1.1}, Lemma \ref{lem:exp_high_prob_bound_cond}, Corollary \ref{cor:barV<=1}, and Corollary \ref{cor:til-Q-c.1.5}:
\begin{equation*}
    \E[\textsc{Err}_2]=\tilde{\mathcal{O}}\rbr{ L|S|T^2\cdot \delta}
\end{equation*}

According to Eq. \eqref{eq:bobw_bandit_tsallis_reg_worstcase} of Lemma \ref{lem:bobw_bandit_tsallis_reg}, we have  
\begin{align*}
& \textsc{EstReg}(\pi) =  \E\sbr{ \sum_{t=1}^{T} \inner{ \whatq_t - q^{\bar{P}_{i(t)}, \pi }, \hatl_t  } }  = \E\sbr{ \sum_{i=1}^{N} \textsc{EstReg}_i(\pi) } \\
&\leq \order\rbr{ \E\sbr{ \sqrt{L|S||A|} \cdot \sum_{i=1}^N\sum_{t=t_i}^{t_{i+1}-1} \eta_t } +  L|S|^2|A|^2\log^2 T + \delta \cdot L|S|T^2   }\tag{due to Eq. \eqref{eq:bobw_bandit_tsallis_reg_worstcase} and Lemma \ref{lem:bobw_lr_properties}} \\
& \leq \tilde{\mathcal{O}}\rbr{ |S||A| \sqrt{LT} + L|S|^2|A|^2 }. \tag{due to Eq. \eqref{eq:bobw_bandit_lr_2} }
\end{align*}

Finally, by combining the bounds for $\textsc{Err}_1$, $\textsc{Err}_2$, and $\textsc{EstReg}$, we obtain:
\begin{align*}
\text{Reg}(\pi) & = \tilde{\mathcal{O}}\rbr{  L|S|\sqrt{|A|T} + |S||A| \sqrt{LT} + L^2|S|^3|A|^2 }.
\end{align*}
\subsection{Stochastic regret guarantee}
\subsubsection{Self-bounding terms and related lemma}
In this section, we adopt the definition of self-bounding terms and the related lemmas from \cite{jin2021best}.

\begin{definition}[Self-bounding Terms] \label{def:self_bounding_terms}\allowdisplaybreaks
For some mapping $\pi^\star : S\rightarrow A$, define the following:
\begin{equation*}
\begin{aligned}
{\mathbb{Q}_1}(J) & = \sum_{t=1}^{T}\sum_{s\neq s_L} \sum_{a \neq \pi^{\star}(s)} q_t(s,a) \sqrt{\frac{J}{ \max\cbr{m_{i(t)}(s,a)} }}, \\
{\mathbb{Q}_2}(J) & =  \sum_{t=1}^{T}\sum_{s\neq s_L} \sum_{a = \pi^{\star}(s)} \rbr{ q_t(s,a) - q^\star_t(s,a) }  \sqrt{\frac{J}{ \max\cbr{m_{i(t)}(s,a) ,1 } }} ,\\
{\mathbb{Q}_3}(J) & =  \sum_{t=1}^{T}\sum_{s\neq s_L} \sum_{a \neq \pi^{\star}(s)} \sum_{k=0}^{k(s)-1} \sum_{ (u, v, w)\in T_k } q_t(u,v) \sqrt{\frac{P(w|u,v)\cdot J}{ \max\cbr{m_{i(t)}(u,v),1} } } q_t(s,a|w), \\
{\mathbb{Q}_4}(J) & = \sqrt{J\cdot \sum_{t=1}^{T}\sum_{s\neq s_L} \sum_{a \neq \pi^{\star}(s)} q_t(s,a)  } , \\ 
{\mathbb{Q}_5}(J) & = \sum_{s\neq s_L} \sum_{a \neq \pi^{\star}(s)} \sqrt{J \sum_{t=1}^{T} q_t(s,a)  } ,  \\ 
{\mathbb{Q}_6}(J) & = \sum_{t=1}^{T}\sum_{s\neq s_L} \sum_{a = \pi^{\star}(s)} \frac{q_t(s,a) - q^\star_t(s,a) }{ q_t(s,a)} \rbr{ \sum_{k=0}^{k(s)-1} \sum_{ (u, v, w)\in T_k } q_t(u,v) \sqrt{\frac{P(w|u,v)\cdot J}{ \max\cbr{m_{i(t)}(u,v),1} } } q_t(s,a|w)}. \\
\end{aligned}
\end{equation*}
\end{definition}

\begin{lemma} \label{lem:self_bounding_term_1} Suppose Condition~\eqref{eq:self-bounding-MDP} holds. Then we have for any $\alpha \in \mathbb{R}_{+}$,
	\begin{align*}
	\E\sbr{ \mathbb{Q}_1(J)} \leq \alpha \cdot (\Reg_T(\pi^\star)+C)+   \frac{1}{\alpha } \sum_{s\neq s_L}\sum_{a\neq \pi^{\star}(s)} \frac{8J }{\gap(s,a) }.
	\end{align*}
\end{lemma}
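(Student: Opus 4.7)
The proof will be a standard self-bounding / AM--GM argument. The key identity to exploit is that each summand of $\mathbb{Q}_1(J)$ can be factored as the geometric mean of a ``regret-like'' quantity and a ``counting-like'' quantity:
\[
  q_t(s,a)\sqrt{\tfrac{J}{\max\{m_{i(t)}(s,a),1\}}}
  \;=\;
  \sqrt{\,\Delta(s,a)\,q_t(s,a)\,}\;\cdot\;\sqrt{\,\tfrac{J\,q_t(s,a)}{\Delta(s,a)\,\max\{m_{i(t)}(s,a),1\}}\,}.
\]
Applying the weighted AM--GM $\sqrt{XY}\le \tfrac{\alpha}{2}X + \tfrac{1}{2\alpha}Y$ to each term (with an appropriate constant rescaling to absorb factors), summing over $t$ and over $(s,a)$ with $a\neq \pi^{\star}(s)$, and taking expectations, I split $\E[\mathbb{Q}_1(J)]$ into
\[
  \tfrac{\alpha}{2}\,\E\!\left[\sum_{t=1}^T\sum_{s\neq s_L}\sum_{a\neq\pi^{\star}(s)} \Delta(s,a)\,q_t(s,a)\right]
  \;+\;
  \tfrac{1}{2\alpha}\,\E\!\left[\sum_{t=1}^T\sum_{s\neq s_L}\sum_{a\neq\pi^{\star}(s)} \tfrac{J\,q_t(s,a)}{\Delta(s,a)\,\max\{m_{i(t)}(s,a),1\}}\right].
\]

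For the first bracket, I will invoke the self-bounding constraint \eqref{eq:self-bounding-MDP} directly, which controls it by $\Reg_T(\pi^{\star})+C$. This yields the $\alpha(\Reg_T(\pi^{\star})+C)$ contribution.

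For the second bracket, the main step is to bound $\E\!\left[\sum_{t=1}^T\frac{q_t(s,a)}{\max\{m_{i(t)}(s,a),1\}}\right]$ by an absolute constant (at most $16$ will be enough after accounting for the $\tfrac{1}{2\alpha}$ factor). This uses the doubling--epoch schedule of Algorithm~\ref{alg:unknown-transitions}: since a new epoch is triggered whenever some counter doubles, within each epoch $i$ the sum $\sum_{t=t_i}^{t_{i+1}-1} \ind_t(s,a)$ is at most $m_{i}(s,a)$, and $\E[\ind_t(s,a)\mid \mathcal{F}_{t-1}] = q_t(s,a)$. A martingale / telescoping argument then gives $\E[\sum_t q_t(s,a)/\max\{m_{i(t)}(s,a),1\}]=O(1)$ per $(s,a)$-pair independent of $T$. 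Combining both pieces yields the claimed inequality with the stated constant $8$.

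The main (and only nontrivial) obstacle is this counting bound on $\sum_t q_t(s,a)/\max\{m_{i(t)}(s,a),1\}$; the AM--GM and self-bounding steps are routine. I expect the doubling argument to be essentially identical to the one used by \citet{jin2021best} for the analogous self-bounding terms in the FTRL-over-occupancy-measures framework, so the proof should cite / adapt their counting lemma rather than reprove it from scratch.
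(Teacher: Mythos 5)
The paper itself does not re-prove this lemma: it is stated verbatim with the remark that the definitions and lemmas are ``adopted from \citet{jin2021best},'' so there is no in-paper proof to compare against. What must therefore be judged is whether your argument is sound, and it is not: the counting bound at its heart is false.

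You claim that $\E\!\left[\sum_{t=1}^T \frac{q_t(s,a)}{\max\{m_{i(t)}(s,a),1\}}\right]$ is bounded by an absolute constant. It is not; it is $\Theta(\log T)$. Write $\bar m_i(s,a)$ for the frozen counter at the start of epoch $i$ and $n_i = \bar m_{i+1}-\bar m_i$ for the visits to $(s,a)$ in epoch $i$, so the expectation equals $\E[\sum_i n_i/(\bar m_i\vee 1)]$. By the doubling rule, $\bar m_{i+1}\le \max\{1,2\bar m_i\}$, hence each epoch contributes at most $1$; but grouping epochs so that $\bar m_i$ ranges over $[2^j,2^{j+1})$ shows each dyadic block contributes up to $1$, and there are $\Theta(\log T)$ such blocks. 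A concrete instance with $q_t(s,a)$ bounded away from $0$ and $1$ for all $t$ makes $\bar m_{i(t)}(s,a)\approx t$ and the sum $\approx \sum_t 1/t = \Theta(\log T)$. Consequently, pushing your AM--GM factorization $q_t\sqrt{J/m}\le \tfrac{\alpha}{2}\Delta q_t + \tfrac{1}{2\alpha}\tfrac{Jq_t}{\Delta m}$ through the expectation gives $\alpha(\Reg_T+C)+\tfrac{J\log T}{\alpha}\sum\tfrac{1}{\Delta(s,a)}$, which carries a spurious $\log T$ and does \emph{not} recover the stated bound.

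The correct route keeps the square root intact and splits by the size of $m$ rather than by Cauchy--Schwarz. Fix $(s,a)$, set $N_0 = J/(\alpha\Delta(s,a))^2$, and split on whether $\max\{m_{i(t)}(s,a),1\}\ge N_0$. On the large-$m$ side, $\sqrt{J/\max\{m,1\}}\le\alpha\Delta(s,a)$, giving $\alpha\Delta(s,a)\sum_t q_t(s,a)$, and summing and invoking Condition~\eqref{eq:self-bounding-MDP} gives $\alpha(\Reg_T(\pi^{\star})+C)$. On the small-$m$ side (which is nonempty only if $N_0\ge 1$), take the conditional expectation so that $q_t$ becomes $\ind_t$, and telescope $\sum_i n_i/\sqrt{\bar m_i\vee 1}\,\mathbb{1}[\bar m_i < N_0]$: since $\bar m_{i+1}\le 2\bar m_i$, one has $n_i/\sqrt{\bar m_i}\le (1+\sqrt{2})(\sqrt{\bar m_{i+1}}-\sqrt{\bar m_i})$, and telescoping up to $2N_0$ gives $O(\sqrt{N_0})$ without any $\log T$. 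Multiplying by $\sqrt{J}$ yields $O(J/(\alpha\Delta(s,a)))$, and tracking constants delivers the stated $8J/(\alpha\Delta(s,a))$. The crucial difference from your proposal is that the $\sqrt{\cdot}$ must survive until the telescoping step; breaking it into a product of square roots via AM--GM too early linearizes the denominator and destroys the cancellation that removes the $\log T$.
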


\begin{lemma} \label{lem:self_bounding_term_2} Suppose Condition~\eqref{eq:self-bounding-MDP} holds. Then we have for any $\beta \in \mathbb{R}_{+}$, 
	\begin{align*}
	\E\sbr{ \mathbb{Q}_2(J) } \leq \beta \cdot (\Reg_T(\pi^\star)+C)  +   \frac{1}{\beta } \cdot \frac{8|S|LJ }{\gapmin }.
	\end{align*}
\end{lemma}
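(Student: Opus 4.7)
My proof of Lemma~\ref{lem:self_bounding_term_2} adapts the standard self-bounding template to the \emph{optimal-action} term, where $\gap(s,\pi^\star(s)) = 0$ so the self-bounding structure must be reached through sub-optimal actions at earlier layers. Since $\pi^\star$ is deterministic we have $q^\star_t(s,\pi^\star(s)) = q^\star_t(s)$ and $q_t(s,\pi^\star(s)) \le q_t(s)$, so each summand in $\mathbb{Q}_2(J)$ is bounded by $(q_t(s) - q^\star_t(s))^+ \sqrt{J/\max\{m_{i(t)}(s,\pi^\star(s)),1\}}$. A layer-wise coupling argument then shows
\[
    (q_t(s) - q^\star_t(s))^+ \;\le\; \sum_{k < k(s)} \sum_{s' \in S_k}\sum_{a' \neq \pi^\star(s')} q_t(s',a'),
\]
because any excess mass $\pi_t$ places at $s$ compared to $\pi^\star$ must be attributable to a deviation at some earlier layer. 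Under this deviation bound, one naturally expects the residual to scale with $|S|\cdot L$, matching the target.

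\paragraph{Reorganization and AM--GM.} Plugging in the deviation bound and swapping the order of summation, $\mathbb{Q}_2(J) \le \sum_t \sum_{s',a' \neq \pi^\star(s')} q_t(s',a') \cdot W_t(s')$, where $W_t(s') := \sum_{s:\,k(s) > k(s')} \sqrt{J/\max\{m_{i(t)}(s,\pi^\star(s)),1\}}$. For each triple $(t,s',a')$ I apply AM--GM to the product, splitting off $\sqrt{\gap(s',a') q_t(s',a')}$:
\[
    q_t(s',a') W_t(s') \;\le\; \beta \gap(s',a') q_t(s',a') \;+\; \frac{q_t(s',a') W_t(s')^2}{4\beta \gap(s',a')}.
\]
Summing the first part over $t,s',a'$ and invoking Condition~\eqref{eq:self-bounding-MDP} produces the term $\beta(\Reg_T(\pi^\star)+C)$. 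For the second part I replace $\gap(s',a')$ by $\gapmin$, apply Cauchy--Schwarz to obtain $W_t(s')^2 \le |S| \sum_s J/\max\{m_{i(t)}(s,\pi^\star(s)),1\}$ (which produces the $|S|$ factor), and use $\sum_{s',a' \neq \pi^\star(s')} q_t(s',a') \le L$ to peel off the path-length factor.

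\paragraph{Main obstacle.} The remaining quantity to control is $\sum_t \sum_s 1/\max\{m_{i(t)}(s,\pi^\star(s)),1\}$; the entire argument hinges on showing this is $O(1)$ in $T$, with the $\log T$ dependence absorbed into $J \propto \ln \iota$. This will be handled by the epoch-doubling structure that underlies Lemma~\ref{lem:bobw_tasallis_bound_extra}: because the counter $m_i(s,\pi^\star(s))$ at least doubles between consecutive epochs in which $(s,\pi^\star(s))$ is updated, and the length of such an epoch is itself proportional to the previous counter value, $\sum_t 1/\max\{m_{i(t)}(s,\pi^\star(s)),1\}$ telescopes to contribute an $O(1)$ constant per state. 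Combined with the $|S|$, $L$ and $J$ factors picked up above, this yields the claimed $\tfrac{1}{\beta} \cdot \tfrac{8|S|LJ}{\gapmin}$ residual. The main difficulty will be in keeping the epoch book-keeping tight enough that no additional $\log T$ or $|A|$ factor appears in the constant; this is the place where the argument most closely mirrors that of Lemma~\ref{lem:self_bounding_term_1}, with an additional layer-summation that accounts for a single sub-optimal action at layer $k$ potentially influencing flows in all of the $L-k$ subsequent layers.
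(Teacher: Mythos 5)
Your deviation bound $(q_t(s) - q^\star_t(s))^+ \le \sum_{k<k(s)}\sum_{s'\in S_k}\sum_{a'\ne\pi^\star(s')}q_t(s',a')$ and the AM--GM split are fine, but the final, load-bearing step — asserting that $\sum_{t=1}^T\sum_s 1/\max\{m_{i(t)}(s,\pi^\star(s)),1\}$ is $O(1)$ by epoch doubling — is false. If the learner never plays $\pi^\star(s)$ at some state $s$ (entirely possible, e.g., for $s$ with $q^\star(s)=0$, or during early exploration), then $m_{i(t)}(s,\pi^\star(s))=0$ for all $t$ and that single state contributes $T$ to the sum. Even in the benign case $m_{i(t)}(s,\pi^\star(s))\asymp t\,q^\star(s)$ the per-state sum is $\Theta(\log T)$, not $O(1)$. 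The epoch-doubling telescoping you appeal to controls the \emph{visit-weighted} sum $\sum_t \ind_t(s,a)/\max\{m_{i(t)}(s,a),1\}$ (each epoch contributes $O(1)$ because the number of new visits to $(s,a)$ within an epoch is at most $\max\{m_i(s,a),1\}$); it gives no control over the \emph{unweighted} sum $\sum_t 1/\max\{m_{i(t)}(s,a),1\}$, whose per-epoch contribution is (epoch length)/(counter), and the epoch length is determined by whichever pair first doubles — not by $(s,\pi^\star(s))$.

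The root cause is a mismatch your reorganization introduces: after applying the deviation bound and swapping sums, the occupancy weight $q_t(s',a')$ belongs to the deviating upstream pair $(s',a')$, while the counter $m_{i(t)}(s,\pi^\star(s))$ belongs to the downstream pair $(s,\pi^\star(s))$. Once AM--GM peels $q_t(s',a')$ off into the regret part, the residual $W_t(s')^2$ has no matching occupancy to make its time-sum tractable — exactly the feature that Lemma~\ref{lem:self_bounding_term_1} enjoys by having $q_t(s,a)$ and $m_{i(t)}(s,a)$ share the same index. To repair the argument you need to keep a matching occupancy factor paired with $1/m_{i(t)}(s,\pi^\star(s))$, e.g.\ by using $(q_t(s,\pi^\star(s))-q^\star(s,\pi^\star(s)))^+ \le \min\{q_t(s,\pi^\star(s)),\,(q_t(s)-q^\star(s))^+\}$ so that a geometric-mean split retains $q_t(s,\pi^\star(s))$ alongside the counter while the other half feeds the deviation bound; the $|S|L$ factor then arises from summing the layer-wise coupling bound $\sum_{s\in S_k}|q_t(s)-q^\star(s)|\le 2\sum_{k'<k}\sum_{s'\in S_{k'}}\sum_{a'\ne\pi^\star(s')}q_t(s',a')$ over the $L$ layers, rather than from Cauchy--Schwarz on $W_t(s')$ as in your plan.
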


\begin{lemma} \label{lem:self_bounding_term_3} Suppose Condition~\eqref{eq:self-bounding-MDP} holds. Then we have for any $\alpha,\beta \in \mathbb{R}_{+}$,
	\begin{align*}
	\E\sbr{ \mathbb{Q}_3(J)} \leq \rbr{\alpha + \beta } \cdot (\Reg_T(\pi^\star)+C)  +   \frac{1}{\alpha } \cdot\sum_{s\neq s_L}\sum_{ a\neq \pi^{\star}(s)} \frac{8L^2|S|J }{\gap(s,a) } + \frac{1}{\beta } \cdot \frac{8L^2|S|^2J }{\gapmin }  .
	\end{align*}
\end{lemma}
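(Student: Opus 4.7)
The plan follows the AM-GM self-bounding template of Lemmas~\ref{lem:self_bounding_term_1} and~\ref{lem:self_bounding_term_2}, extended to the triple-sum structure of $\mathbb{Q}_3(J)$ (over layers $k < k(s)$, predecessor transitions $(u,v,w) \in T_k$, and suboptimal endpoints $(s,a)$). The key algebraic ingredient is the marginalization identity
\begin{equation*}
\sum_{(u,v,w) \in T_k} q_t(u,v)\, P(w|u,v)\, q_t(s,a|w) = q_t(s,a),
\end{equation*}
which collapses the $(u,v,w)$ sum to the marginal occupancy by summing over all layer-$k$ predecessors of trajectories reaching $(s,a)$ under $\pi_t$.

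The presence of two free coefficients $\alpha,\beta$ in the target suggests applying AM-GM \emph{twice} with two different tuning parameters and averaging. Specifically, I would apply the weighted inequality
\begin{equation*}
q_t(u,v) \sqrt{\tfrac{P(w|u,v)\, J}{m_{i(t)}(u,v)}}\, q_t(s,a|w) \le \tfrac{\lambda}{2}\, q_t(u,v) P(w|u,v) q_t(s,a|w) + \tfrac{J\, q_t(u,v)\, q_t(s,a|w)}{2\lambda\, m_{i(t)}(u,v)}
\end{equation*}
once with $\lambda_A := 4\alpha \Delta(s,a)/L$ and once with $\lambda_B := 4\beta \Delta_{\min}/L$, and average the two resulting inequalities. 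After invoking the marginalization identity and summing over $k < k(s)$ (gaining a factor at most $L$), the first-factor contributions become $\alpha \Delta(s,a)\, q_t(s,a)$ and $\beta \Delta_{\min}\, q_t(s,a) \le \beta \Delta(s,a)\, q_t(s,a)$ respectively (the latter using $\Delta_{\min} \le \Delta(s,a)$ for $a \neq \pi^\star(s)$). Summing over $t,s,a$ and invoking Condition~\eqref{eq:self-bounding-MDP} aggregates these to the desired $(\alpha+\beta)(\Reg_T(\pi^\star)+C)$ leading term.

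For the second-factor (``complexity'') contributions, I would use $\sum_{w \in S_{k+1}} q_t(s,a|w) \le |S|$ (each conditional is in $[0,1]$ and $|S_{k+1}| \le |S|$), sum over $k < k(s)$ (another factor of $L$), and apply the standard epoch-doubling estimate $\E[\sum_t q_t(u,v)/\max\{m_{i(t)}(u,v),1\}] = O(\log T)$ per $(u,v)$, absorbing the $\log T$ into the generic constant $J$ exactly as in Lemmas~\ref{lem:self_bounding_term_1}--\ref{lem:self_bounding_term_2}. The $\lambda_A$-branch carries a factor $L/(\alpha\Delta(s,a))$ which, combined with the $|S|$ from the $w$-sum and the extra $L$ from the $k$-sum, yields the per-$(s,a)$ bound $\tfrac{L^2|S|J}{\alpha \Delta(s,a)}$. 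The $\lambda_B$-branch carries $L/(\beta\Delta_{\min})$; after aggregating the $(u,v) \in S \times A$ sum and the $(s,a) \neq \pi^\star(s)$ sum into a joint $|S|^2$ factor, it contributes the single term $\tfrac{L^2 |S|^2 J}{\beta \Delta_{\min}}$.

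The main obstacle will be careful bookkeeping of the numerical constants: tracking which $L$ and $|S|$ factors arise from each of the three summation levels (over layers $k$, over successor states $w \in S_{k+1}$, and over predecessor pairs $(u,v) \in S \times A$) and verifying that any $|A|$ and $\log T$ dependencies are correctly absorbed into the generic constant $J$, exactly as in the proofs of Lemmas~\ref{lem:self_bounding_term_1}--\ref{lem:self_bounding_term_2}. A secondary subtlety is that the $\lambda_B$-branch's regret step relies on the inequality $\Delta_{\min} \le \Delta(s,a)$ to turn its first-factor contribution into a valid self-bounding bound, while one must simultaneously keep $\Delta_{\min}$ (and not $\Delta(s,a)$) in the denominator of its complexity contribution---otherwise the $|A|$ factor would fail to amortize across suboptimal actions into the claimed $|S|^2$ factor.
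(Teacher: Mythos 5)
Your leading-term bookkeeping is fine: the marginalization identity $\sum_{(u,v,w)\in T_k} q_t(u,v)P(w|u,v)q_t(s,a|w)=q_t(s,a)$ is valid in the layered MDP, and the averaged double AM--GM with $\lambda_A,\lambda_B$ does produce the $(\alpha+\beta)\rbr{\Reg_T(\pi^\star)+C}$ term via Condition~\eqref{eq:self-bounding-MDP}. (Note also that the paper does not prove this lemma itself; it is imported from \citet{jin2021best}, so you are being measured against whether your sketch actually yields the stated inequality.) The genuine gap is in the complexity-term accounting, precisely where you flag ``the main obstacle.'' After you bound $q_t(s,a|w)\le 1$ and sum over $w$ (factor $|S|$), the remaining quantity still carries a sum over \emph{all} predecessor pairs $(u,v)$ with $u\in S_k$, $k<k(s)$, and over $t$. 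The doubling estimate $\E\sbr{\sum_t q_t(u,v)/\max\cbr{m_{i(t)}(u,v),1}}=O(\log T)$ is \emph{per pair} $(u,v)$, and the doubling argument consumes the weight $q_t(u,v)$ (it converts it to $\ind_t(u,v)$), so you cannot simultaneously use $\sum_{u\in S_k,v}q_t(u,v)=1$ to avoid paying the count $|S_k||A|$ of pairs in each layer. Hence the $(u,v)$- and $k$-sums contribute a factor of order $|S||A|\log T$, not the factor $L$ you attribute to ``the $k$-sum.'' Your $\lambda_A$-branch therefore gives roughly $L|S|^2|A|J\log T/(\alpha\gap(s,a))$ per $(s,a)$ rather than the claimed $8L^2|S|J/(\alpha\gap(s,a))$, and the $\lambda_B$-branch claim that the $(u,v)\in S\times A$ sum and the $(s,a)\neq\pi^\star(s)$ sum ``aggregate into a joint $|S|^2$ factor'' is unsupported: a direct count gives $|S|^3|A|^2\log T$, and even the sharper route that keeps $q_t(s,a|w)$ and uses $\sum_{s\neq s_L}\sum_a q_t(s,a|w)\le L$ still leaves an extra $|A|\log T$ over the target $8L^2|S|^2J/(\beta\gapmin)$.

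Relatedly, ``absorbing the $\log T$ into the generic constant $J$'' is not a legitimate move here: $J$ is an argument of the lemma, the stated bound carries an explicit constant $8$ with no logarithmic slack, and the lemma is later invoked with specific values such as $J=\ln\iota$ and $J=L^2\ln\iota$; absorbing a $\log T$ would prove a strictly weaker statement and would upset the $\ln\iota$ versus $\ln^2\iota$ accounting in Theorem~\ref{thm:main_bandit}. The fixes used in this line of work (and needed to match the stated factors) exploit the occupancy weights instead of discarding them and paying full combinatorial counts: e.g., converting $t$-sums of $q_t/\sqrt{\max\{m_{i(t)},1\}}$ into $\sqrt{\E\sbr{\sum_t q_t}}$-type quantities pathwise (each visit at total count $n$ has epoch-start count $\gtrsim n/2$, so the sum telescopes to $\sqrt{N_T}$, then Jensen), as in the proofs of Lemmas~\ref{lem:self_bounding_term_1} and~\ref{lem:self_bounding_term_2}, combined with Cauchy--Schwarz over $w$ and careful choices of which of $q_t(u,v)$, $q_t(s,a|w)$ to keep at each step. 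As written, your sketch does not reach the claimed $L^2|S|$ and $L^2|S|^2$ factors, so the proposal has a real gap rather than merely constants left to track.
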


\begin{lemma} \label{lem:self_bounding_term_4} Suppose Condition~\eqref{eq:self-bounding-MDP} holds. Then we have for any $\beta \in \mathbb{R}_{+}$,
	\begin{align*}
	\E\sbr{ \mathbb{Q}_4(J)} \leq \beta \cdot (\Reg_T(\pi^\star)+C)+   \frac{1}{\beta } \cdot \frac{J }{4\gapmin }.
	\end{align*}
\end{lemma}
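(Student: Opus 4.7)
The claim is a clean AM-GM / Jensen argument once we unpack $\mathbb{Q}_4(J) = \sqrt{J\cdot X}$ with $X := \sum_{t=1}^{T}\sum_{s\neq s_L}\sum_{a\neq \pi^\star(s)} q_t(s,a)$. My plan has three short steps.

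First, I would push the expectation inside the square root using Jensen's inequality: since $\sqrt{\cdot}$ is concave,
\[
\E\sbr{\mathbb{Q}_4(J)} = \E\sbr{\sqrt{J \cdot X}} \le \sqrt{J \cdot \E[X]}.
\]
Second, I would invoke the self-bounding condition \eqref{eq:self-bounding-MDP} to bound $\E[X]$ in terms of regret. Since every $(s,a)$ appearing in $X$ has $a \neq \pi^\star(s)$, we have $\Delta(s,a) \ge \gapmin$ on all the relevant summands, so
\[
\Reg_T(\pi^\star) + C \;\ge\; \E\sbr{\sum_{t=1}^T \sum_{s\neq s_L}\sum_{a\neq \pi^\star(s)} \Delta(s,a)\, q_t(s,a)} \;\ge\; \gapmin \cdot \E[X],
\]
which yields $\E[X] \le (\Reg_T(\pi^\star)+C)/\gapmin$.

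Third, I would apply the weighted AM-GM inequality $\sqrt{ab} \le \beta a + \tfrac{b}{4\beta}$ (valid for any $\beta>0$, as it follows from $(\sqrt{\beta a}-\sqrt{b/(4\beta)})^2 \ge 0$) with $a = \Reg_T(\pi^\star)+C$ and $b = J/\gapmin$:
\[
\sqrt{J \cdot \E[X]} \;\le\; \sqrt{\frac{J(\Reg_T(\pi^\star)+C)}{\gapmin}} \;\le\; \beta\,(\Reg_T(\pi^\star)+C) + \frac{J}{4\beta\,\gapmin}.
\]
Chaining these three inequalities gives exactly the stated bound.

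There is no serious obstacle; the only mild subtlety is that one must apply Jensen \emph{before} invoking the self-bounding condition, since the self-bounding condition is stated in expectation. The inequality $\sqrt{ab}\le \beta a + b/(4\beta)$ is precisely the AM-GM form with the weighting chosen to match the statement's constant $1/4$ in front of $J/\gapmin$.
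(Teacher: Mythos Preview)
Your proof is correct and is exactly the standard argument. The paper does not spell out a proof of this lemma itself; it adopts the statement directly from \cite{jin2021best}, so there is no in-paper derivation to compare against, but your three-step Jensen\,/\,self-bounding\,/\,AM--GM chain is precisely the intended argument and matches what one finds in that reference.
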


\begin{lemma} \label{lem:self_bounding_term_5} Suppose Condition~\eqref{eq:self-bounding-MDP} holds. Then we have for any $\alpha \in \mathbb{R}_{+}$,
	\begin{align*}
	\E\sbr{ \mathbb{Q}_5(J)} \leq \alpha \cdot (\Reg_T(\pi^\star)+C)+    \sum_{s\neq s_L}\sum_{a\neq \pi^{\star}(s)} \frac{J}{4\alpha \gap(s,a)}.
	\end{align*}
\end{lemma}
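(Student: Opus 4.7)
The plan is to prove Lemma~\ref{lem:self_bounding_term_5} by a pointwise application of the AM--GM (Young) inequality, combined with the self-bounding condition \eqref{eq:self-bounding-MDP}. Unlike the earlier lemmas in this family (e.g., $\mathbb{Q}_1, \mathbb{Q}_3$), here the square root is outside the summation over $t$ rather than inside, which actually makes the argument more direct since there is no need to apply Cauchy--Schwarz to pull the sum inside first.

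First, I fix an arbitrary pair $(s,a)$ with $s \neq s_L$ and $a \neq \pi^\star(s)$, so that $\Delta(s,a) > 0$ by the self-bounding regime's definition. I write
\begin{align*}
  \sqrt{J \sum_{t=1}^T q_t(s,a)}
  \;=\;
  \sqrt{ \frac{J}{\Delta(s,a)} \cdot \Delta(s,a) \sum_{t=1}^T q_t(s,a) },
\end{align*}
and apply Young's inequality $\sqrt{xy} \le \frac{x}{4\alpha} + \alpha y$ with $x = J/\Delta(s,a)$ and $y = \Delta(s,a) \sum_{t=1}^T q_t(s,a)$, obtaining
\begin{align*}
  \sqrt{J \sum_{t=1}^T q_t(s,a)}
  \;\le\;
  \frac{J}{4\alpha\, \Delta(s,a)}
  \;+\;
  \alpha\, \Delta(s,a) \sum_{t=1}^T q_t(s,a).
\end{align*}

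Next, I sum this pointwise inequality over $s \neq s_L$ and $a \neq \pi^\star(s)$ to obtain a deterministic bound on $\mathbb{Q}_5(J)$. Taking expectation and swapping the order of summation in the second term yields
\begin{align*}
  \E[\mathbb{Q}_5(J)]
  \;\le\;
  \sum_{s \neq s_L} \sum_{a \neq \pi^\star(s)} \frac{J}{4\alpha\, \Delta(s,a)}
  \;+\;
  \alpha\,
  \E\!\left[
    \sum_{t=1}^T \sum_{s \neq s_L} \sum_{a \neq \pi^\star(s)} \Delta(s,a)\, q_t(s,a)
  \right].
\end{align*}
Invoking the self-bounding condition \eqref{eq:self-bounding-MDP} bounds the expected double sum by $\Reg_T(\pi^\star) + C$, which immediately gives the stated inequality.

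There is no genuine obstacle here: the argument is a one-line Young inequality followed by the defining inequality of the self-bounding regime, and the constant $\tfrac{1}{4\alpha}$ matches exactly because Young's inequality has the tight form $\sqrt{xy} \le \tfrac{x}{4\alpha} + \alpha y$. The only minor bookkeeping point is to confirm that the factorization $J \sum_t q_t(s,a) = \frac{J}{\Delta(s,a)} \cdot (\Delta(s,a) \sum_t q_t(s,a))$ is valid for every $(s,a)$ appearing in the sum, which is guaranteed by the definition of a self-bounding regime requiring $\Delta(s,a) > 0$ for $a \neq \pi^\star(s)$.
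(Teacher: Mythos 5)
Your proof is correct and matches the standard argument for this family of self-bounding lemmas (the paper defers to \citet{jin2021best} for the proofs, and the argument there is the same AM--GM/Young's inequality step applied per state-action pair, followed by the self-bounding condition). The one-line check that $\frac{x}{4\alpha} + \alpha y \ge 2\sqrt{\tfrac{x}{4\alpha}\cdot\alpha y} = \sqrt{xy}$ confirms the constant $\tfrac{1}{4\alpha}$, and since the Young inequality is applied pointwise in $(s,a)$ with the $t$-sum already inside the square root, no additional Cauchy--Schwarz step is needed, exactly as you observe.
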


\begin{lemma} \label{lem:self_bounding_term_6} Suppose Condition~\eqref{eq:self-bounding-MDP} holds. Then we have for any $\beta \in \mathbb{R}_{+}$,
	\begin{align*}
	\E\sbr{ \mathbb{Q}_6(J)} \leq \beta \cdot (\Reg_T(\pi^\star)+C)+   \frac{1}{\beta } \cdot \frac{8L^3|S|^2|A|  \cdot J }{\gapmin }.
	\end{align*}
\end{lemma}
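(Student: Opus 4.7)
The plan is to follow the AM-GM plus self-bounding template already used for Lemmas~\ref{lem:self_bounding_term_2}--\ref{lem:self_bounding_term_5}, with the new element being the extra ratio $(q_t(s,a)-q^\star_t(s,a))/q_t(s,a)$ attached to the outer sum. Since $\pi^\star$ is deterministic and $a=\pi^\star(s)$, we have $q^\star_t(s,\pi^\star(s))=q^\star_t(s)$, so the numerator equals $q_t(s)\pi_t(\pi^\star(s)\mid s)-q^\star_t(s)$. I would first upper-bound each summand by its positive part (negative contributions only help the desired upper bound on $\E[\mathbb{Q}_6(J)]$) and then unfold that positive part via a layered ``first-deviation'' identity: any gap between $q_t$ and $q^\star_t$ at the state $s$ must originate from some earlier off-optimal action, giving an expression of the form $\sum_{k'<k(s)}\sum_{u',v'\ne\pi^\star(u')}q_t(u',v')\cdot(\text{probability under }\pi^\star\text{ of a path from }(u',v')\text{ to }s)$. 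This replaces the ratio by an ancestor off-optimal occupancy factor that the self-bounding constraint can handle.

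Next I would apply AM-GM of the form $XY\le\frac{\beta}{2}X^2+\frac{1}{2\beta}Y^2$, with $X$ containing $\sqrt{q_t(u,v)}$ coming from the triple $(u,v,w)\in T_k$ (where $v\ne\pi^\star(u)$ by definition of $T_k$) and $Y$ absorbing the bonus $\sqrt{J/\max\{m_{i(t)}(u,v),1\}}$, the transition $\sqrt{P(w|u,v)}$, the conditional $q_t(s,a\mid w)$, and the ratio replacement $q_t(u',v')/q_t(s,a)$. The $X^2$-side then contains $\sum_t\sum_{u,v\ne\pi^\star(u)}q_t(u,v)$, which is bounded by $(\Reg_T(\pi^\star)+C)/\gapmin$ using condition~\eqref{eq:self-bounding-MDP} together with $\gap(u,v)\ge\gapmin$; after absorbing $\gapmin$ into $\beta$, this is exactly the $\beta\cdot(\Reg_T(\pi^\star)+C)$ piece of the stated bound.

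For the $Y^2$-side I would use the collapse identity $\sum_w P(w\mid u,v)q_t(s,a\mid w)^2\le q_t(s,a\mid u,v)$ combined with the layered-MDP sums $\sum_{s}q_t(s,a\mid u,v)\le 1$ per post-$(u,v)$ layer, $\sum_{s\ne s_L,a=\pi^\star(s)}q_t(s,a\mid w)\le L$, and the standard epoch-bonus estimate $\E[\sum_t 1/\max\{m_{i(t)}(u,v),1\}]=\tilde O(\ln T)$ (as in Lemma~\ref{lem:bobw_tasallis_bound_extra}). Summing over $u\in S_k$, $v\in A$, $k\in\{0,\dots,L-1\}$ and over the first-deviation ancestor chain produces the claimed $8L^3|S|^2|A|J$ numerator. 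The main obstacle I anticipate is the careful bookkeeping of the first-deviation expansion together with the AM-GM split: the two independent ``ancestor chains,'' one coming from $T_k$ and one from the ratio unfolding, must be paired so that exactly one $q_t(u,v)$ factor with $v\ne\pi^\star(u)$ ends up on the self-bounding side, while the remaining path-probability factors telescope cleanly via the Markov property to produce precisely $L^3|S|^2|A|$ and not a looser constant.
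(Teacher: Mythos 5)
There is a genuine gap, and it sits at the heart of your AM--GM split. First, a point of context: the paper does not prove this lemma itself --- it imports Lemmas~\ref{lem:self_bounding_term_1}--\ref{lem:self_bounding_term_6} verbatim from \citet{jin2021best}, so the comparison target is that paper's argument, which indeed follows the AM--GM/occupancy-difference template you describe. However, your key premise is false: $T_k$ is \emph{not} restricted to triples with $v \neq \pi^\star(u)$; it is the set of \emph{all} triples $(u,v,w) \in S_k \times A \times S_{k+1}$. This is forced by how $\mathbb{Q}_3$ and $\mathbb{Q}_6$ are used upstream --- they must dominate confidence-width decompositions of $|q_t(s,a)-\widehat q_t(s,a)|$ and $u_t(s,a)-q_t(s,a)$, which sum over all state--action--next-state triples, optimal actions included. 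Consequently the $X^2$-side of your split is $\sum_t \sum_{(u,v)} q_t(u,v)$ over \emph{all} actions, which is $\Theta(LT)$ and cannot be charged to $\beta\cdot(\Reg_T(\pi^\star)+C)$ via Condition~\eqref{eq:self-bounding-MDP}. In $\mathbb{Q}_6$ the only source of ``off-optimal signal'' is the factor $q_t(s,\pi^\star(s))-q^\star_t(s,\pi^\star(s))$ (equivalently, its first-deviation unfolding into ancestor occupancies $q_t(u',v')$ with $v'\neq\pi^\star(u')$), so that factor must end up on the gap-weighted/self-bounding side of the AM--GM, while the inner $T_k$ sum must be collapsed on the other side using the Markov identity $\sum_{(u,v,w)\in T_k} q_t(u,v)P(w|u,v)q_t(s,a|w)=q_t(s,a)$ together with bounds of the type $q_t(s,a|w)\le 1$ and $\sum_{s}q_t(s,a|w)\le L$. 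Your proposal assigns these roles exactly backwards.

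Two further problems follow from that misassignment. Placing the unfolded ratio $q_t(u',v')/q_t(s,a)$ on the $Y$-side means the $Y^2$ term carries $1/q_t(s,a)^2$, which is not controlled (the original ratio is bounded by $1$ only \emph{before} you split it multiplicatively), and it strips the gap weighting from precisely the quantity that should produce the $1/\gapmin$ in the second term. Moreover, your $Y$-side bookkeeping invokes $\E[\sum_t 1/\max\{m_{i(t)}(u,v),1\}]=\tilde O(\ln T)$, which is false without the $q_t(u,v)$ weighting (a rarely visited pair keeps $m_{i(t)}(u,v)$ small for up to $T$ rounds), and even the correct weighted version $\E[\sum_t q_t(u,v)/\max\{m_{i(t)}(u,v),1\}]=O(\ln T)$ would introduce a logarithmic factor that the stated bound $\frac{8L^3|S|^2|A|J}{\gapmin}$ does not contain --- a sign that the intended proof does not route through that counting lemma at all, but instead keeps occupancy-type quantities on both sides and extracts $\gapmin$ from the deviation term. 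So while your general toolkit (positive-part reduction, first-deviation expansion, Markov collapse, parameterized AM--GM) is the right family, the specific split you describe would not yield the claimed inequality without reworking which factors sit on which side.
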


\subsubsection{Proof for the stochastic world}
Similarly to the proof in Appendix C.3 of \cite{jin2021best}, we decompose the sum of $\textsc{Err}_1$ and $\textsc{Err}_2$ into four terms $\textsc{ErrSub}$, $\textsc{ErrOpt}$, $\textsc{OccDiff}$ and $\textsc{Bias}$: 
\begin{align*}
\textsc{Err}_1 + \textsc{Err}_2 &  =  \sum_{t=1}^{T}\sum_{s \neq s_L} \sum_{a \neq \pi^{\star}(s)} q_t(s,a)\bar{E}^{\pi^{\star}}_t(s,a)  &&(\textsc{ErrSub})\\  
& \quad + \sum_{t=1}^{T}\sum_{s \neq s_L} \sum_{a = \pi^{\star}(s) } \rbr{ q_t(s,a) - q^{\star}_{t}(s,a) } \bar{E}^{\pi^{\star}}_t(s,a)  &&(\textsc{ErrOpt}) \\
& \quad + \sum_{t=1}^{T}\sum_{s \neq s_L} \sum_{a \in A} \rbr{ q_t(s,a) - \widehat{q}_t(s,a)} \rbr{\widetilde{Q}^{\pi^{\star}}_t(s,a) -\widetilde{V}^{\pi^{\star}}_t(s)} &&(\textsc{OccDiff})  \\
& \quad + \sum_{t=1}^{T}\sum_{s \neq s_L} \sum_{a\neq \pi^{\star}(s)}  q^{\star}_{t}(s,a) \rbr{\widetilde{V}^{\pi^{\star}}_t(s) - \bar V^{\pi^\star}_t(s) } && (\textsc{Bias}) 
\end{align*}
where $\bar{E}^{\pi}_t$ is defined as 
\begin{align*}
\bar{E}^{\pi}_t(s,a) =  \bar\ell_t(s,a) + \sum_{s' \in S_{k(s)+1}} P(s'|s,a)\widetilde{V}^{\pi}_t(s') -  \widetilde{Q}^{\pi}_t(s,a).
\end{align*}


We now begin by upper bounding $\E\sbr{\textsc{OccDiff}}$. First observe that we have the following:
\begin{align*}
\textsc{OccDiff} & = \sum_{t=1}^{T}\sum_{s \neq s_L} \sum_{a \in A} \rbr{ q_t(s,a) - \whatq_t(s,a)} \rbr{\tilde Q^{\pi^{\star}}_t(s,a) -\tilde V^{\pi^{\star}}_t(s)} \\
& = \sum_{t=1}^{T}\sum_{s \neq s_L} \sum_{a \neq \pi^\star(s) } \rbr{ q_t(s,a) - \whatq_t(s,a)} \rbr{\tilde Q^{\pi^{\star}}_t(s,a) -\tilde V^{\pi^{\star}}_t(s)}  \\
& \leq \sum_{t=1}^{T}\sum_{s \neq s_L} \sum_{a \neq \pi^{\star}(s)} \abr{ q_t(s,a) - \whatq_t(s,a)}\cdot\abr{\tilde Q^{\pi^{\star}}_t(s,a) -\tilde V^{\pi^{\star}}_t(s)},
\end{align*} 

Under event $\mathcal{A}$, we further have $\sum_{t=1}^{T}\sum_{s \neq s_L} \sum_{a \neq \pi^{\star}(s)} \abr{ q_t(s,a) - \whatq_t(s,a)}\cdot\abr{\tilde Q^{\pi^{\star}}_t(s,a) -\tilde V^{\pi^{\star}}_t(s)}\leq 5L \sum_{t=1}^{T}\sum_{s \neq s_L} \sum_{a \neq \pi^{\star}(s)} \abr{ q_t(s,a) - \whatq_t(s,a)}$ as $-3\leq \tilde \ell_t(s,a)\leq 1$ under event $\mathcal{A}$. If event $\mathcal{A}$ doesn't hold, we have $\sum_{t=1}^{T}\sum_{s \neq s_L} \sum_{a \neq \pi^{\star}(s)} \abr{ q_t(s,a) - \whatq_t(s,a)}\cdot\abr{\tilde Q^{\pi^{\star}}_t(s,a) -\tilde V^{\pi^{\star}}_t(s)}\leq 12 L |S|^2|A|T^2$ due to Corollary \ref{cor:til-Q-c.1.5}. Hence due to Lemma \ref{lem:exp_high_prob_bound_cond}, we have the following
\begin{align*}
    \E[\textsc{OccDiff}]&\leq \order\left(L\cdot\E[\sum_{t=1}^{T}\sum_{s \neq s_L} \sum_{a \neq \pi^{\star}(s)} \abr{ q_t(s,a) - \whatq_t(s,a)}]+\delta\cdot L |S|^2|A|T^2\right)\\
    &\leq \order\left(L^3|S|^3|A|^2\ln^2\iota+\delta\cdot L |S|^2|A|T^2+\mathbb{Q}_3(L^2\ln \iota)\right)
\end{align*}
where the last line follows from the definition of $\mathbb{Q}_3$ and Lemma D.3.10 of \cite{jin2021best}.

Next, due to Lemma \ref{lem:exp_high_prob_bound_cond}, Lemma \ref{lem:tQ<=bQ-c.1.1}, and Corollary \ref{cor:til-Q-c.1.5}, we have $\E\sbr{\textsc{Bias}} \leq \delta\cdot\order(L|S|^2AT^2)$. 
The first two terms $\textsc{ErrSub}$ and $\textsc{ErrOpt}$ are bounded differently. First, under event $\calA$, we have
\begin{align*}
\bar{E}^{\pi^{\star}}_t(s,a) & = \bar\ell_t(s,a) - \tilde{\ell}_t(s,a) + \sum_{s' \in S_{k(s)+1}} \rbr{ P(s'|s,a) - \bar{P}_{i(t)}(s'|s,a) } \widetilde{V}^{\pi^{\star}}_t(s') \\
& = (Q^{\pi_t}(s,a)-V^{\pi_t}(s)+1 - \pi_t(a|s)) \rbr{  1 - \frac{q_t(s,a) }{u_t(s,a) }}\\
&\quad + B_{i(t)}(s,a) + \sum_{s' \in S_{k(s)+1}} \rbr{ P(s'|s,a) - \bar{P}_{i(t)}(s'|s,a) } \widetilde{V}^{\pi^{\star}}_t(s')  \\
& \leq \frac{2(u_t(s,a) - q_t(s,a)) }{ q_t(s,a)  } + 4 L \cdot B_{i(t)}(s,a)
\end{align*}
where the last line uses the definition of the event $\calA$ along with the fact that $q_t(s,a) \leq u_t(s,a)$ and $-3\leq \tilde\ell_t(s,a)\leq 1$ under this event.
Next, observe that the range of $\bar{E}^{\pi}_t$ is $\order(L|S|t)$, as established by Proposition \ref{prop:pseudo-loss} and Corollary \ref{cor:til-Q-c.1.5}, which implies that the range of both $\textsc{ErrSub}$ and $\textsc{ErrOpt}$ is $\order(L^2|S|T^2)$.
Thus, it suffices to add a term of order $\order(\delta \cdot L^2|S|T^2)$ to account for the event $\calA^c$.

By the exact same analysis as in Appendix C.3 and Appendix B.2 of \cite{jin2021best} and the fact that $|\tilde V(s)|\leq \order(L)$ under event $\mathcal{A}$, we have the following:
\begin{align*}
\E\sbr{\textsc{ErrSub}} &= \order\rbr{ \E\left[\mathbb{Q}_3(\ln \iota) + \mathbb{Q}_1(L^2|S|\ln \iota)\right] +  L^2|S|^3|A|^2 \ln^2 \iota }\\
\E\sbr{\textsc{ErrOpt}} &= \order\rbr{ \E\left[\mathbb{Q}_6(\ln \iota) + \mathbb{Q}_2(L^2|S|\ln \iota)\right] +  L^2|S|^3|A|^2 \ln^2 \iota}.
\end{align*}

We are now left with bounding $\textsc{EstReg}$ using self-bounding terms.
\paragraph{Term $\textsc{EstReg}$} Based on Eq. \eqref{eq:bobw_bandit_tsallis_reg_adaptive} from Lemma \ref{lem:bobw_bandit_tsallis_reg}, summing over all epochs gives the following upper bound for $\E\sbr{\textsc{EstReg}}$:
\begin{align*}
& \order\rbr{ \E\sbr{ \sqrt{|S|L}  \sum_{i=1}^{N} \sum_{t=t_i}^{t_{i+1}-1} \eta_t \cdot \sqrt{\sum_{s  \neq s_L}\sum_{a\neq \pi^\star(s)} \whatq_t(s,a)  } }  +   \E\sbr{  \sum_{i=1}^{N} \sum_{t=t_i}^{t_{i+1}-1} \eta_t \cdot \sum_{s \neq s_L} \sum_{a\neq \pi^\star(s)} \sqrt{ \whatq_t(s,a) } } } \\
& \quad + \order\left(\sum_{i=1}^N\sum_{t=t_i}^{t_{i+1}-1} \eta_t \cdot \sum_{s \neq s_L} \sum_{a\in A} \whatq_t(s,a)^{\nicefrac{3}{2}} \cdot B_{i(t)}(s,a)^2\right)+ \order\rbr{L|S|^2|A|^2\log^2 T+ \delta \cdot L|S|T^2} \tag{due to Lemma \ref{lem:bobw_lr_properties}}\\
& = \order\rbr{ \E\sbr{ \sqrt{|S|L}  \sum_{t=1}^{T} \eta_t \cdot \sqrt{\sum_{s  \neq s_L}\sum_{a\neq \pi^\star (s)} \whatq_t(s,a)  } } } + \order\rbr{  \E\sbr{  \sum_{t=1}^{T} \eta_t \cdot \sum_{s \neq s_L} \sum_{a\neq \pi^\star(s)} \sqrt{ \whatq_t(s,a) } } } \\
& \quad  + \order\rbr{ L^2|S|^3|A|^2 \ln^2 \iota } \tag{due to Lemma \ref{lem:bobw_tasallis_bound_extra}}.
\end{align*}

By the exact same analysis as in Appendix C.3 of \cite{jin2021best}, we bound the first term as follows:
\begin{align*}
& \E\sbr{ \sqrt{|S|L}  \sum_{t=1}^{T} \eta_t \cdot \sqrt{\sum_{s  \neq s_L}\sum_{a\neq \pi^\star(s)} \whatq_t(s,a)  } }   \\
& = \order\Bigrbr{ \E\sbr{ \mathbb{Q}_4(L|S|^2|A|\log^2 T) + \mathbb{Q}_3\rbr{ \ln \iota  } } +   L^2|S|^3|A|^2 \ln^2 \iota  }.
\end{align*} 

Again by using the exact same analysis as in Appendix C.3 of \cite{jin2021best}, we bound the second term as follows:
\begin{align*}
&  \E\sbr{  \sum_{t=1}^{T} \eta_t \cdot \sum_{s \neq s_L} \sum_{a\neq \pi^\star (s)} \sqrt{ \whatq_t(s,a) } } \\
& = \order\Bigrbr{ \E\sbr{ \mathbb{Q}_5(|S||A|\log^2 T) + \mathbb{Q}_3\rbr{ \ln \iota  } } +   L^2|S|^3|A|^2 \ln^2 \iota}.
\end{align*}

Thus, we obtain the final bound on $\E\sbr{ \textsc{EstReg}}$:
\begin{align*}
\E\sbr{ \textsc{EstReg}} & = \order\Bigrbr{  \E\sbr{ \mathbb{Q}_4\rbr{ L|S|^2|A|\log^2 T} + \mathbb{Q}_5\rbr{|S||A|\log^2 T} + \mathbb{Q}_3\rbr{ \ln \iota  }  }  + L^2|S|^3|A|^2 \ln^2 \iota } 
\end{align*} 

Recall that $\delta=1/T^3$ and $\iota=\frac{|S||A|T}{\delta}$. Finally, by combining the bounds of each term, we finally have 
\begin{align*}
\Reg_T(\pi^\star) & \leq \order \Big( \E\sbr{ \mathbb{Q}_1\rbr{ L^2|S| \ln \iota  } + \mathbb{Q}_3 \rbr{ \ln \iota } } && \rbr{ \text{from } \textsc{ErrSub} } \\
& \quad +\E\sbr{  \mathbb{Q}_2\rbr{ L^2|S| \ln \iota  } +  \mathbb{Q}_6 \rbr{ \ln \iota  } } && \rbr{ \text{from } \textsc{ErrOpt} }  \\
& \quad + \E\sbr{  \mathbb{Q}_3\rbr{ L^2 \ln \iota } }  && \rbr{ \text{from } \textsc{OccDiff} }  \\
& \quad + \E\sbr{ \mathbb{Q}_4\rbr{  L|S|^2|A| \ln^2 \iota  } + \mathbb{Q}_5\rbr{|S||A|\ln^2 \iota} +  \mathbb{Q}_3\rbr{ \ln \iota } } && \rbr{ \text{from } \textsc{EstReg} } \\
& \quad +  L^2|S|^3|A|^2  \ln^2 \iota \Big). 
\end{align*}

Using the self-bounding lemmas (\ref{lem:self_bounding_term_1}-\ref{lem:self_bounding_term_6}) and the exact same analysis as in Appendix C.3 of \cite{jin2021best}, we get $\Reg_T(\pi^\star)$ is bounded by $\order\rbr{U +\sqrt{UC}+ V }$   
	where $V =  L^2|S|^3|A|^2 \ln^2 \iota$ and $U$ is defined as
	\[
	U =  \sum_{s \neq s_L} \sum_{ a\neq \pi^\star(s)} \sbr{ \frac{L^4|S|\ln \iota +  |S||A|\ln^2 \iota}{\gap(s,a)}} + \sbr{\frac{(L^4|S|^2+L^3|S|^2|A|) \ln \iota +  L|S|^2|A| \ln^2 \iota}{\gapmin}}.
	\]
This completes the entire proof.

\end{document}